\definecolor{c1}{rgb}{0.859,0.933,0.953}
\newtheorem{definition}{Definition}
\newtheorem{theorem}{Theorem}
\newtheorem{remark}{Remark}
\newtheorem{corollary}{Corollary}
\newtheorem{proposition}{Proposition}
\newcommand{\tabincell}[2]{\begin{tabular}{@{}#1@{}}#2\end{tabular}}
\begin{document}
%
\title{Low-Rank Tensor Function Representation for Multi-Dimensional Data Recovery}
%
%
%
%

\author{Yisi Luo, Xile Zhao,~\IEEEmembership{Member, IEEE}, Zhemin Li, Michael K. Ng,~\IEEEmembership{Senior Member, IEEE},\\Deyu Meng,~\IEEEmembership{Member, IEEE}
\IEEEcompsocitemizethanks{\IEEEcompsocthanksitem Yisi Luo and Deyu Meng are with the School of Mathematics and Statistics, Xi'an Jiaotong University, Xi'an, P.R.China (e-mail: yisiluo1221@foxmail.com, dymeng@mail.xjtu.edu.cn).\protect
\IEEEcompsocthanksitem Xile Zhao is with the School of Mathematical Sciences, University of Electronic Science and Technology of China, Chengdu, P.R.China (e-mail: xlzhao122003@163.com).\protect
\IEEEcompsocthanksitem Zhemin Li is with the Department of Mathematics, National University of Defense Technology, Changsha, P.R.China (e-mail: lizhemin@nudt.edu.cn).\protect
\IEEEcompsocthanksitem Michael K. Ng is with the Department of Mathematics, The University of Hong Kong, Hong Kong (e-mail: mng@maths.hku.hk).}
\thanks{(Corresponding authors: Xile Zhao and Deyu Meng)}}

%
%

\markboth{}%
{}
%



\IEEEtitleabstractindextext{%
\begin{sloppypar}
\begin{abstract}
Since higher-order tensors are naturally suitable for representing multi-dimensional data in real-world, e.g., color images and videos, low-rank tensor representation has become one of the emerging areas in machine learning and computer vision. However, classical low-rank tensor representations can only represent data on finite meshgrid due to their intrinsical discrete nature, which hinders their potential applicability in many scenarios beyond meshgrid. To break this barrier, we propose a low-rank tensor function representation (LRTFR), which can continuously represent data beyond meshgrid with infinite resolution. Specifically, the suggested tensor function, which maps an arbitrary coordinate to the corresponding value, can continuously represent data in an infinite real space. Parallel to discrete tensors, we develop two fundamental concepts for tensor functions, i.e., the tensor function rank and low-rank tensor function factorization. We theoretically justify that both low-rank and smooth regularizations are harmoniously unified in the LRTFR, which leads to high effectiveness and efficiency for data continuous representation. Extensive multi-dimensional data recovery applications arising from image processing (image inpainting and denoising), machine learning (hyperparameter optimization), and computer graphics (point cloud upsampling) substantiate the superiority and versatility of our method as compared with state-of-the-art methods. Especially, the experiments beyond the original meshgrid resolution (hyperparameter optimization) or even beyond meshgrid (point cloud upsampling) validate the favorable performances of our method for continuous representation. 
\end{abstract}
\end{sloppypar}

\begin{IEEEkeywords}
Tensor factorization, multi-dimensional data, data recovery
\end{IEEEkeywords}}

\maketitle

\IEEEdisplaynontitleabstractindextext

%
\IEEEpeerreviewmaketitle

\IEEEraisesectionheading{\section{Introduction}\label{sec:introduction}}
Recently, due to the advance of technology, various types of multi-dimensional data (e.g., color images, multispectral images, point clouds, traffic flow data, user-item data, etc.) are increasingly emerging. Mathematically, higher-order tensors are naturally suitable for multi-dimensional data modeling and processing, which is one of the focus areas in machine learning, computer vision, and scientific computing\cite{PNAS_tensor,TIT_sparse,SIAM_19,SIAM_21}.\par
Most real-world data intrinsically exhibits low-dimensional structures, for example, images\cite{IJCV_WNN}, videos\cite{Train_TIP}, point clouds\cite{TIM_22}, and so on. Hence, low-rank modeling of matrices/tensors has been widely studied for data processing and representation\cite{PAMI_13,TSP,NGMeet,PAMI_13_lowrank,PAMI_15_Zhao,PAMI_16_Lin,KBR}. Different from matrices, the rank definition of higher-order tensors is not unique. The most classic tensor ranks are the Tucker rank (defined by the rank of unfolding matrices)\cite{SIAM_review} and CANDECOMP/PARAFAC (CP) rank (defined as the smallest number of rank one tensor decomposition)\cite{SIAM_review}. It is shown that solving the low-Tucker/CP-rank programming is effective to obtain a compact and meaningful representation of data\cite{TSP_17,LRTDTV}. Meanwhile, the low-Tucker/CP-rank models have been applied to facilitate the efficiency of modern deep learning algorithms\cite{HPO_PAMI,NTM,AAAI_2021_transfer,TFNN}, which reveals their wide and promising applicabilities. Another type of tensor rank is based on the so-called tensor network decomposition\cite{TN_ICML}, such as tensor train rank\cite{TT}, tensor ring rank\cite{TSP_ring}, and tensor tree rank\cite{SIAM_tree}. These more sophisticated tensor ranks were proven to be highly representative for multi-dimensional modeling; see\cite{Ring_ICCV,Train_TIP,FCTN}. More recently, the tensor tubal-rank\cite{SIAM_13}, which is based on the tensor singular value decomposition (t-SVD)\cite{LAA_11}, has attracted much attention due to its strong relationships to the definition of matrix rank; see for example\cite{PAMI_TRPCA,PAMI_factorization,PAMI_Wang_21,PAMI_Wang_22}. Besides, the effctiveness of tubal-rank minimization for various signal processing tasks has been validated\cite{cvpr_14,CVPR_19,TLRR,FTNN}. In summary, low-rank tensor modeling has become increasingly popular for multi-dimensional data representation and processing.\par
Except for the low-rankness, smoothness is another frequently considered regularization in multi-dimensional data representation. Many literatures incorporated the smoothness into the low-rank representation\cite{LRTDTV,AAAI_2017,HLRTF,SMF,DFR,CTV}. In such a hybrid model, low-rankness was usually revealed by low-rank factorization \cite{DFR,PAMI_factorization,AAAI_tensor_1} or surrogate functions\cite{STD,TIP_22_Luo,PAMI_Zhang,AAAI_tensor_2}. Besides, there are two main categories of smooth regularization forms, i.e., explicit and implicit ones. The explicit smooth regularizations are mainly based on the total variation (TV) and its variants \cite{AAAI_2017,E3DTV,l0l1HTV,HLRTF,LRTDTV,CTV}. The implicit smooth regularizations are revealed by using basis functions to parameterize the model, which naturally extends the model to function representations that are related to this work. For example, the pioneer works \cite{sp,spl_smooth} utilized non-negative matrix factorization parameterized by basis functions to reveal the implicit smoothness. It was further extended to higher-order tensor models\cite{STD} by using the low-Tucker-rank regularization. More recently, the CP factorization was elegantly generalized to multivariate functions by using Fourier
series\cite{TSP_CP}. The main aim of these implicit smooth representations is to employ some basis functions (e.g., Gaussian basis\cite{sp} or Fourier basis\cite{TSP_CP}) to represent the low-rank matrix/tensor, which implicitly induces the smoothness between adjacent elements of the matrix/tensor. Nevertheless, these shallow basis functions may sometimes not be suitable to capture the complex fine details of real-world data. Meanwhile, these basis function-based representations still rely on the low-rank regularization defined on original meshgrid to handle plain data analysis\cite{sp} or regression problems\cite{TSP_CP}, which cannot straightforwardly extend to more challenging multi-dimensional data recovery problems on or beyond meshgrid.\par
 \begin{figure}[t]
  \scriptsize
  \setlength{\tabcolsep}{0.9pt}
    \begin{center}
    \begin{tabular}{c}
  \includegraphics[width=0.44\textwidth]{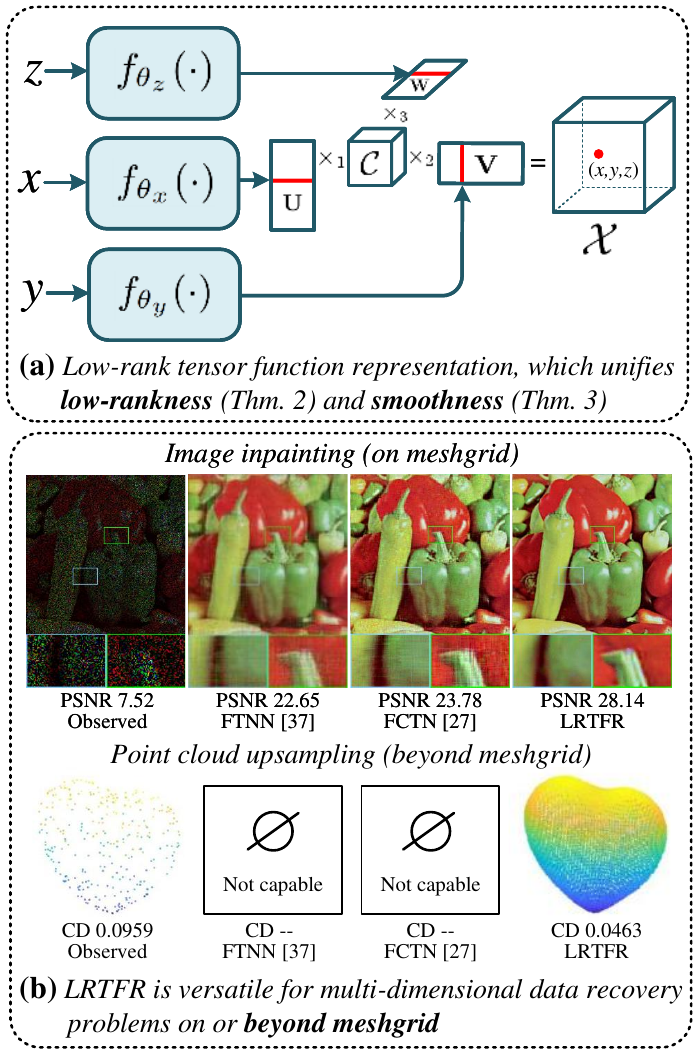}
    \end{tabular}
    \end{center}
  \vspace{-0.4cm}
  \caption{(a) The proposed LRTFR can continuously represent multi-dimensional data via the low-rank tensor function factorization, which intrinsically unifies the low-rankness and smoothness into the representation. (b) The continuous LRTFR is versatile for general multi-dimensional data recovery problems on or beyond meshgrid.\label{fig_1}\vspace{-0.4cm}}
\end{figure}
To sum up, low-rank tensor representations are suitable for representing data on discrete meshgrid. However, how to extend low-rank tensor models for continuous data representation beyond meshgrid is a pressing challenge driven by real-world applications. For example, point cloud representation, where classical low-rank representations can not represent such signals beyond meshgrid. An important question naturally arises: Can we develop a multi-dimensional data representation that can not only preserve the compact low-rank structure, but also continuously represent data beyond meshgrid with infinite resolution?\par 
To meet this challenge, this work presents the low-rank tensor function representation (LRTFR) for multi-dimensional data continuous representation. Specifically, we consider representing data with a tensor function, which maps a multi-dimensional coordinate to the corresponding value to continuously represent data. Parallel to classical low-rank tensor representations, we develop two fundamental concepts for tensor functions, i.e., the tensor function rank (Definition \ref{def_rank}) and tensor function factorization (Theorem \ref{Pro_1}), which intrinsically encodes the low-rankness into the continuous representation. The factor functions of the factorization can be readily parameterized by multilayer perceptrons (MLPs), which makes the model highly expressive for real-world data representation. Moreover, we theoretically justify a Lipschitz smooth regularization hidden in the model. The low-rankness and smoothness are harmoniously unified in the representation, making LRTFR effective and efficient for data continuous representation. Compared to classical low-rank tensor representations, LRTFR is more versatile for representing data on or beyond meshgrid; see Fig. \ref{fig_1}. In summary, this work makes the following contributions:\par
{\bf (i)} To more intrinsically explore the inside information underlying multi-dimensional data, we develop a new tensor function formulation for continuously representing multi-dimensional data in an infinite real space and explore two fundamental concepts under tensor functions, i.e., the tensor function rank and tensor function factorization, which establish insightful connections between discrete and continuous representations for multi-dimensional data.\par
{\bf (ii)} On the basis of the tensor function so formulated, we propose the LRTFR method for handling general multi-dimensional data recovery tasks. The referred low-rank tensor function, which maps an arbitrary coordinate to the corresponding value, allows us to recover multi-dimensional data on meshgrid beyond the original resolution or even beyond meshgrid.\par
{\bf (iii)} We theoretically reveal that the low-rank and smooth regularizations are implicitly unified in LRTFR, which justifies the potential effectiveness of LRTFR for multi-dimensional data recovery.\par
{\bf (iv)} The proposed LRTFR is versatile for various multi-dimensional data recovery tasks on or beyond meshgrid, including multi-dimensional image inpainting and denoising (on meshgrid with original resolution), hyperparameter optimization (on meshgrid beyond original resolution), and point cloud upsampling (beyond meshgrid). All of these problems are addressed in an unsupervised manner by solely using the observed data. Extensive experiments validate the broad applicability and superiority of our method as compared with state-of-the-art methods.
\subsection{Related Work}\label{Rela}
\subsubsection{Data Continuous Representation}
The topic of continuous representation of data has very recently attracted considerable attention \cite{LIIF,DeepSDF,SAL,3dshape,ICCV_19}. One of the most famous techniques for continuous data representation is the implicit neural representation (INR)\cite{nips_INR,sine,NeRF}. The INR constructs differentiable functions (deep neural networks) w.r.t. the coordinates (inputs) to implicitly represent the continuous data (outputs). The implicit representation is obtained by feeding each coordinate to the implicit function and outputting the corresponding value. Since the implicit function is defined on a continuous domain, the resulting data representation is also continuous. The success of INR-based methods for different tasks has been witnessed, e.g., neural rendering \cite{sota_rendering,NeRF,real_time,nips_rendering,cvpr_without}, image/shape generation \cite{GAN_CVPR,AISTATS,shape_GAN,cvpr_3d}, and scene representation \cite{DeepSDF,RGB-D,oc_net,nips_INR}. Besides, some important improvements over INR (e.g., more effective training strategy \cite{implicit2,SAP,real_time} and more expressive INR structures \cite{sine,NIPS_Four,ICML_21}) have also been studied recently.\par 
Despite these great efforts, INR still suffers from two limitations. First, INR always requires relatively large memory and computational cost, mainly because the size of the input coordinate matrix is too large (See Sec. \ref{advantage_INR} for detailed computational analyses). The computational cost tremendously increases when dealing with multi-dimensional data, e.g., multispectral images and videos. Second, INR itself is not stable enough to directly learn a valid continuous representation from raw data, which usually results in overfitting and restricts its applicability in many real-world scenarios beyond meshgrid. The underlying reason of the two limitations is that INR ignores the important domain knowledge of data, which results in large training costs and inevitable overfitting phenomenons. As compared, our LRTFR disentangles the continuous representation into several much simpler factor representations by introducing insightful domain knowledge (i.e., low-rankness); see Fig. \ref{fig_1} (a). Consequently, LRTFR achieves lower computational costs and increases the stability of the continuous representation against overfitting; see Sec. \ref{advantage_INR} for details and also extensive experiments in Sec. \ref{Sec_exp}.
\subsubsection{Data Recovery via Continuous Representation}
Several works have studied INRs for image recovery problems, which are related to our work. The local implicit image function \cite{LIIF} was proposed for image super-resolution. Its improved versions in terms of network capacity \cite{INR_SR_NIPS} and details recovery \cite{INR_SR_TIP} were proposed to learn more realistic image continuous representations. The meta-learning strategies \cite{metaSR,sine} were also developed to learn mappings that generate INRs for image recovery. However, these image recovery methods were entirely dependent on supervised learning with pairs of images to train the INR. Comparatively, our LRTFR is a model-based and unsupervised method that finely encodes the low-rankness into the continuous representation. Hence, our method can be more easily and conveniently generalized over different applications; see Sec. \ref{Sec_app}. A very recent work \cite{denoise_INR_1} proposed to use INR for zero-shot blind image denoising, where decent denoising results were obtained. However, its accompanied theoretical explanations for image denoising are lacking. It also has high computational costs because it uses the standard INR. In comparison, our LRTFR implicitly and theoretically encodes the low-rankness into the continuous representation via the tensor function factorization, which is more efficient with clearer interpretations for data recovery.\par
\section{Main Results}
\subsection{Preliminaries}
In this paper, scalars, vectors, matrices, and tensors are denoted by $x$, $\bf x$, $\bf X$, and $\mathcal X$. The $i$-th element of $\bf x$ is denoted by ${\bf x}_{(i)}$, and it is similar for matrices, i.e., ${\bf X}_{(i,j)}$, and tensors, i.e., ${\mathcal X}_{(i,j,k)}$. When we use the index $:$, e.g., ${\bf X}_{(:,j)}$, we mean the $j$-th column of $\bf X$. The tensor Frobenius norm of ${\mathcal X}\in{\mathbb R}^{n_1\times n_2\times n_3}$ is defined as $\left\lVert{\mathcal X}\right\rVert_F:=\sqrt{\langle {\mathcal X},{\mathcal X}\rangle}=\sqrt{\sum_{ijk}{\mathcal X}_{(i,j,k)}^2}$. The tensor $\ell_1$-norm is defined as $\left\lVert {\mathcal X}\right\rVert_{\ell_1}:=\sum_{ijk}|{\mathcal X}_{(i,j,k)}|$. The unfolding operator of a tensor ${\mathcal X}\in{\mathbb R}^{n_1\times n_2\times n_3}$ along the $i$-th mode ($i=1,2,3$) is defined as ${\tt unfold}_i(\cdot):{\mathbb R}^{n_1\times n_2\times n_3}\rightarrow{\mathbb R}^{n_i\times \prod_{j\neq i}n_j}$, which returns the unfolding matrix along the mode $i$, and the unfolding matrix is denoted by ${\bf X}^{(i)}:={\tt unfold}_i({\mathcal X})$. ${\tt fold}_i(\cdot)$ denotes the inverse operator of ${\tt unfold}_i(\cdot)$. The mode-$i$ ($i=1,2,3$) tensor-matrix product is defined as ${\mathcal X}\times_i{\bf A}:={\tt fold}_i({\bf A}{\bf X}^{(i)})$, which returns a tensor. The symbol $\lfloor x\rfloor$ denotes the rounded-down of a constant $x$. Next, we introduce the tensor Tucker rank and Tucker factorization.
\begin{definition}(Tucker rank\cite{SIAM_review})\label{def_Tucker}
The Tucker rank of a tensor ${\mathcal X}\in{\mathbb R}^{n_1\times n_2\times n_3}$ is a vector defined as 
\begin{equation}
{\rm rank}_T({\mathcal X}):=({\rm rank}({\bf X}^{(1)}), {\rm rank}({\bf X}^{(2)}),{\rm rank}({\bf X}^{(3)})).
\end{equation}
For simplicity, we sometimes use the notation $({\rm rank}_T({\mathcal X}))_{(i)} := {\rm rank}({\bf X}^{(i)})$ in the main body.
\end{definition}
\begin{theorem}(Tucker factorization \cite{SIAM_review})\label{th_Tucker}
Let ${\mathcal X}\in{\mathbb R}^{n_1\times n_2\times n_3}$.
\begin{itemize}
\item[\bf (i)] If ${\rm rank}_T({\mathcal X})=(r_1,r_2,r_3)$, then there exist a core tensor ${\mathcal C}\in{\mathbb R}^{r_1\times r_2\times r_3}$ and three factor matrices ${\bf U}\in{\mathbb R}^{n_1\times r_1}$, ${\bf V}\in{\mathbb R}^{n_2\times r_2}$, and ${\bf W}\in{\mathbb R}^{n_3\times r_3}$ such that ${\mathcal X}={\mathcal C}\times_1{\bf U}\times_2{\bf V}\times_3{\bf W}$.
\item[\bf (ii)] Let ${\mathcal C}\in{\mathbb R}^{r_1\times r_2\times r_3}$ be an arbitrary tensor, ${\bf U}\in{\mathbb R}^{n_1\times r_1}$, ${\bf V}\in{\mathbb R}^{n_2\times r_2}$, ${\bf W}\in{\mathbb R}^{n_3\times r_3}$ be arbitrary matrices ($r_i\leq n_i$ for $i=1,2,3$). Then $\big{(}{\rm rank}_T({\mathcal C}\times_1{\bf U}\times_2{\bf V}\times_3{\bf W})\big{)}_{(i)}\leq r_i$ ($i=1,2,3$).   
\end{itemize}
\end{theorem}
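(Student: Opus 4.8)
The plan is to prove the two parts separately, both resting on elementary identities for the mode-$i$ product that follow directly from its definition ${\mathcal X}\times_i{\bf A}={\tt fold}_i({\bf A}{\bf X}^{(i)})$: the unfolding identity ${\tt unfold}_i({\mathcal X}\times_i{\bf A})={\bf A}{\bf X}^{(i)}$, the same-mode composition rule $({\mathcal X}\times_i{\bf A})\times_i{\bf B}={\mathcal X}\times_i({\bf B}{\bf A})$, and the fact that products along distinct modes commute. I would record these (or cite them as standard) at the outset so that both parts reduce to short arguments.

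For part (i), the idea is to build the factor matrices from the column spaces of the unfolding matrices and then define the core by projection. Since ${\rm rank}({\bf X}^{(i)})=r_i$, I would choose ${\bf U}\in{\mathbb R}^{n_1\times r_1}$, ${\bf V}\in{\mathbb R}^{n_2\times r_2}$, and ${\bf W}\in{\mathbb R}^{n_3\times r_3}$ to have orthonormal columns spanning the column spaces of ${\bf X}^{(1)}$, ${\bf X}^{(2)}$, ${\bf X}^{(3)}$, respectively (e.g. from the thin SVD of each unfolding). The core is then defined as ${\mathcal C}:={\mathcal X}\times_1{\bf U}^T\times_2{\bf V}^T\times_3{\bf W}^T\in{\mathbb R}^{r_1\times r_2\times r_3}$. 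The crux is to verify ${\mathcal C}\times_1{\bf U}\times_2{\bf V}\times_3{\bf W}={\mathcal X}$. Using the same-mode composition rule, the left-hand side equals ${\mathcal X}\times_1({\bf U}{\bf U}^T)\times_2({\bf V}{\bf V}^T)\times_3({\bf W}{\bf W}^T)$. Since the columns of ${\bf X}^{(1)}$ lie in the column space spanned by ${\bf U}$, the orthogonal projector ${\bf U}{\bf U}^T$ fixes them, i.e. ${\bf U}{\bf U}^T{\bf X}^{(1)}={\bf X}^{(1)}$, which by the unfolding identity means ${\mathcal X}\times_1({\bf U}{\bf U}^T)={\mathcal X}$; the analogous invariance holds along modes $2$ and $3$. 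Chaining these three invariances (the products commute across modes) recovers ${\mathcal X}$, completing part (i).

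Part (ii) is the easier direction and only needs rank submultiplicativity ${\rm rank}({\bf A}{\bf B})\le\min({\rm rank}({\bf A}),{\rm rank}({\bf B}))$. Fixing a mode, say $i=1$, write ${\mathcal Y}:={\mathcal C}\times_1{\bf U}\times_2{\bf V}\times_3{\bf W}$. Grouping the mode-$2$ and mode-$3$ products (which commute with the mode-$1$ product) into an intermediate tensor ${\mathcal D}:={\mathcal C}\times_2{\bf V}\times_3{\bf W}\in{\mathbb R}^{r_1\times n_2\times n_3}$, I obtain ${\mathcal Y}={\mathcal D}\times_1{\bf U}$, and the unfolding identity gives ${\bf Y}^{(1)}={\bf U}{\bf D}^{(1)}$. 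Hence $\big{(}{\rm rank}_T({\mathcal Y})\big{)}_{(1)}={\rm rank}({\bf Y}^{(1)})\le{\rm rank}({\bf U})\le r_1$, because ${\bf U}$ has only $r_1$ columns. Repeating the same grouping for modes $2$ and $3$ yields the three bounds simultaneously.

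The only genuine subtlety, and the step I would treat most carefully, is the mode-product bookkeeping in part (i): verifying that the same-mode composition rule and cross-mode commutativity hold exactly as stated under the paper's ${\tt fold}/{\tt unfold}$ conventions, so that the three projector invariances can be composed without index errors. Everything else is routine linear algebra.
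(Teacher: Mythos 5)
Your proof is correct, but there is nothing in the paper to compare it against: the paper states this result as the classical Tucker factorization theorem with a citation to \cite{SIAM_review} and never proves it (indeed, the supplementary proof of the paper's Theorem 2 (ii) invokes part (ii) of this very statement as a black box). The relevant benchmark is therefore the standard argument in the cited literature, and yours matches it exactly. For part (i) you give the HOSVD-style existence proof: orthonormal bases ${\bf U},{\bf V},{\bf W}$ for the column spaces of the three unfoldings, core ${\mathcal C}={\mathcal X}\times_1{\bf U}^T\times_2{\bf V}^T\times_3{\bf W}^T$, and reconstruction via the projector identities. The chaining step you flag as the main subtlety is sound for a reason worth making explicit: each single-mode projection already returns ${\mathcal X}$ \emph{exactly} (since ${\bf U}{\bf U}^T{\bf X}^{(1)}={\bf X}^{(1)}$ and ${\tt fold}_1$ inverts ${\tt unfold}_1$), so you may apply the three invariances one at a time and never need to track how a mode-$1$ projection affects the mode-$2$ column space; the cross-mode commutativity and same-mode composition rules you list are indeed standard consequences of the ${\tt fold}/{\tt unfold}$ definition of $\times_i$ used in the paper. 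Part (ii) via ${\bf Y}^{(1)}={\bf U}{\bf D}^{(1)}$ and rank submultiplicativity is complete as written; note that your argument never uses the hypothesis $r_i\leq n_i$, which is in fact superfluous for the bound. One instructive contrast: the paper's supplementary proof of the continuous analogue (its Theorem 2 (i)) cannot take your route, since there is no SVD of an ``unfolding'' of a tensor function; it instead selects finitely many basis fibers attaining the supremal ranks and builds coefficient functions mode by mode. Your projection argument is thus specific to the discrete setting this statement lives in, which is precisely where it is cleanest.
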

\subsection{Low-Rank Tensor Function Representation}
In this section, we detailedly introduce the proposed continuous representation for multi-dimensional data. Without loss of generality, we consider the three-dimensional case, while it can be easily generalized to higher-dimensional cases. Let $f(\cdot):X_f\times Y_f\times Z_f\rightarrow{\mathbb R}$ be a bounded real function, where $X_f,Y_f,Z_f\subset{\mathbb R}$ are definition domains in three dimensions. The function $f(\cdot)$ gives the value of data at any coordinate in $D_f:=X_f\times Y_f\times Z_f$. We interpret $f(\cdot)$ as a {\sl tensor function} since it maps a three-dimensional coordinate to the corresponding value, implicitly representing third-order tensor data. Compared with classical tensor formulation, the tensor function intrinsically allows us to process and analyse multi-dimensional data on meshgrid beyond the original resolution or even beyond meshgrid. When $D_f$ is a discrete set of some constants, the output form of $f(\cdot)$ degrades to the discrete case (i.e., tensors).\par 
Based on tensor functions, we can naturally define the following sampled tensor set, which covers all tensors that can be sampled from the tensor function with different sampling coordinates.
\begin{definition}\label{def_S}
For a tensor function $f(\cdot):D_f\rightarrow{\mathbb R}$, we define the sampled tensor set $S[f]$ as
\begin{equation}
\begin{split}
S[f]:=\{{\mathcal T}|{\mathcal T}_{(i,j,k)}&=f({\bf x}_{(i)},{\bf y}_{(j)},{\bf z}_{(k)}),{\bf x}\in X_f^{{n}_1},\\&{\bf y}\in Y_f^{{n}_2},{\bf z}\in Z_f^{{n}_3},{n}_1,{n}_2,{n}_3\in{\mathbb N}_+\},
\end{split}
\end{equation}
where $\bf x$, $\bf y$, $\bf z$ denote the coordinate vector variables and $n_1$, $n_2$, $n_3$ are positive integer variables that determine the sizes of the sampled tensor $\mathcal T$.
\end{definition}
Tensor function is a promising and potential tool for multi-dimensional data processing. An interesting and fundamental question is whether we can analogously define the ``rank'' and develop the ``tensor factorization'' for tensor functions. Regarding the rank definition, a reasonable expectation is that any tensor sampled from $S[f]$ is a low-rank tensor. Thus, we can naturally define the function rank (F-rank) of $f(\cdot)$ as the supremum of the tensor rank in $S[f]$.
\begin{definition}\label{def_rank}
Given a tensor function $f:D_f=X_f\times Y_f\times Z_f\rightarrow {\mathbb R}$, we define a measure of its complexity, denoted by ${\rm F}$-${\rm rank}[f]$ (function rank of $f(\cdot)$), as the supremum of Tucker rank in the sampled tensor set $S[f]$:
\begin{equation}
\begin{split}
{\rm F\text{-}rank}[f]:=(r_{1},r_{2},r_{3}),\;{\rm where}\;r_{i} = \sup_{{\mathcal T}\in S[f]}{\rm rank}({\bf T}^{(i)}).
\end{split}
\end{equation}
\end{definition}
We call a tensor function $f({\cdot})$ with ${\rm F\text{-}rank}[f]=(r_1,r_2,r_3)$ ($r_i<\infty$ for $i=1,2,3$) as a low-rank tensor function since the Tucker rank of any ${\mathcal T}\in S[f]$ is bounded by $(r_1,r_2,r_3)$. When $f(\cdot)$ is defined on certain discrete sets, the F-rank degenerates into the discrete case, i.e., the classical Tucker rank, as stated in Proposition \ref{lemma_Tucker}.
\begin{proposition}\label{lemma_Tucker}
Let ${\mathcal X}\in{\mathbb R}^{m_1\times m_2\times m_3}$ be an arbitrary tensor. Let $X_f=\{1,2,\cdots,m_1\},Y_f=\{1,2,\cdots,m_2\},Z_f=\{1,2,\cdots,m_3\}$ be three discrete sets, and denote $D_f=X_f\times Y_f\times Z_f$. Define the tensor function $f(\cdot):D_f\rightarrow{\mathbb R}$ by $f(v_1,v_2,v_3)={\mathcal X}_{(v_1,v_2,v_3)}$ for any $(v_1,v_2,v_3)\in D_f$. Then we have ${\rm F\text{-}rank}[f]={\rm rank}_T({\mathcal X})$.  
\end{proposition}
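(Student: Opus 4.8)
The plan is to establish the vector equality ${\rm F\text{-}rank}[f]={\rm rank}_T({\mathcal X})$ componentwise, i.e., to show $r_i={\rm rank}({\bf X}^{(i)})$ for each $i\in\{1,2,3\}$, where $r_i=\sup_{{\mathcal T}\in S[f]}{\rm rank}({\bf T}^{(i)})$ as in Definition \ref{def_rank}. The key observation I would exploit is that, since the domains $X_f,Y_f,Z_f$ are exactly the index sets of ${\mathcal X}$, every sampled tensor ${\mathcal T}\in S[f]$ is built purely by re-indexing the entries of ${\mathcal X}$: if ${\bf x}\in X_f^{n_1}$, ${\bf y}\in Y_f^{n_2}$, ${\bf z}\in Z_f^{n_3}$ are the chosen coordinate vectors, then ${\mathcal T}_{(i,j,k)}={\mathcal X}_{({\bf x}_{(i)},{\bf y}_{(j)},{\bf z}_{(k)})}$. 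I would encode this re-indexing algebraically as ${\mathcal T}={\mathcal X}\times_1{\bf S}_1\times_2{\bf S}_2\times_3{\bf S}_3$, where each selection matrix ${\bf S}_1\in{\mathbb R}^{n_1\times m_1}$ has its $\ell$-th row equal to the standard basis (row) vector ${\bf e}_{{\bf x}_{(\ell)}}^\top$, and ${\bf S}_2,{\bf S}_3$ are defined analogously from ${\bf y},{\bf z}$. This reformulation is the bridge between the sampled picture and the discrete Tucker machinery of Theorem \ref{th_Tucker}.

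For the lower bound $r_i\geq{\rm rank}({\bf X}^{(i)})$, I would exhibit ${\mathcal X}$ itself as an element of $S[f]$. Choosing $n_1=m_1$, $n_2=m_2$, $n_3=m_3$ together with the identity sampling ${\bf x}=(1,2,\cdots,m_1)$, ${\bf y}=(1,2,\cdots,m_2)$, ${\bf z}=(1,2,\cdots,m_3)$ gives ${\mathcal T}_{(i,j,k)}=f(i,j,k)={\mathcal X}_{(i,j,k)}$, so that ${\mathcal T}={\mathcal X}$. Since the supremum in Definition \ref{def_rank} ranges over all of $S[f]$, this immediately yields $r_i\geq{\rm rank}({\bf X}^{(i)})$ for each $i$.

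For the upper bound $r_i\leq{\rm rank}({\bf X}^{(i)})$, I would fix an arbitrary ${\mathcal T}\in S[f]$ and pass to its mode-$i$ unfolding. The selection-matrix representation gives ${\bf T}^{(i)}={\bf S}_i{\bf X}^{(i)}{\bf Q}_i^\top$, where ${\bf Q}_i$ is the Kronecker product of the remaining two selection matrices. Because left- and right-multiplication by any matrix cannot increase rank --- the row space of ${\bf S}_i{\bf X}^{(i)}$ is contained in that of ${\bf X}^{(i)}$, and the subsequent column operation is similarly rank-nonincreasing --- we obtain ${\rm rank}({\bf T}^{(i)})\leq{\rm rank}({\bf X}^{(i)})$, which is the sharp form of the rank bound underlying Theorem \ref{th_Tucker}(ii). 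Taking the supremum over ${\mathcal T}\in S[f]$ gives $r_i\leq{\rm rank}({\bf X}^{(i)})$, and combining with the lower bound establishes $r_i={\rm rank}({\bf X}^{(i)})$ and hence ${\rm F\text{-}rank}[f]={\rm rank}_T({\mathcal X})$.

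The point deserving the most care --- and the only place where a naive argument could go wrong --- is that Definition \ref{def_S} allows arbitrarily large sample sizes $n_i$ (possibly $n_i>m_i$) and repeated coordinate entries, so one might fear that some oversampling scheme inflates the rank beyond ${\rm rank}({\bf X}^{(i)})$. The selection-matrix viewpoint resolves this cleanly: repeating a coordinate merely duplicates a slice of ${\mathcal X}$, and duplicated rows or columns never raise the rank, so ${\rm rank}({\bf T}^{(i)})\leq{\rm rank}({\bf X}^{(i)})\leq m_i$ regardless of how large the $n_i$ are. This simultaneously confirms that each $r_i$ is finite, so the supremum in Definition \ref{def_rank} is well-defined and is in fact attained at ${\mathcal T}={\mathcal X}$.
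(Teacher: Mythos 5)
Your proof is correct and takes essentially the same approach as the paper: the lower bound via identity sampling is identical, and your selection-matrix identity ${\mathcal T}={\mathcal X}\times_1{\bf S}_1\times_2{\bf S}_2\times_3{\bf S}_3$ with unfolding ${\bf T}^{(i)}={\bf S}_i{\bf X}^{(i)}{\bf Q}_i^\top$ is just a global, Kronecker-packaged form of the paper's column-by-column argument, which writes ${\bf T}^{(1)}={\bf A}\widetilde{\bf X}^{(1)}$ for a consistent rearranging matrix ${\bf A}$ and a column-selected $\widetilde{\bf X}^{(1)}$ and invokes the same rank-nonincrease fact. Your closing remark on oversampling and repeated coordinates likewise mirrors the paper's observation that the bound holds independently of the sample sizes $n_i$.
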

\begin{proof}
First, it is ordinary to see that $\sup_{{\mathcal T}\in S[f]}{\rm rank}({\bf T}^{(i)})\geq{\rm rank}({\bf X}^{(i)})$ ($i=1,2,3$) by setting ${\bf x}=(1,2,\cdots,m_1)$, ${\bf y}=(1,2,\cdots,m_2)$, and ${\bf z}=(1,2,\cdots,m_3)$ in (\ref{def_S}).\par 
Second, we will show that $\sup_{{\mathcal T}\in S[f]}{\rm rank}({\bf T}^{(i)})\leq{\rm rank}({\bf X}^{(i)})$ ($i=1,2,3$). Let ${\mathcal T}\in S[f]\cap{\mathbb R}^{n_1\times n_2\times n_3}$ with $n_i$s $(i=1,2,3)$ being some integers. According to the definition of $S[f]$, we can see that for each column vector of ${\bf T}^{(1)}$, which is denoted by ${\bf T}^{(1)}_{(:,p)}$ $(p\in\{1,2,\cdots,n_2n_3\})$, there exists a constant $l_p\in\{1,2,\cdots,m_2m_3\}$ depending on $p$ such that ${\bf T}^{(1)}_{(:,p)}$ is a rearrangement of the elements of ${\bf X}^{(1)}_{(:,l_p)}$, where repeated sampling is allowed. In another word, for any ${\bf T}^{(1)}_{(:,p)}$, there exists a rearranging matrix ${\bf A}\in{\{0,1\}^{n_1\times m_1}}$ and a column of ${\bf X}^{(1)}$ depending on $p$ (i.e., ${\bf X}^{(1)}_{(:,l_p)}$) such that ${\bf T}^{(1)}_{(:,p)}={\bf A}{\bf X}^{(1)}_{(:,l_p)}$. From Definition \ref{def_S}, we can see that the rearranging matrix for all ${\bf T}^{(1)}_{(:,p)}$s $(p=1,2,\cdots,n_2n_3)$ are consistent, i.e., ${\bf T}^{(1)}_{(:,p)}={\bf A}{\bf X}^{(1)}_{(:,l_p)}$ for all $p=1,2,\cdots,n_2n_3$. Let ${\widetilde{\bf X}^{(1)}}=[{\bf X}^{(1)}_{(:,l_1)},{\bf X}^{(1)}_{(:,l_2)},\cdots,{\bf X}^{(1)}_{(:,l_{n_2n_3})}]\in{\mathbb R}^{m_1\times n_2n_3}$, then we have ${\rm rank}({\widetilde{\bf X}^{(1)}})\leq{\rm rank}({{\bf X}^{(1)}})$ and ${\bf T}^{{(1)}}={\bf A}\widetilde{\bf X}^{{(1)}}$, and thus ${\rm rank}({\bf T}^{(1)})\leq{\rm rank}({{\bf X}^{(1)}})$ holds. Similarly, we can prove ${\rm rank}({\bf T}^{(2)})\leq{\rm rank}({{\bf X}^{(2)}})$ and ${\rm rank}({\bf T}^{(3)})\leq{\rm rank}({{\bf X}^{(3)}})$. Note that these results do not depend on $n_i$s $(i=1,2,3)$. In summary, for any ${\mathcal T}\in S[f]$, the inequalities ${\rm rank}({\bf T}^{(i)})\leq{\rm rank}({\bf X}^{(i)})$ ($i=1,2,3$) hold and they deduce $\sup_{{\mathcal T}\in S[f]}{\rm rank}({\bf T}^{(i)})\leq{\rm rank}({\bf X}^{(i)})$ ($i=1,2,3$). 
\end{proof}
Lemma \ref{lemma_Tucker} establishes the connection between F-rank and the classical tensor rank in the discrete case. Otherwise, if the definition domains are continuous, e.g., $X_f=[1,n_1]$, $Y_f=[1,n_2]$, and $Z_f=[1,n_3]$ for some constants $n_i$s ($i=1,2,3$), $f(\cdot)$ can represent data beyond meshgrid with infinite resolution. Therefore, ${\rm F\text{-}rank}$ is an extension of Tucker rank from discrete tensors to tensor functions for continuous representations.\par 
Similar to classical tensor representations, it is meaningful to think over whether a low-rank tensor function $f(\cdot)$ with $({\rm F\text{-}rank}[f])_{(i)}<\infty$ ($i=1,2,3$) can also have some tensor factorization strategies to encode the low-rankness. We present a positive answer that a tensor function $f(\cdot)$ with ${\rm F\text{-}rank}[f]=(r_1,r_2,r_3)$ can be factorized as the product of a core tensor $\mathcal C$ and three factor functions $f_x(\cdot)$, $f_y(\cdot)$, and $f_z(\cdot)$, where their output dimensions are related to the F-rank $r_i$ ($i=1,2,3$). On the contrary, the products of a core tensor $\mathcal C$ and three factor functions $g_x(\cdot)$, $g_y(\cdot)$, and $g_z(\cdot)$ form a low-rank representation $g(\cdot)$, where ${\rm F\text{-}rank}[g]$ is bounded by the output dimensions of $g_x(\cdot)$, $g_y(\cdot)$, and $g_z(\cdot)$. The theory is formally given as follows. 
\begin{theorem}\label{Pro_1}
(Low-rank tensor function factorization, see proof in supplementary) Let $f(\cdot):D_f=X_f\times Y_f\times Z_f\rightarrow {\mathbb R}$ be a bounded tensor function, where $X_f,Y_f,Z_f\subset{\mathbb R}$. Then
\begin{itemize}
\item[\bf (i)] If ${\rm F\text{-}rank}[f]=(r_1,r_2,r_3)$, then there exist a tensor ${\mathcal C}\in{\mathbb R}^{r_1\times r_2\times r_3}$ and three bounded functions $f_x(\cdot):X_f\rightarrow {\mathbb R}^{r_1}$, $f_y(\cdot):Y_f\rightarrow {\mathbb R}^{r_2}$, and $f_z(\cdot):Z_f\rightarrow {\mathbb R}^{r_3}$ such that for any ${(v_1,v_2,v_3)}\in D_f$, $f(v_1,v_2,v_3)={\mathcal C}\times_1f_x({v}_1)\times_2f_y({v}_2)\times_3f_z({v}_3)$.
\item[\bf (ii)] On the other hand, let ${\mathcal C}\in{\mathbb R}^{r_1\times r_2\times r_3}$ be an arbitrary tensor and $g_x(\cdot):X_g\rightarrow {\mathbb R}^{r_1}$, $g_y(\cdot):Y_g\rightarrow {\mathbb R}^{r_2}$, and $g_z(\cdot):Z_g\rightarrow {\mathbb R}^{r_3}$ be arbitrary bounded functions defined on $X_g$, $Y_g$, $Z_g\subset{\mathbb R}$. Then we have $({\rm F\text{-}rank}[g])_{(i)}\leq r_i$ ($i=1,2,3$), where $g(\cdot):D_g=X_g\times Y_g\times Z_g\rightarrow{\mathbb R}$ is defined by $g(v_1,v_2,v_3)={\mathcal C}\times_1 g_x(v_1)\times_2 g_y(v_2)\times_3 g_z(v_3)$ for any $(v_1,v_2,v_3)\in D_g$.
\end{itemize}
\end{theorem}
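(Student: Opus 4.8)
The plan is to treat the two directions separately, disposing of the converse bound (ii) first since it reduces to the discrete theory, and then constructing the factorization in (i) by passing from sampled-tensor ranks to finite-dimensional spaces of fiber functions. For direction (ii), given an arbitrary ${\mathcal T}\in S[g]$ of size $n_1\times n_2\times n_3$ sampled at coordinates ${\bf v}_1\in X_g^{n_1}$, ${\bf v}_2\in Y_g^{n_2}$, ${\bf v}_3\in Z_g^{n_3}$, I would stack the factor-function outputs row-wise into matrices ${\bf U}\in{\mathbb R}^{n_1\times r_1}$, ${\bf V}\in{\mathbb R}^{n_2\times r_2}$, ${\bf W}\in{\mathbb R}^{n_3\times r_3}$ with ${\bf U}_{(i,:)}=g_x(({\bf v}_1)_{(i)})^\top$, and similarly for ${\bf V},{\bf W}$. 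The definition of $g$ then gives ${\mathcal T}={\mathcal C}\times_1{\bf U}\times_2{\bf V}\times_3{\bf W}$, so Theorem \ref{th_Tucker}(ii) yields $({\rm rank}_T({\mathcal T}))_{(i)}\le r_i$; since this bound is uniform over $S[g]$, taking the supremum in Definition \ref{def_rank} gives $({\rm F\text{-}rank}[g])_{(i)}\le r_i$.

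For direction (i), the core idea is that the F-rank controls the dimension of the space of mode-$i$ fiber functions. For mode $1$, let $V_1\subset\{h:X_f\to{\mathbb R}\}$ be the linear span of the fiber family $\{f(\cdot,v_2,v_3):v_2\in Y_f,\,v_3\in Z_f\}$, and I would first show $\dim V_1=r_1$. The upper bound uses the elementary fact that if $k$ functions are linearly independent on $X_f$, then there exist $k$ points of $X_f$ at which their evaluation matrix is invertible: were $\dim V_1>r_1$, choosing $r_1+1$ independent fibers together with such separating points (and enlarging the $y,z$ samplings to a grid containing the relevant coordinates) would produce a ${\mathcal T}\in S[f]$ whose mode-$1$ unfolding contains an invertible $(r_1+1)\times(r_1+1)$ submatrix, contradicting $r_1=\sup_{{\mathcal T}\in S[f]}{\rm rank}({\bf T}^{(1)})$. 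The matching lower bound $\dim V_1\ge r_1$ holds because some sampling attains rank $r_1$. I would then select a basis $\psi_1,\dots,\psi_{r_1}$ of $V_1$ from among the fibers themselves—each bounded by $\sup_{D_f}|f|$—and set $f_x(\cdot):=(\psi_1(\cdot),\dots,\psi_{r_1}(\cdot))^\top$. Repeating this in modes $2$ and $3$ produces bounded bases $\{\phi_b\}_{b=1}^{r_2}$, $\{\chi_c\}_{c=1}^{r_3}$ and the functions $f_y,f_z$.

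To assemble the core tensor, I would expand $f$ in the mode-$1$ basis as $f(v_1,v_2,v_3)=\sum_{a=1}^{r_1}\psi_a(v_1)\,c_a(v_2,v_3)$ for uniquely determined coefficient functions $c_a$. Choosing points $\bar x_1,\dots,\bar x_{r_1}$ for which $[\psi_a(\bar x_{a'})]$ is invertible (again by the separation fact) exhibits each $c_a$ as a fixed linear combination of the slices $f(\bar x_i,\cdot,\cdot)$; since each such slice lies in $V_2$ as a function of its second argument and in $V_3$ as a function of its third, the same inversion trick applied successively in modes $2$ and $3$ gives $c_a(v_2,v_3)=\sum_{b,c}{\mathcal C}_{(a,b,c)}\phi_b(v_2)\chi_c(v_3)$ for scalars ${\mathcal C}_{(a,b,c)}$. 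Substituting back yields $f(v_1,v_2,v_3)=\sum_{a,b,c}{\mathcal C}_{(a,b,c)}\psi_a(v_1)\phi_b(v_2)\chi_c(v_3)={\mathcal C}\times_1 f_x(v_1)\times_2 f_y(v_2)\times_3 f_z(v_3)$, as required.

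I expect the main obstacle to be the bridge between \emph{functional} linear independence and the \emph{grid}-sampled unfolding rank in the proof of $\dim V_i=r_i$: because $S[f]$ only samples full tensor grids, I must argue that any targeted finite submatrix of fiber evaluations can be realized inside some grid sampling, so that the F-rank genuinely caps the fiber-space dimension. The remaining delicate point is the bookkeeping in the assembly step, ensuring that the coefficient functions extracted in mode $1$ still inherit the mode-$2$ and mode-$3$ low-rank structure, so that a single core tensor ${\mathcal C}\in{\mathbb R}^{r_1\times r_2\times r_3}$ reproduces $f$; boundedness of all factor functions is then immediate from choosing bases among fibers of the bounded function $f$.
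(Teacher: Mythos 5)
Your proposal is correct, and for part (i) it takes a genuinely different route from the paper's supplementary proof; part (ii) coincides with the paper's argument exactly (stack the factor evaluations into $\mathbf{U},\mathbf{V},\mathbf{W}$, write ${\mathcal G}={\mathcal C}\times_1{\bf U}\times_2{\bf V}\times_3{\bf W}$, and invoke Theorem \ref{th_Tucker}\,(ii)). For (i), the paper proceeds by sequential mode-wise peeling: it builds a single sampled tensor attaining all three ranks simultaneously (by concatenating the coordinate vectors of three rank-attaining samplings), extracts $r_1$ basis fibers ${\mathcal T}_{(:,j_l,k_l)}$, and extends the linear representation from the grid to every $x\in X_f$ by augmenting the sampling with the new coordinates and exploiting uniqueness of the coefficient vector; the factor $f_x$ is then a vector of actual fibers, and---this is the bulk of the paper's work---it proves an F-rank \emph{invariance} lemma for the coefficient tensor function $h(i,y,z)=({c}(y,z))_{(i)}$, namely ${\rm F\text{-}rank}[h]=(r_1,r_2,r_3)$, so that the one-mode factorization can be applied recursively in modes 2 and 3. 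You instead characterize the F-rank in each mode as the dimension of the span of mode-$i$ fiber functions ($\dim V_i=r_i$), proved via the standard separating-points lemma (linearly independent functions admit points making their evaluation matrix invertible) together with your correctly flagged and easily discharged bridging step: any targeted evaluation submatrix embeds in a full-grid sampling, e.g.\ as the ``diagonal'' columns indexed by $(y_m,z_m)$ of the grid assembled from all the chosen $y$'s and $z$'s, so the F-rank genuinely caps $\dim V_i$. You then fix fiber bases in all three modes at once and assemble the core by successive evaluation-matrix inversions applied to the finitely many slices $f(\bar x_i,\cdot,\cdot)$; the key observation making this sound is that each function appearing at each stage ($f(\bar x_i,\cdot,v_3)$ in $v_2$, then $f(\bar x_i,\bar y_j,\cdot)$ in $v_3$) is again a fiber of $f$ and hence lies in the appropriate $V_i$, so no recursion on an intermediate tensor function---and no invariance lemma---is needed. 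Your route buys a shorter, more conceptual proof and a reusable fact (F-rank equals fiber-span dimension, a clean functional analogue of matrix rank); the paper's route yields the stronger intermediate information that the mode-1 coefficient function preserves the full F-rank vector, mirroring discrete Tucker peeling. One small point you should make explicit, which the paper also uses (its claim that ``$R_1$ is closed''): since the ranks of tensors in $S[f]$ form a set of nonnegative integers bounded by $r_i<\infty$, the supremum in Definition \ref{def_rank} is attained, which is precisely what your lower bound $\dim V_i\geq r_i$ requires.
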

Theorem \ref{Pro_1} is a natural extension of Tucker factorization (Theorem \ref{th_Tucker}) from discrete meshgrid to the continuous domain. It inherits the nice property of Tucker factorization that the low-rank tensor function $f(\cdot)$ can be factorized into some simpler factor functions (in terms of number of function variables).
\begin{remark}
Tucker factorization (Theorem \ref{th_Tucker}) is a special case of our Theorem \ref{Pro_1} when $D_f$ (or $D_g$) is a certain discrete set representing meshgrid. This can be easily derived by incorporating Lemma \ref{lemma_Tucker} into Theorem \ref{Pro_1}. 
\end{remark}
Based on the low-rank tensor function factorization, we can compactly represent the multi-dimensional data by a low-rank tensor function formulated as 
\begin{equation}\label{eq_LRTFR}
{\textbf [}{\mathcal C};f_x,f_y,f_z{\textbf ]}({\bf v}):={\mathcal C}\times_1f_x({\bf v}_{(1)})\times_2f_y({\bf v}_{(2)})\times_3f_z({\bf v}_{(3)}), 
\end{equation}
which is parameterized by the core tensor ${\mathcal C}$ and factor functions $f_x(\cdot)$, $f_y(\cdot)$, and $f_z(\cdot)$. The representation implicitly encodes the low-rankness of the tensor function by the low-rank function factorization, i.e., any tensor sampled from the tensor function representation must be a low-rank tensor, as stated in Theorem \ref{Pro_1}. This property allows us to more compactly and stably represent the multi-dimesional data in a continuous manner.\par 
In the LRTFR (\ref{eq_LRTFR}), we further suggest to use the MLP to parameterize the factor functions due to its powerful universal approximation abilities\cite{Uni}. Specifically, we employ three MLPs $f_{\theta_x}(\cdot)$, $f_{\theta_y}(\cdot)$, and $f_{\theta_z}(\cdot)$ with parameters $\theta_x$, $\theta_y$, and $\theta_z$ to parameterize the factor functions $f_{x}(\cdot)$, $f_{y}(\cdot)$, and $f_{z}(\cdot)$. Taking $f_{\theta_x}(\cdot)$ as an example, it is formulated as
\begin{equation}\label{MLP}
f_{\theta_x}({\bf x}):={\bf H}_d(\sigma({\bf H}_{d-1}\cdots\sigma({\bf H}_1{\bf x}))):X_f\rightarrow {\mathbb R}^{r_1},
\end{equation}
where $\sigma(\cdot)$ is the nonlinear activation function and $\theta_x:=\{{\bf H}_i\}_{i=1}^d$ are learnable weight matrices of the MLP. With these in mind, the MLP-parameterized LRTFR is formulated as ${\textbf [}{\mathcal C};f_{\theta_x},f_{\theta_y},f_{\theta_z}{\textbf ]}({\bf v}):={\mathcal C}\times_1f_{\theta_x}({\bf v}_{(1)})\times_2f_{\theta_y}({\bf v}_{(2)})\times_3f_{\theta_z}({\bf v}_{(3)})$, which is parameterized by the core tensor ${\mathcal C}$ and MLP weights $\theta_x$, $\theta_y$, and $\theta_z$.
\par 
Compared to existing continuous representations using the INR\cite{sine,DeepSDF,NIPS_Four}, the advantages of our LRTFR are that {\bf (i)} the suggested domain knowledge, i.e., low-rankness, is helpful to alleviate the overfitting phenomenon for learning continuous representations, and {\bf (ii)} the tensor function factorization is expected to more evidently reduce the computational complexity for learning continuous representations; see details in Sec. \ref{advantage_INR}.
\subsection{Implicit Smooth Regularization of LRTFR}
Since smoothness is another common property in multi-dimensional data, e.g., the temporal smoothness of videos\cite{IJCV_Dong} and the spectral smoothness of hyperspectral images\cite{LRTDTV}, it is interesting to explore the smooth property in LRTFR besides the low-rankness. Next, we theoretically justify that LRTFR encodes an implicit smooth regularization brought from the specific structures of MLPs. 
\begin{theorem}\label{Pro_2}
(See proof in supplementary) Let ${\mathcal C}\in{\mathbb R}^{r_1\times r_2\times r_3}$, and $f_{\theta_x}(\cdot):{X_f}\rightarrow{\mathbb R}^{r_1}$, $f_{\theta_y}(\cdot):{Y_f}\rightarrow{\mathbb R}^{r_2}$, $f_{\theta_z}(\cdot):{Z_f}\rightarrow{\mathbb R}^{r_3}$ be three MLPs structured as in (\ref{MLP}) with parameters $\theta_x$, $\theta_y$, $\theta_z$, where $X_f,Y_f,Z_f\subset{\mathbb R}$. Suppose that the MLPs share the same activation function $\sigma(\cdot)$ and depth $d$. Besides, we assume that
\begin{itemize}
\item $\sigma(\cdot)$ is Lipschitz continuous with Lipschitz constant $\kappa$.
\item The $\ell_1$-norm of $\mathcal C$ is bounded by $\eta_1>0$.
\item The $\ell_1$-norm of each weight matrix ${\bf H}_{i}$ in the three MLPs is bounded by $\eta_2>0$. Let $\eta=\max\{\eta_1,\eta_2\}$.
\end{itemize}
Define a tensor function $f(\cdot):D_f=X_f\times Y_f\times Z_f\rightarrow {\mathbb R}$ as $f(\cdot):={\textbf{\rm [}}{\mathcal C};f_{\theta_x},f_{\theta_y},f_{\theta_z}{\textbf{\rm ]}}(\cdot)$. Then, the following inequalities hold for any $(x_1,y_1,z_1)$, $(x_2,y_2,z_2)\in{D_f}$:
    \begin{equation}\label{ineq_f}
  \left\{
  \begin{aligned}
&|f(x_1,y_1,z_1)-f(x_2,y_1,z_1)|\leq \delta|x_1-x_2|\\
&|f(x_1,y_1,z_1)-f(x_1,y_2,z_1)|\leq \delta|y_1-y_2|\\
&|f(x_1,y_1,z_1)-f(x_1,y_1,z_2)|\leq \delta|z_1-z_2|,
  \end{aligned}
  \right.
    \end{equation}
where $\delta=\eta^{3d+1}\kappa^{3d-3}{\zeta}^2$ and ${\zeta}=\max\{|x_1|,|y_1|,|z_1|\}$.
\end{theorem}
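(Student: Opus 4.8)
The plan is to recognize the representation $f(v_1,v_2,v_3)=[\mathcal{C};f_{\theta_x},f_{\theta_y},f_{\theta_z}](v_1,v_2,v_3)$ as a trilinear form in the three factor outputs. Writing $\mathbf{a}=f_{\theta_x}(v_1)\in\mathbb{R}^{r_1}$, $\mathbf{b}=f_{\theta_y}(v_2)\in\mathbb{R}^{r_2}$, $\mathbf{c}=f_{\theta_z}(v_3)\in\mathbb{R}^{r_3}$, the mode products unfold to $f(v_1,v_2,v_3)=\sum_{p,q,s}\mathcal{C}_{(p,q,s)}\mathbf{a}_{(p)}\mathbf{b}_{(q)}\mathbf{c}_{(s)}$. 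Perturbing only the first coordinate from $x_1$ to $x_2$ leaves $\mathbf{b},\mathbf{c}$ unchanged and replaces $\mathbf{a}$ by $\mathbf{a}-\mathbf{a}'$, so $f(x_1,y_1,z_1)-f(x_2,y_1,z_1)=\sum_{p,q,s}\mathcal{C}_{(p,q,s)}(\mathbf{a}_{(p)}-\mathbf{a}'_{(p)})\mathbf{b}_{(q)}\mathbf{c}_{(s)}$ with $\mathbf{a}'=f_{\theta_x}(x_2)$. The whole proof then reduces to three ingredients: (i) a Lipschitz bound on each factor MLP, (ii) a magnitude bound on each factor MLP output, and (iii) a contraction inequality that glues these to $\|\mathcal{C}\|_{\ell_1}$.

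For (i), I would establish an MLP-Lipschitz lemma by induction over the $d$ layers, propagating the input perturbation in the $\ell_\infty$ norm. Using the elementary bound $\|\mathbf{H}\mathbf{v}\|_\infty\le\|\mathbf{H}\|_{\ell_1}\|\mathbf{v}\|_\infty$ together with the entrywise $\kappa$-Lipschitz estimate $\|\sigma(\mathbf{u})-\sigma(\mathbf{w})\|_\infty\le\kappa\|\mathbf{u}-\mathbf{w}\|_\infty$, each of the $d$ affine maps contributes a factor at most $\eta$ and each of the $d-1$ activations a factor at most $\kappa$. Since there is no activation after $\mathbf{H}_d$, this yields $\|f_{\theta_x}(x_1)-f_{\theta_x}(x_2)\|_\infty\le\kappa^{d-1}\eta^d|x_1-x_2|$, and analogously for the $y$- and $z$-MLPs.

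For (ii), the same induction started from the input $0$ gives $\|f_{\theta_y}(y_1)\|_\infty=\|f_{\theta_y}(y_1)-f_{\theta_y}(0)\|_\infty\le\kappa^{d-1}\eta^d|y_1|\le\kappa^{d-1}\eta^d\zeta$, and similarly $\|f_{\theta_z}(z_1)\|_\infty\le\kappa^{d-1}\eta^d\zeta$, where I use $f_{\theta}(0)=0$. Combining this with (iii), namely $|\sum_{p,q,s}\mathcal{C}_{(p,q,s)}\alpha_{(p)}\beta_{(q)}\gamma_{(s)}|\le\|\mathcal{C}\|_{\ell_1}\|\alpha\|_\infty\|\beta\|_\infty\|\gamma\|_\infty$ applied to $\alpha=\mathbf{a}-\mathbf{a}'$, $\beta=\mathbf{b}$, $\gamma=\mathbf{c}$, and using $\|\mathcal{C}\|_{\ell_1}\le\eta_1\le\eta$, I obtain $|f(x_1,y_1,z_1)-f(x_2,y_1,z_1)|\le\eta\cdot(\kappa^{d-1}\eta^d|x_1-x_2|)\cdot(\kappa^{d-1}\eta^d\zeta)^2=\eta^{3d+1}\kappa^{3d-3}\zeta^2|x_1-x_2|=\delta|x_1-x_2|$. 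The $y$- and $z$-perturbation bounds follow by the identical argument with the roles of the three MLPs permuted.

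The main obstacle is bookkeeping rather than any deep idea: I must track which vector/matrix norm propagates through the network so that the matrix-vector bound, the activation bound, and the final trilinear contraction all use compatible norms — pairing $\ell_\infty$ on vectors with the entrywise $\ell_1$ on the weights and on $\mathcal{C}$ is exactly what makes the constants telescope — and I must count the layers carefully so the activation exponent is $3(d-1)=3d-3$ rather than $3d$ (there are $d$ linear maps but only $d-1$ nonlinearities per MLP). A secondary point to make explicit is the magnitude bound in (ii): it requires $f_\theta(0)=0$, i.e. $\sigma(0)=0$, which is precisely what turns the factor norms into $|y_1|,|z_1|\le\zeta$ and hence produces the $\zeta^2$ in $\delta$; without anchoring the output at the origin one could only bound $\|f_{\theta_y}(y_1)\|_\infty$ by a constant plus a multiple of $|y_1|$, so I would either add the hypothesis $\sigma(0)=0$ explicitly or note that it is satisfied by the activation in use.
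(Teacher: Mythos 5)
Your proposal is correct and follows essentially the same route as the paper's proof: isolate the perturbed factor using multilinearity of the mode products, bound each MLP's output magnitude and Lipschitz constant layer by layer (one factor of $\eta$ per weight matrix, one factor of $\kappa$ per activation, anchored by $\sigma(0)=0$ --- which the paper also invokes via $\sigma(0)=\sin(0)=0$), and contract against $\lVert\mathcal{C}\rVert_{\ell_1}\leq\eta$ to obtain $\delta=\eta^{3d+1}\kappa^{3d-3}\zeta^{2}$. The only deviation is cosmetic bookkeeping: you propagate $\ell_\infty$ vector norms against the entrywise $\ell_1$ weight bound, whereas the paper propagates $\ell_1$ vector norms, and both pairings telescope to identical constants.
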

Theorem \ref{Pro_2} gives a Lipschitz type smoothness guarantee of the MLP-parameterized LRTFR. The smoothness is implicitly encoded under mild assumptions of nonlinear activation function and weight matrices, which are easy to achieve in real implementation. For example, most widely-used activation functions are Lipschitz continuous, such as ReLU, LeakyReLU, Sine, and Tanh. Moreover, we can expediently control the degree of smoothness by controlling the structures of MLPs, as explained in the following remark.
\begin{remark}
In Theorem \ref{Pro_2}, we can see that the degree of implicit smoothness (the constant $\delta$) is related to the Lipschitz constant $\kappa$ of the activation function and the upper bound $\eta$ of weight matrices and core tensor. Thus, we can control two variables in practice to balance the implicit smoothness:
\begin{itemize}
\item[1)] First, we use the sine function $\sigma(\cdot)=\sin(\omega_0 \cdot)$ as the nonlinear activation function in the MLPs. The sine function is Lipschitz continuous. An efficient way to adjust its Lipschitz constant $\kappa$ is to change the value of $\omega_0$, i.e., the smaller $\omega_0$ is, the smaller Lipschitz constant $\kappa$ can be got and the smoother result can be obtained.
\item[2)] Second, to control the upper bound $\eta$ of weight matrices and core tensor, we can tune the trade-off parameter of the energy regularization on MLP weights and the core tensor, known as the weight decay in modern deep learning optimizers. This strategy controls the intensity of $\eta$.
\end{itemize}
\end{remark}
Based on Theorem \ref{Pro_2}, we can deduce the following corollary concluding the smoothness of any sampled tensor on the continuous representation $f(\cdot)$.
\begin{corollary}\label{ineq}
Suppose the assumptions in Theorem \ref{Pro_2} hold. Define $f(\cdot):={\textbf{\rm [}}{\mathcal C};f_{\theta_x},f_{\theta_y},f_{\theta_z}{\textbf{\rm ]}}(\cdot)$. Then, for any ${\mathcal T}\in S[f]\cap{\mathbb R}^{n_1\times n_2\times n_3}$ (with $n_i$s being any positive numbers) sampled by coordinate vectors ${\bf x}\in{X_f}^{n_1}$, ${\bf y}\in{Y_f}^{n_2}$, and ${\bf z}\in{Z_f}^{n_3}$ (see Definition \ref{def_S}), the following inequalities hold for all ($i,j,k$)s $(i=1,2,\cdots,n_1,\;j=1,2,\cdots,n_2,\;k=1,2,\cdots,n_3)$:
    \begin{equation}
  \left\{
  \begin{aligned}
&|{\mathcal T}_{({\bf x}_{(i)},{\bf y}_{(j)},{\bf z}_{(k)})}-{\mathcal T}_{({\bf x}_{(i-1)},{\bf y}_{(j)},{\bf z}_{(k)})}|\leq \delta|{\bf x}_{(i)}-{\bf x}_{(i-1)}|\\
&|{\mathcal T}_{({\bf x}_{(i)},{\bf y}_{(j)},{\bf z}_{(k)})}-{\mathcal T}_{({\bf x}_{(i)},{\bf y}_{(j-1)},{\bf z}_{(k)})}|\leq \delta|{\bf y}_{(j)}-{\bf y}_{(j-1)}|\\
&|{\mathcal T}_{({\bf x}_{(i)},{\bf y}_{(j)},{\bf z}_{(k)})}-{\mathcal T}_{({\bf x}_{(i)},{\bf y}_{(j)},{\bf z}_{(k-1)})}|\leq \delta|{\bf z}_{(k)}-{\bf z}_{(k-1)}|,
  \end{aligned}
  \right.
    \end{equation}
    where $\delta=\eta^{3d+1}\kappa^{3d-3}{\tilde{\zeta}}^2$ and ${\tilde{\zeta}}=\max\{\lVert{\bf x}\rVert_\infty,\lVert{\bf y}\rVert_\infty,\lVert{\bf z}\rVert_\infty\}$.
\end{corollary}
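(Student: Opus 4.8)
The plan is to derive the corollary directly from Theorem \ref{Pro_2} by observing that, by Definition \ref{def_S}, each entry of a sampled tensor is merely an evaluation of the underlying tensor function, i.e., $\mathcal{T}_{(\mathbf{x}_{(i)},\mathbf{y}_{(j)},\mathbf{z}_{(k)})} = f(\mathbf{x}_{(i)},\mathbf{y}_{(j)},\mathbf{z}_{(k)})$. Hence each of the three claimed inequalities is nothing but the corresponding line of (\ref{ineq_f}) applied to a specific pair of points in $D_f$ that differ in exactly one coordinate. For instance, to establish the first inequality I would invoke the first line of (\ref{ineq_f}) with the reference point $(x_1,y_1,z_1) = (\mathbf{x}_{(i)},\mathbf{y}_{(j)},\mathbf{z}_{(k)})$ and the perturbed coordinate $x_2 = \mathbf{x}_{(i-1)}$.

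This application yields $|f(\mathbf{x}_{(i)},\mathbf{y}_{(j)},\mathbf{z}_{(k)}) - f(\mathbf{x}_{(i-1)},\mathbf{y}_{(j)},\mathbf{z}_{(k)})| \leq \delta_{ijk}|\mathbf{x}_{(i)}-\mathbf{x}_{(i-1)}|$, where the Lipschitz constant $\delta_{ijk} = \eta^{3d+1}\kappa^{3d-3}\zeta_{ijk}^2$ carries the point-dependent factor $\zeta_{ijk} = \max\{|\mathbf{x}_{(i)}|,|\mathbf{y}_{(j)}|,|\mathbf{z}_{(k)}|\}$. The key step is then to replace this entry-dependent constant by a single constant valid across the whole sampling grid: since $\zeta_{ijk} \leq \max\{\|\mathbf{x}\|_\infty,\|\mathbf{y}\|_\infty,\|\mathbf{z}\|_\infty\} = \tilde{\zeta}$ for every index triple $(i,j,k)$, monotonicity of $t\mapsto t^2$ on the nonnegative axis gives $\delta_{ijk} \leq \eta^{3d+1}\kappa^{3d-3}\tilde{\zeta}^2 = \delta$. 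Because $|\mathbf{x}_{(i)}-\mathbf{x}_{(i-1)}| \geq 0$, multiplying through preserves the inequality, so $\delta_{ijk}|\mathbf{x}_{(i)}-\mathbf{x}_{(i-1)}| \leq \delta|\mathbf{x}_{(i)}-\mathbf{x}_{(i-1)}|$, which is exactly the first claimed bound.

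The remaining two inequalities follow by the identical argument applied to the second and third lines of (\ref{ineq_f}), perturbing the $y$- and $z$-coordinates respectively while again anchoring the reference point at $(\mathbf{x}_{(i)},\mathbf{y}_{(j)},\mathbf{z}_{(k)})$. The only point requiring care, and it is a mild one, is this uniformization: Theorem \ref{Pro_2} produces a different Lipschitz constant at each reference point, so I must ensure the anchor is always the current grid point $(\mathbf{x}_{(i)},\mathbf{y}_{(j)},\mathbf{z}_{(k)})$, so that the relevant $\zeta$ equals $\max\{|\mathbf{x}_{(i)}|,|\mathbf{y}_{(j)}|,|\mathbf{z}_{(k)}|\}$ and is therefore dominated by the global $\tilde{\zeta}$. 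No genuine obstacle arises; the corollary is essentially a uniform restatement of Theorem \ref{Pro_2} over the finite sampling grid, and in particular the bound is independent of the grid sizes $n_1,n_2,n_3$.
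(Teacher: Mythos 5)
Your proposal is correct and is exactly the intended derivation: the paper gives no separate proof of this corollary precisely because it is the immediate specialization of Theorem \ref{Pro_2} to pairs of grid points, with the pointwise constant $\eta^{3d+1}\kappa^{3d-3}\zeta^2$ uniformized to $\delta$ via $\zeta\leq\tilde{\zeta}$, just as you argue. Your care in anchoring the reference point at $({\bf x}_{(i)},{\bf y}_{(j)},{\bf z}_{(k)})$ so that the point-dependent $\zeta$ is dominated by $\tilde{\zeta}$ is the right (and only) subtlety, and your observation that the bound is independent of $n_1,n_2,n_3$ matches the paper's claim.
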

Corollary \ref{ineq} claims that for any sampled tensor $\mathcal T\in S[f]$, the difference between its adjacent elements is bounded by the distance between adjacent coordinates up to a constant. Hence, our LRTFR implicitly and efficiently unifies the low-rankness and smoothness in all three dimensions, making it effective and efficient for continuous data representation.
\subsection{Advantages over Implicit Neural Representation}\label{advantage_INR}
We next discuss the relationships and advantages of our method over INR. The classical INRs (e.g., \cite{sine,DeepSDF,ICML_21,NIPS_Four}) learn an implicit function parameterized by deep neural networks $f_\theta(\cdot)$ to map a vector-form coordinate $(x,y,z)\in{\mathbb R}^3$ to the value of interest, i.e., $f_\theta(x,y,z)$. Our LRTFR (see illustrations in Fig. \ref{fig_1} (a)) maps some independent coordinates $x$, $y$, and $z$ to the corresponding value by disentangling the continuous representation into three simpler factor functions, implicitly encoding the low-rankness into the representation. Both INR and our method learn a continuous representation of data in the infinite real space. However, we additionally introduce the low-rank domain knowledge into the LRTFR, while INR-based methods mostly ignore the intrinsic structure of data. By virtue of the introduced domain knowledge, our LRTFR enjoys two intrinsic superiorities over INR.\par
      \begin{figure}[t]
          \scriptsize
          \setlength{\tabcolsep}{0.9pt}
          \begin{center}
          \begin{tabular}{cccc}
\includegraphics[width=0.111\textwidth]{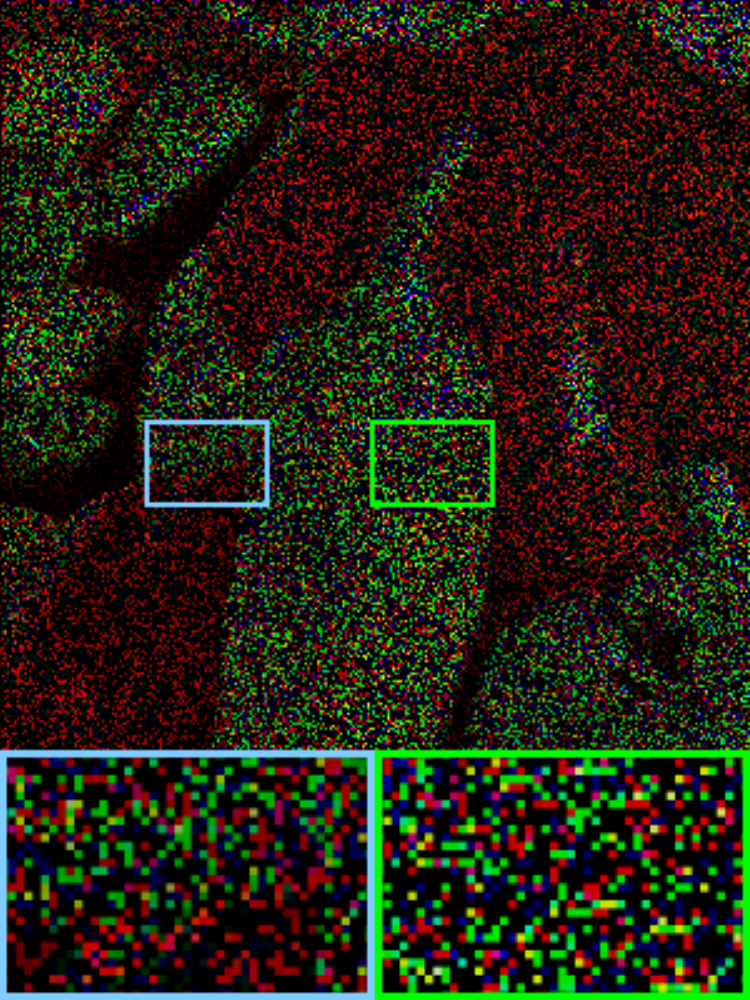}&
\includegraphics[width=0.111\textwidth]{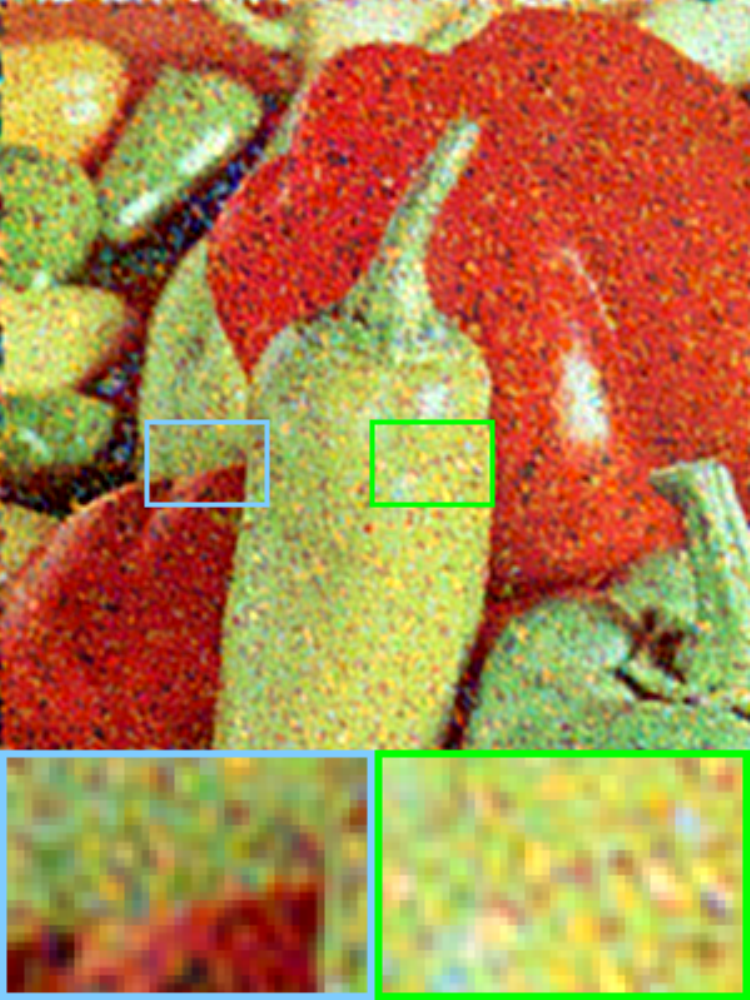}&
\includegraphics[width=0.111\textwidth]{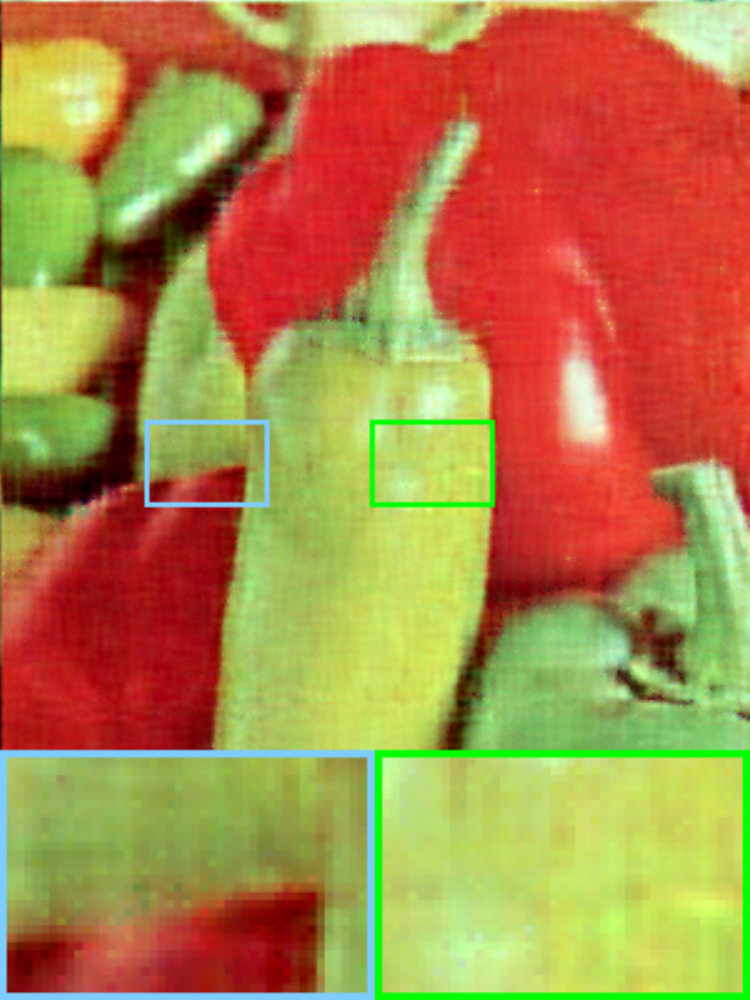}&
\includegraphics[width=0.111\textwidth]{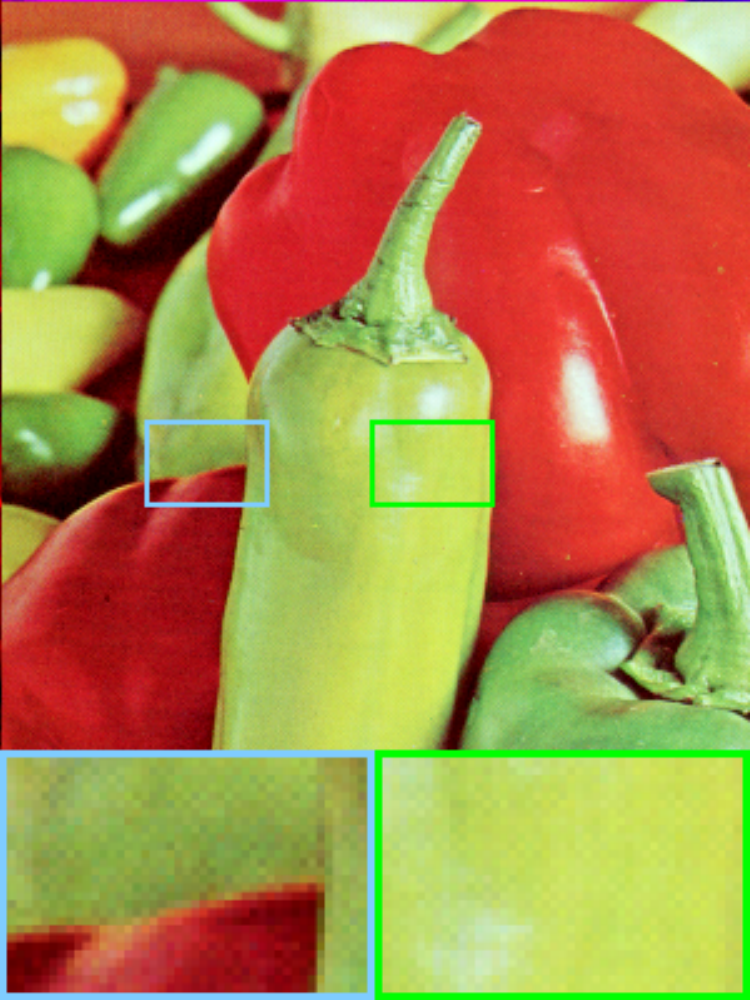}\\
PSNR 7.66&
PSNR 20.62&
PSNR 25.87&
PSNR Inf\\
Time \--\--&
Time 53.22 s&
Time 8.78 s&
Time \--\--\\
\includegraphics[width=0.111\textwidth]{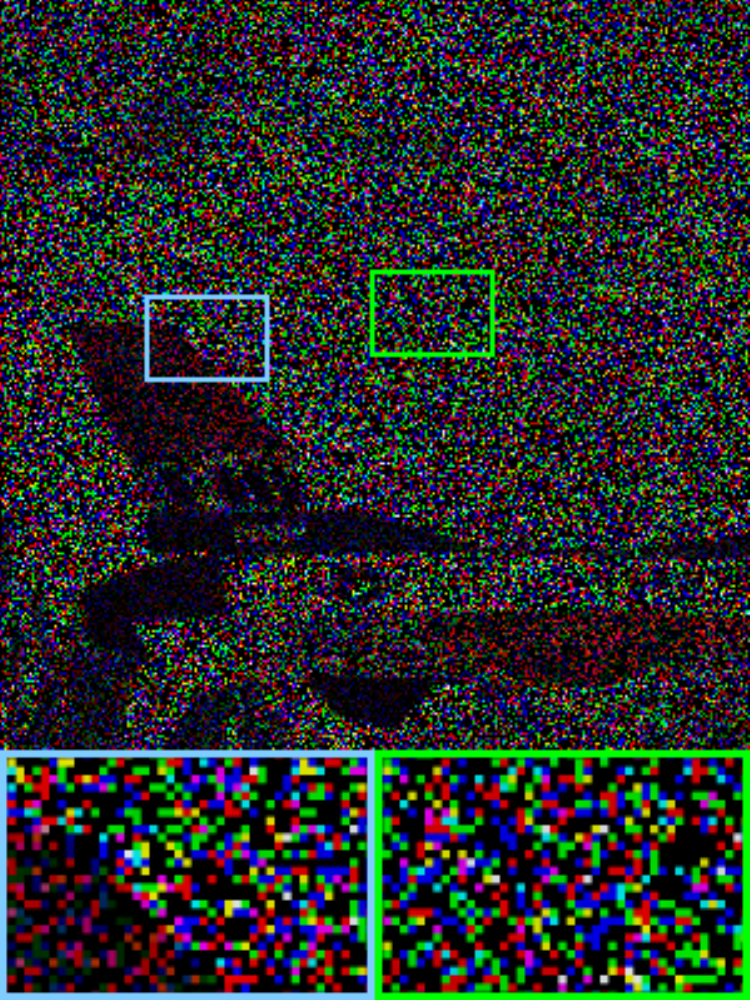}&
\includegraphics[width=0.111\textwidth]{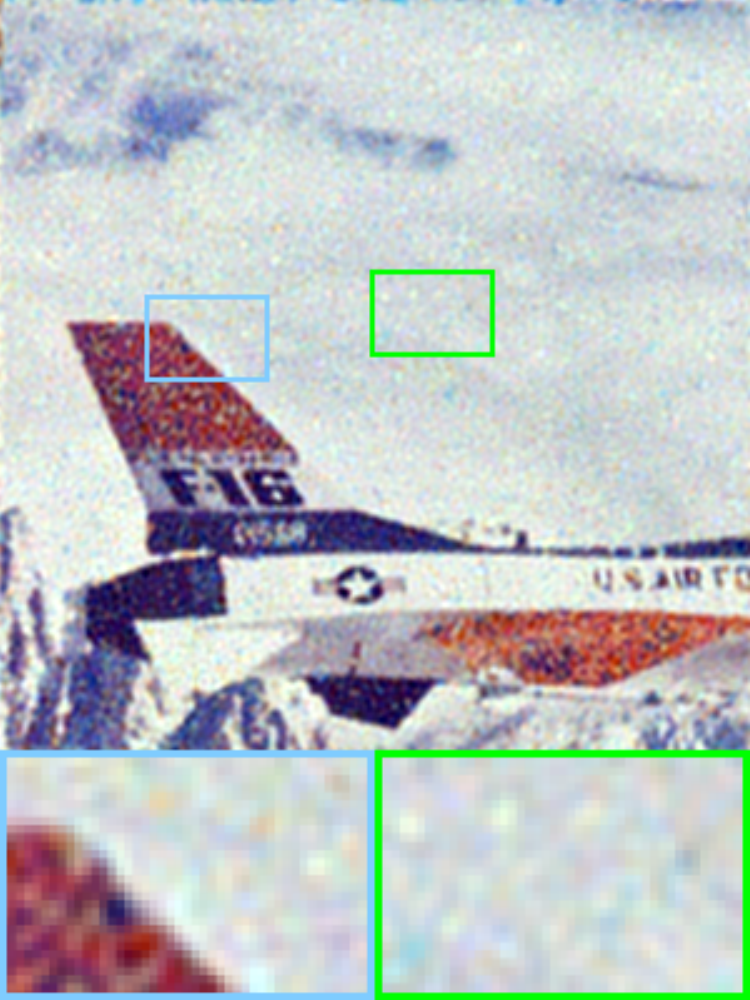}&
\includegraphics[width=0.111\textwidth]{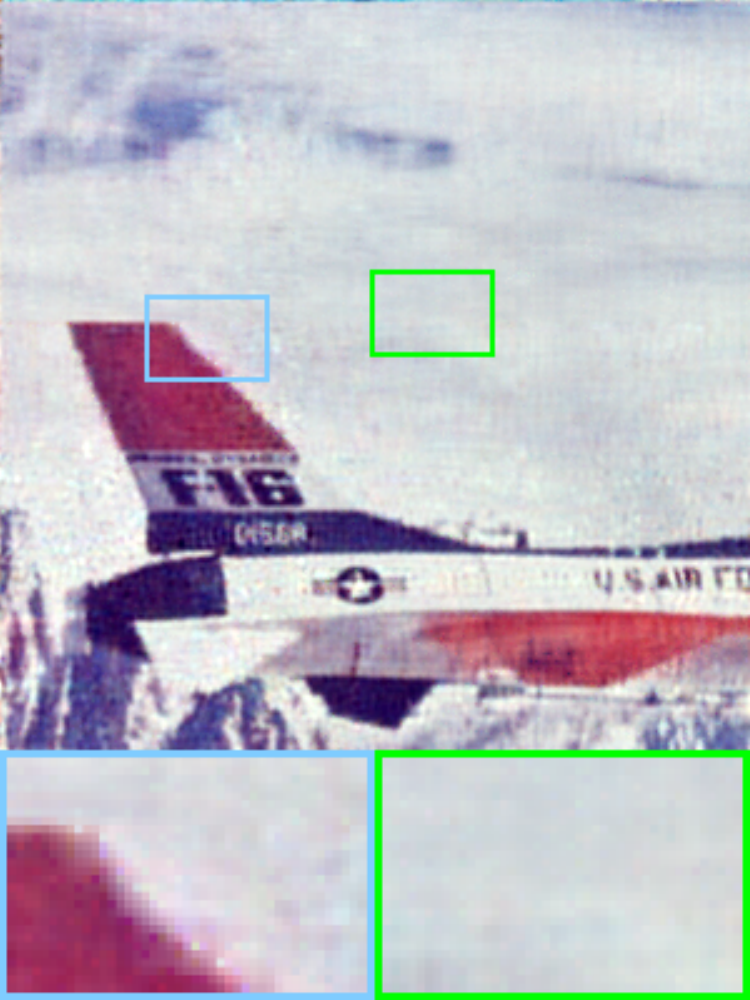}&
\includegraphics[width=0.111\textwidth]{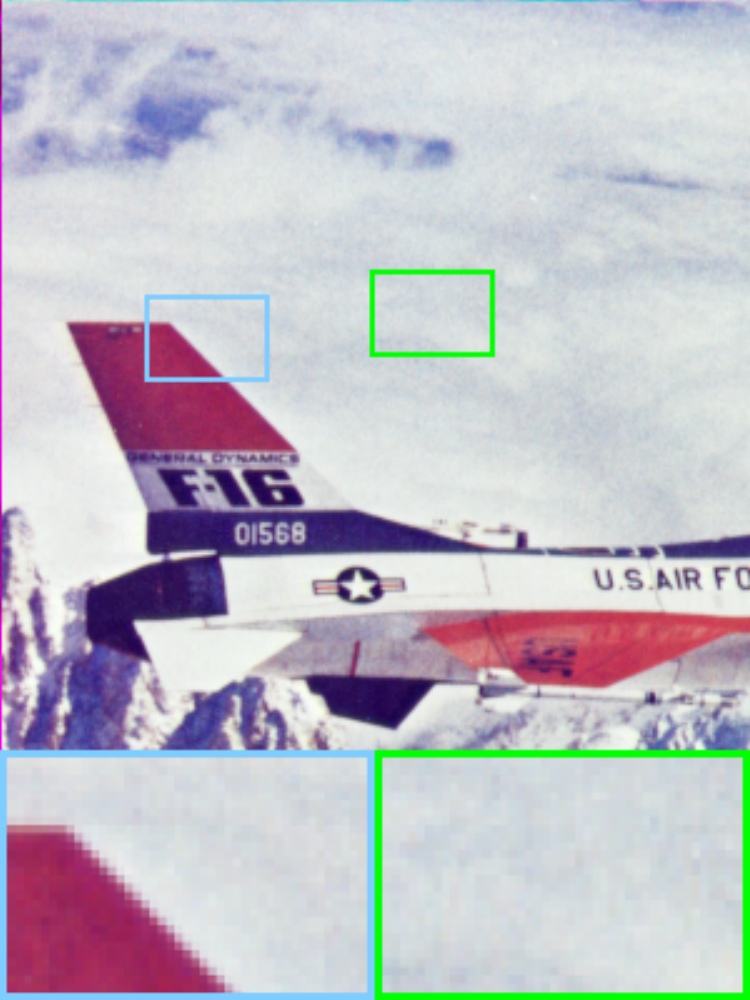}\\
PSNR 3.80 &
PSNR 24.42&
PSNR 28.32 &
PSNR Inf\\
Time \--\--&
Time 53.17 s&
Time 8.84 s&
Time \--\--\\
Observed&
INR\cite{sine}&
LRTFR&
Original\\
          \end{tabular}
          \end{center}
          \vspace{-0.4cm}
          \caption{The image inpainting results and the running time of INR and our LRTFR on {\it Peppers} and {\it Plane} with sampling rate 0.2. Our method can obtain a more stable continuous representation than INR with less running time.\label{fig_continuous_representation}\vspace{-0.5cm}}
          \end{figure}
First, our method is more stable since the regularization role imposed by low-rankness tends to help alleviate the overfitting issue of standard INR. As an example, we directly apply INR\cite{sine} and our LRTFR to the image inpainting task; see Fig. \ref{fig_continuous_representation}. Specifically, we directly use INR and LRTFR (with the same MLP structure) to fit the observed entries of the image and use the learned continuous representations to predict the unobserved entries. We have cropped the original image from $512\times512\times 3$ to $300\times300\times 3$ due to the large memory costs of INR. The results show that our method is able to obtain a more stable continuous representation according to the inpainting results. Specifically, the result of INR has notable noise due to overfitting. In contrast, our method is more stable and has cleaner results due to the low-rank regularization, which considerably alleviates the overfitting issue of INR. 
\par
Second, by virtue of tensor function factorization, the parameters of our LRTFR can be efficiently trained with significantly lower computational costs than INR. Given the observed data ${\mathcal O}\in{\mathbb R}^{n_1\times n_2\times n_3}$, INR\cite{sine} needs an input coordinate matrix of size $n_1n_2n_3\times 3$ to train the network, and the computational cost in each forward pass is $O(m^2dn_1n_2n_3)$, where $m$ denotes the number of hiden units of the MLP and $d$ denotes the depth. In practice, our GPU with 12 GB memory already fails to run the INR\cite{sine} with an image of size $512\times512\times3$ due to insufficient memory. In our LRTFR, the continuous representation (tensor function) is disentangled into three simpler factor functions, which could more compactly and efficiently represent the data. Our method only needs three input vectors of sizes $n_1\times1$, $n_2\times1$, and $n_3\times1$. The computational cost of our method is $O(m{\hat r}d(n_1+n_2+n_3)+{\hat r}n_1n_2n_3)$ in each forward pass, where ${\hat r}=\max\{r_1,r_2,r_3\}$ and $r_i$s ($i=1,2,3$) denote the preset F-ranks. This cost is much lower than that of INR, i.e., $O(m^2dn_1n_2n_3)$, since ${\hat r}$ is much smaller than $m^2d$ in practice. The running time comparisons in Fig. \ref{fig_continuous_representation} substantially verify the higher efficiency of our method.
\subsection{LRTFR for Multi-Dimensional Data Recovery}\label{Sec_app}
Since LRTFR can continuously represent data and capture the compact low-rank structure, it is versatile for data processing and analysis on or beyond meshgrid. In this work, we deploy the LRTFR in multi-dimensional data recovery problems to examinate its effectiveness. Specifically, we first establish a general data recovery model by using LRTFR and then more detailedly introduce four data recovery problems, including image inpainting and image denoising (on meshgrid with original resolution), hyperparameter optimization (on meshgrid beyond original resolution), and point cloud upsampling (beyond meshgrid).\par 
We remark that our LRTFR characterizes the low-rank structure of data and obtains a continuous representation beyond meshgrid. Thus, it is not limited to the above tasks. For other tasks on or beyond meshgrid (e.g., video frame interpolation and hyperspectral image fusion), with suitable formulations, our method is believed to perform well.
\subsubsection{General Data Recovery Model}
In this section, we introduce a general model for multi-dimensional data recovery by using LRTFR. Suppose that there is an observed multi-dimensional data defined in a function form $h(\cdot):D_h=X_h\times Y_h\times Z_h\rightarrow {\mathbb R}$ with $D_h\subset{\mathbb R}^3$ being a set with measure less than infinity. We assume that the underlying ``clean'' low-rank tensor function is continuous and has F-rank bounded by $(r_1;r_2;r_3)$. Using the MLP-parameterized LRTFR, we can formulate the following multi-dimensional data recovery model
\begin{equation}\label{min_f_x}
\min_{\substack{{\mathcal C}\in{\mathbb R}^{r_1\times r_2\times r_3},\\\theta_x,\theta_y,\theta_z}}\int_{D_h}|h({\bf v})-{{\textbf [}}{\mathcal C};f_{\theta_x},f_{\theta_y},f_{\theta_z}{\textbf ]}({\bf v})|^2d{\bf v},
\end{equation} 
where the object takes the Lebesgue integral over $D_h$, $f_{\theta_x}(\cdot):X_f\rightarrow {\mathbb R}^{r_1}$, $f_{\theta_y}(\cdot):Y_f\rightarrow {\mathbb R}^{r_2}$, and $f_{\theta_z}(\cdot):Z_f\rightarrow {\mathbb R}^{r_3}$ are three factor MLPs, and $\mathcal C$ is the core tensor. The recovered low-rank tensor function is $f(\cdot):={{\textbf [}}{\mathcal C};f_{\theta_x},f_{\theta_y},f_{\theta_z}{\textbf ]}({\cdot})$. According to Theorem \ref{Pro_1}, the real low-rank continuous tensor function must lies in the optimization space of (\ref{min_f_x}) and the recovered tensor function $f(\cdot)$ must be a low-rank tensor function with $({\rm F\text{-}rank}[f])_{(i)}\leq r_i$.\par 
The observed multi-dimensional data is usually in a discrete manner on meshgrid (e.g., images) or even not on meshgrid (e.g., point clouds). In both situations, the observed function $h(\cdot)$ is defined on a certain discrete set $D_h$. In this discrete case, our optimization model (\ref{min_f_x}) becomes
\begin{equation}\label{min_MLP}
\begin{split}
\min_{\substack{{\mathcal C}\in{\mathbb R}^{r_1\times r_2\times r_3},\\\theta_x,\theta_y,\theta_z}}\sum_{{\bf v}\in D_h}\big{|}h({\bf v})-{{\textbf [}}{\mathcal C};f_{\theta_x},f_{\theta_y},f_{\theta_z}{\textbf ]}({\bf v})\big{|}^2.
\end{split}
\end{equation}
Here, the optimization variables are the core tensor $\mathcal C$ and the weights of MLPs, and the objective function is a square error term, which is differentiable w.r.t. all MLP weights and the core tensor. Hence, we can use easily attachable deep learning optimizers based on the gradient descent to tackle model (\ref{min_MLP}). In this work, we consistently use the efficient adaptive moment estimation (Adam) algorithm\cite{Adam}. Next, we more detailedly introduce four applications in multi-dimensional data recovery, which are some specific examples of the general model (\ref{min_MLP}).
\subsubsection{Multi-Dimensional Image Inpainting}
The multi-dimensional image inpainting\cite{FTNN,FCTN}, which is a classical problem on the original meshgrid, aims to recover the underlying image from the observed incompleted image. Given an observed incompleted image ${\mathcal O}\in{\mathbb R}^{n_1\times n_2\times n_3}$ with observed set $\Omega\subset\Psi$, where $\Psi:=\{(i,j,k)|i=1,2,\cdots,n_1,j=1,2,\cdots,n_2,k=1,2,\cdots,n_3\}$, the optimization model of our LRTFR for multi-dimensional image inpainting is formulated as
\begin{equation}\label{loss_inpainting}
                  \begin{split}
                  &\min_{\substack{{\mathcal C},\theta_x,\theta_y,\theta_z}}\lVert {\mathcal P}_\Omega({\mathcal O}-{\mathcal T})\rVert_{F}^2,\\
                  &{\mathcal T}_{ijk} ={{\textbf [}}{\mathcal C};f_{\theta_x},f_{\theta_y},f_{\theta_z}{\textbf ]}(i,j,k),\;{\forall}\;(i,j,k)\in\Psi.\\
                  \end{split}
                  \end{equation}
Here, $f_{\theta_x}(\cdot):X_f\rightarrow {\mathbb R}^{r_1}$, $f_{\theta_y}(\cdot):Y_f\rightarrow {\mathbb R}^{r_2}$, and $f_{\theta_z}(\cdot):Z_f\rightarrow {\mathbb R}^{r_3}$ are three factor MLPs, ${\mathcal C}\in{\mathbb R}^{r_1\times r_2\times r_3}$ is the core tensor, and $r_i$s ($i=1,2,3$) are preset F-ranks. ${\mathcal P}_\Omega(\cdot)$ denotes the projection operator that keeps the elements in $\Omega$ and makes others be zeros. The recovered result is obtained by ${\mathcal P}_\Omega({\mathcal O})+{\mathcal P}_{\Omega^C}({\mathcal T})$, where $\Omega^C$ denotes the complementary set of $\Omega$ in $\Psi$. We adopt the Adam optimizer to address the image inpainting model (\ref{loss_inpainting}) by optimizing the MLP parameters and core tensor.
\subsubsection{Multispectral Image Denoising}
Multispectral image (MSI) denoising\cite{KBR,LRTDTV} aims to recover a clean image from the noisy observation. It is another typical problem on the original meshgrid. In practice, MSI is corrupted with mixed noise, such as Gaussian noise, sparse noise, stripe noise, and deadlines (missing columns). Given the observed noisy MSI ${\mathcal O}\in{\mathbb R}^{n_1\times n_2\times n_3}$, the optimization model of LRTFR for MSI denoising is
    \begin{equation}\label{loss_denoising}
                  \begin{split}
                  &\min_{\substack{{\mathcal C},{\mathcal S},\\\theta_x,\theta_y,\theta_z}}\lVert {\mathcal O}-{\mathcal T}-{\mathcal S}\rVert_{F}^2+\gamma_1\lVert{\mathcal S}\rVert_{\ell_1}+\gamma_2\lVert{\mathcal T}\rVert_{\rm TV},\\
                  &{\mathcal T}_{ijk} ={{\textbf [}}{\mathcal C};f_{\theta_x},f_{\theta_y},f_{\theta_z}{\textbf ]}(i,j,k),\;{\forall}\;(i,j,k)\in\Psi.\\
                  \end{split}
                  \end{equation}
In this model, $f_{\theta_x}(\cdot):X_f\rightarrow {\mathbb R}^{r_1}$, $f_{\theta_y}(\cdot):Y_f\rightarrow {\mathbb R}^{r_2}$, and $f_{\theta_z}(\cdot):Z_f\rightarrow {\mathbb R}^{r_3}$ are three factor MLPs, ${\mathcal C}\in{\mathbb R}^{r_1\times r_2\times r_3}$ is the core tensor, ${\mathcal S}\in{\mathbb R}^{n_1\times n_2\times n_3}$ represents the sparse noise, ${\mathcal T}\in{\mathbb R}^{n_1\times n_2\times n_3}$ denotes the recovered result, and $\gamma_i$s ($i=1,2$) are trade-off parameters. We introduce a simple spatial TV regularization $\lVert{\mathcal T}\rVert_{\rm TV}:=\sum_{k=1}^{n_3}(\sum_{i=1}^{n_1-1} \sum_{j=1}^{n_2}|{\mathcal T}_{(i+1,j,k)}-{\mathcal T}_{(i,j,k)}|+\sum_{i=1}^{n_1} \sum_{j=1}^{n_2-1}|{\mathcal T}_{(i,j+1,k)}-{\mathcal T}_{(i,j,k)}|)$ to more faithfully remove the noise. Here, the Lipchitz smoothness in Theorem \ref{Pro_2} and the TV regularization reveal different types of smoothness: The Lipchitz smoothness delivers {\sl global} smooth structures that the gradient is bounded everywhere, while TV considers {\sl local} smoothness of adjacent pixels. The global and local smooth regularizations are complementary to each other to yield more promising denoising results.\par 
We utilize the alternating minimization algorithm to tackle the denoising model. Specifically, we tackle the following sub-problems in the $t$-th iteration:
\begin{equation}
\begin{split}
&\min_{{\mathcal C},\theta_x,\theta_y,\theta_z}\lVert {\mathcal O}-{\mathcal T}-{\mathcal S}^t\rVert_{F}^2+\gamma_2\lVert{\mathcal T}\rVert_{\rm TV},
\\&
\;\;\;\min_{{\mathcal S}}\;\;\;\lVert {\mathcal O}^t-{\mathcal T}^t-{\mathcal S}\rVert_{F}^2+\gamma_1\lVert{\mathcal S}\rVert_{\ell_1},\\
&{\mathcal T}_{ijk} ={{\textbf [}}{\mathcal C};f_{\theta_x},f_{\theta_y},f_{\theta_z}{\textbf ]}(i,j,k),\;{\forall}\;(i,j,k)\in\Psi.
\end{split}
\end{equation}
We utilize the Adam algorithm to tackle the $\{{\mathcal C},\theta_x,\theta_y,\theta_z\}$ sub-problem. In each iteration of the alternating minimization, we employ one step of the Adam algorithm to update $\{{\mathcal C},\theta_x,\theta_y,\theta_z\}$. The ${\mathcal S}$ sub-problem can be exactly solved by ${\mathcal S}={Soft}_{\frac{\gamma}{2}}({\mathcal O}^t-{\mathcal T}^t)$, where $Soft_{v}(\cdot):=sgn(\cdot)\max\{|\cdot|-v,0\}$ denotes the soft-thersholding operator applied on each element of the input.    
\subsubsection{Hyperparameter Optimization}
Hyperparameter optimization (HPO) \cite{HPO_PAMI,RS}, or hyperparameters searching, is a critical step in machine learning. Recent studies \cite{HPO_PAMI} cleverly modeled HPO as the classical low-rank tensor completion problem. It is even more interesting to explore the benefits of searching hyperparameter values in the continuous domain, and investigate what is the superiority of continuous representation over classical discrete tensor completion methods in HPO.\par
Specifically, the tensor completion-based HPO \cite{HPO_PAMI} formulates the hyperparameter search as the completion problem of higher-order tensors, whose elements represent the performances of the algorithm with different hyperparameter values. In practice, some partial observations have been given, i.e., there is an incompleted tensor ${\mathcal O}\in{\mathbb R}^{n_1\times n_2\times n_3}$ that gives the real performances under some configurations. We aim to complete the tensor to predict the performances among all configurations. This problem can be equivalently formulated as the tensor completion problem (\ref{loss_inpainting}). With the completion result $\mathcal T$, we select the configuration corresponding to the best predicted performance (the maximum value in $\mathcal T$) as the recommended hyperparameter values.\par
Since our method predicts a tensor function on a continuous domain, it is interesting to explore the benefits of seeking hyperparameter values beyond meshgrid. Intuitively, seeking hyperparameter values in a continuous domain is expected to obtain a better result, because the optimal configuration probably does not lie in the fixed meshgrid. To illustrate this claim, we sample $\times$2 and $\times$4 super-resolution results using the learned tensor function with evenly spaced sampling, which offers more candidate configurations and gives the corresponding predictions. Results show this strategy could suggest a better configuration than meshgrid methods in most cases; see Sec. \ref{sec_HPO}.
\subsubsection{Point Cloud Upsampling}
We further apply our LRTFR for point cloud representation to test the effectiveness of our method beyond meshgrid. Point cloud representation is a challenging task due to the unstructured and unordered nature of point clouds. It is difficult to use classical meshgrid-based low-rank representations\cite{PAMI_TRPCA,FTNN,HLRTF} for point cloud representation, since these unordered point clouds are defined beyond meshgrid. As compared, our LRTFR can represent point clouds by using the continuous representation, which validates its versatility as compared with classical low-rank representations.\par 
Specifically, we consider the point cloud upsampling task\cite{PCU_CVPR_18,PU-GAN}, which refers to upsampling a sparse point cloud into a dense point cloud that will benefit subsequent applications\cite{PCU_CVPR_21}. Suppose that we are given a sparse point cloud ${\bf P}\in{\mathbb R}^{p\times 3}$, where $p$ denotes the number of points. We use $\Omega=\{{\bf P}_{(m,:)}\}_{m=1}^p$ to denote the observed set. We borrow the signed distance function (SDF) \cite{DeepSDF} to learn a continuous representation from sparse point cloud. The loss function to learn the SDF is formulated as
    \begin{equation}\label{loss_point}
                         \begin{split}
                         \min_{{\mathcal C},\theta_{x},\theta_{y},\theta_{z}}\sum_{{{\bf v}}\in\Omega}|s({\bf v})|&+\gamma_1\int_{{\mathbb R}^3}\big{|}\lVert\frac{\partial s({\bf v})}{\partial ({\bf v})}\rVert_F^2-1\big{|}d{\bf v}\\&+\gamma_2\int_{{\mathbb R}^3\backslash \Omega}\exp(-|s({\bf v})|)d{\bf v},\\
                         s({\bf v}):={\mathcal C}{\rm\times_1}{f}_{\theta_{x}}(&{\bf v}_{(1)}){\rm\times_2}{f}_{\theta_{y}}({\bf v}_{(2)}){\rm\times_3}{f}_{\theta_{z}}({\bf v}_{(3)}),
                         \end{split}
                         \end{equation}
where $s(\cdot):{\mathbb R}^3{\rm\rightarrow}{\mathbb R}$ denotes the SDF represented by LRTFR, and $\gamma_i$s ($i=1,2$) are trade-off parameters. The first term enforces the SDF to be zero on observed points. The second term enforces the SDF gradients to be one everywhere. The third term restricts the SDF value to be far from zero outside the observed set\cite{sine}. In practice, we approximate the integral by randomly sampling large number of points in the space. The loss function can be minimized by using the Adam algorithm. The solution of $s({\bf v})=0$ forms a surface, which represents the underlying shape of the point cloud. We sample dense points $\bf v$ in the space such that $|s({\bf v})|<\tau$ by evenly spaced sampling, where $\tau$ is a pre-defined threshold. These points represent the desired upsampling result.
\section{Experiments}\label{Sec_exp}
In experiments, we have conducted comparison experiments and analysis on all the introduced tasks. We first introduce some important experimental settings. Then, we introduce baselines, datasets, and results for different tasks. Our method is implemented on Pytorch 1.7.0. with an i5-9400f CPU and an RTX 3060 GPU (12 GB GPU memory). \par 
{\bf Evaluation metrics} For inpainting and denoising, we use peak-signal-to-noisy ratio (PSNR), structural similarity (SSIM), and normalized root mean square error (NRMSE) for evaluations. For HPO, we report the average classification accuracy (ACA) and average recommendation accuracy (ARA)\cite{PR_HPO,HPO_PAMI} of different methods. For point cloud upsampling, we adopt the widely-used Chamfer distance (CD)\cite{CD} and F-Score\cite{F_Score} as evaluation metrics.\par 
{\bf Hyperparameters settings} For all tasks, we search the values of {\rm F\text{-}rank} $(r_1,r_2,r_3)$ in the following set
\begin{equation}
\{(\lfloor n_1/s \rfloor,\lfloor n_2/s\rfloor,\lfloor n_3/s_3\rfloor)| s,s_3=1,2,4,8,16,32\},
\end{equation}
where $n_i$s ($i=1,2,3$) denote the sizes of the observed data. Meanwhile, we adopt the sine activation function $\sigma(\cdot)=\sin(\omega_0\cdot)$ in the MLPs to learn the LRTFR, where $\omega_0$ is a hyperparameter. It was thoroughly demonstrated in literatures\cite{sine,piGAN,GAN_CVPR} that the sine function has superior representation abilities than other activations (e.g., ReLU and RBF) for continuous representation. We search the hyperparameter $\omega_0$ in $\{1,2,4,8,16,32\}$. The goal of the above hyperparameters search is to obtain the best PSNR (for inpainting and denoising), ACA (for HPO), and CD (for point cloud upsampling) values for different samples. The weight decay of Adam is set to $1$, $0.1$, $0.5$, and $0.5$ for inpainting, denoising, HPO, and point cloud upsampling. In the denoising model (\ref{loss_denoising}), we set $\gamma_2=10^{-5}$ for MSIs and $\gamma_2=10^{-6}$ for hyperspectral images. $\gamma_1$ is set to $0.1$ (with sparse noise) or $10$ (without sparse noise). In the point cloud upsampling model (\ref{loss_point}), we set $\gamma_1=10^{-6}$ and $\gamma_2=10^{-2}$ for all samples. The threshold $\tau$ is tuned such that the recovered dense point cloud has at least $10^4$ points. The depth of MLP is set to 4 for point cloud upsampling and 3 for other tasks. We analyze the influences of all hyperparameters in Sec. \ref{Sec_dis}.
\begin{figure*}[t]
    \scriptsize
    \setlength{\tabcolsep}{0.9pt}
    \begin{center}
    \begin{tabular}{cccccccc}
     \includegraphics[width=0.12\textwidth]{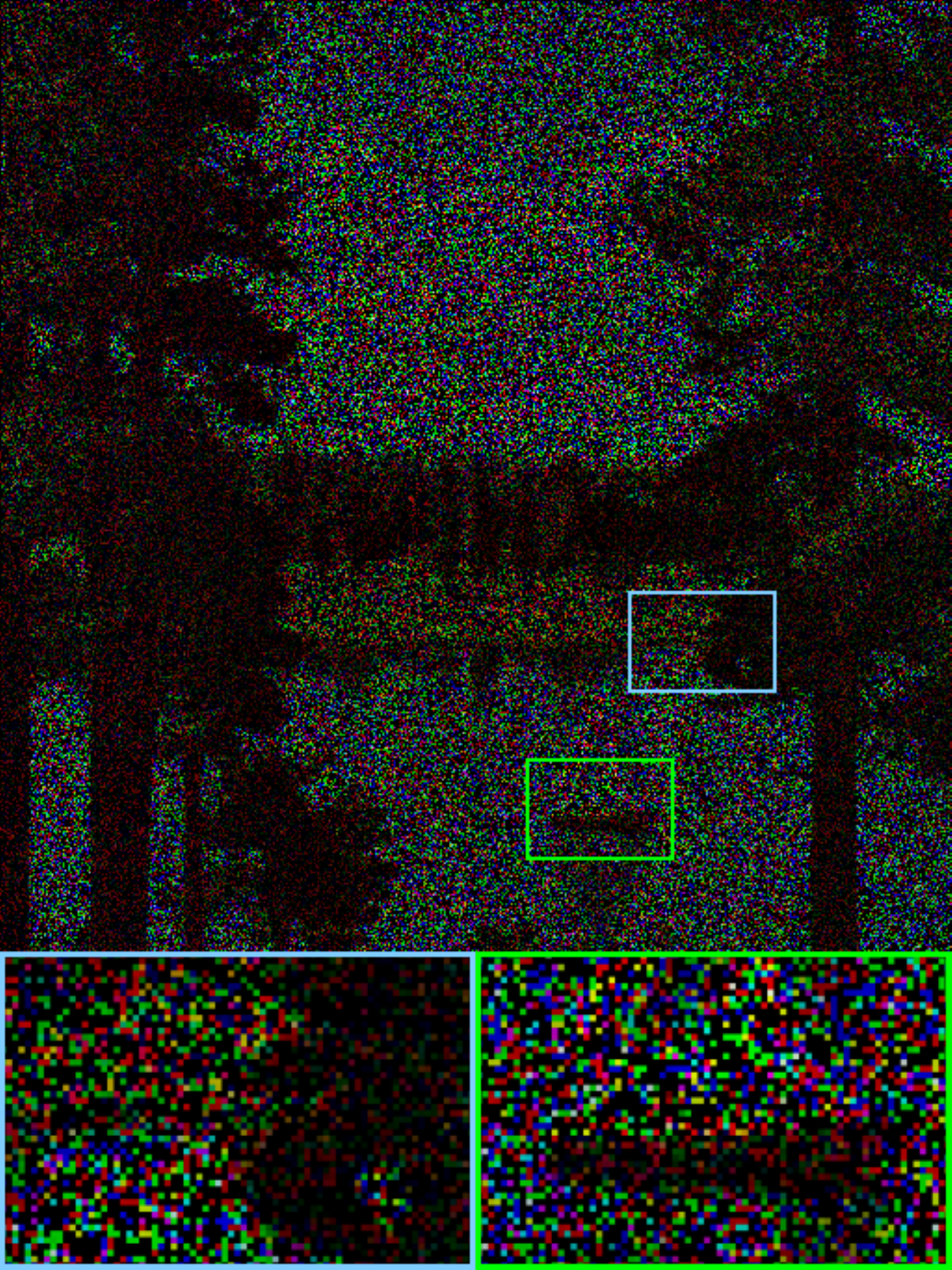}&
    \includegraphics[width=0.12\textwidth]{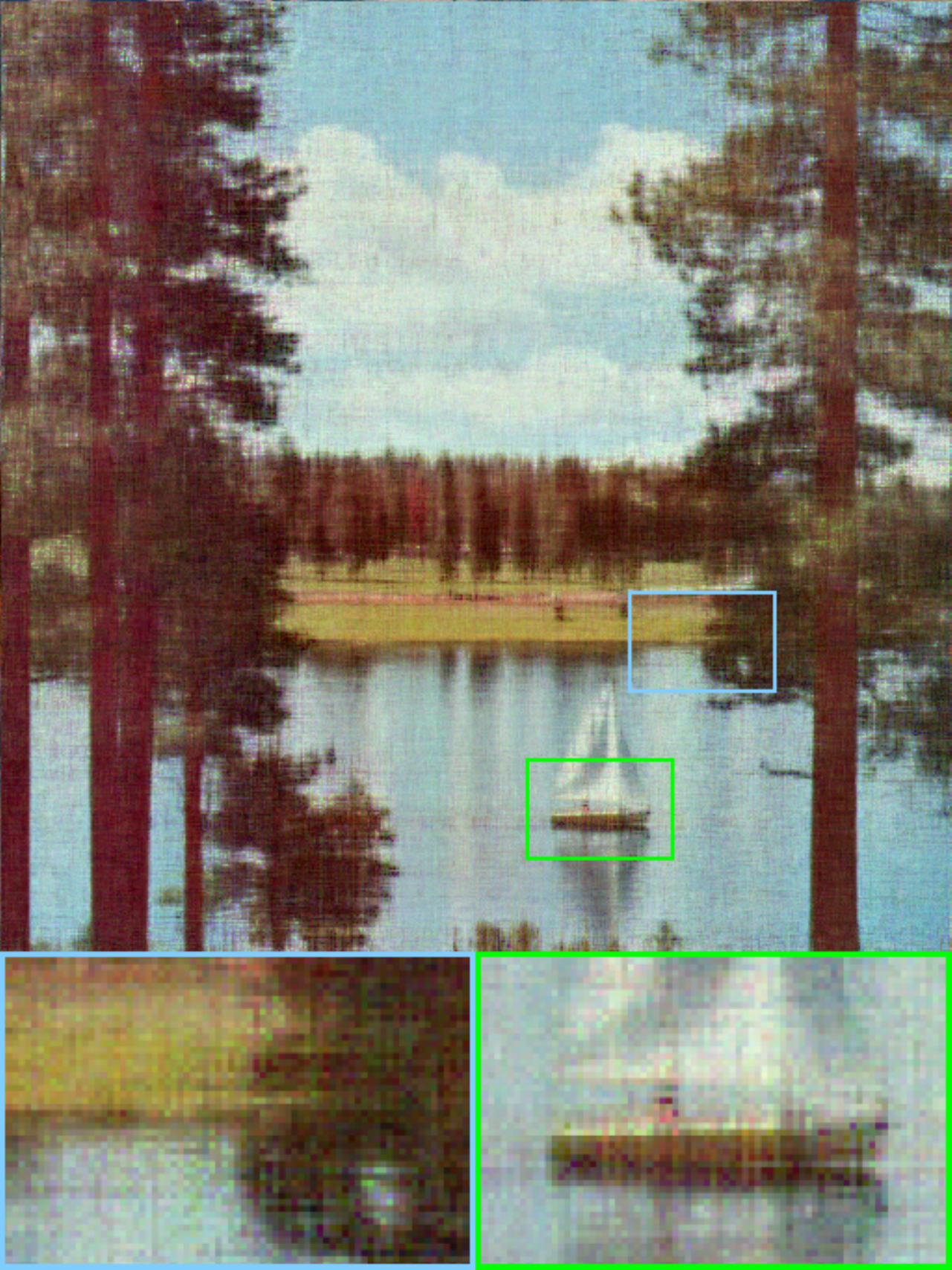}&
     \includegraphics[width=0.12\textwidth]{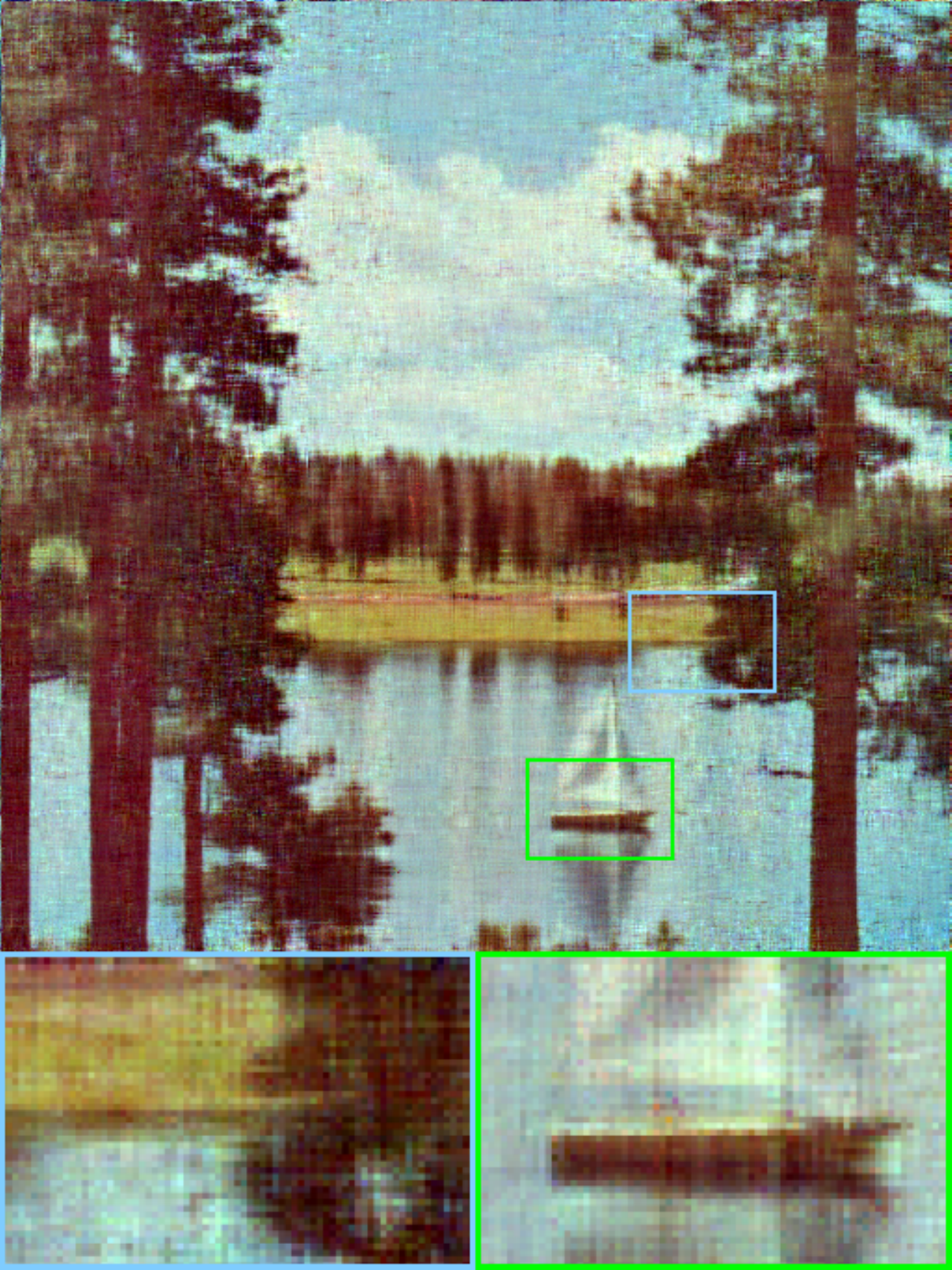}&
      \includegraphics[width=0.12\textwidth]{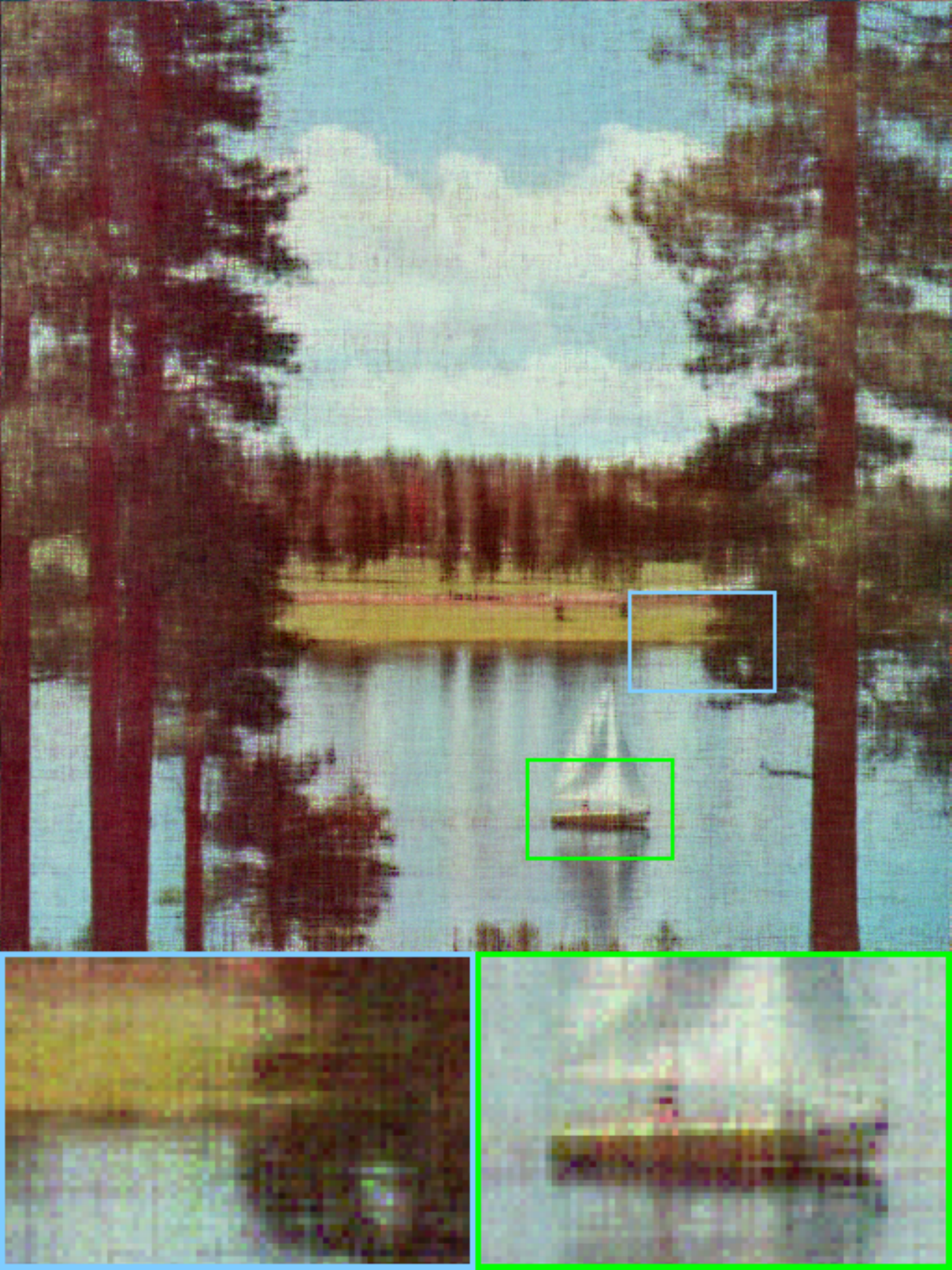}&
       \includegraphics[width=0.12\textwidth]{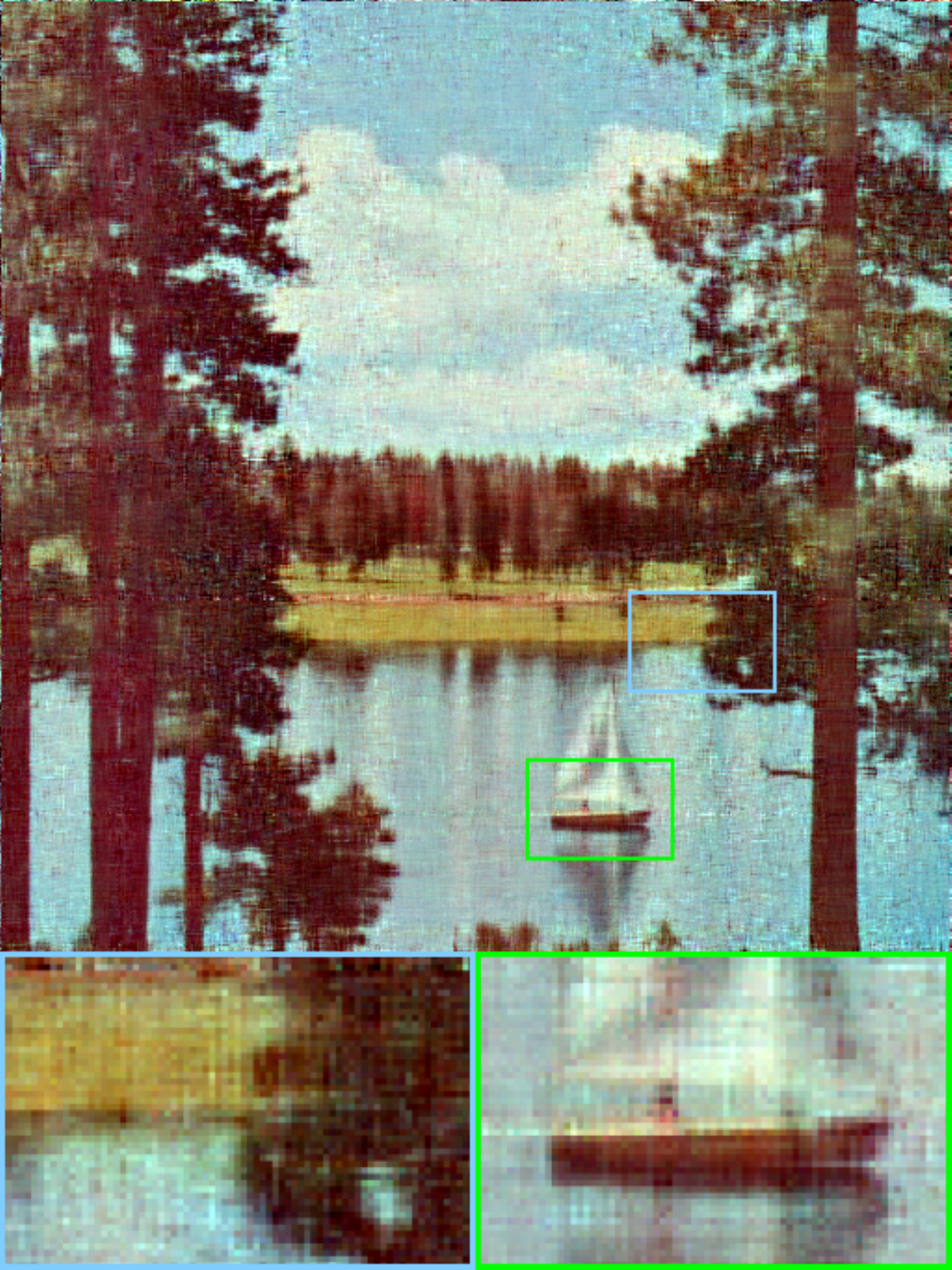}&
        \includegraphics[width=0.12\textwidth]{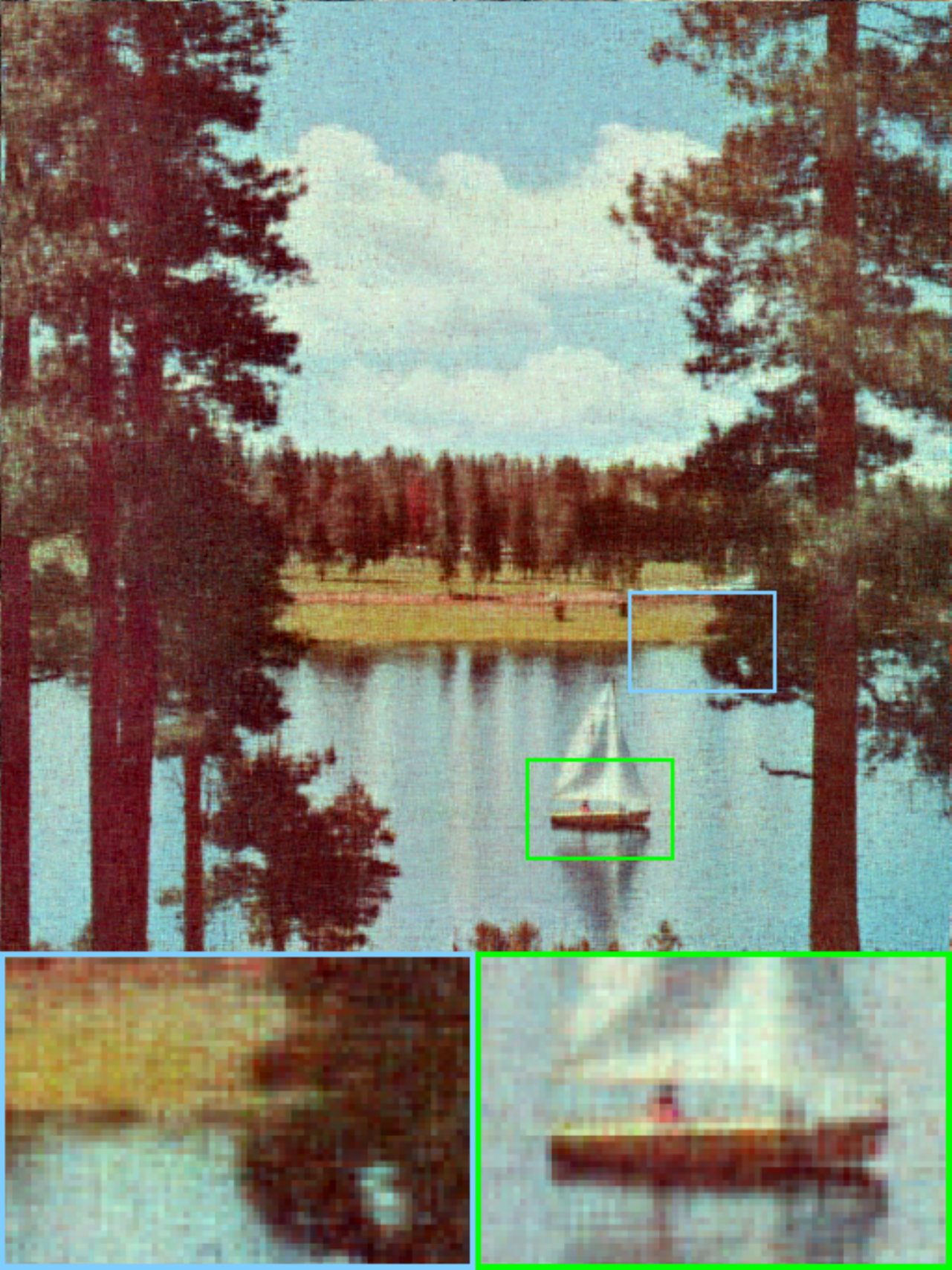}&
         \includegraphics[width=0.12\textwidth]{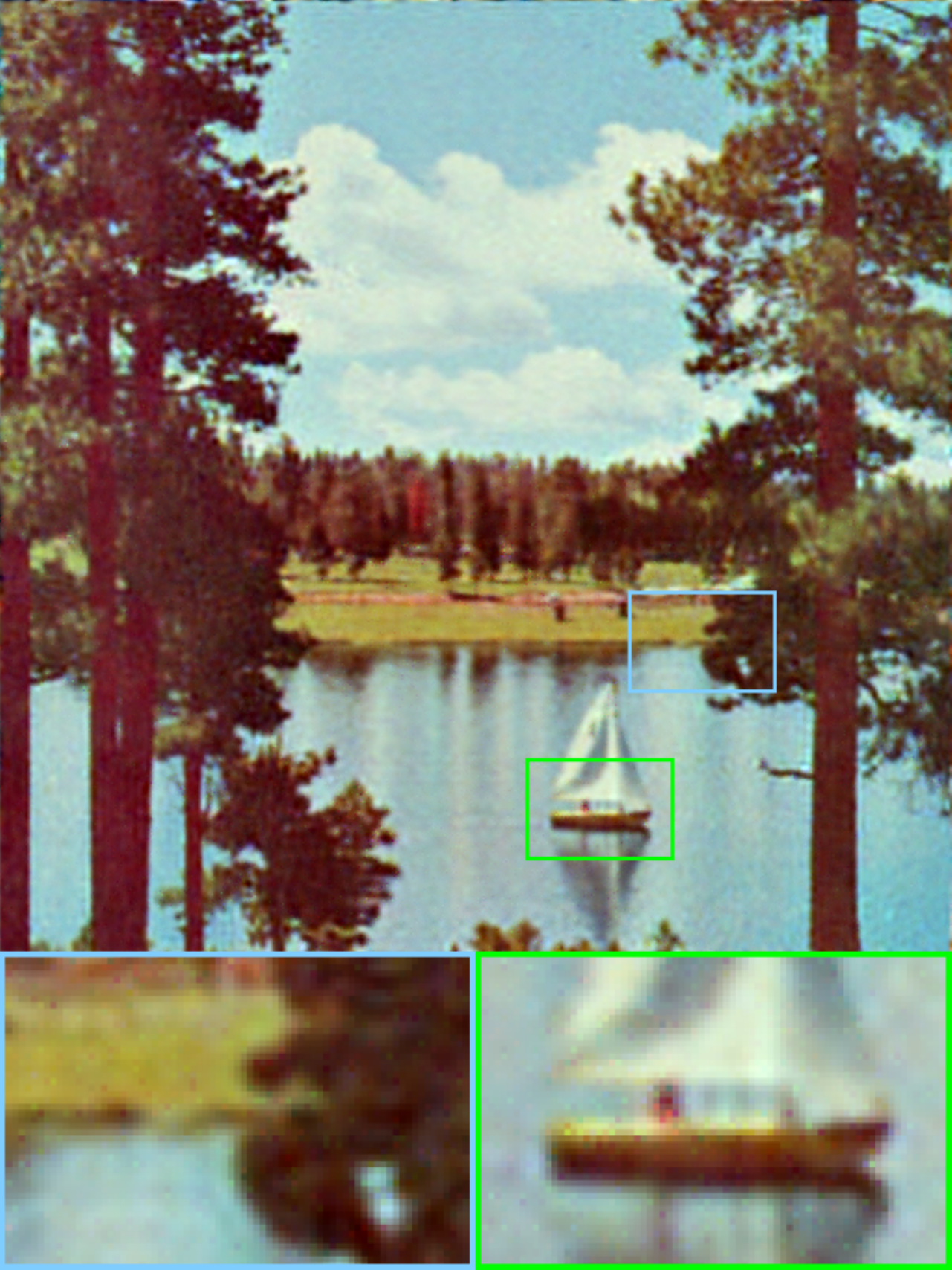}&
    \includegraphics[width=0.12\textwidth]{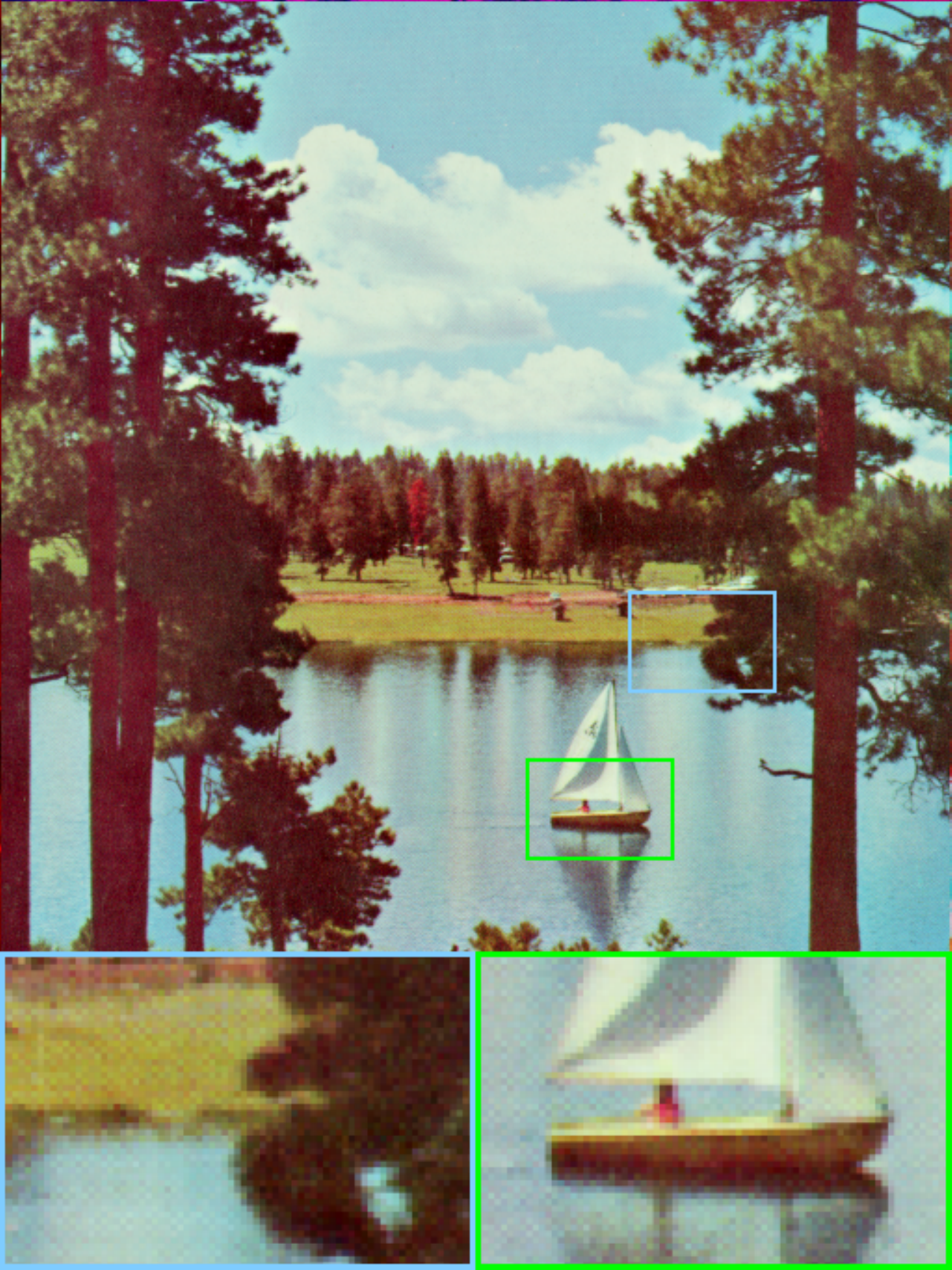}\\
PSNR 6.13  &
PSNR 22.18  &
PSNR 22.06  &
PSNR 22.40  &
PSNR 22.26  &
PSNR 24.35  &
PSNR 25.87  &
PSNR Inf\\
     \includegraphics[width=0.12\textwidth]{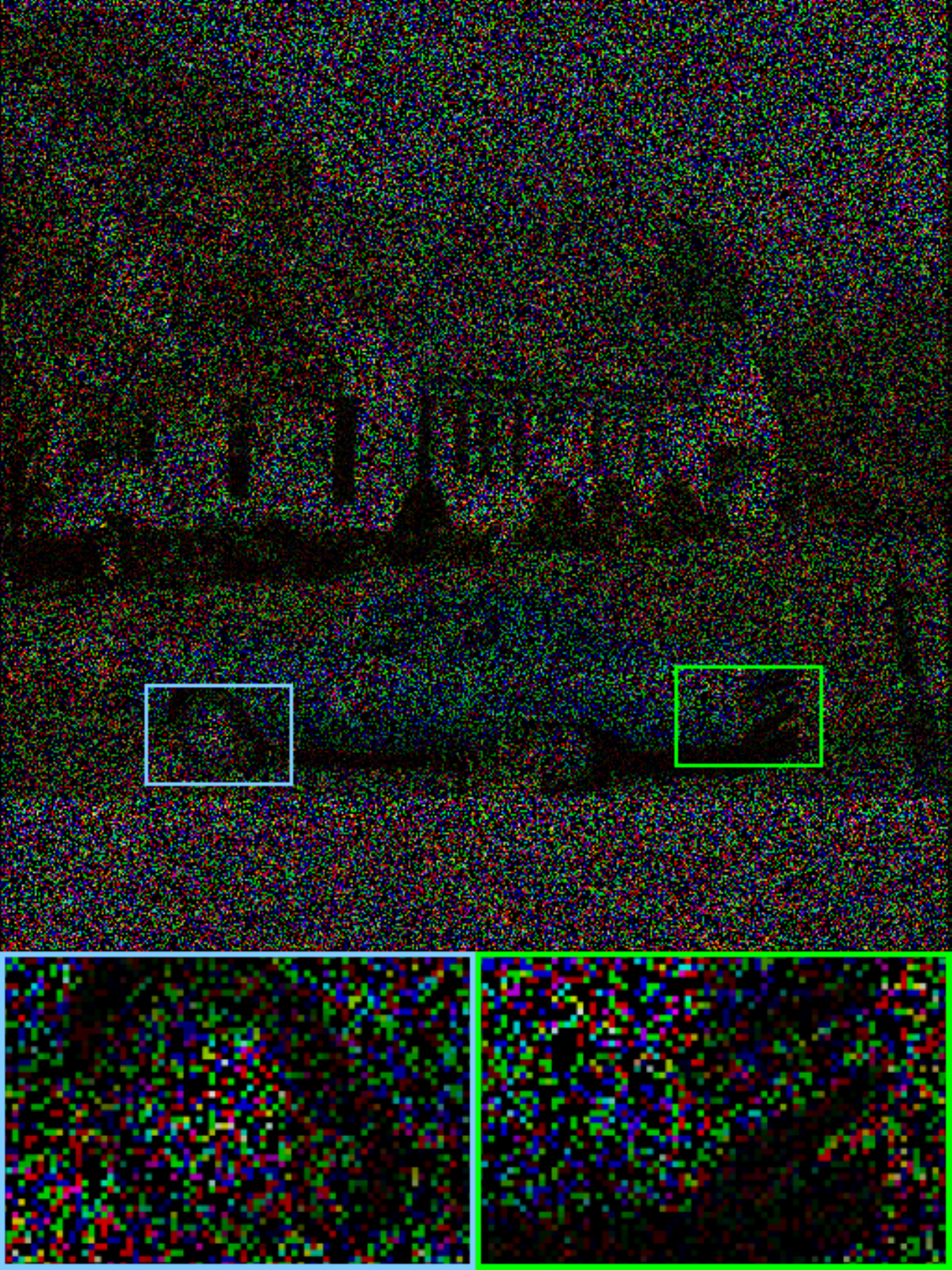}&
    \includegraphics[width=0.12\textwidth]{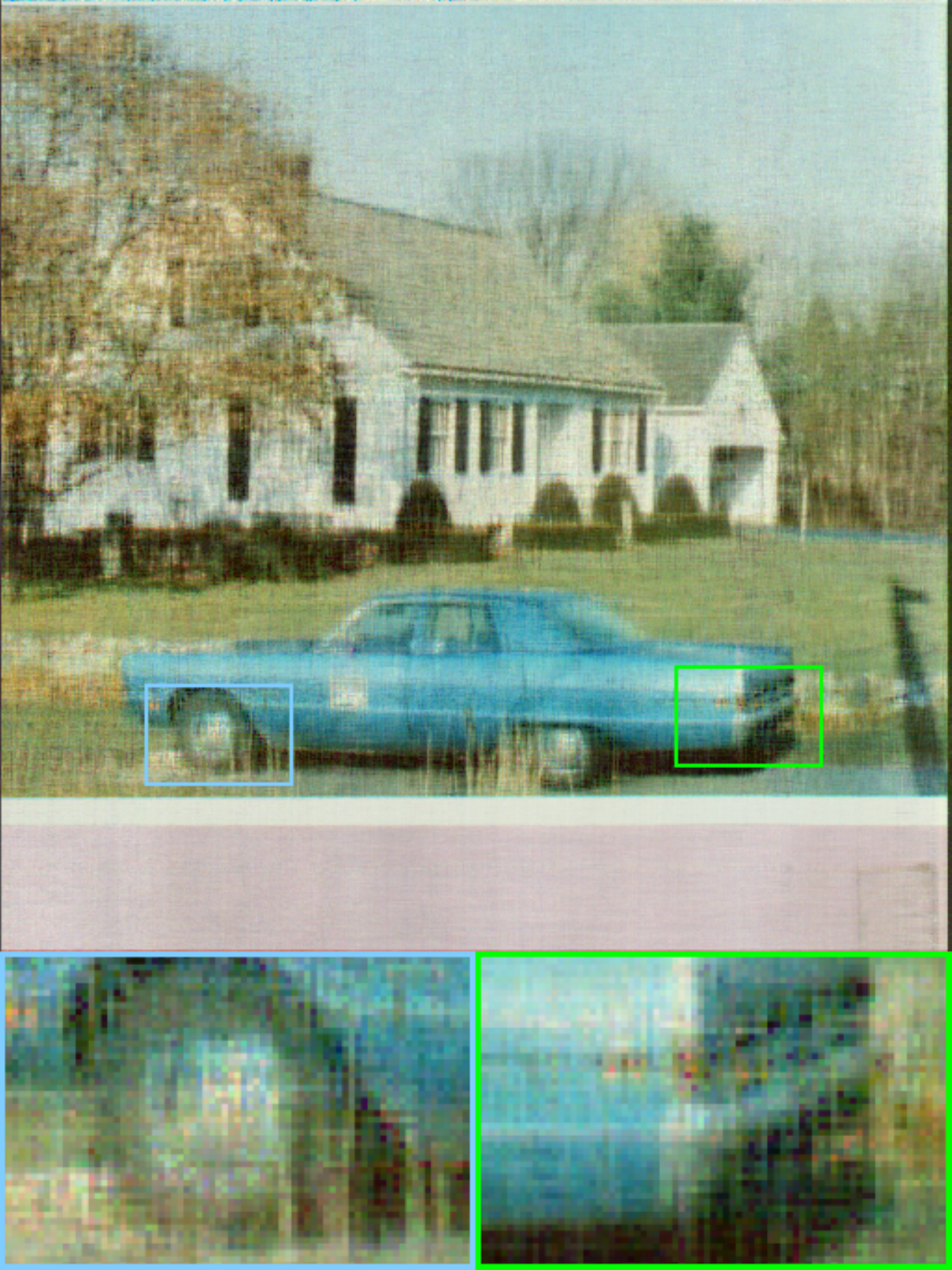}&
     \includegraphics[width=0.12\textwidth]{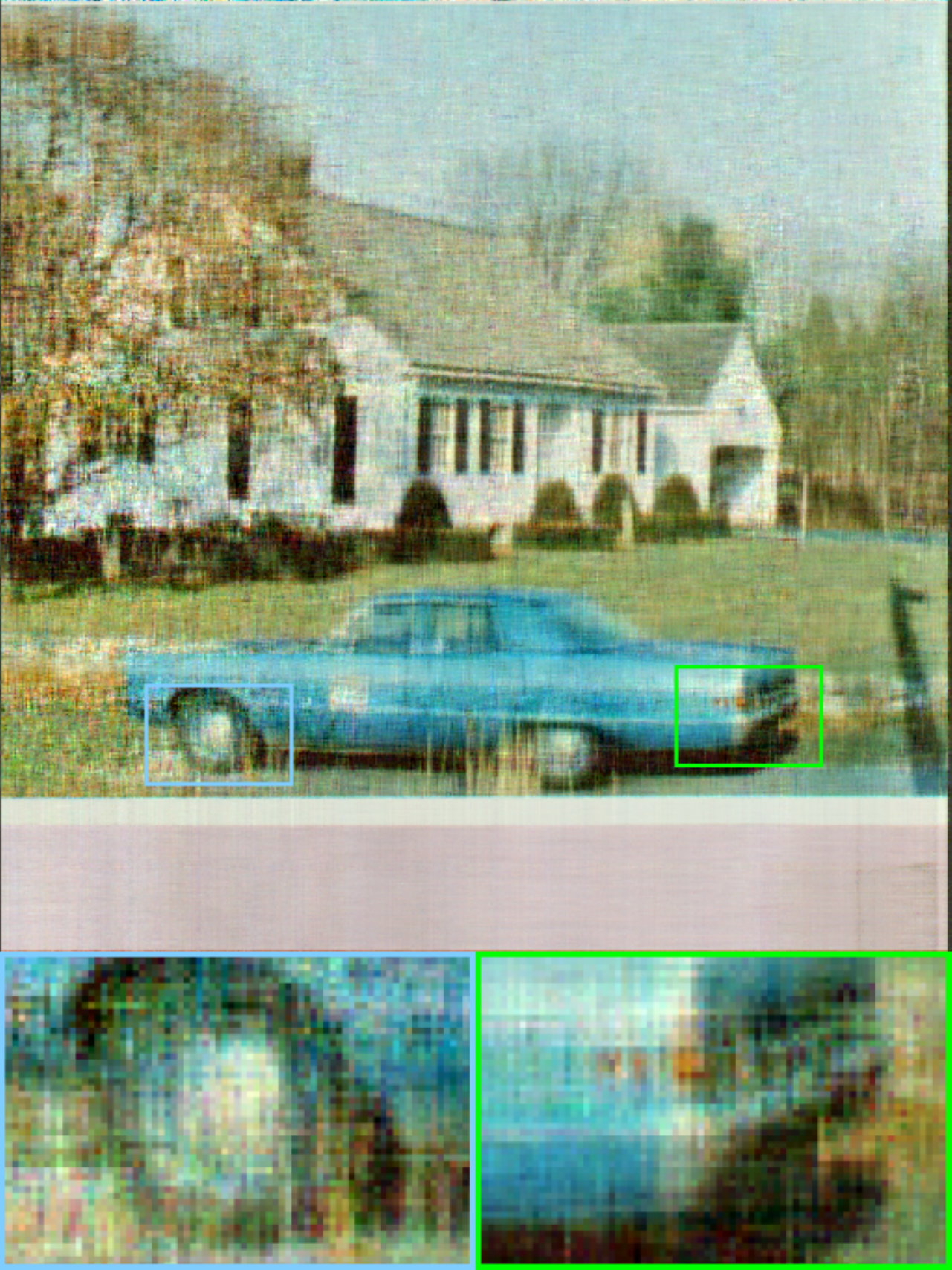}&
      \includegraphics[width=0.12\textwidth]{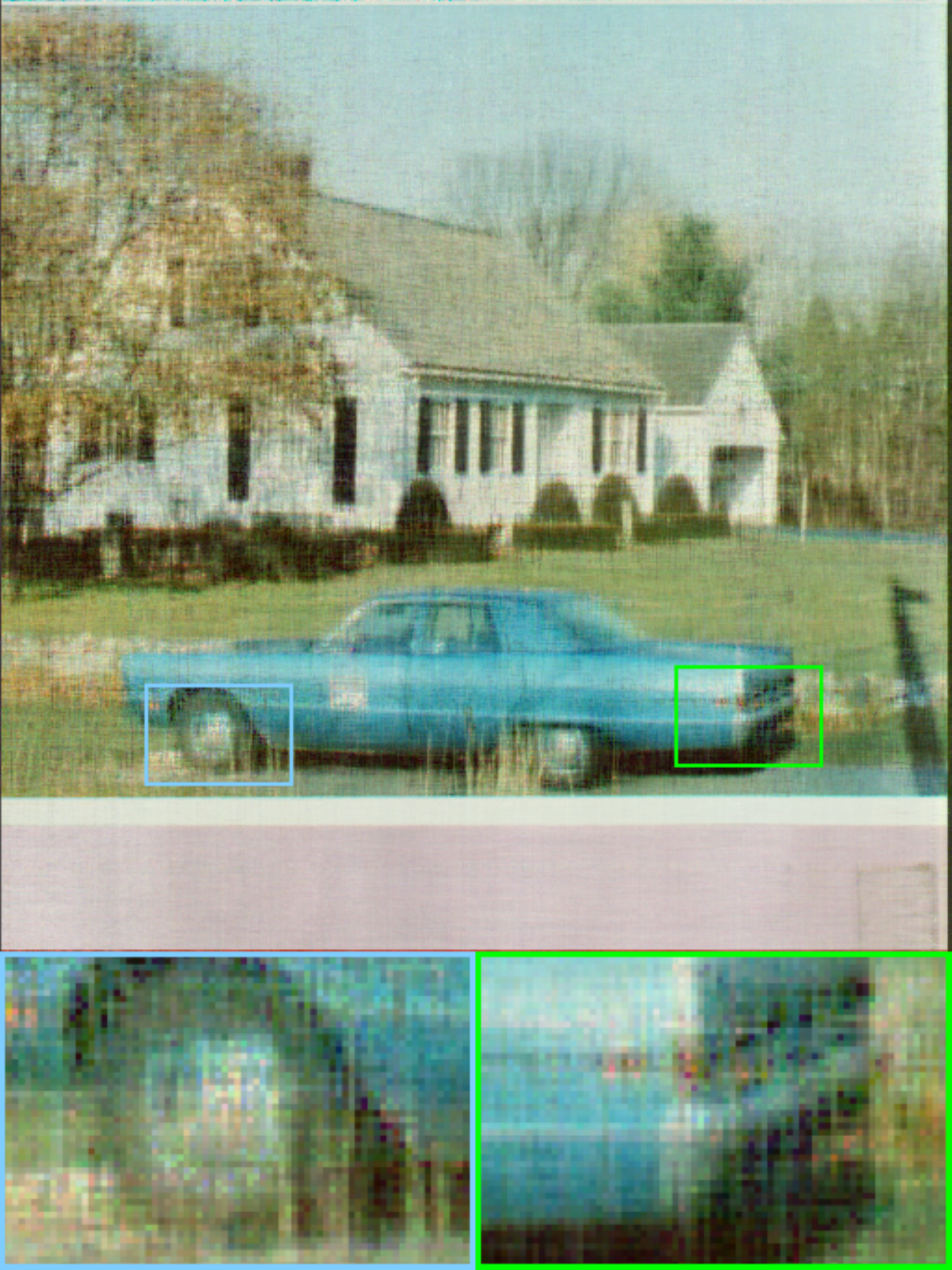}&
       \includegraphics[width=0.12\textwidth]{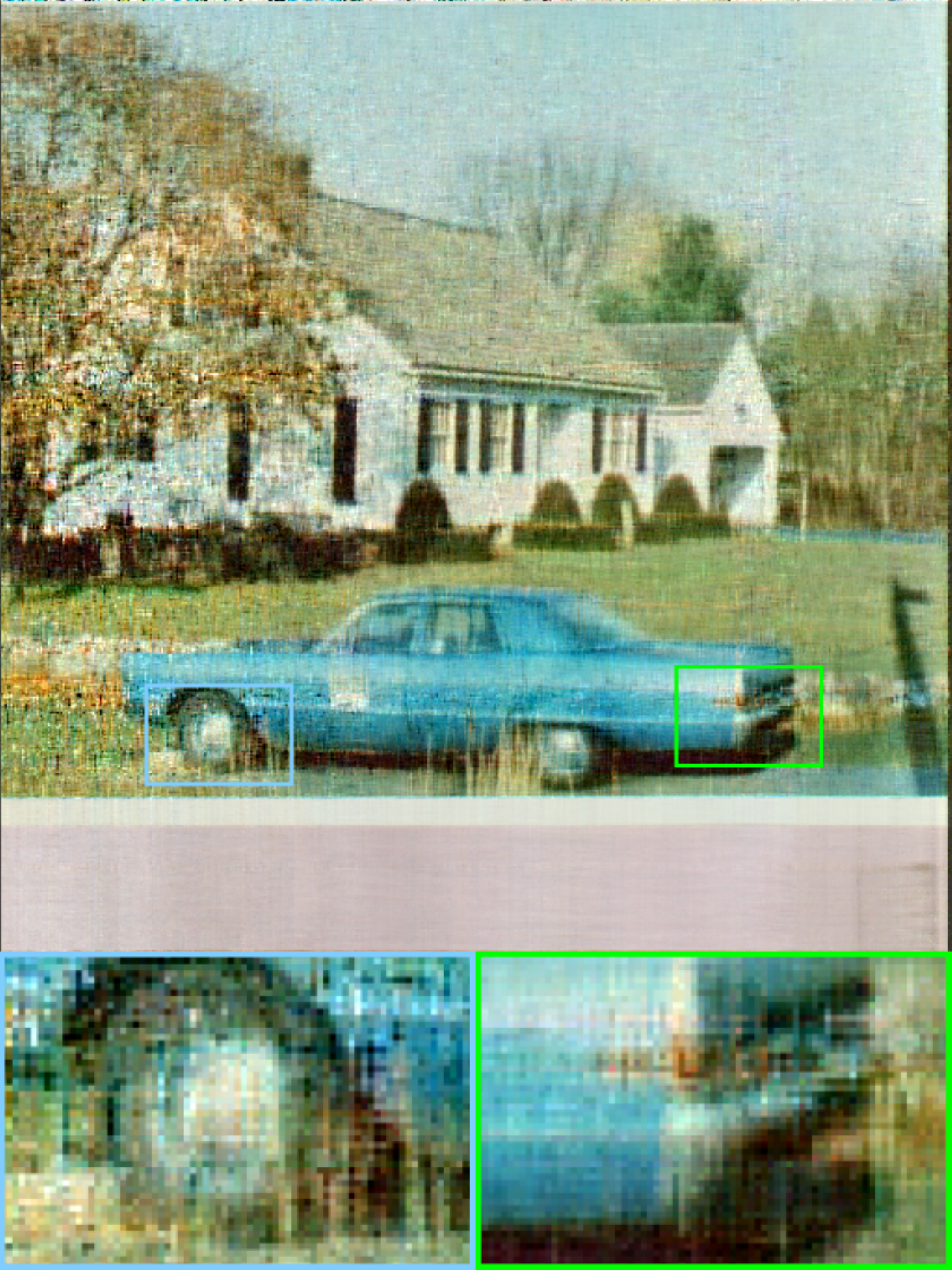}&
        \includegraphics[width=0.12\textwidth]{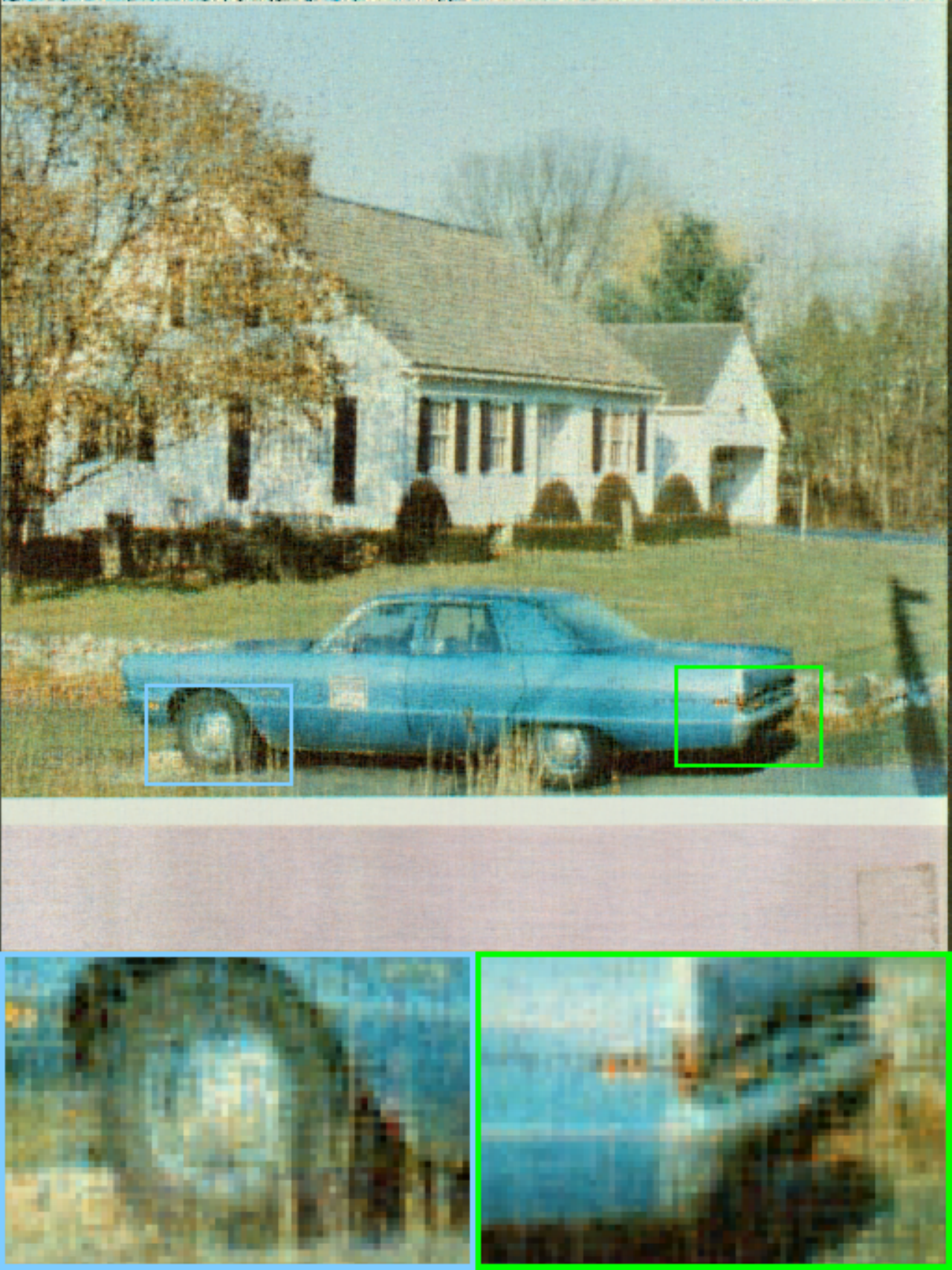}&
         \includegraphics[width=0.12\textwidth]{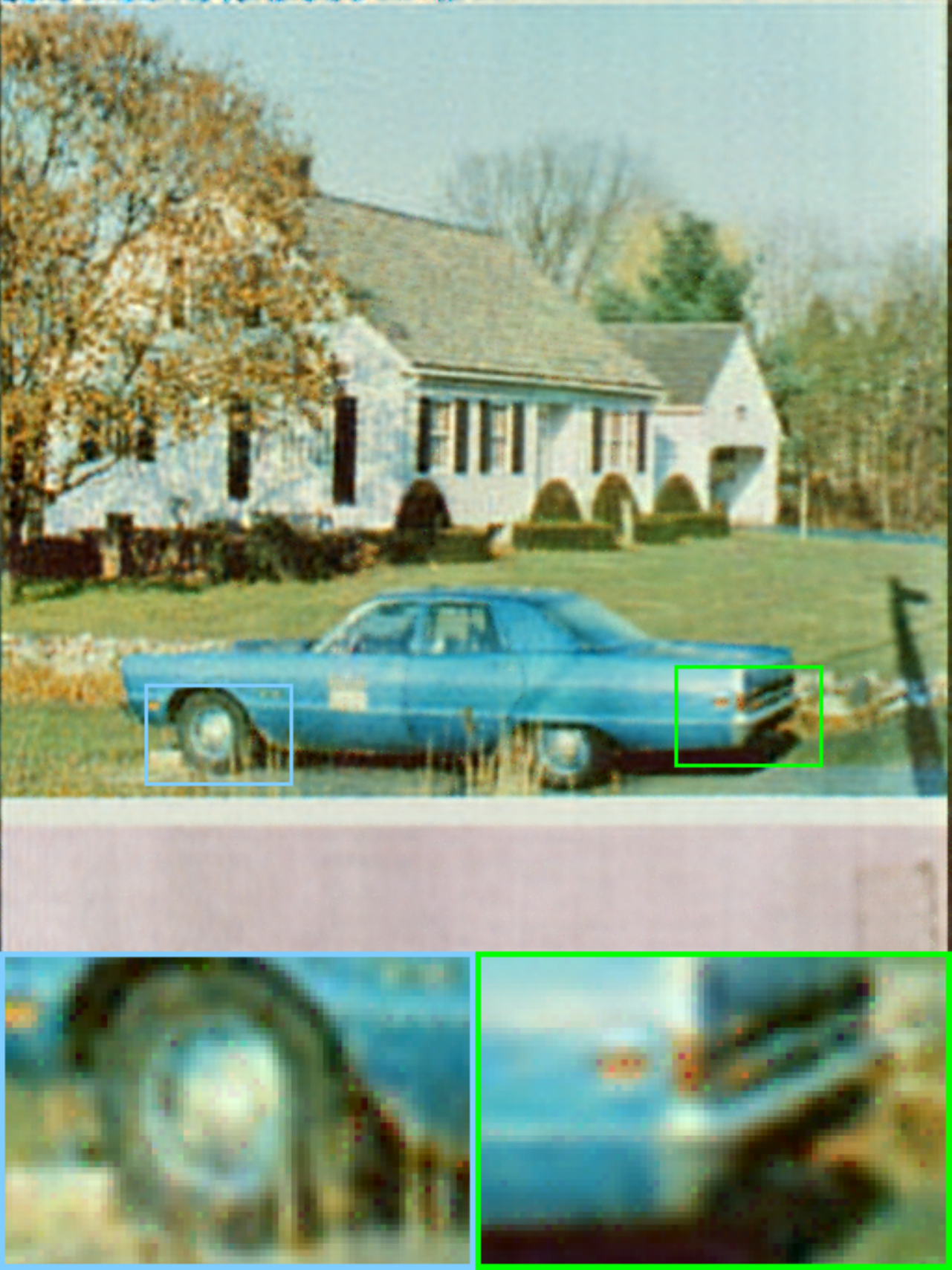}&
    \includegraphics[width=0.12\textwidth]{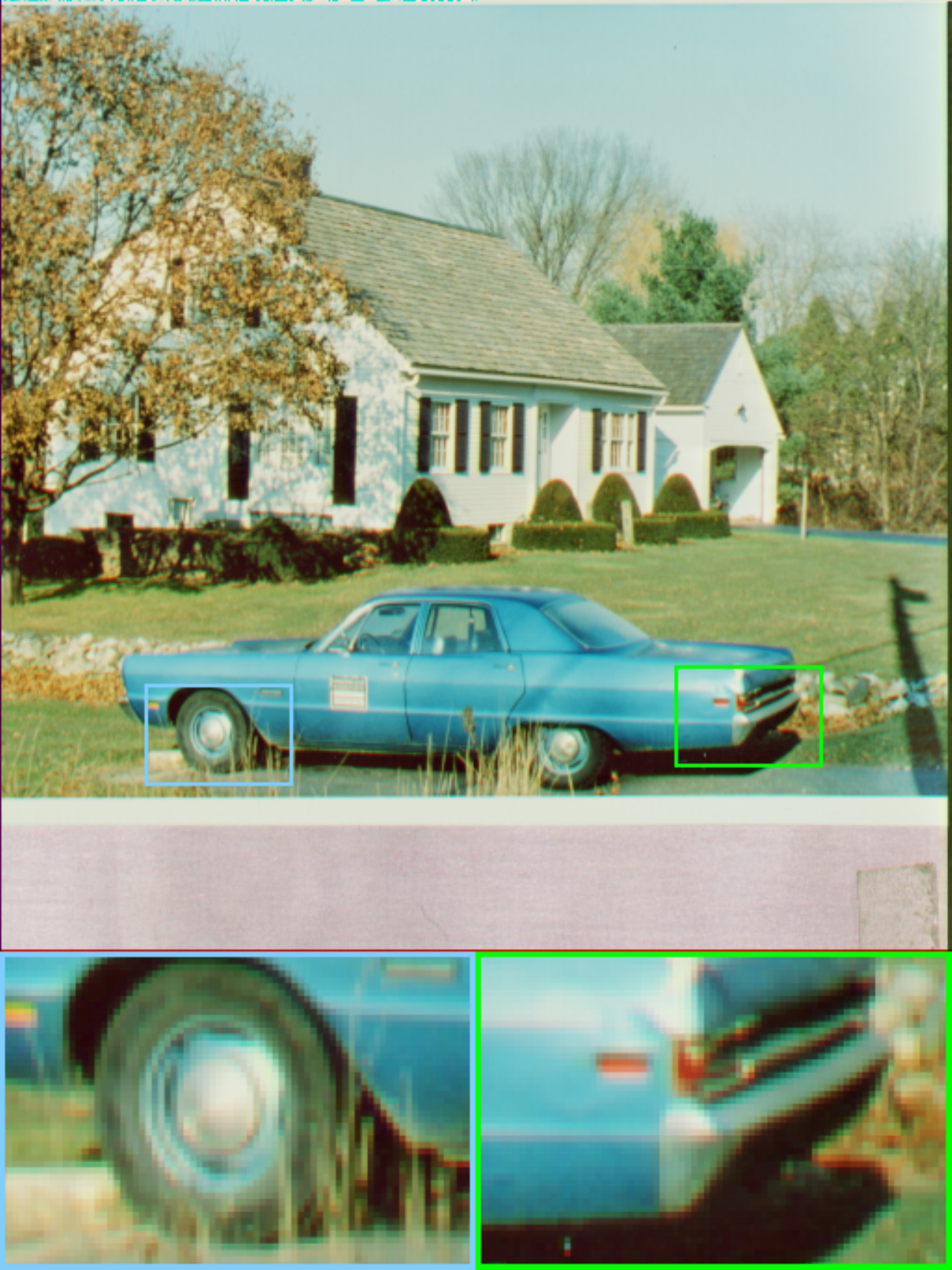}\\
PSNR 4.79  &
PSNR 23.47  &
PSNR 22.58  &
PSNR 23.86  &
PSNR 22.18  &
PSNR 25.04  &
PSNR 25.66  &
PSNR Inf\\
     \includegraphics[width=0.12\textwidth]{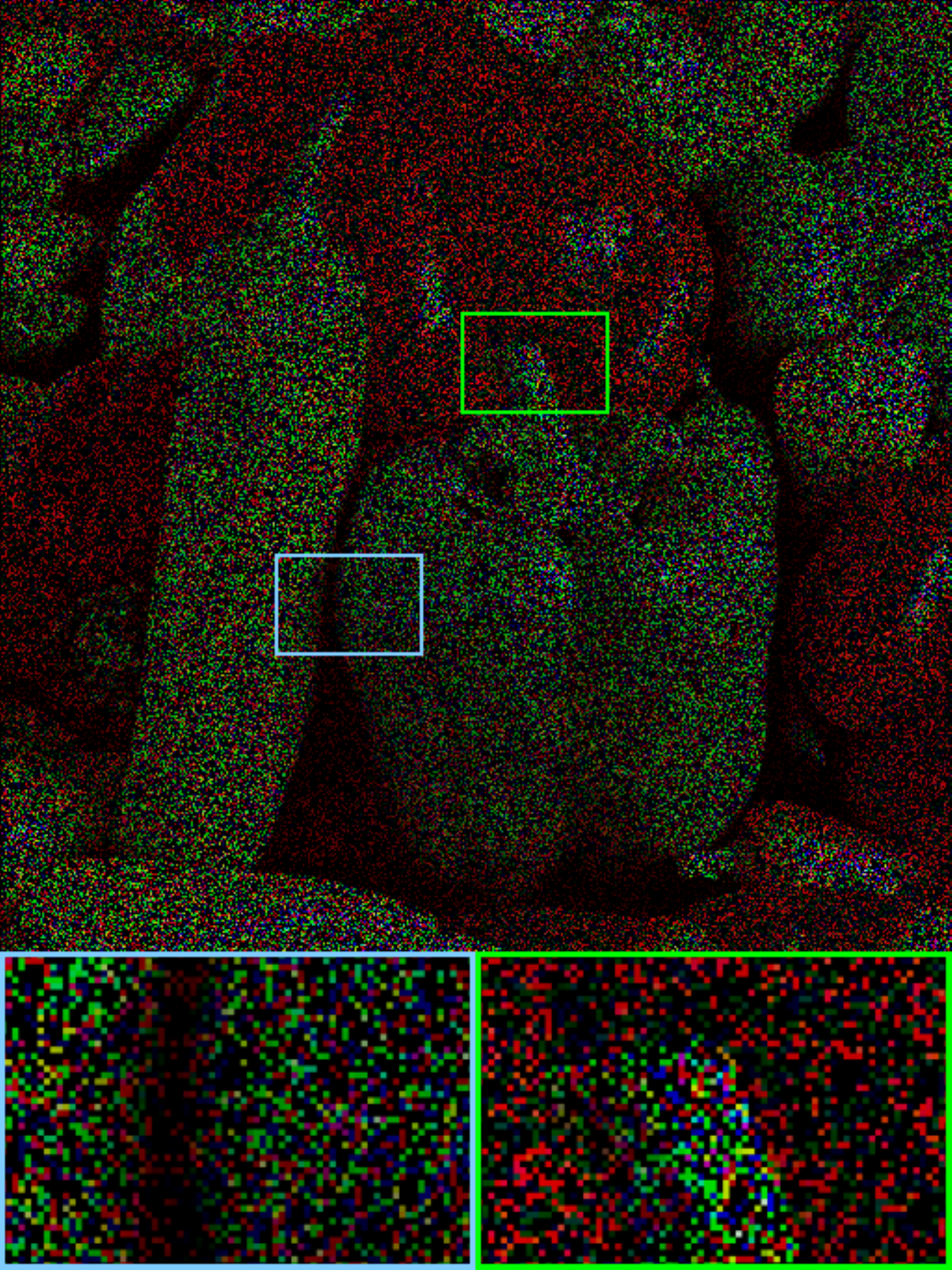}&
    \includegraphics[width=0.12\textwidth]{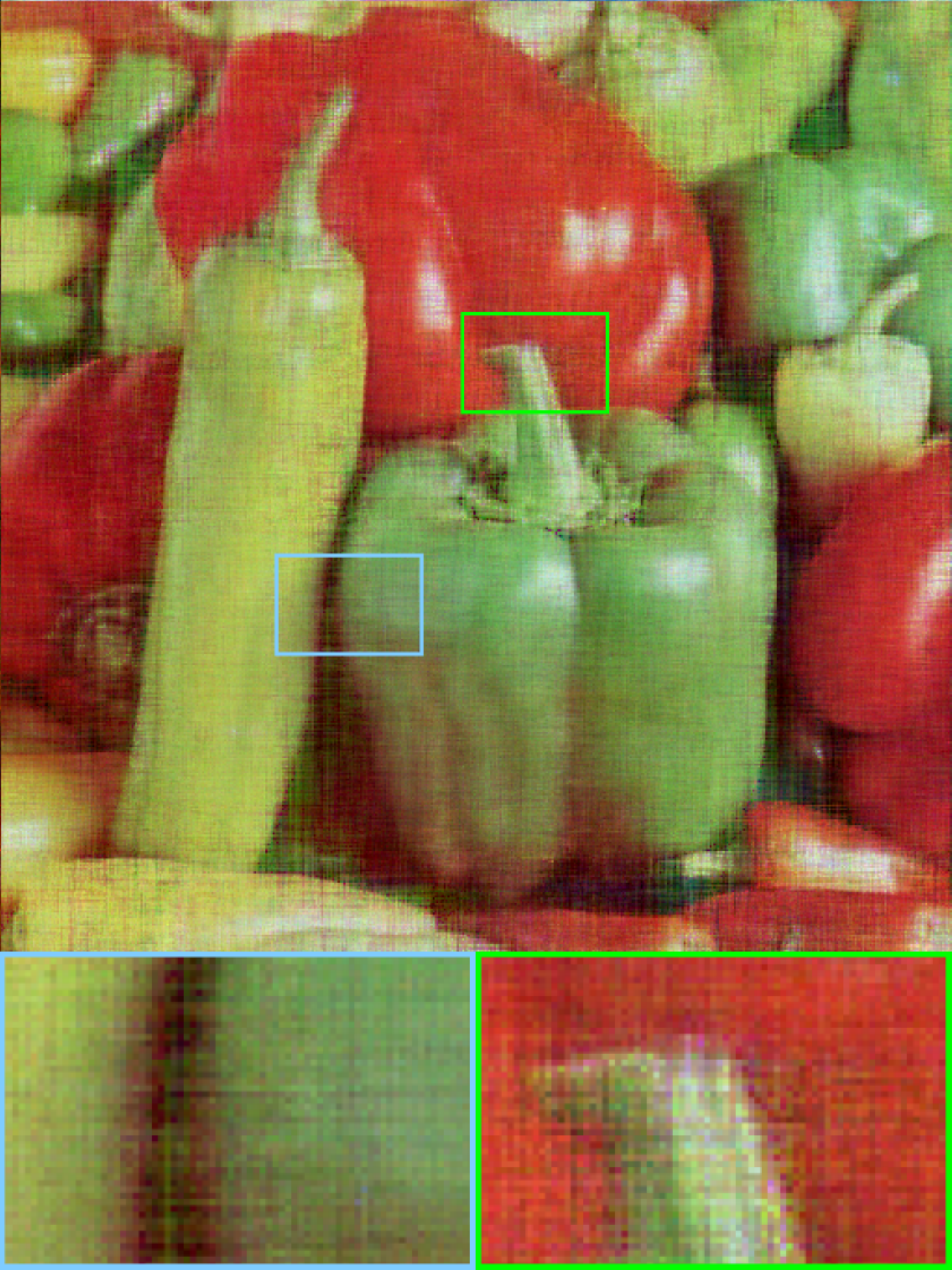}&
     \includegraphics[width=0.12\textwidth]{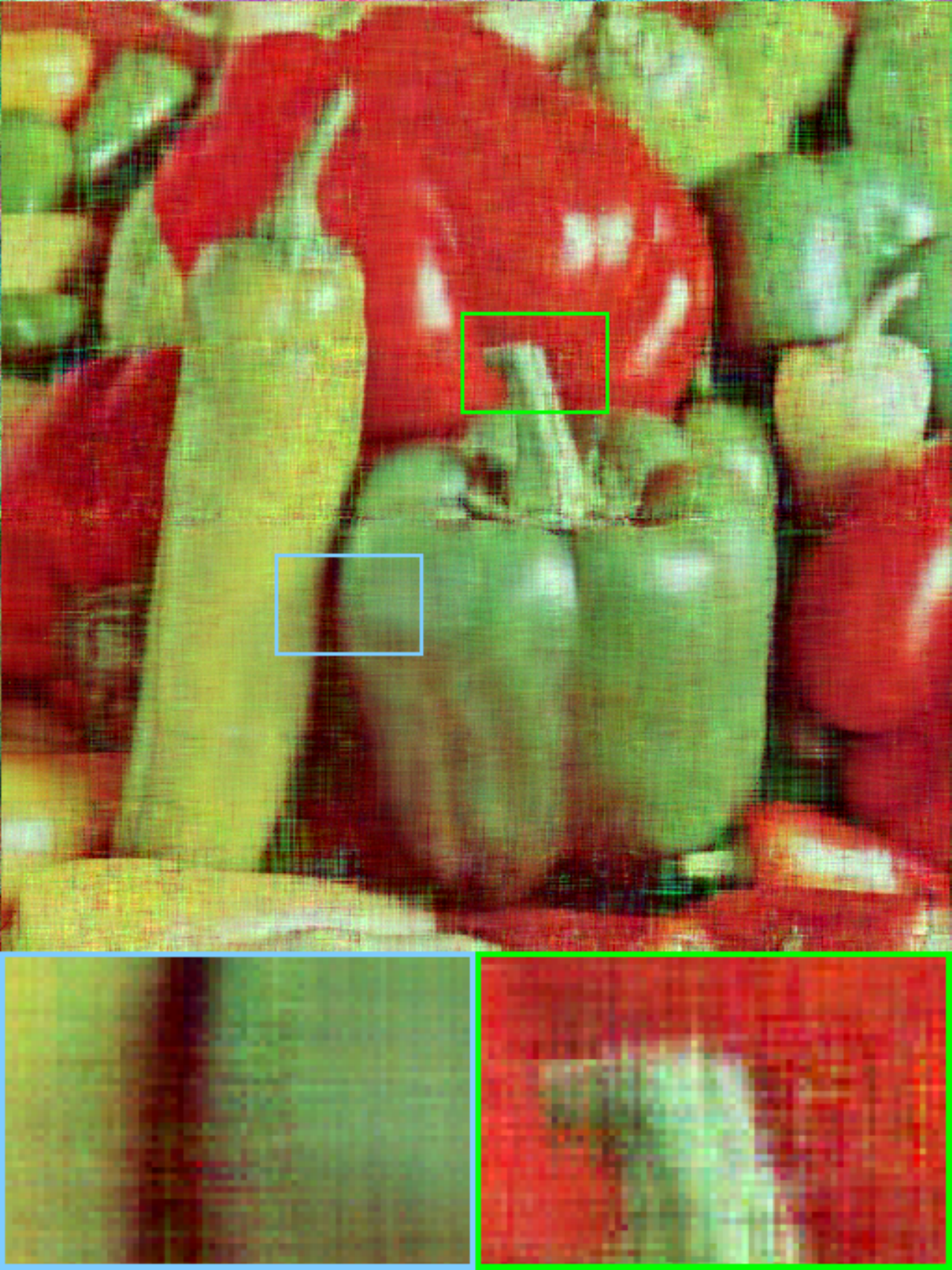}&
      \includegraphics[width=0.12\textwidth]{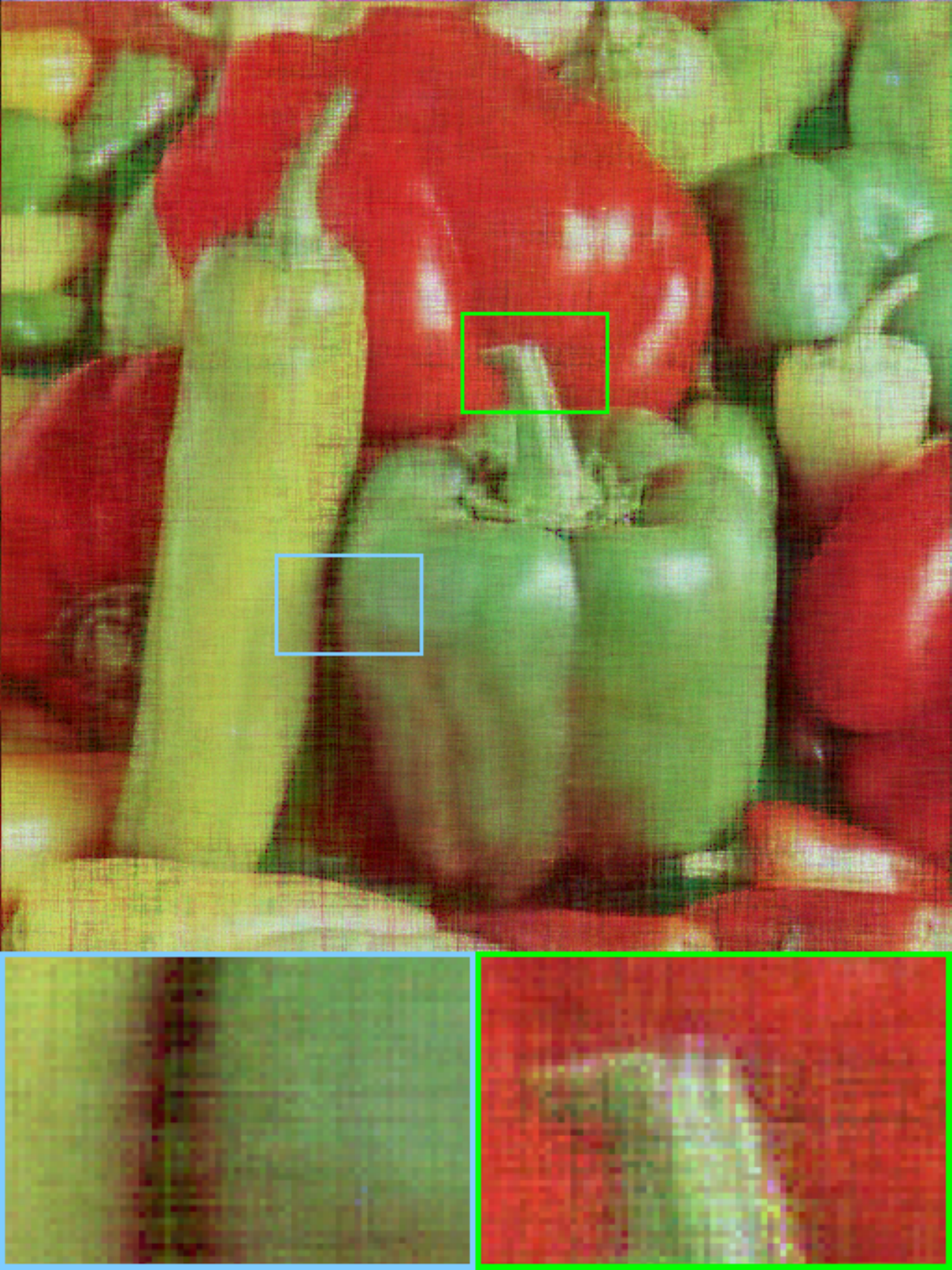}&
       \includegraphics[width=0.12\textwidth]{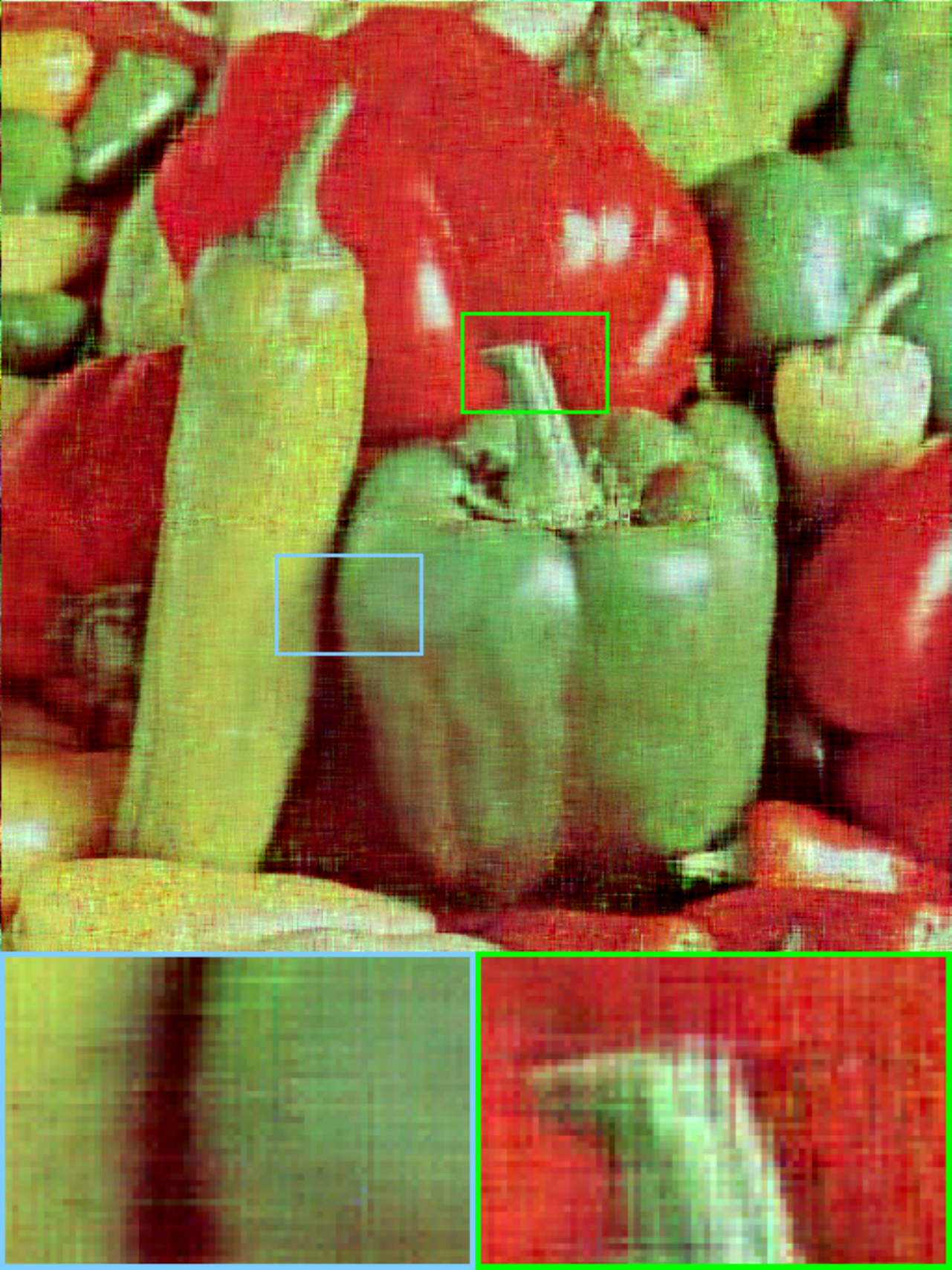}&
        \includegraphics[width=0.12\textwidth]{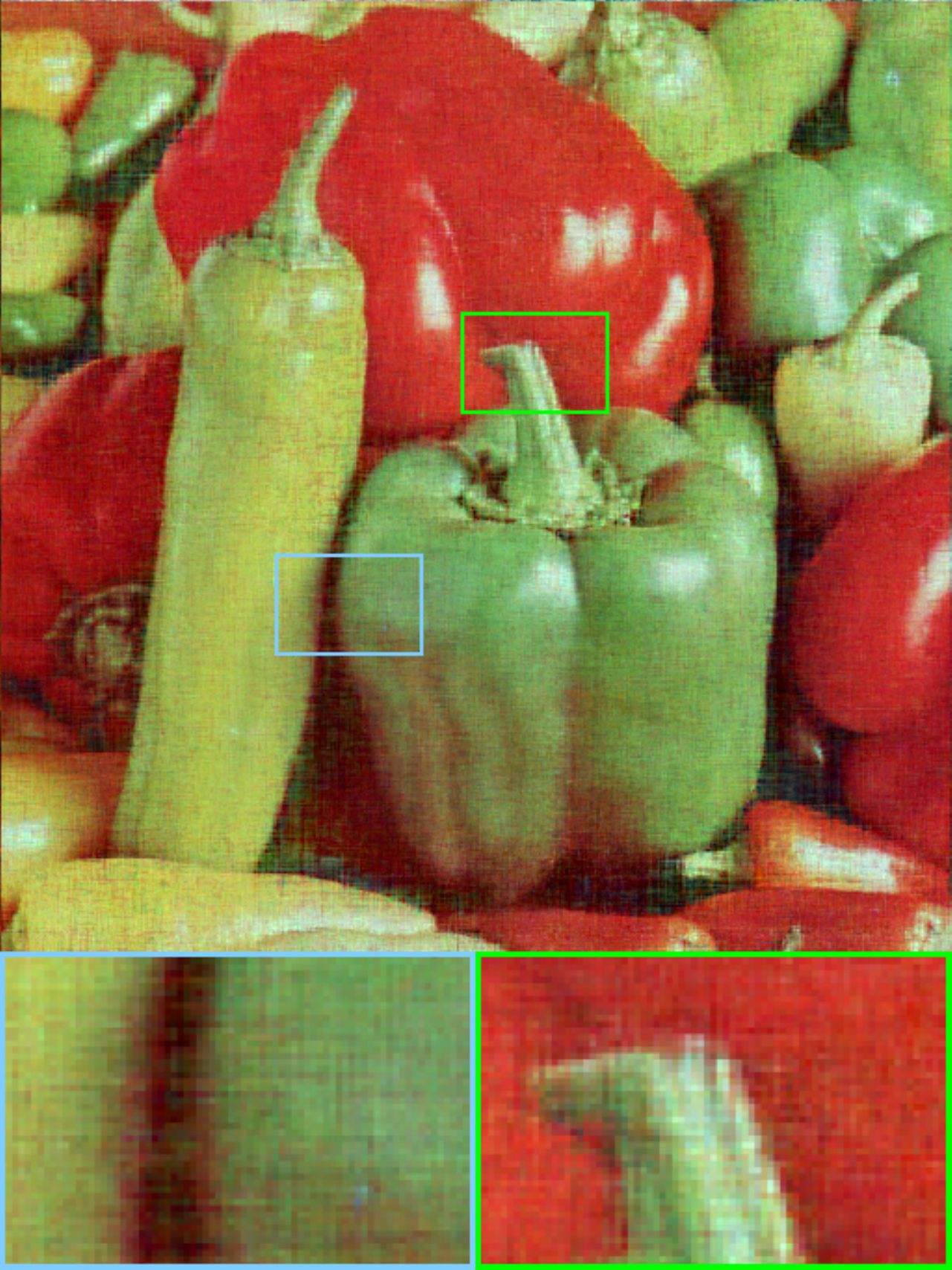}&
         \includegraphics[width=0.12\textwidth]{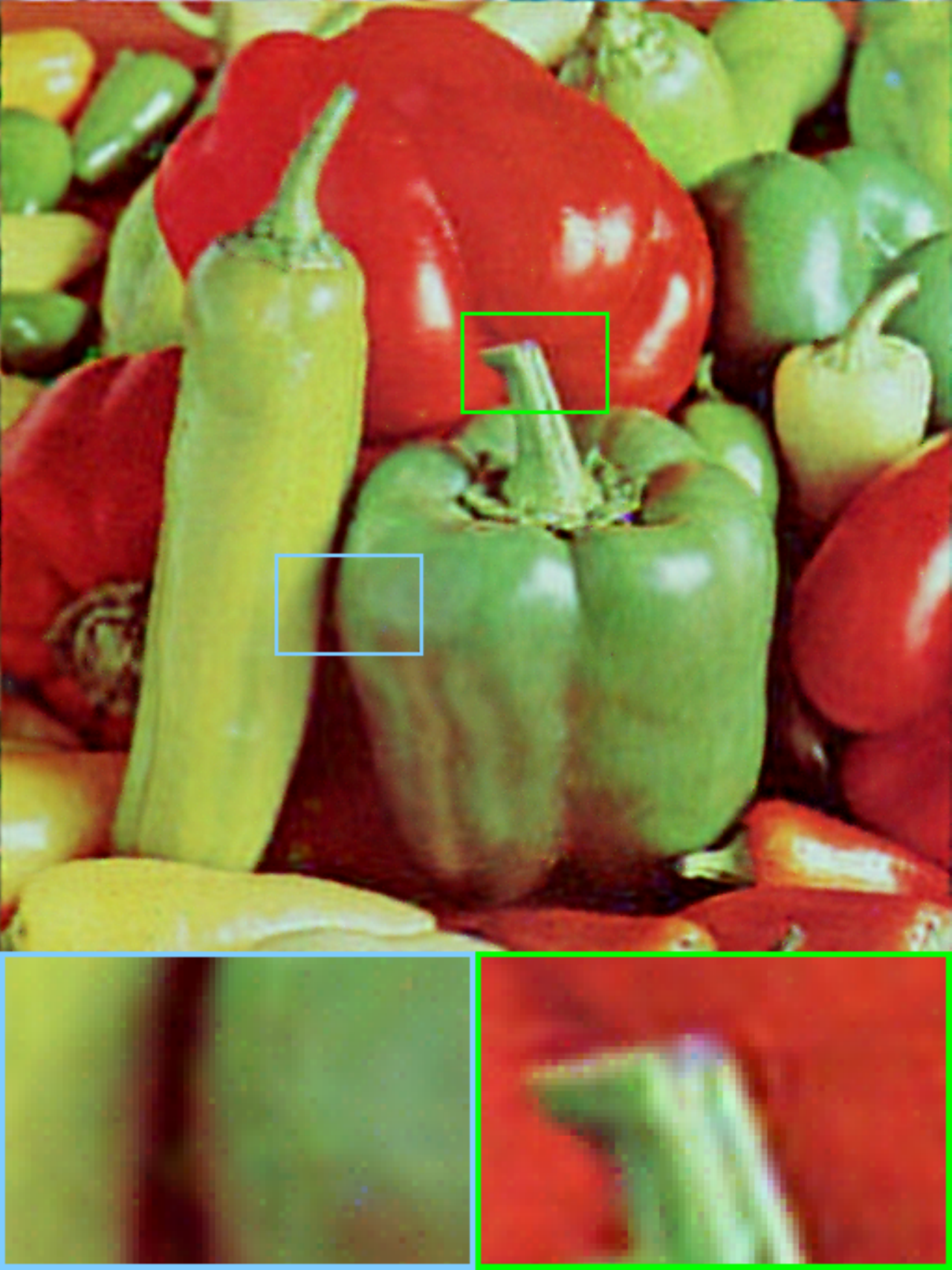}&
    \includegraphics[width=0.12\textwidth]{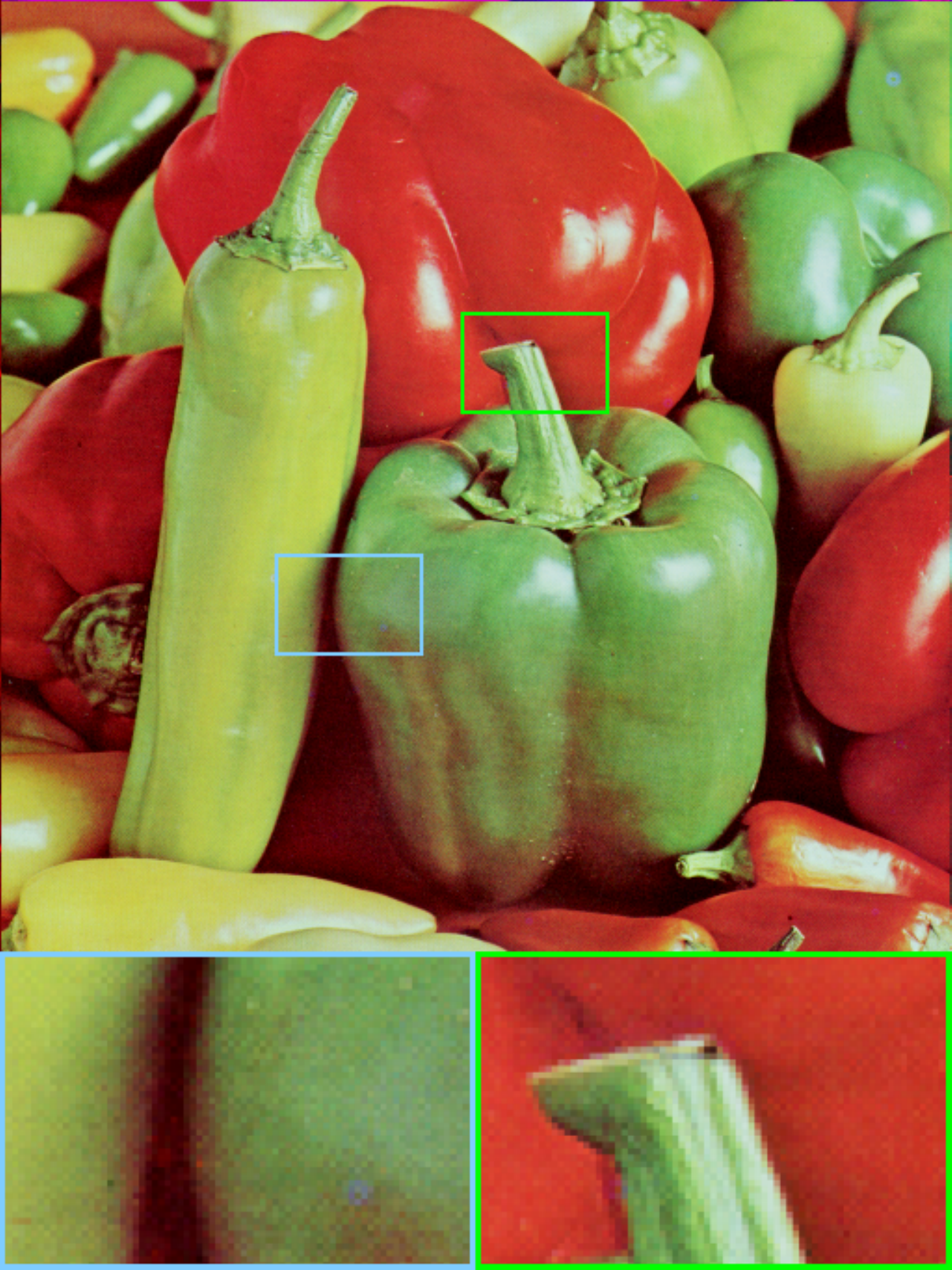}\\
PSNR 7.52  &
PSNR 22.34  &
PSNR 22.71  &
PSNR 22.65  &
PSNR 23.78  &
PSNR 24.40  &
PSNR 28.14  &
PSNR Inf\\
     \includegraphics[width=0.12\textwidth]{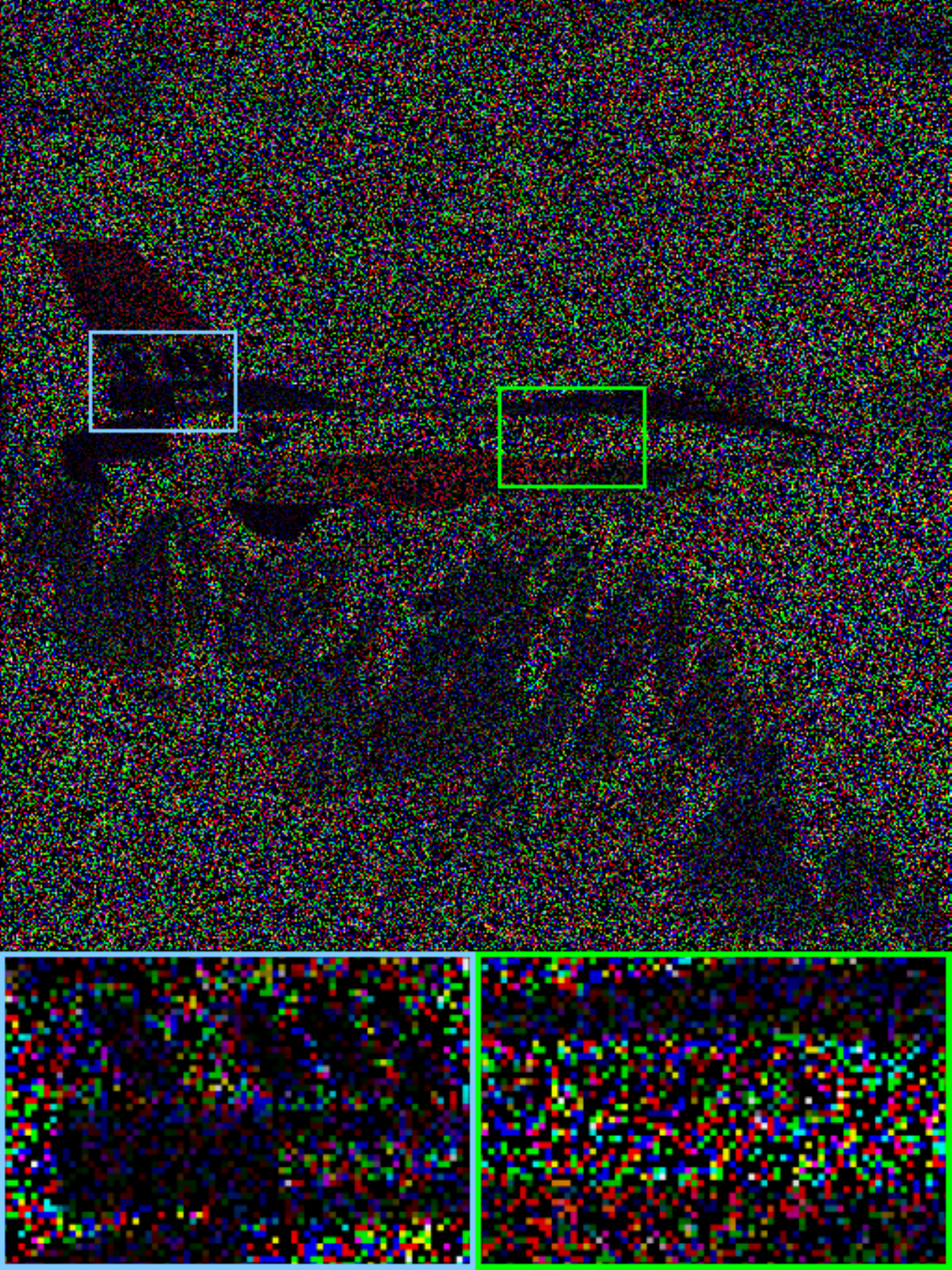}&
    \includegraphics[width=0.12\textwidth]{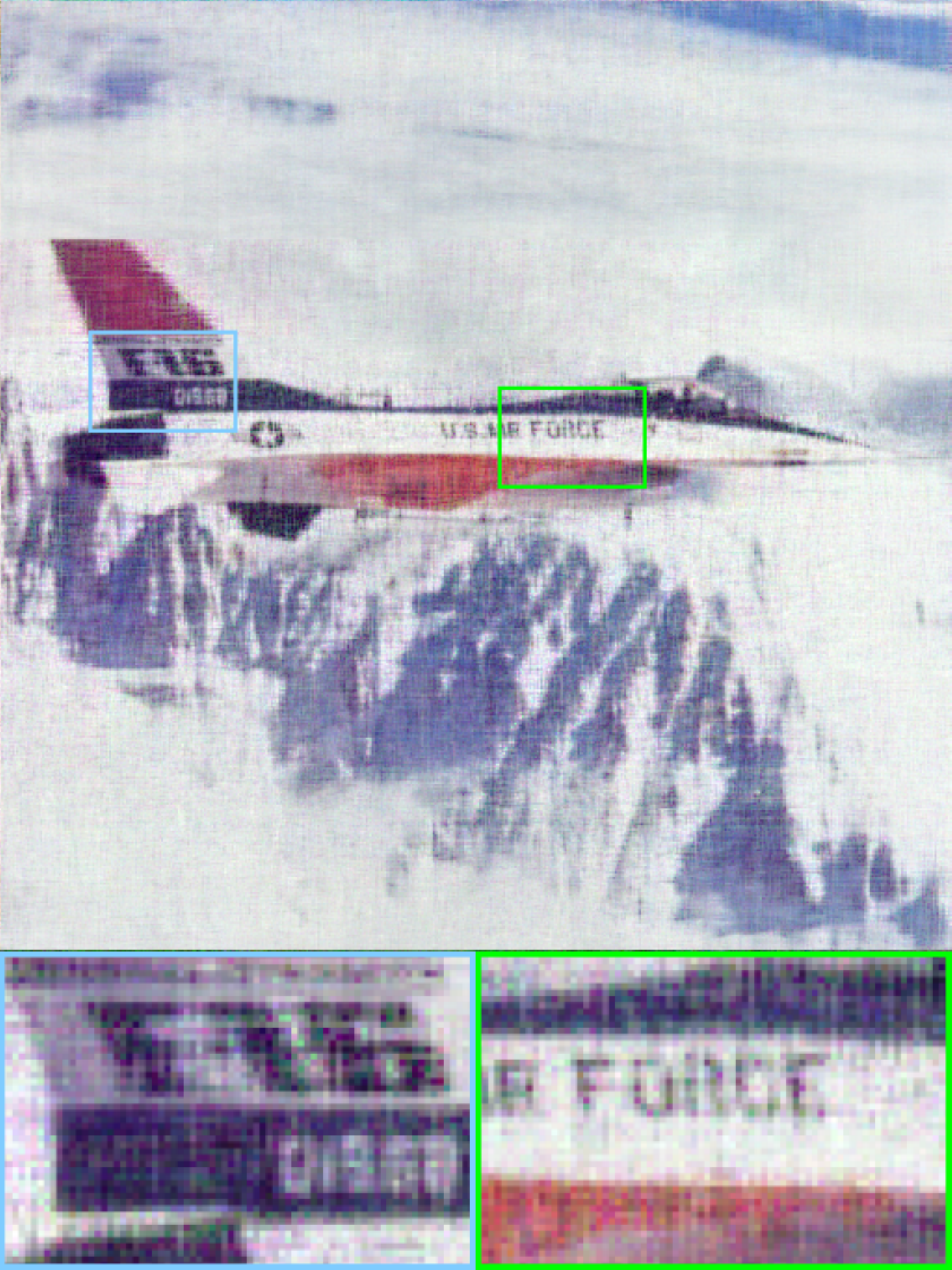}&
     \includegraphics[width=0.12\textwidth]{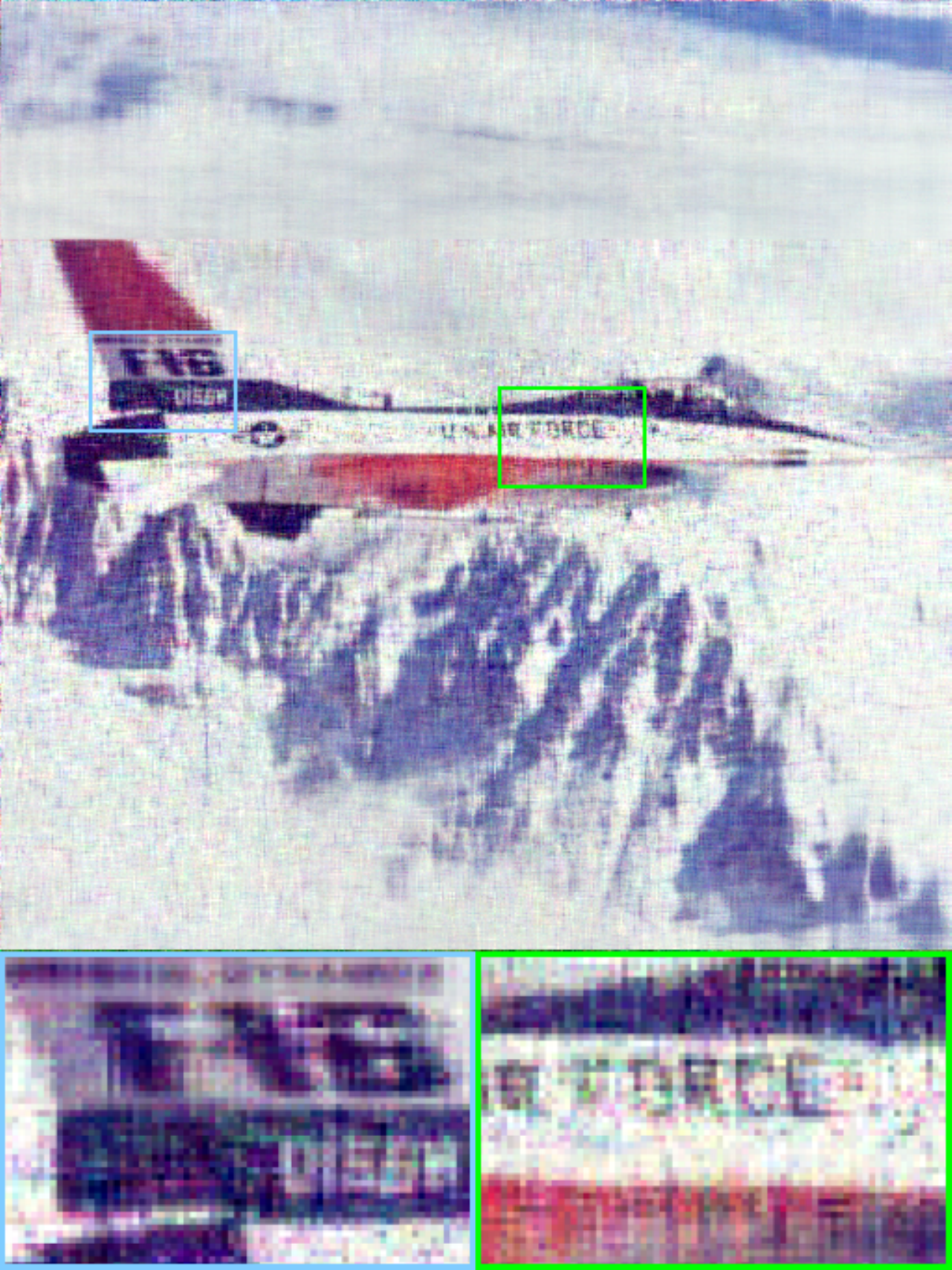}&
      \includegraphics[width=0.12\textwidth]{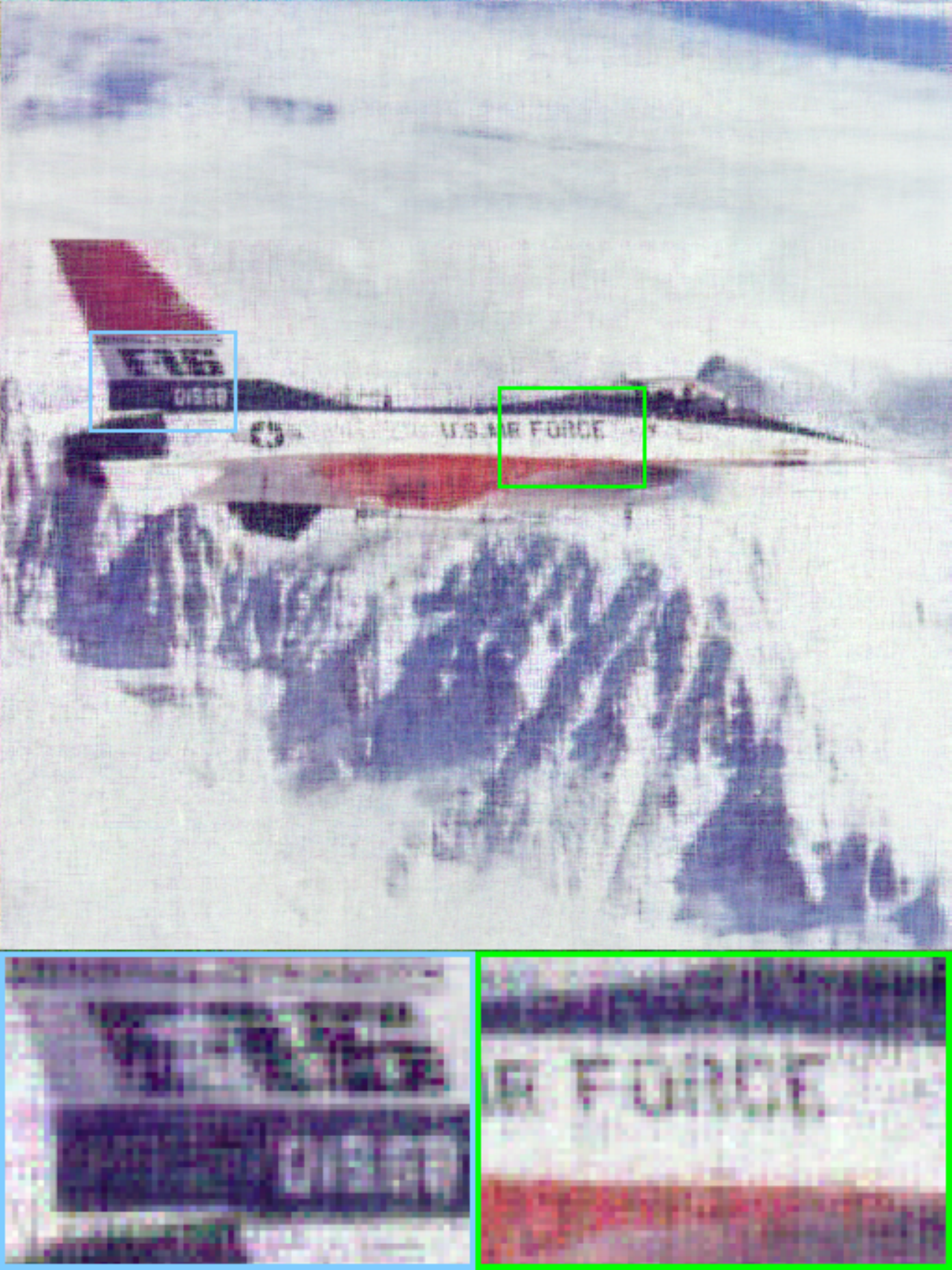}&
       \includegraphics[width=0.12\textwidth]{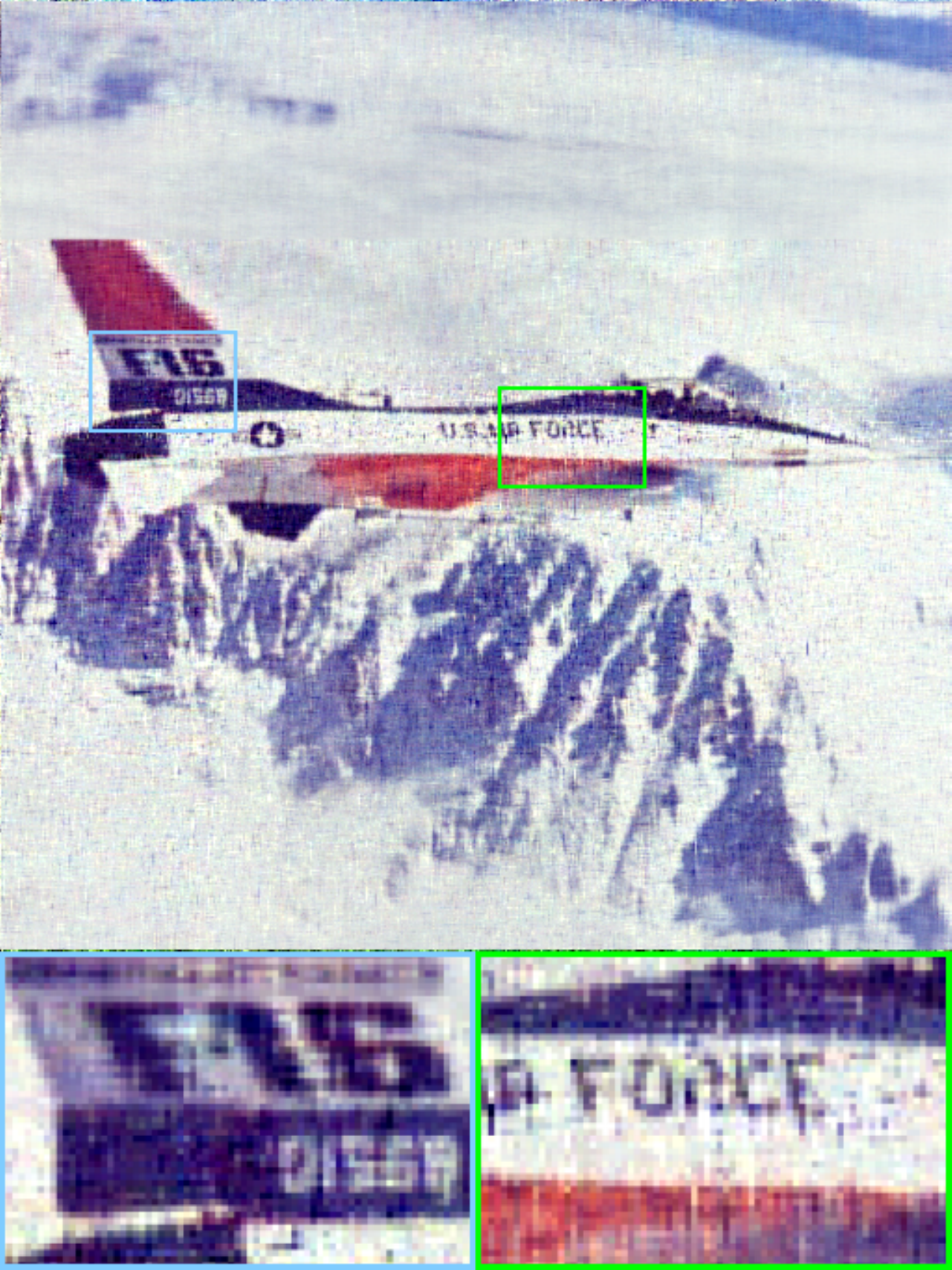}&
        \includegraphics[width=0.12\textwidth]{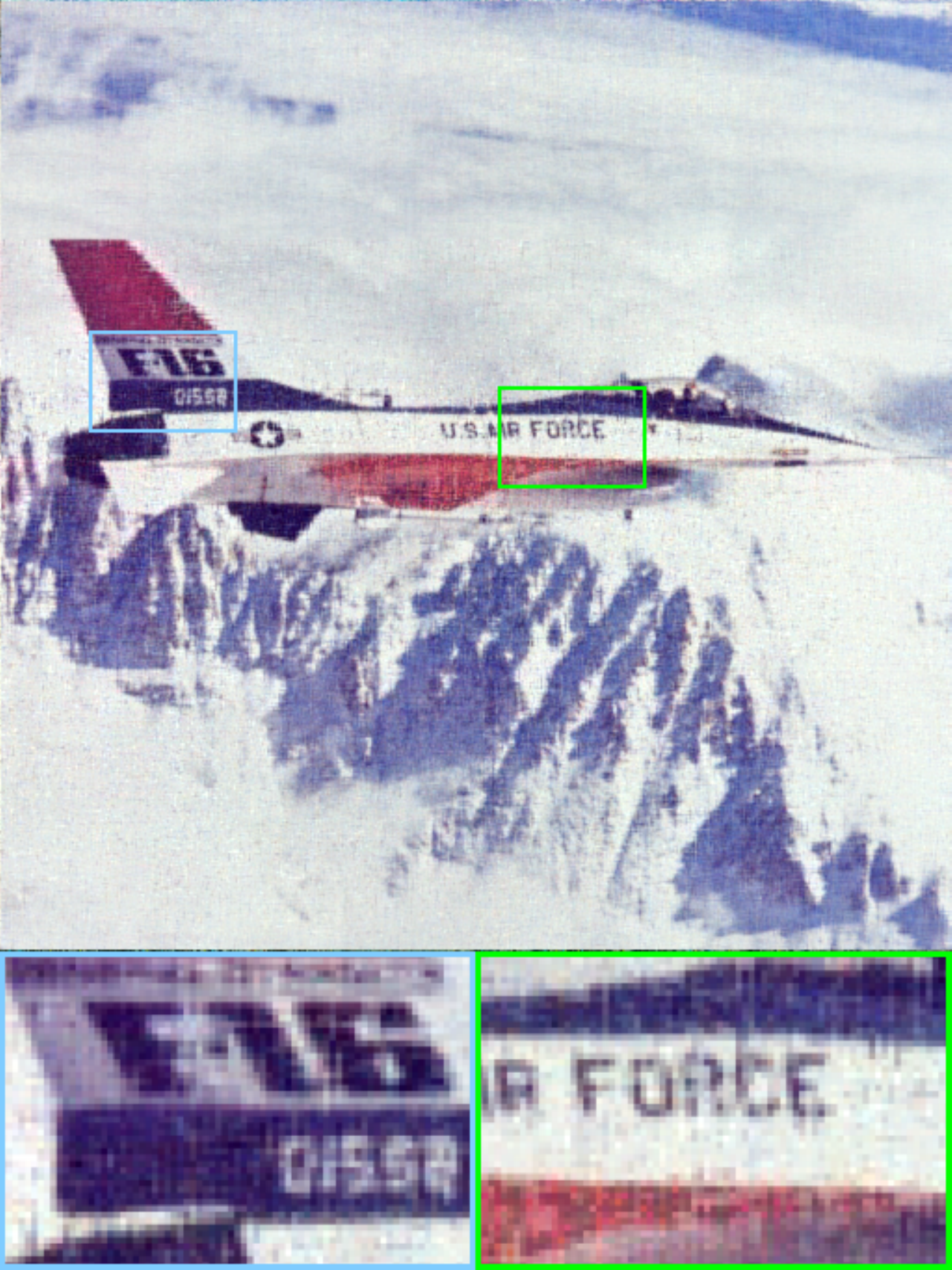}&
         \includegraphics[width=0.12\textwidth]{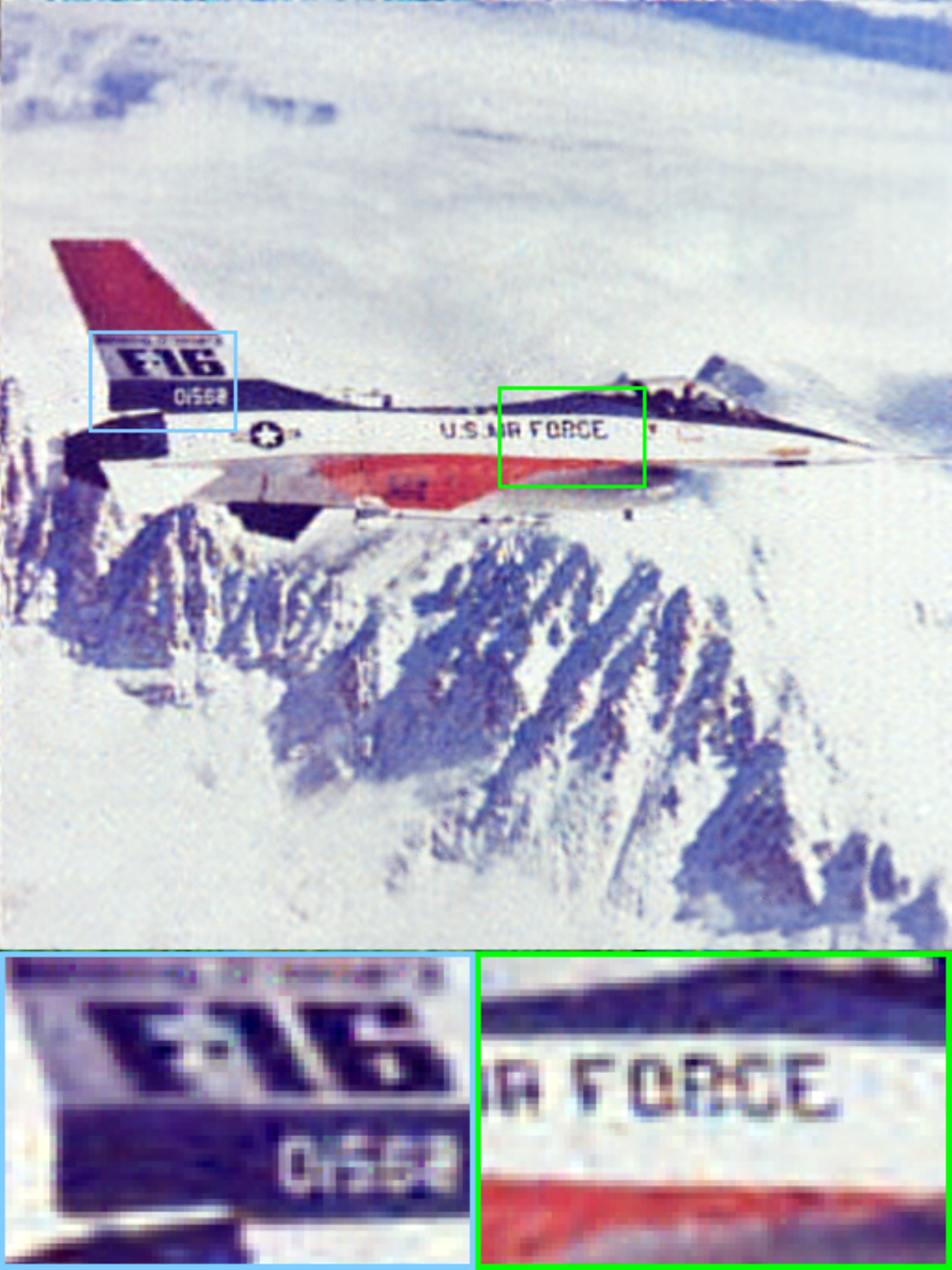}&
    \includegraphics[width=0.12\textwidth]{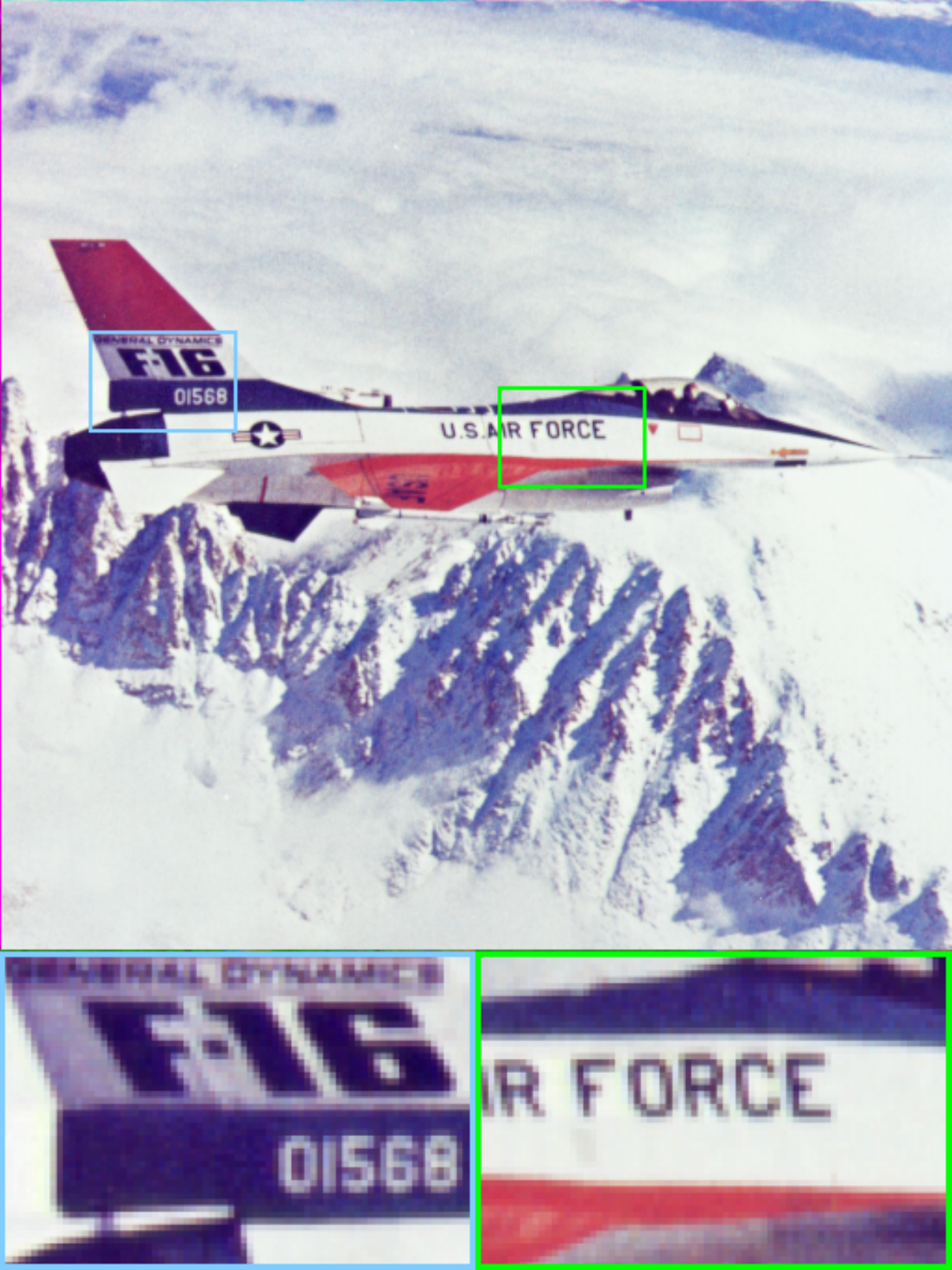}\\
PSNR 3.66  &
PSNR 25.24  &
PSNR 24.85  &
PSNR 25.66  &
PSNR 25.59  &
PSNR 27.78  &
PSNR 29.75  &
PSNR Inf\\
          Observed &DCTNN\cite{CVPR_19} &TRLRF\cite{TRLRF}&FTNN\cite{FTNN}&FCTN\cite{FCTN}&HLRTF\cite{HLRTF}&LRTFR & Original\\
    \end{tabular}
    \end{center}
    \vspace{-0.3cm}
    \caption{The results of multi-dimensional image inpainting by different methods on color images {\it Sailboat}, {\it House}, {\it Peppers}, and {\it Plane} (SR = 0.2).\label{fig_completion}\vspace{-0.3cm}}
    \end{figure*}
    \begin{table*}[!h]
    \caption{The average quantitative results by different methods for multi-dimensional image inpainting. The {\bf best} and \underline{second-best} values are highlighted. (PSNR $\uparrow$, SSIM $\uparrow$, and NRMSE $\downarrow$)\label{tab_completion}}\vspace{-0.4cm}
      \begin{center}
      \scriptsize
      \setlength{\tabcolsep}{2.5pt}
      \begin{spacing}{1.2}
      \begin{tabular}{clccccccccccccccc}
      \toprule
      \multicolumn{2}{c}{Sampling rate}&\multicolumn{3}{c}{0.1}&\multicolumn{3}{c}{0.15}&\multicolumn{3}{c}{0.2}&\multicolumn{3}{c}{0.25}&\multicolumn{3}{c}{0.3}\\
      \cmidrule{1-17}
      Data&Method&PSNR &SSIM &NRMSE \;\; &PSNR &SSIM &NRMSE \;\; &PSNR &SSIM &NRMSE \;\; &PSNR &SSIM&NRMSE \;\;&PSNR &SSIM&NRMSE\\
      \midrule
      \multirow{7}*{\tabincell{c}{
      Color images\\ {\it Sailboat}\\{\it House}\\{\it Peppers}\\{\it Plane}\\{(512$\times$512$\times$3)}}}
&Observed&{5.01}&{0.043}&{3.003}\;\;&{5.26}&{0.054}&{2.381}\;\;&{5.52}&{0.065}&{2.001}\;\;&{5.80}&{0.075}&{1.734}\;\;&{6.10}&{0.086}&{1.529}\\
&DCTNN&{20.02}&{0.595}&{0.187}\;\;&{21.85}&{0.681}&{0.149}\;\;&{23.31}&{0.743}&{0.126}\;\;&{24.51}&{0.790}&{0.109}\;\;&{25.59}&{0.827}&{0.097}\\
&TRLRF&{17.03}&{0.351}&{0.253}\;\;&{20.37}&{0.559}&{0.172}\;\;&{23.05}&{0.708}&{0.125}\;\;&{24.75}&{0.788}&{0.104}\;\;&{25.38}&{0.819}&{0.099}\\
&FTNN&{19.64}&{0.590}&{0.198}\;\;&{21.96}&{0.698}&{0.150}\;\;&{23.64}&{0.764}&{0.122}\;\;&{25.07}&{0.815}&{0.103}\;\;&{26.27}&{0.851}&{0.090}\\
&FCTN&{18.79}&{0.463}&{0.207}\;\;&{21.26}&{0.621}&{0.159}\;\;&{23.45}&{0.732}&{0.128}\;\;&{25.01}&{0.801}&{0.107}\;\;&{26.07}&{0.841}&{0.096}\\
&HLRTF&\underline{22.49}&\underline{0.700}&\underline{0.136}\;\;&\underline{24.41}&\underline{0.779}&\underline{0.110}\;\;&\underline{25.39}&\underline{0.811}&\underline{0.097}\;\;&\underline{26.34}&\underline{0.842}&\underline{0.086}\;\;&\underline{27.17}&\underline{0.868}&\underline{0.078}\\
&LRTFR&\bf{24.88}&\bf{0.827}&\bf{0.102}\;\;&\bf{26.00}&\bf{0.849}&\bf{0.089}\;\;&\bf{27.35}&\bf{0.892}&\bf{0.079}\;\;&\bf{27.65}&\bf{0.895}&\bf{0.075}\;\;&\bf{28.42}&\bf{0.916}&\bf{0.071}\\
      \midrule
      \multirow{7}*{\tabincell{c}{
      MSIs\\{\it Toys}\\{\it Flowers}\\{(256$\times$256$\times$31)}}}
&Observed&{12.32}&{0.442}&{2.999}\;\;&{12.57}&{0.472}&{2.386}\;\;&{12.83}&{0.500}&{1.999}\;\;&{13.10}&{0.527}&{1.740}\;\;&{13.42}&{0.554}&{1.526} \\
&DCTNN&{32.53}&{0.954}&{0.102}\;\;&{35.77}&{0.976}&{0.070}\;\;&{38.43}&{0.986}&{0.052}\;\;&{40.63}&{0.991}&{0.041}\;\;&{42.81}&{0.994}&{0.032} \\
&TRLRF&{27.83}&{0.797}&{0.165}\;\;&{35.60}&{0.966}&{0.070}\;\;&{37.03}&{0.975}&{0.060}\;\;&{37.85}&{0.979}&{0.054}\;\;&{38.49}&{0.982}&{0.050} \\
&FTNN&{35.08}&{0.973}&{0.079}\;\;&{38.09}&{0.985}&{0.057}\;\;&{40.61}&{0.990}&{0.045}\;\;&{42.65}&\underline{0.994}&{0.036}\;\;&{44.58}&{0.995}&{0.030} \\
&FCTN&{36.72}&{0.973}&{0.064}\;\;&{40.18}&{0.986}&{0.044}\;\;&{42.08}&{0.990}&{0.036}\;\;&{43.32}&{0.992}&{0.031}\;\;&{44.73}&{0.994}&{0.026} \\
&HLRTF&\underline{38.32}&\underline{0.985}&\underline{0.051}\;\;&\underline{41.64}&\underline{0.992}&\underline{0.036}\;\;&\underline{44.19}&\underline{0.995}&\underline{0.027}\;\;&\underline{46.17}&\bf{0.997}&\underline{0.021}\;\;&\underline{46.70}&\bf{0.997}&\underline{0.020} \\
&LRTFR&\bf{39.65}&\bf{0.988}&\bf{0.043}\;\;&\bf{43.46}&\bf{0.994}&\bf{0.028}\;\;&\bf{45.21}&\bf{0.996}&\bf{0.023}\;\;&\bf{47.16}&\bf{0.997}&\bf{0.019}\;\;&\bf{47.58}&\bf{0.997}&\bf{0.018} \\
      \midrule
      \multirow{7}*{\tabincell{c}{
      Videos\\ {\it Foreman}\\{\it Carphone}\\{(144$\times$176$\times$100)}}}
&Observed&5.20&0.044&3.004\;\;&5.45&0.059&2.381\;\;&5.72&0.073&1.998\;\;&6.03&0.087&1.730\;\;&6.30&0.101&1.527 \\
&DCTNN&24.89&0.817&0.101\;\;&26.41&0.862&0.084\;\;&27.66&0.891&0.073\;\;&28.79&0.913&0.064\;\;&29.76&0.929&0.057 \\
&TRLRF&\underline{25.07}&\underline{0.818}&\underline{0.098}\;\;&26.08&0.853&0.087\;\;&26.66&0.869&0.081\;\;&27.07&0.880&0.078\;\;&27.37&0.890&0.075 \\
&FTNN&21.37&0.723&0.156\;\;&23.59&0.810&0.119\;\;&25.38&0.864&0.096\;\;&26.89&0.899&0.081\;\;&28.19&0.922&0.069 \\
&FCTN&23.99&0.766&0.115\;\;&\underline{27.09}&\underline{0.867}&\underline{0.079}\;\;&\underline{28.87}&\underline{0.906}&\underline{0.064}\;\;&\underline{29.99}&\underline{0.925}&\underline{0.056}\;\;&30.71&\underline{0.936}&\underline{0.051}\\
&HLRTF&{24.66}&{0.768}&{0.104}\;\;&{26.49}&{0.830}&{0.085}\;\;&{28.10}&{0.877}&{0.071}\;\;&{29.52}&{0.908}&{0.060}\;\;&\underline{30.80}&{0.932}&{0.052} \\
&LRTFR&\bf{27.56}&\bf{0.901}&\bf{0.074}\;\;&\bf{29.29}&\bf{0.918}&\bf{0.060}\;\;&\bf{30.14}&\bf{0.930}&\bf{0.054}\;\;&\bf{30.96}&\bf{0.942}&\bf{0.050}\;\;&\bf{32.05}&\bf{0.959}&\bf{0.044} \\
      \bottomrule
      \end{tabular}
      \end{spacing}
      \end{center}
      \vspace{-0.7cm}
      \end{table*}
      \begin{figure*}[t]
          \scriptsize
          \setlength{\tabcolsep}{0.9pt}
          \begin{center}
          \begin{tabular}{cccccccc}
             \includegraphics[width=0.12\textwidth]{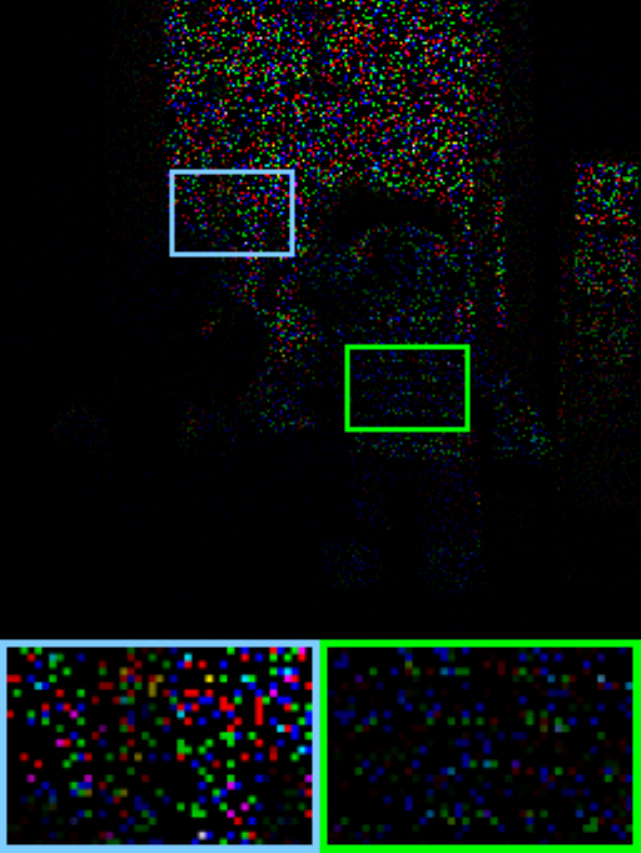}&
            \includegraphics[width=0.12\textwidth]{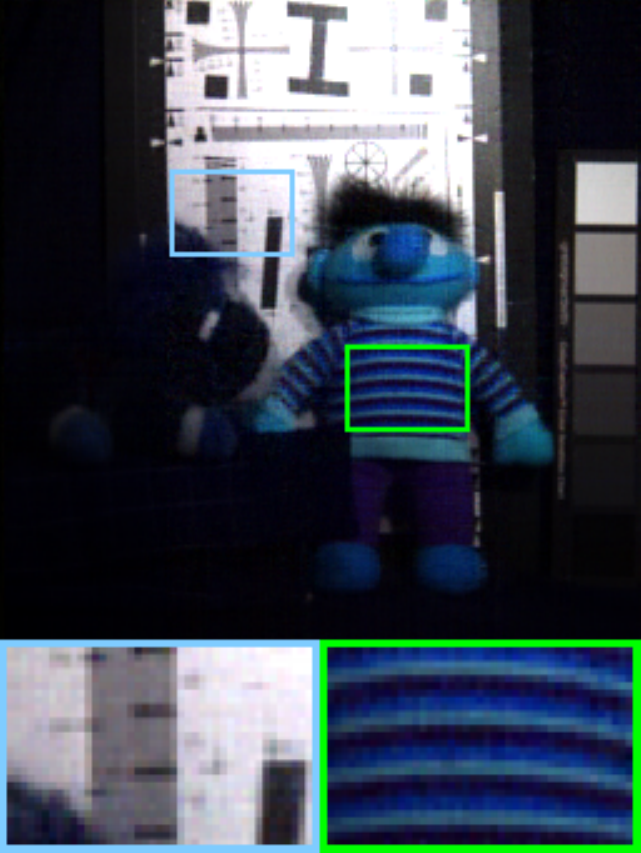}&
             \includegraphics[width=0.12\textwidth]{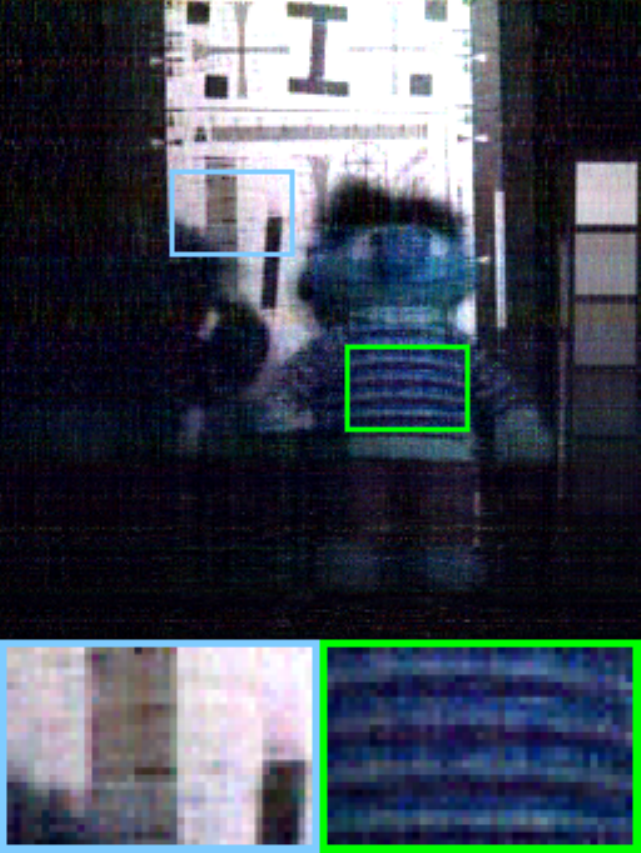}&
              \includegraphics[width=0.12\textwidth]{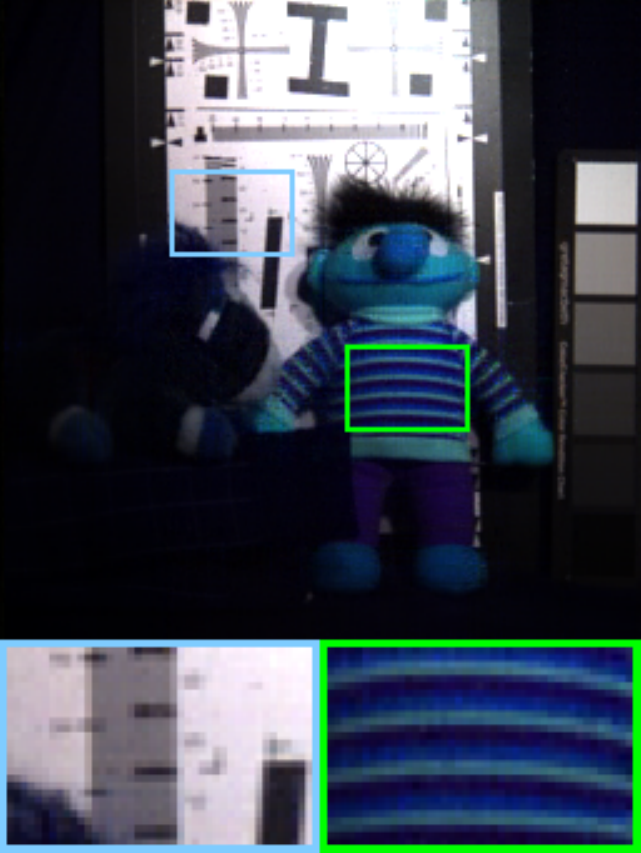}&
               \includegraphics[width=0.12\textwidth]{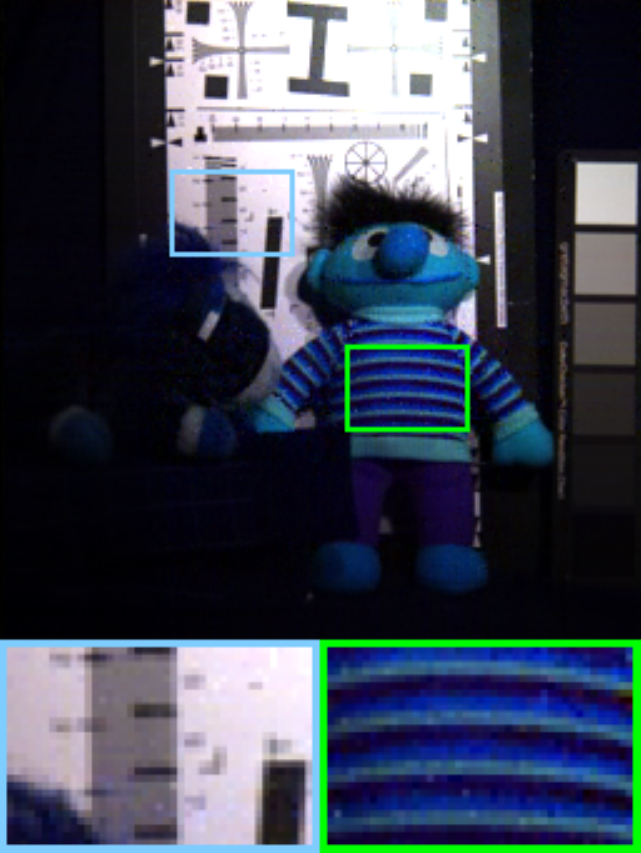}&
                \includegraphics[width=0.12\textwidth]{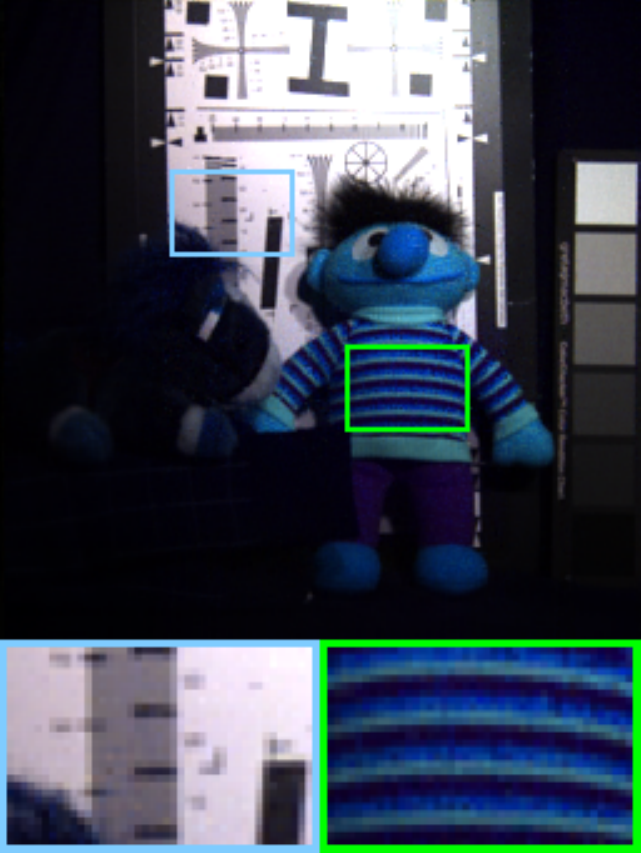}&
                 \includegraphics[width=0.12\textwidth]{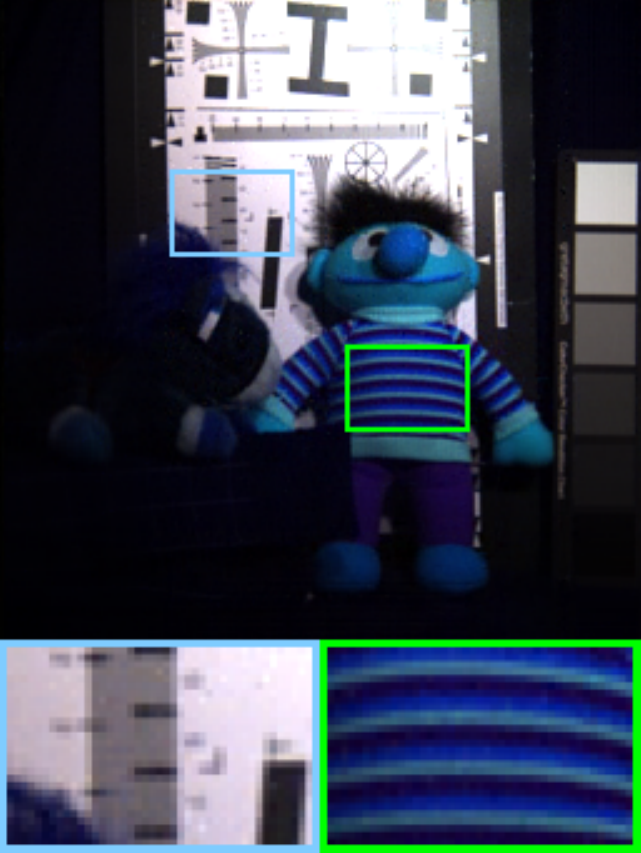}&
            \includegraphics[width=0.12\textwidth]{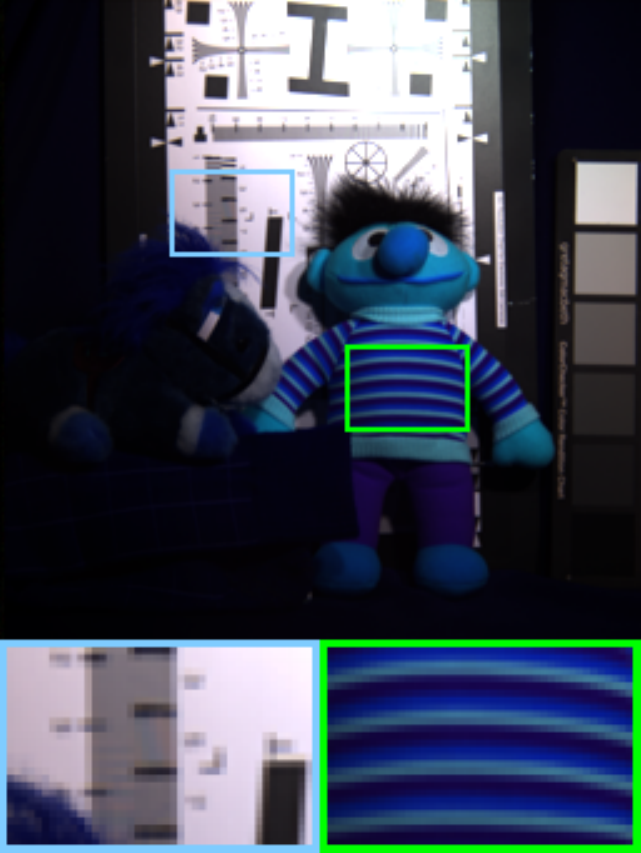}\\        
          PSNR 10.87  &
          PSNR 33.40  &
          PSNR 27.07  &
          PSNR 36.24  &
          PSNR 37.42  &
          PSNR 38.17  &
          PSNR 40.00  &
          PSNR Inf\\
             \includegraphics[width=0.12\textwidth]{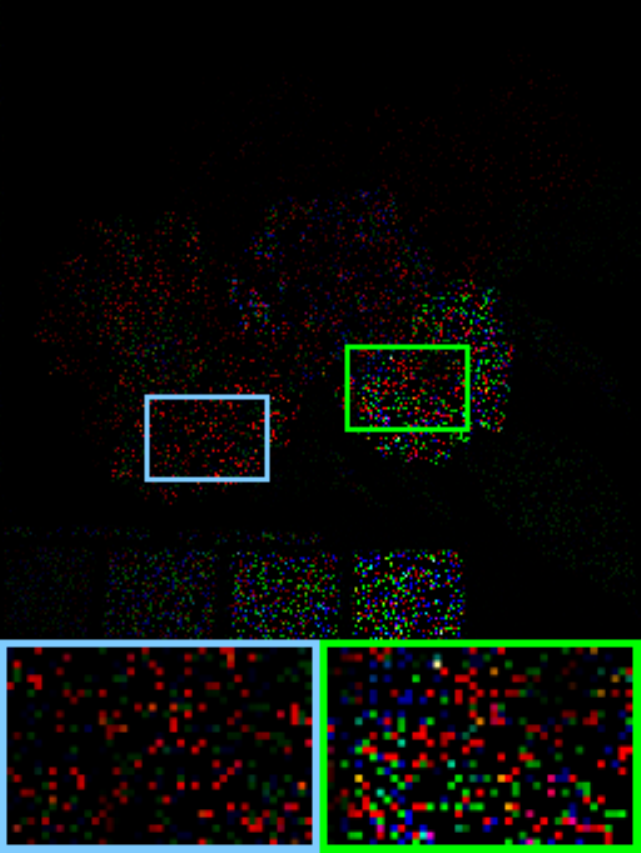}&
            \includegraphics[width=0.12\textwidth]{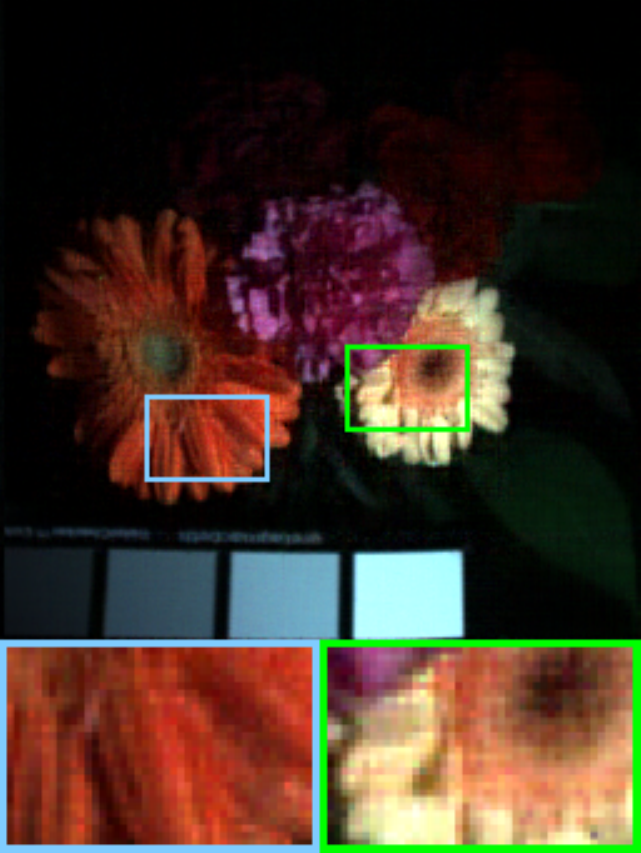}&
             \includegraphics[width=0.12\textwidth]{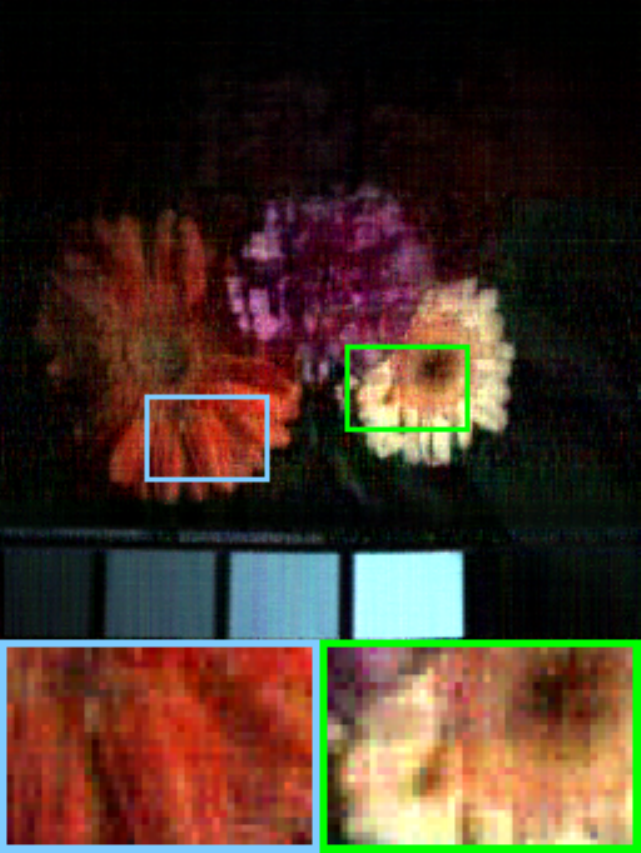}&
              \includegraphics[width=0.12\textwidth]{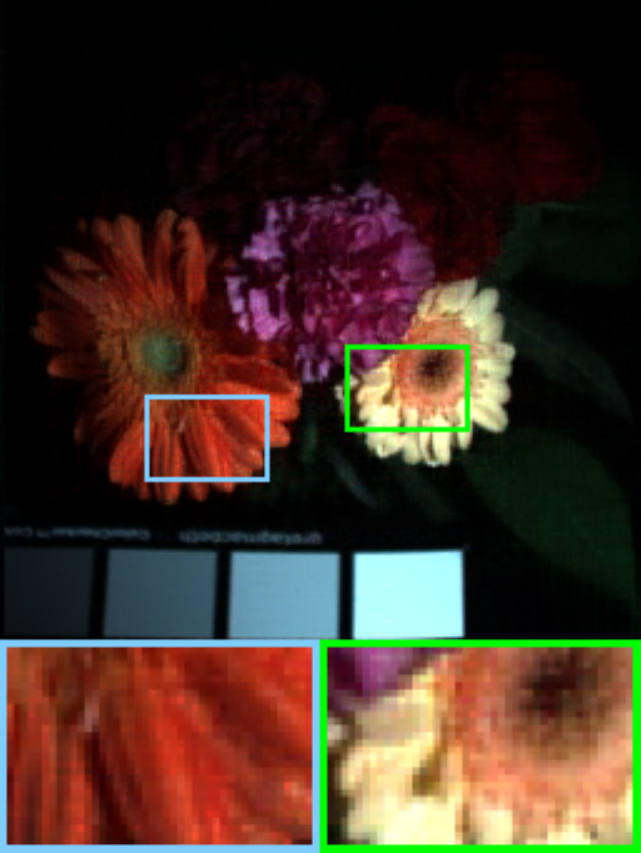}&
               \includegraphics[width=0.12\textwidth]{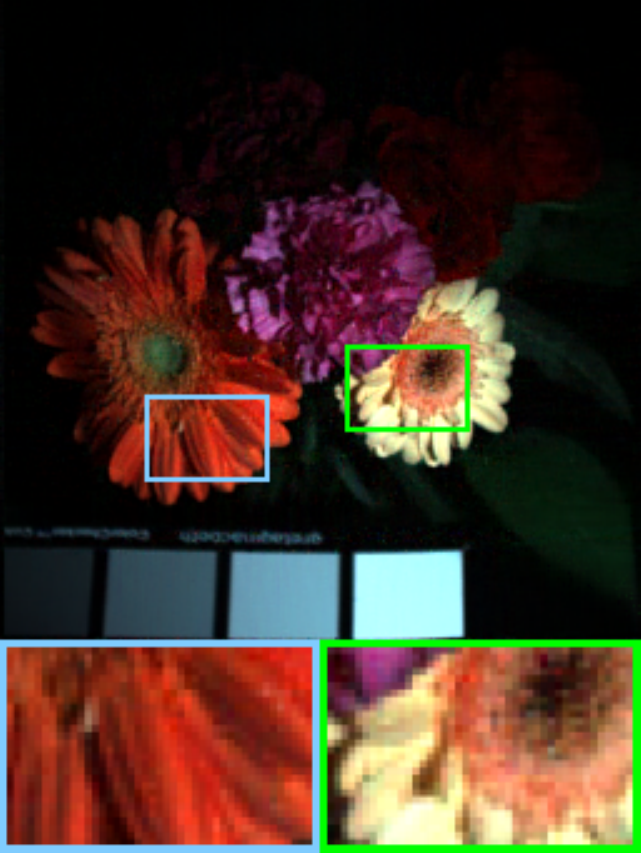}&
                \includegraphics[width=0.12\textwidth]{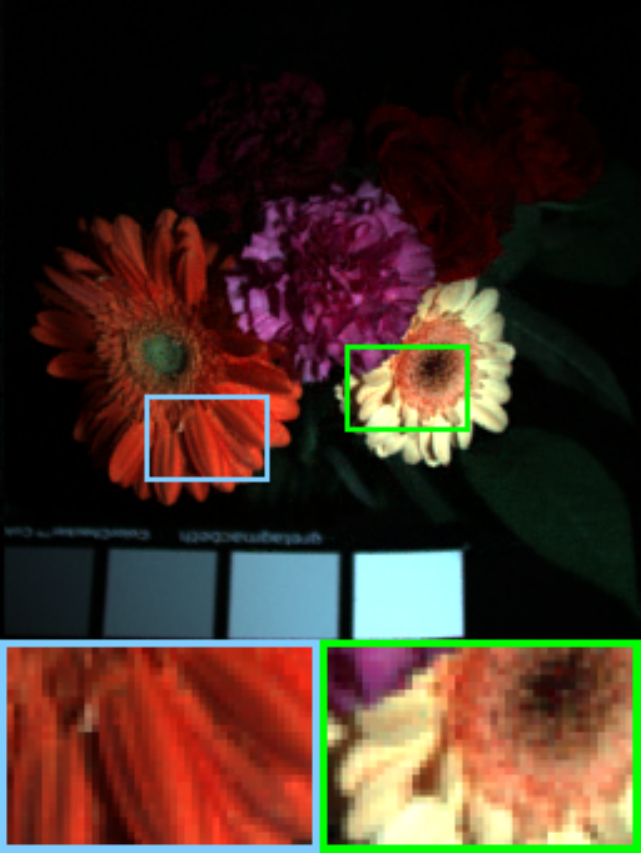}&
                 \includegraphics[width=0.12\textwidth]{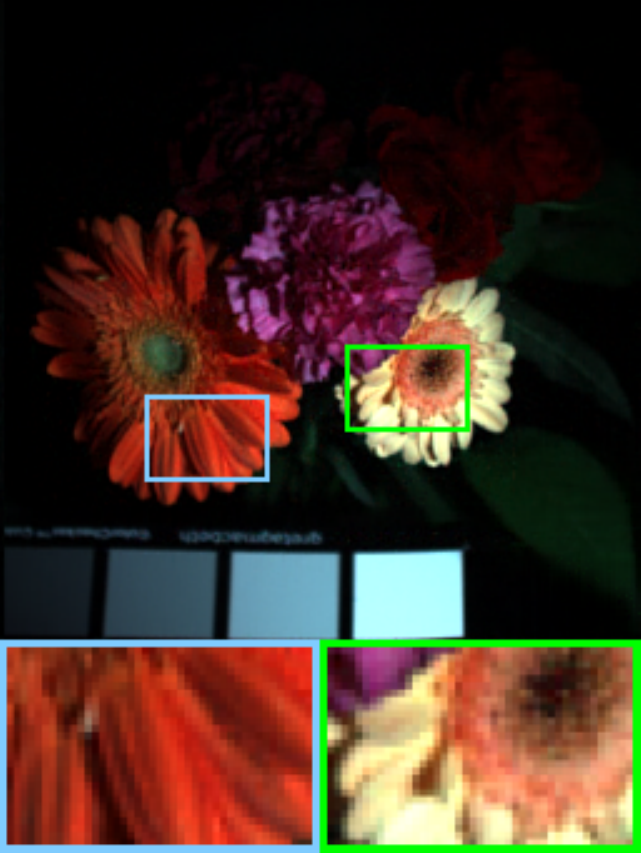}&
            \includegraphics[width=0.12\textwidth]{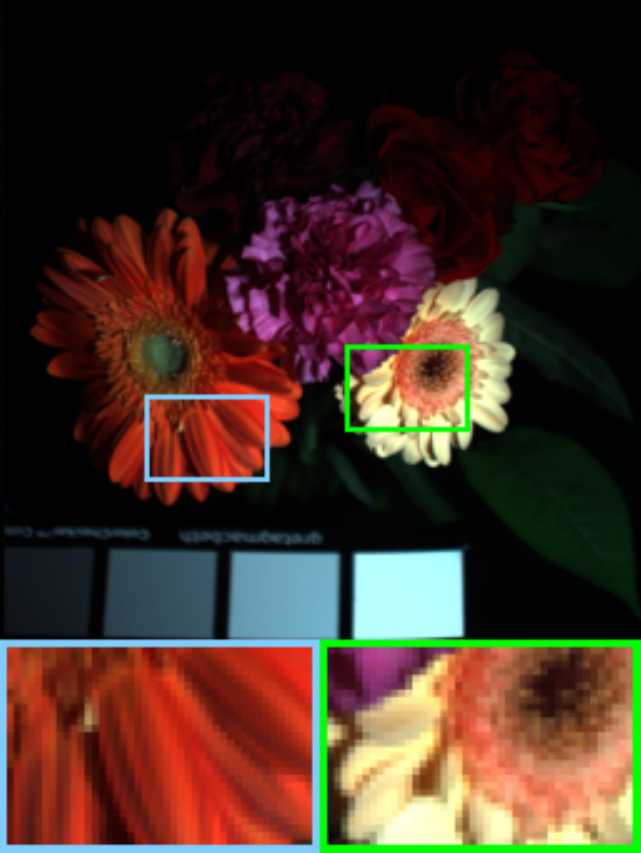}\\              
          PSNR 13.78  &
          PSNR 31.66  &
          PSNR 28.59  &
          PSNR 33.93  &
          PSNR 36.01  &
          PSNR 38.47  &
          PSNR 39.31  &
          PSNR Inf\\
           \includegraphics[width=0.12\textwidth]{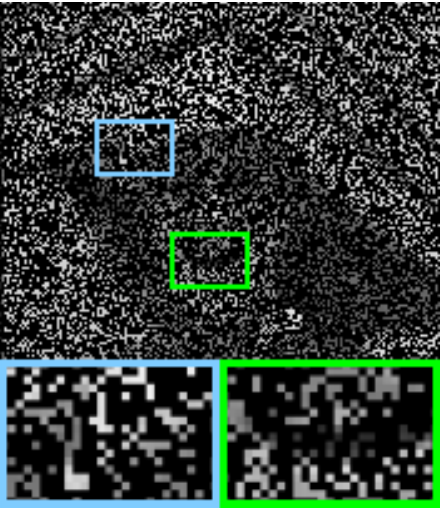}&
          \includegraphics[width=0.12\textwidth]{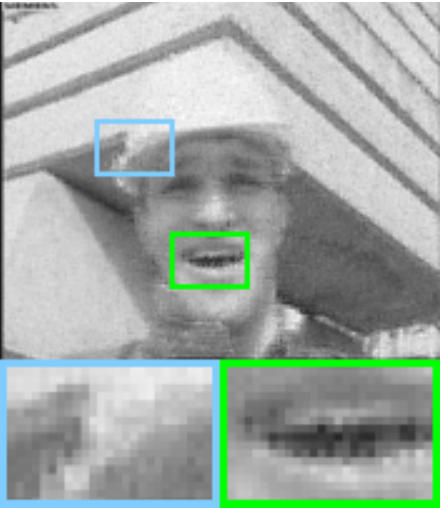}&
           \includegraphics[width=0.12\textwidth]{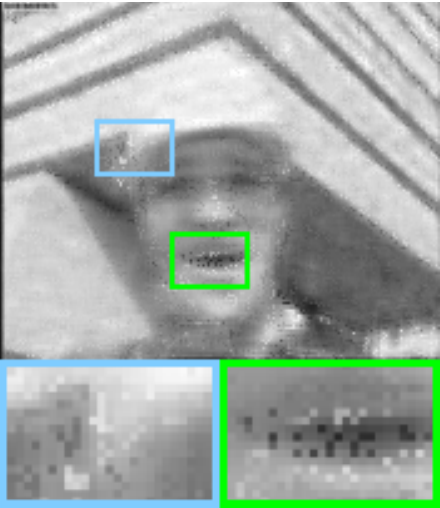}&
            \includegraphics[width=0.12\textwidth]{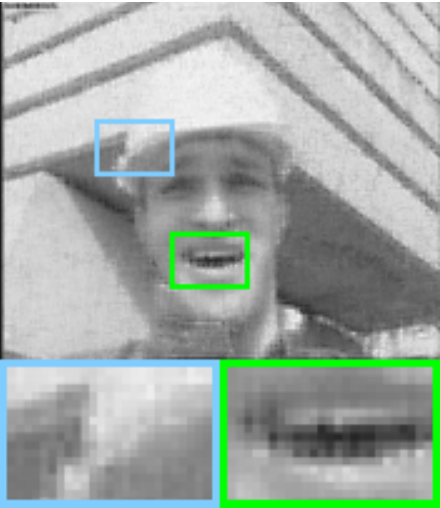}&
             \includegraphics[width=0.12\textwidth]{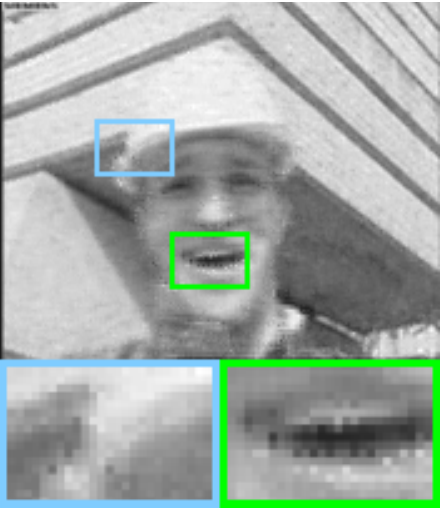}&
              \includegraphics[width=0.12\textwidth]{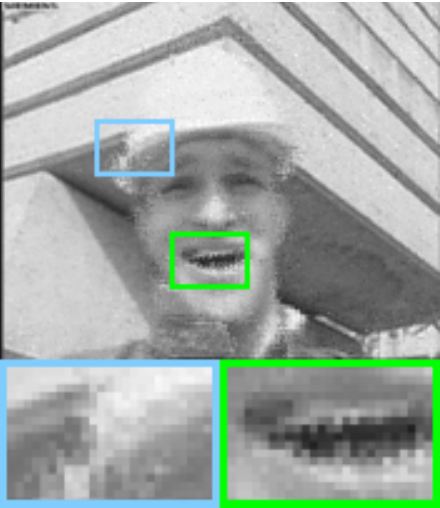}&
               \includegraphics[width=0.12\textwidth]{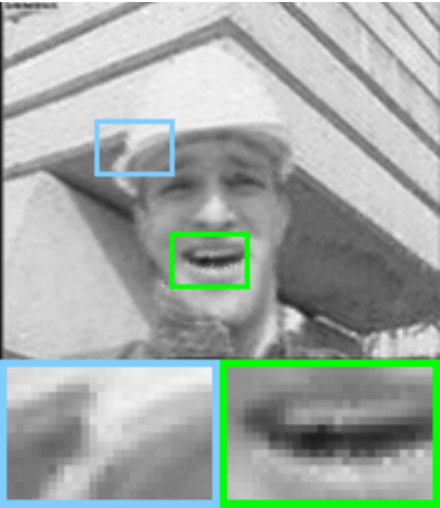}&
          \includegraphics[width=0.12\textwidth]{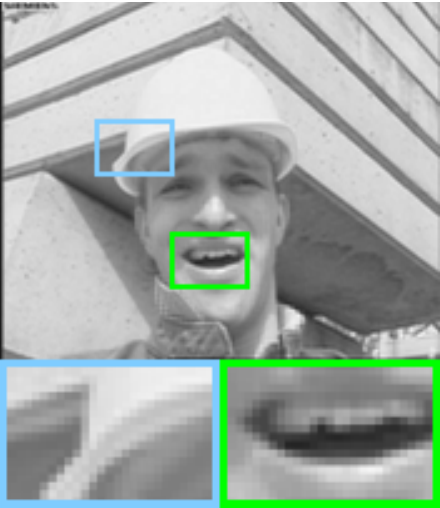}\\
      PSNR 4.70  &
      PSNR 28.93  &
      PSNR 26.30  &
      PSNR 28.01  &
      PSNR 29.70  &
      PSNR 29.65  &
      PSNR 31.74  &
      PSNR Inf\\
                 \includegraphics[width=0.12\textwidth]{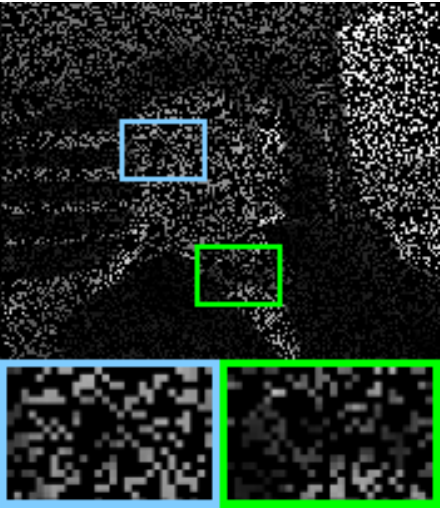}&
                \includegraphics[width=0.12\textwidth]{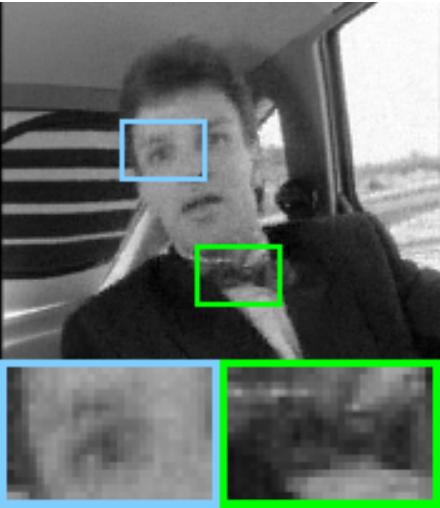}&
                 \includegraphics[width=0.12\textwidth]{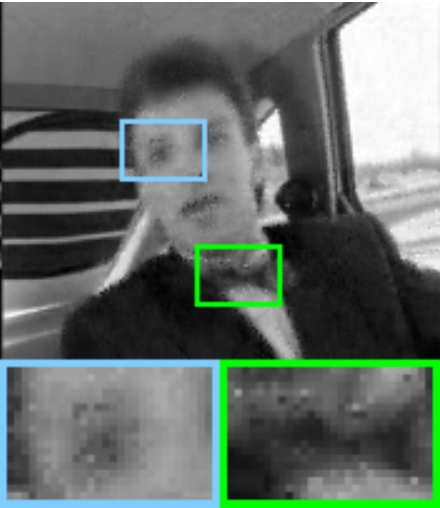}&
                  \includegraphics[width=0.12\textwidth]{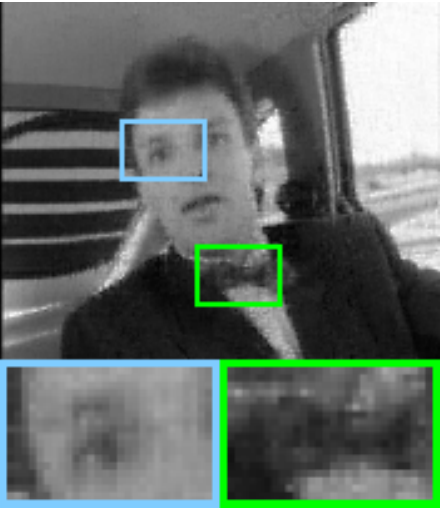}&
                   \includegraphics[width=0.12\textwidth]{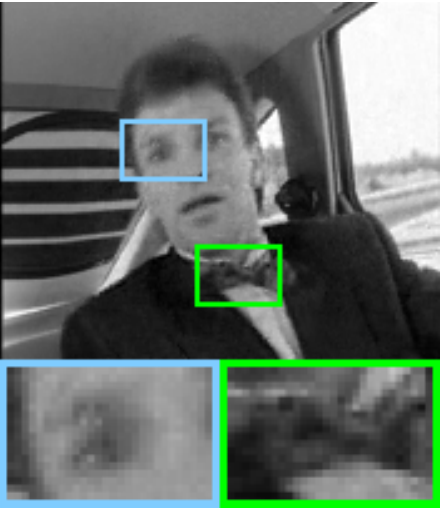}&
                    \includegraphics[width=0.12\textwidth]{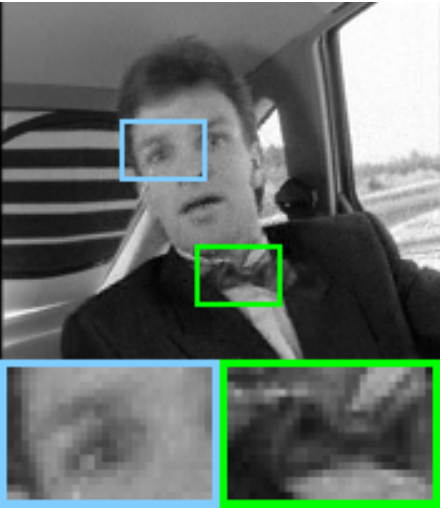}&
                     \includegraphics[width=0.12\textwidth]{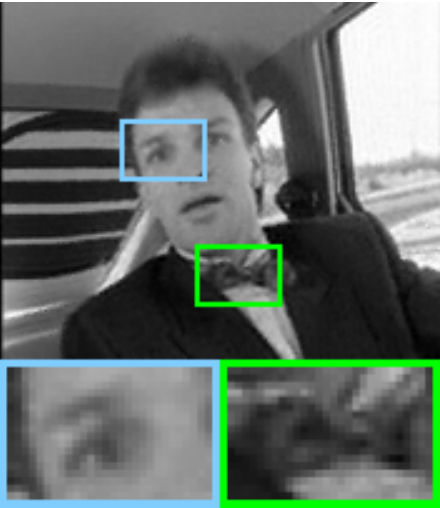}&
                \includegraphics[width=0.12\textwidth]{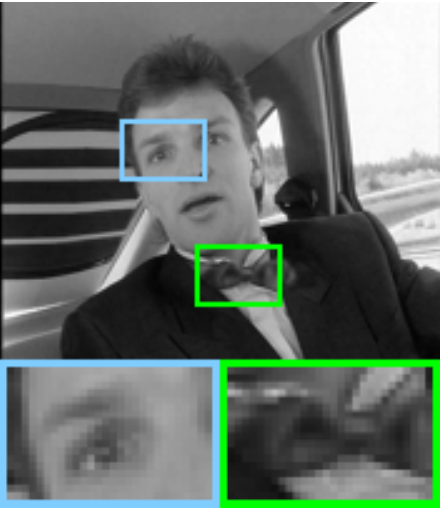}\\
      PSNR 7.89  &
      PSNR 30.58  &
      PSNR 28.44  &
      PSNR 28.37  &
      PSNR 31.72  &
      PSNR 31.96  &
      PSNR 32.36  &
      PSNR Inf\\
          Observed &DCTNN\cite{CVPR_19} &TRLRF\cite{TRLRF}&FTNN\cite{FTNN}&FCTN\cite{FCTN}&HLRTF\cite{HLRTF}&LRTFR & Original\\
          \end{tabular}
          \end{center}
          \vspace{-0.5cm}
          \caption{The results of multi-dimensional image inpainting by different methods on MSIs {\it Toys} and {\it Flowers} (SR = 0.1) and videos {\it Foreman} and {\it Carphone} (SR = 0.3).\label{fig_completion_2}}\vspace{-0.3cm}
          \end{figure*}
\begin{figure*}[t]
    \scriptsize
    \setlength{\tabcolsep}{0.9pt}
    \begin{center}
    \begin{tabular}{cccccccc}
        \includegraphics[width=0.12\textwidth]{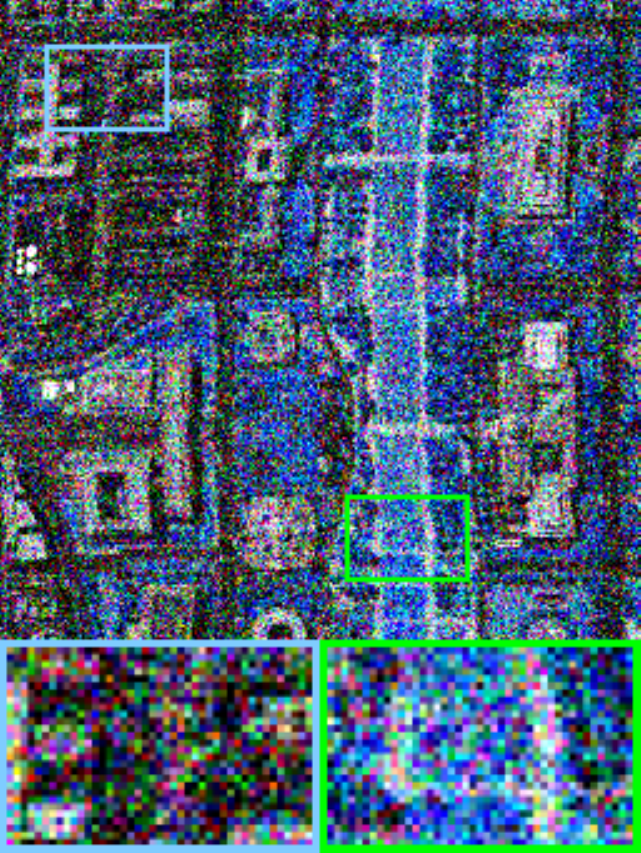}&
         \includegraphics[width=0.12\textwidth]{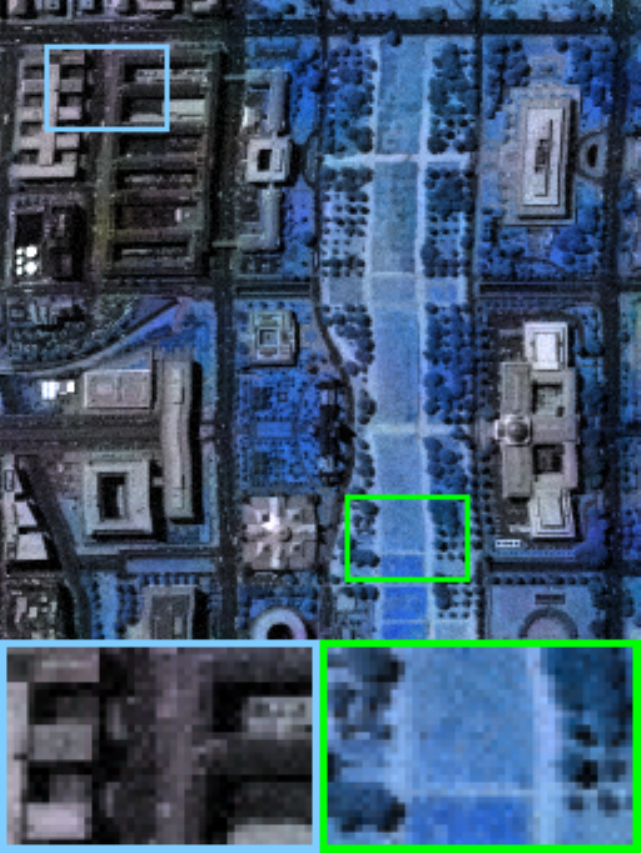}&
        \includegraphics[width=0.12\textwidth]{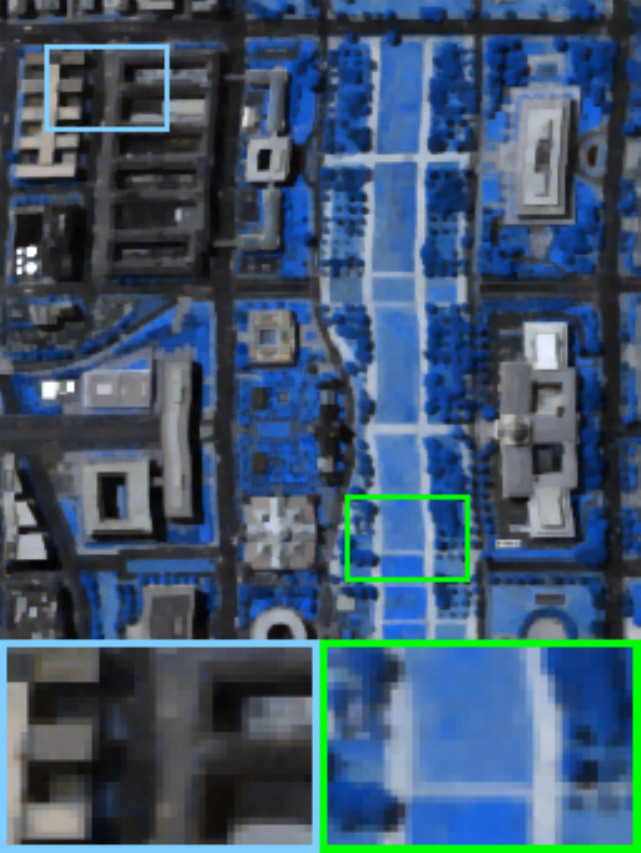}&
         \includegraphics[width=0.12\textwidth]{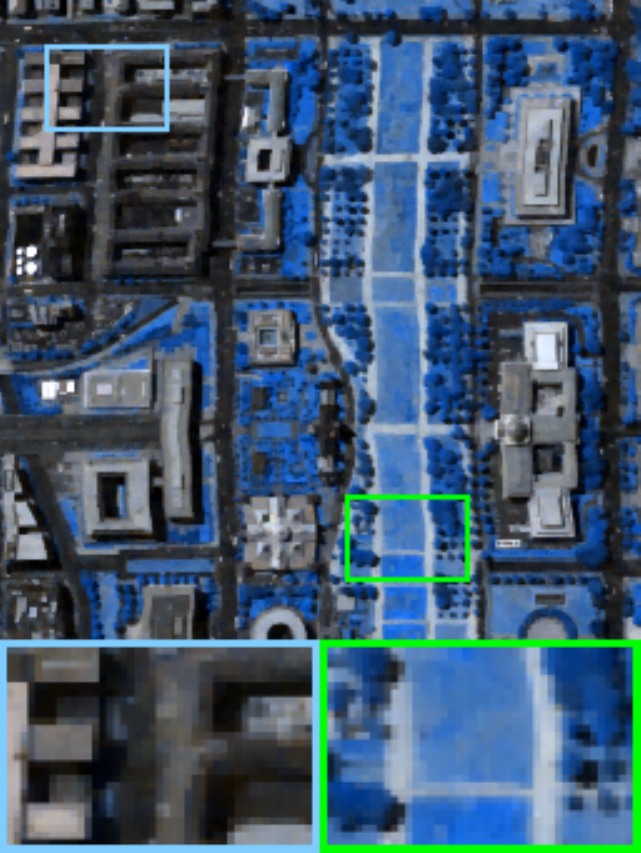}&
          \includegraphics[width=0.12\textwidth]{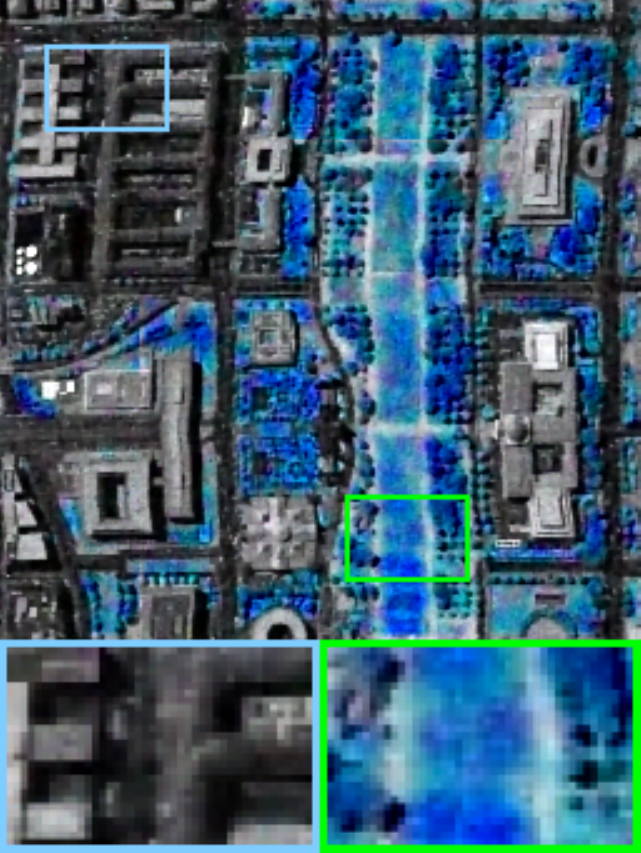}&
           \includegraphics[width=0.12\textwidth]{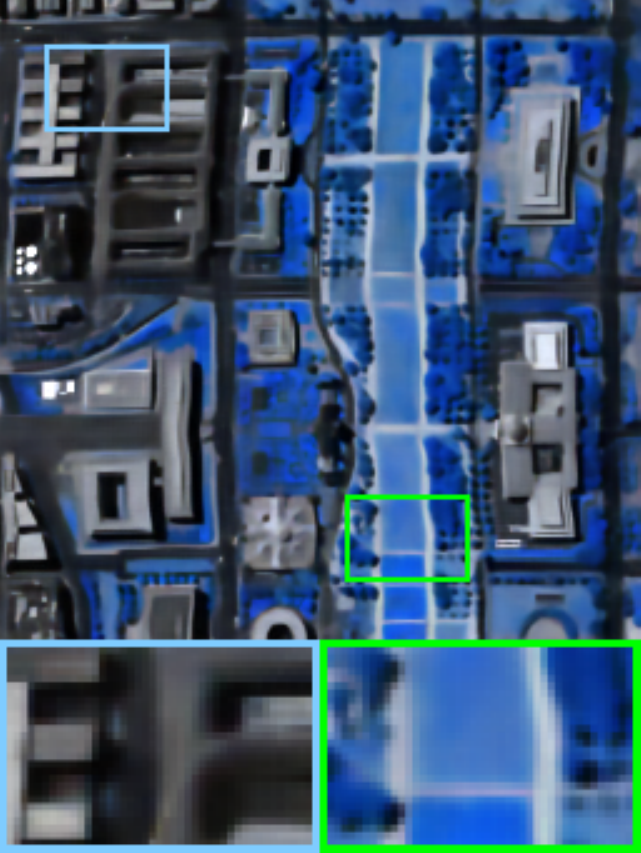}&
            \includegraphics[width=0.12\textwidth]{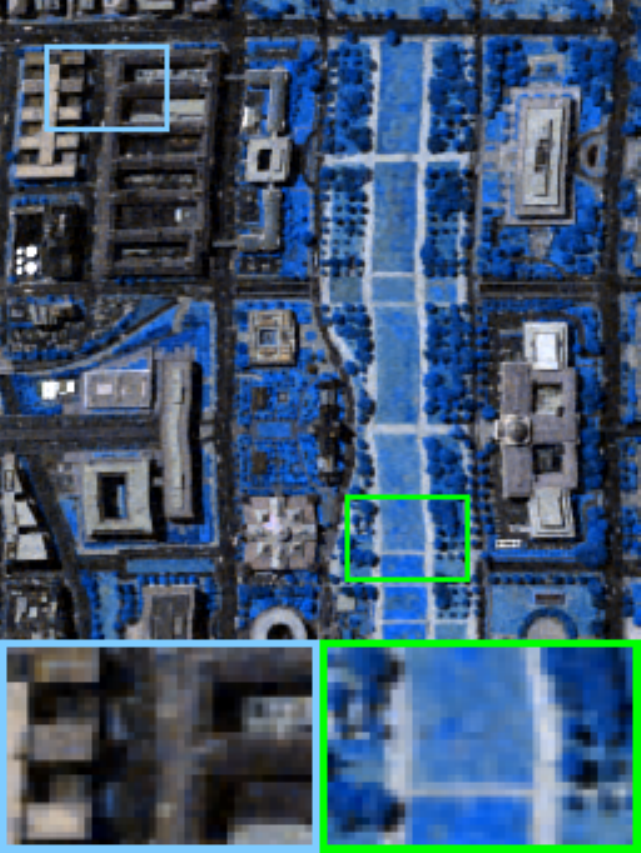}&
            \includegraphics[width=0.12\textwidth]{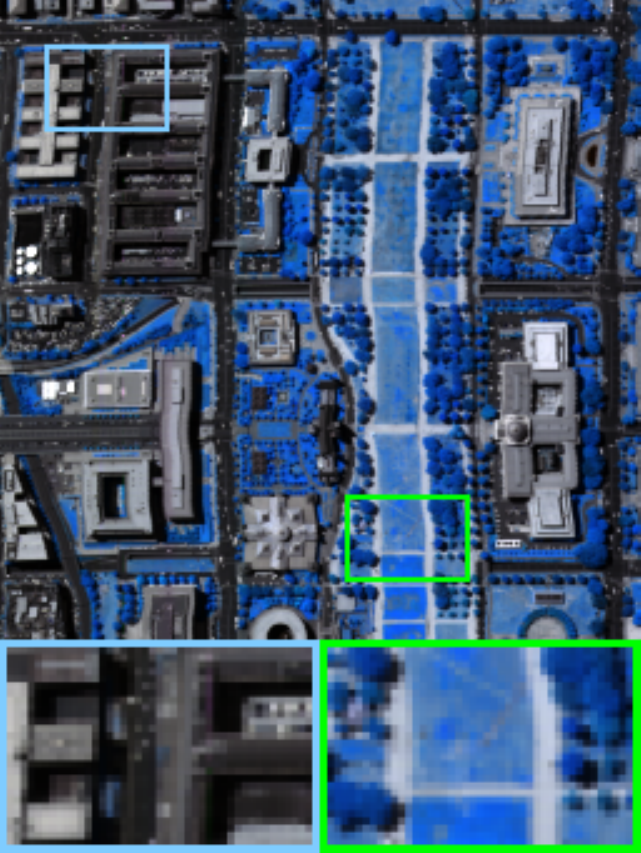}\\
PSNR 15.65  &
PSNR 28.36  &
PSNR 29.95  &
PSNR 32.43  &
PSNR 29.04  &
PSNR 28.90  &
PSNR 33.28  &
PSNR Inf\\
        \includegraphics[width=0.12\textwidth]{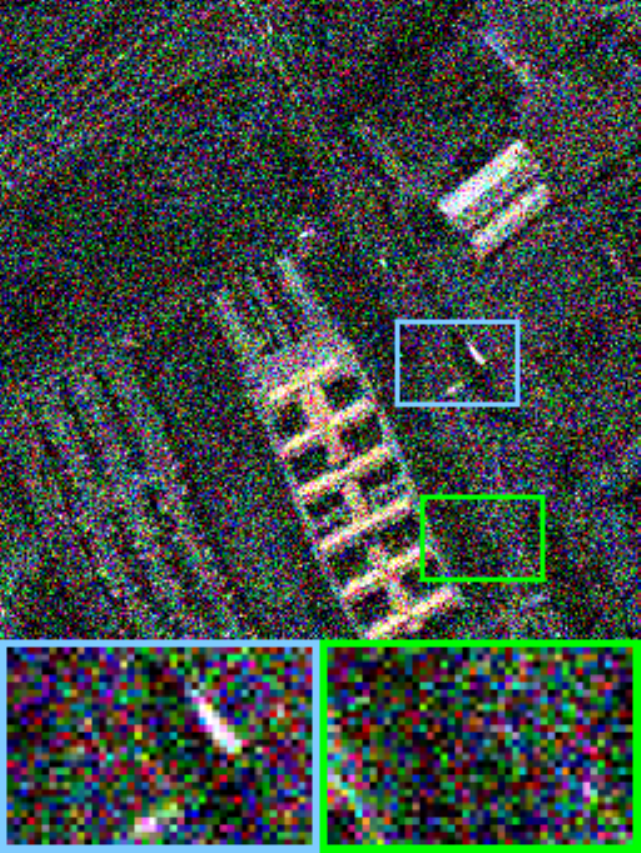}&
         \includegraphics[width=0.12\textwidth]{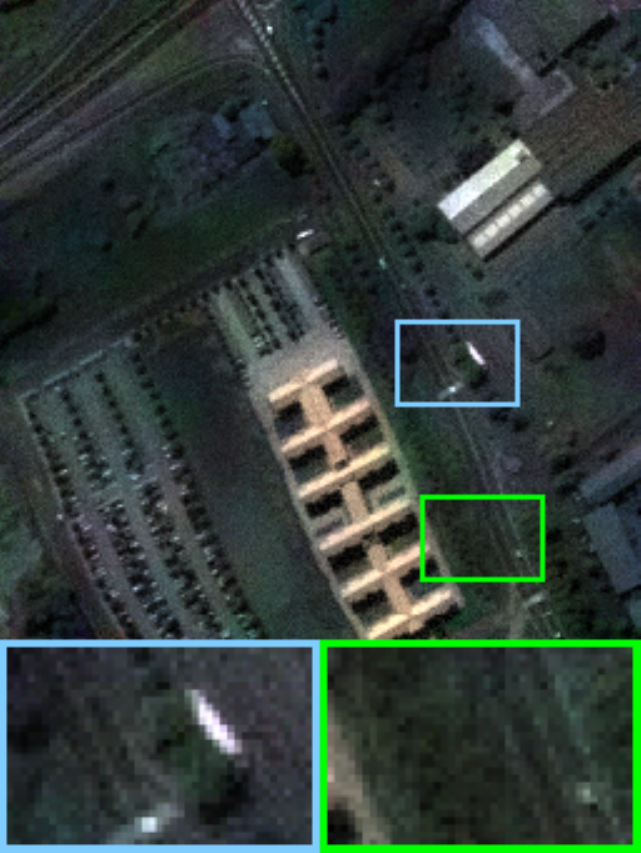}&
        \includegraphics[width=0.12\textwidth]{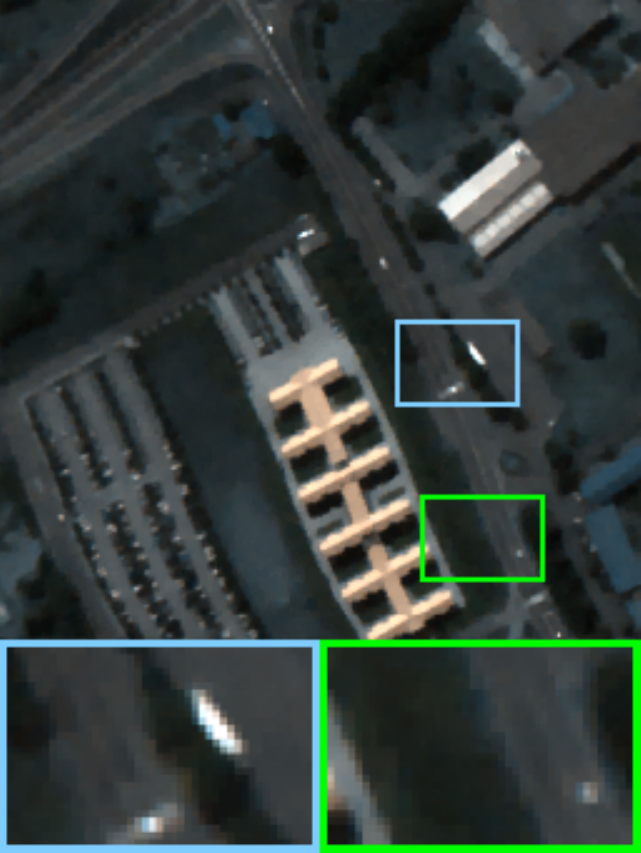}&
         \includegraphics[width=0.12\textwidth]{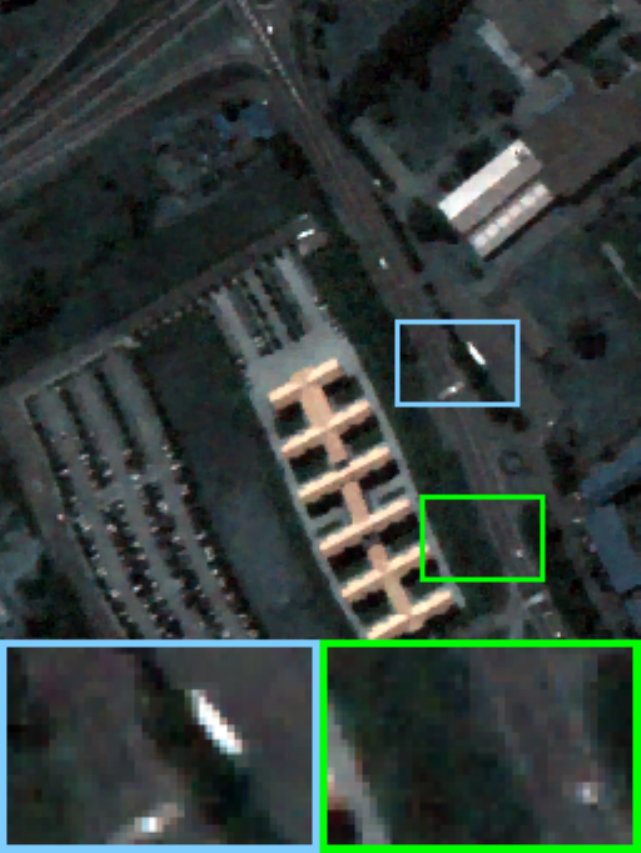}&
          \includegraphics[width=0.12\textwidth]{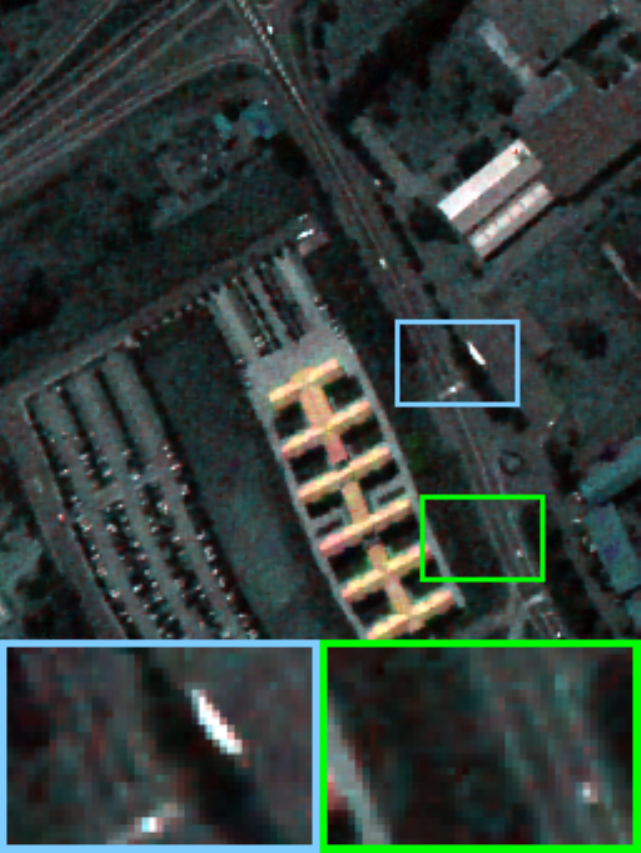}&
           \includegraphics[width=0.12\textwidth]{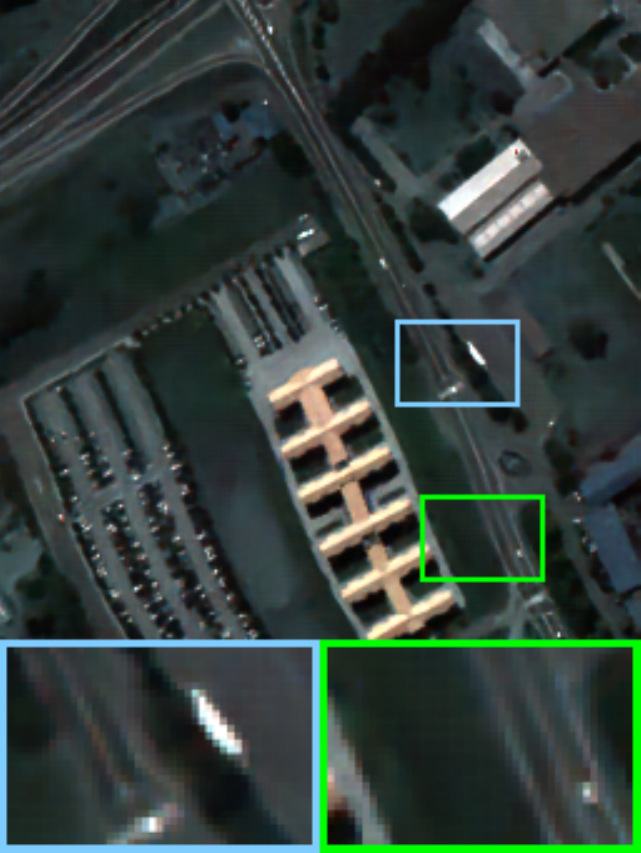}&
            \includegraphics[width=0.12\textwidth]{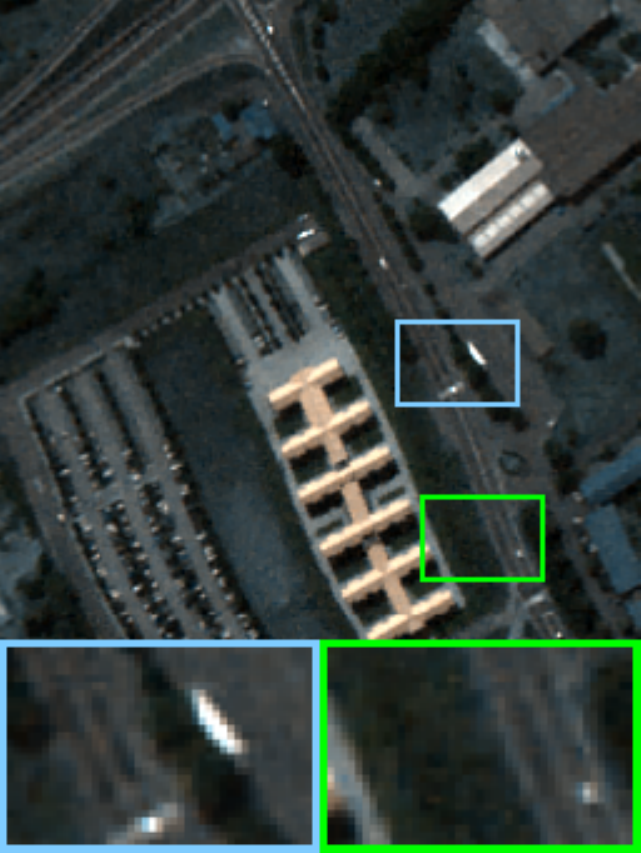}&
            \includegraphics[width=0.12\textwidth]{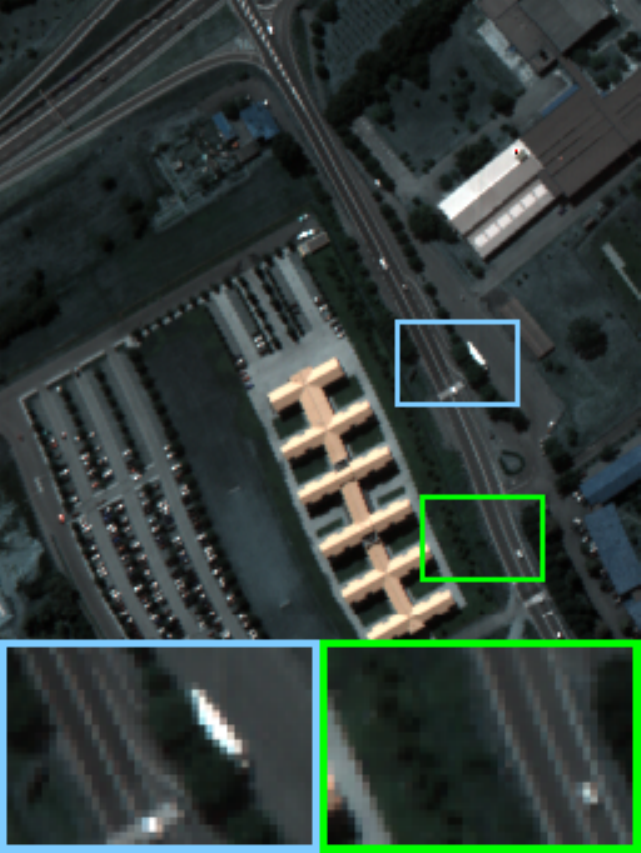}\\
PSNR 15.47  &
PSNR 27.63  &
PSNR 31.67  &
PSNR 31.43  &
PSNR 30.30  &
PSNR 33.25  &
PSNR 32.39  &
PSNR Inf\\
      Observed &LRMR\cite{LRMR}&LRTDTV\cite{LRTDTV}&E3DTV\cite{E3DTV}&HSID-CNN\cite{HSIDCNN}&SDeCNN\cite{SDeCNN}&LRTFR & Original\\
    \end{tabular}
    \end{center}
    \vspace{-0.3cm}
    \caption{The results of multispectral image denoising by different methods on HSI {\it WDC mall} and {\it University} (Case 1).\label{fig_denoising}}\vspace{-0.5cm}
    \end{figure*}
\begin{table*}[t]
    \caption{The average quantitative results by different methods for multispectral image denoising. The {\bf best} and \underline{second-best} values are highlighted. (PSNR $\uparrow$, SSIM $\uparrow$, and NRMSE $\downarrow$)\label{tab_denoising}}\vspace{-0.4cm}
      \begin{center}
      \scriptsize
      \setlength{\tabcolsep}{2.5pt}
      \begin{spacing}{1.2}
      \begin{tabular}{clccccccccccccccc}
      \toprule
      \multicolumn{2}{c}{Case}&\multicolumn{3}{c}{Case 1}&\multicolumn{3}{c}{Case 2}&\multicolumn{3}{c}{Case 3}&\multicolumn{3}{c}{Case 4}&\multicolumn{3}{c}{Case 5}\\
      \cmidrule{1-17}
      Data&Method&PSNR &SSIM &NRMSE \;\; &PSNR &SSIM &NRMSE \;\; &PSNR &SSIM &NRMSE \;\; &PSNR &SSIM&NRMSE \;\;&PSNR &SSIM&NRMSE \\
  \midrule
  \multirow{7}*{\tabincell{c}{
  HSIs\\ {\it WDC mall}\\{(256$\times$256$\times$191)}\\{\it University}\\{(256$\times$256$\times$103)}}}
&Observed&{15.56}&{0.267}&{0.667}\;\;&{13.88}&{0.239}&{0.670}\;\;&{13.65}&{0.231}&{0.672}\;\;&{13.85}&{0.235}&{0.681}\;\;&{13.60}&{0.226}&{0.685}\\
&LRMR&{27.99}&{0.872}&{0.192}\;\;&{30.45}&{0.925}&{0.141}\;\;&{29.81}&{0.917}&{0.149}\;\;&{29.50}&{0.917}&{0.163}\;\;&{29.03}&{0.909}&{0.170} \\
&LRTDTV&{30.81}&{0.919}&{0.138}\;\;&{33.15}&{0.951}&{0.103}\;\;&{32.96}&{0.950}&{0.105}\;\;&{32.64}&{0.947}&{0.114}\;\;&{32.58}&{0.946}&{0.113} \\
&E3DTV&\underline{31.93}&\underline{0.937}&\underline{0.122}\;\;&\underline{34.11}&\underline{0.962}&\underline{0.097}\;\;&\underline{33.49}&\underline{0.957}&\underline{0.103}\;\;&\underline{33.47}&\underline{0.959}&\underline{0.107}\;\;&\underline{33.35}&\underline{0.957}&\underline{0.108} \\
&HSID-CNN&{29.67}&{0.905}&{0.160}\;\;&{25.42}&{0.827}&{0.224}\;\;&{24.15}&{0.800}&{0.255}\;\;&{25.21}&{0.817}&{0.237}\;\;&{23.98}&{0.787}&{0.270} \\
&SDeCNN&{31.07}&{0.916}&{0.134}\;\;&{26.12}&{0.848}&{0.206}\;\;&{24.97}&{0.847}&{0.230}\;\;&{26.33}&{0.843}&{0.209}\;\;&{25.00}&{0.838}&{0.239} \\
&LRTFR&\bf{32.83}&\bf{0.947}&\bf{0.107}\;\;&\bf{34.51}&\bf{0.964}&\bf{0.087}\;\;&\bf{34.17}&\bf{0.962}&\bf{0.091}\;\;&\bf{34.43}&\bf{0.963}&\bf{0.088}\;\;&\bf{34.09}&\bf{0.961}&\bf{0.096} \\
  \midrule
  \multirow{7}*{\tabincell{c}{
  MSIs\\ {\it Cups}\\{\it Fruits}\\{(256$\times$256$\times$31)}}}
&Observed&{15.61}&{0.145}&{0.658}\;\;&{13.86}&{0.133}&{0.667}\;\;&{13.62}&{0.129}&{0.671}\;\;&{13.78}&{0.133}&{0.680}\;\;&{13.54}&{0.128}&{0.683}\\

&LRMR&{28.58}&{0.791}&{0.213}\;\;&{31.38}&{0.877}&{0.144}\;\;&{30.72}&{0.862}&{0.153}\;\;&{30.36}&{0.872}&{0.164}\;\;&{29.77}&{0.856}&{0.172} \\

&LRTDTV&{33.54}&{0.932}&{0.125}\;\;&{34.96}&{0.934}&\underline{0.105}\;\;&\underline{34.51}&{0.925}&\underline{0.109}\;\;&{34.34}&{0.935}&\underline{0.116}\;\;&{33.82}&{0.923}&\underline{0.124}\\

&E3DTV&{32.85}&{0.929}&{0.143}\;\;&\underline{34.97}&\underline{0.952}&{0.124}\;\;&{34.47}&\underline{0.941}&{0.127}\;\;&\underline{34.69}&\bf{0.961}&{0.133}\;\;&\underline{34.38}&\bf{0.952}&{0.134}\\

&HSID-CNN&{30.41}&{0.877}&{0.157}\;\;&{25.61}&{0.676}&{0.258}\;\;&{24.39}&{0.646}&{0.297}\;\;&{24.90}&{0.662}&{0.279}\;\;&{23.89}&{0.632}&{0.313}\\

&SDeCNN&\underline{33.98}&\underline{0.936}&\underline{0.113}\;\;&{28.13}&{0.797}&{0.198}\;\;&{25.27}&{0.698}&{0.269}\;\;&{28.41}&{0.800}&{0.198}\;\;&{25.03}&{0.690}&{0.277} \\

&LRTFR&\bf{34.51}&\bf{0.955}&\bf{0.103}\;\;&\bf{35.92}&\bf{0.958}&\bf{0.086}\;\;&\bf{35.07}&\bf{0.945}&\bf{0.095}\;\;&\bf{35.83}&\underline{0.957}&\bf{0.087}\;\;&\bf{35.18}&\underline{0.945}&\bf{0.095}\\
\midrule
  \multirow{7}*{\tabincell{c}{
  MSIs\\ {\it Bin}\\{\it Board}\\{(256$\times$256$\times$32)}}}
&Observed&{15.43}&{0.183}&{0.610}\;\;&{13.97}&{0.166}&{0.617}\;\;&{13.73}&{0.159}&{0.622}\;\;&{13.87}&{0.163}&{0.631}\;\;&{13.70}&{0.158}&{0.633} \\

&LRMR&{27.84}&{0.785}&{0.164}\;\;&{31.44}&{0.894}&\underline{0.105}\;\;&{30.70}&{0.884}&{0.115}\;\;&{30.36}&{0.886}&{0.124}\;\;&{29.91}&{0.877}&{0.128}\\

&LRTDTV&\underline{30.32}&{0.929}&\underline{0.126}\;\;&\underline{31.89}&{0.944}&\underline{0.105}\;\;&\underline{31.68}&{0.941}&\underline{0.106}\;\;&\underline{31.03}&{0.933}&\underline{0.123}\;\;&\underline{31.06}&{0.931}&\underline{0.120} \\

&E3DTV&{30.11}&{0.928}&{0.136}\;\;&{31.11}&\underline{0.948}&{0.131}\;\;&{31.11}&\bf{0.953}&{0.125}\;\;&{30.76}&\underline{0.950}&{0.133}\;\;&{30.85}&\bf{0.952}&{0.128} \\

&HSID-CNN&{28.43}&{0.859}&{0.149}\;\;&{25.10}&{0.787}&{0.204}\;\;&{23.93}&{0.759}&{0.231}\;\;&{24.57}&{0.770}&{0.223}\;\;&{23.66}&{0.747}&{0.242} \\

&SDeCNN&{29.91}&\underline{0.931}&{0.129}\;\;&{26.68}&{0.889}&{0.173}\;\;&{24.89}&{0.828}&{0.205}\;\;&{26.92}&{0.889}&{0.173}\;\;&{25.02}&{0.821}&{0.208}\\

&LRTFR&\bf{32.55}&\bf{0.938}&\bf{0.097}\;\;&\bf{34.23}&\bf{0.955}&\bf{0.078}\;\;&\bf{33.54}&\underline{0.948}&\bf{0.085}\;\;&\bf{34.16}&\bf{0.954}&\bf{0.079}\;\;&\bf{33.59}&\underline{0.950}&\bf{0.084}\\
  \bottomrule
  \end{tabular}
  \end{spacing}
  \end{center}
  \vspace{-0.7cm}
  \end{table*}    
  \begin{figure*}[t]
      \scriptsize
      \setlength{\tabcolsep}{0.9pt}
      \begin{center}
      \begin{tabular}{cccccccc}
          \includegraphics[width=0.12\textwidth]{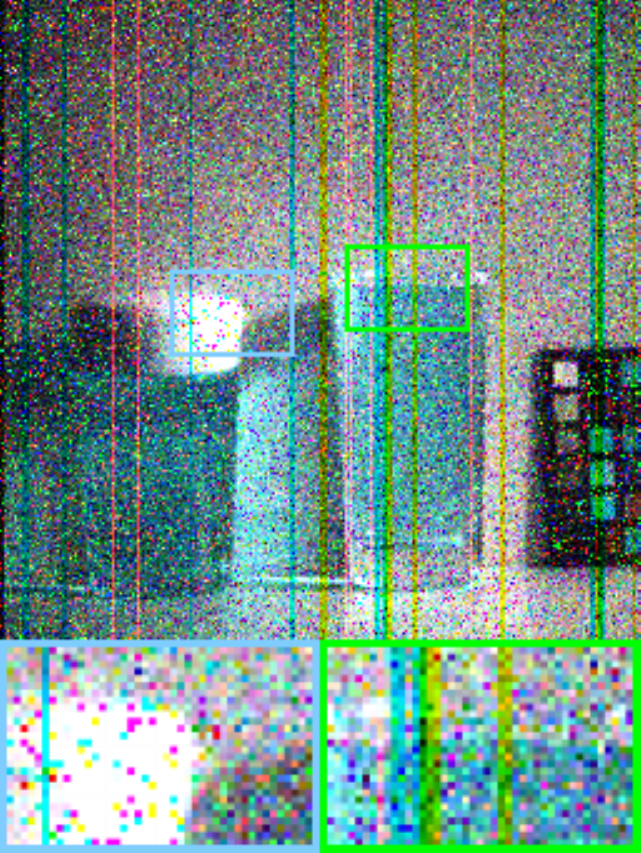}&
           \includegraphics[width=0.12\textwidth]{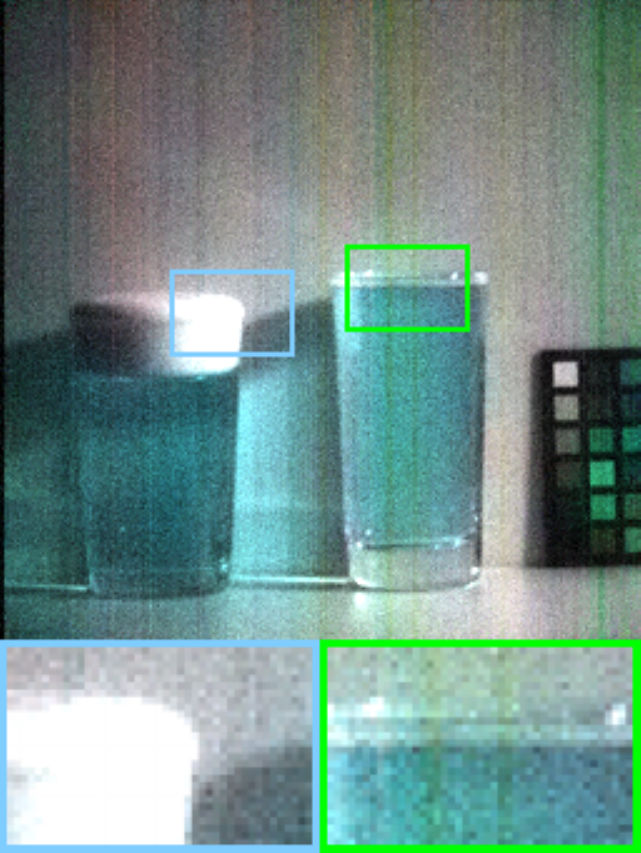}&
          \includegraphics[width=0.12\textwidth]{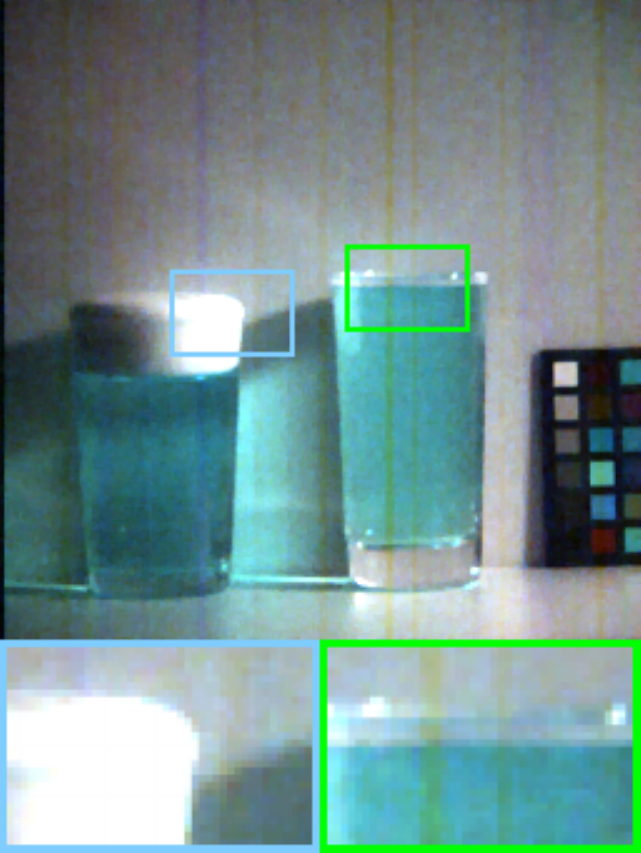}&
           \includegraphics[width=0.12\textwidth]{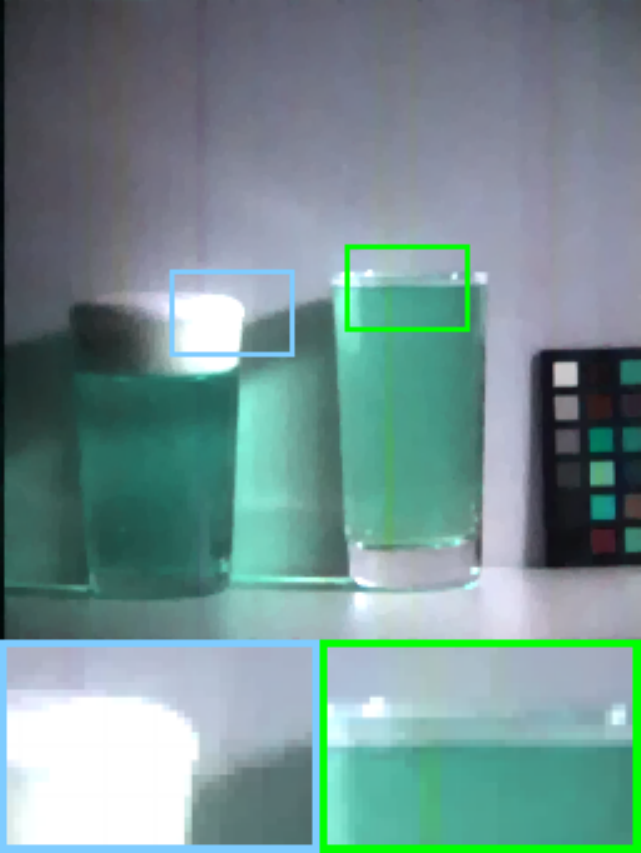}&
            \includegraphics[width=0.12\textwidth]{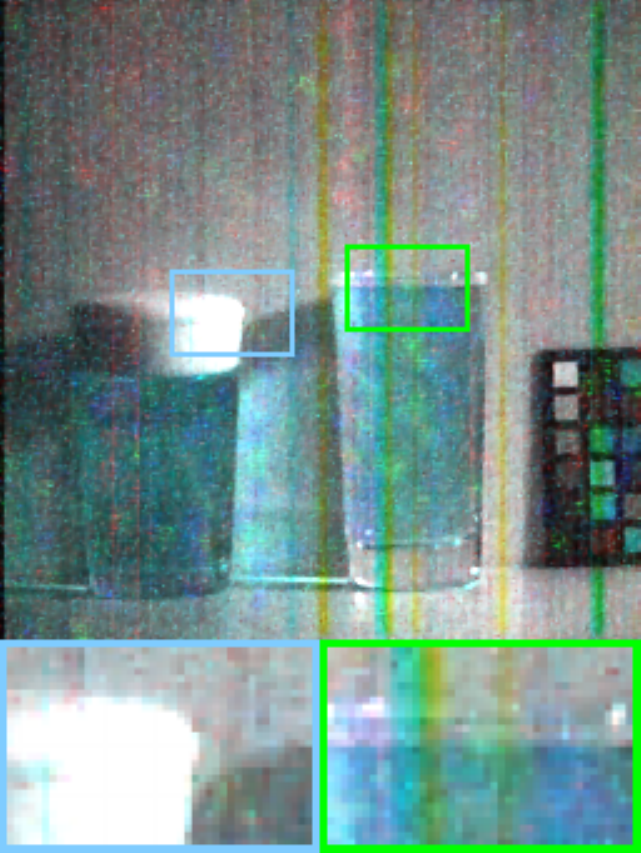}&
             \includegraphics[width=0.12\textwidth]{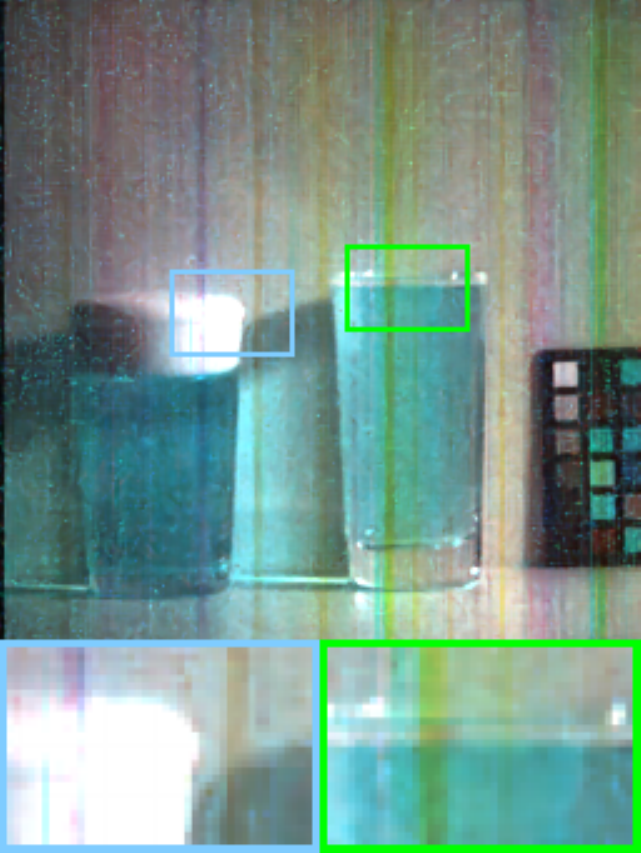}&
              \includegraphics[width=0.12\textwidth]{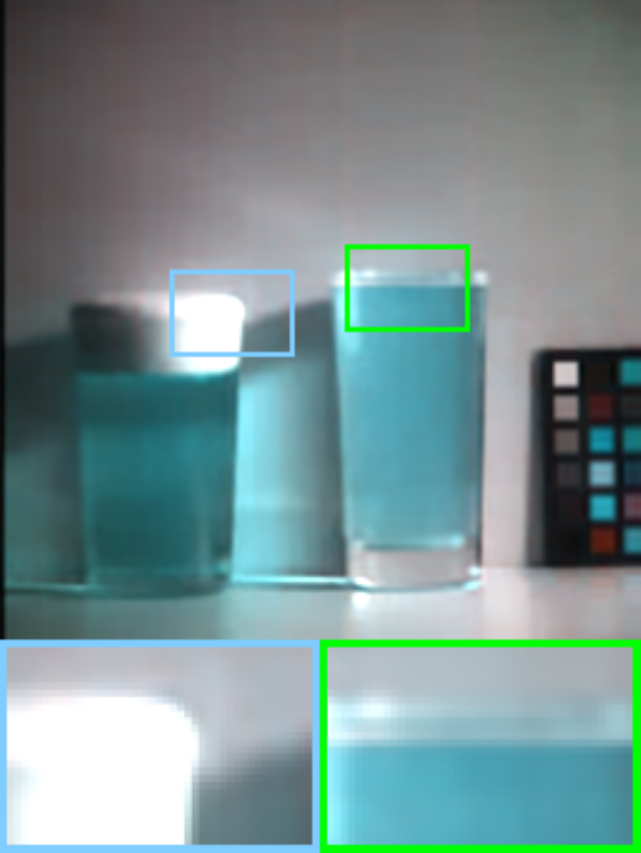}&
              \includegraphics[width=0.12\textwidth]{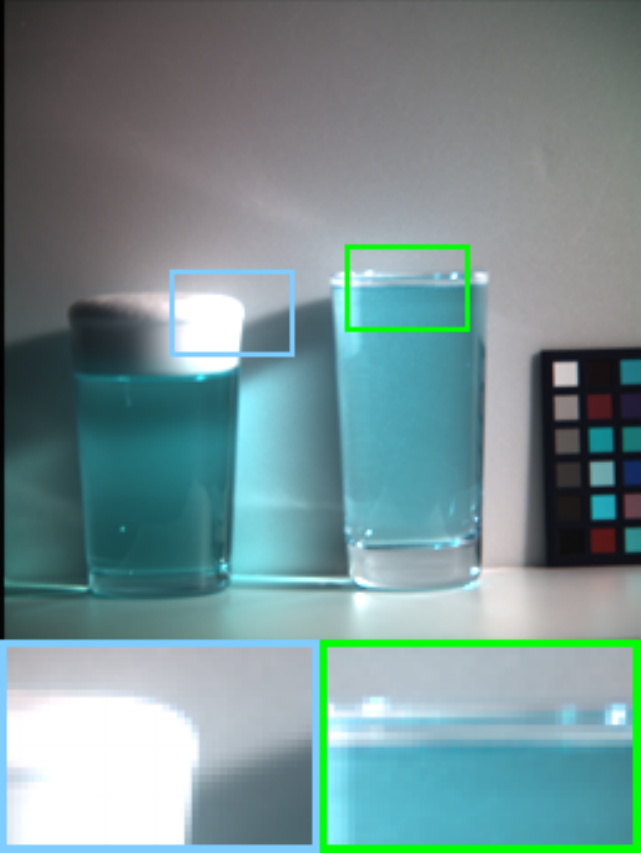}\\
PSNR 14.03  &
PSNR 29.03  &
PSNR 33.78  &
PSNR 34.76  &
PSNR 24.96  &
PSNR 26.77  &
PSNR 36.21  &
PSNR Inf\\
          \includegraphics[width=0.12\textwidth]{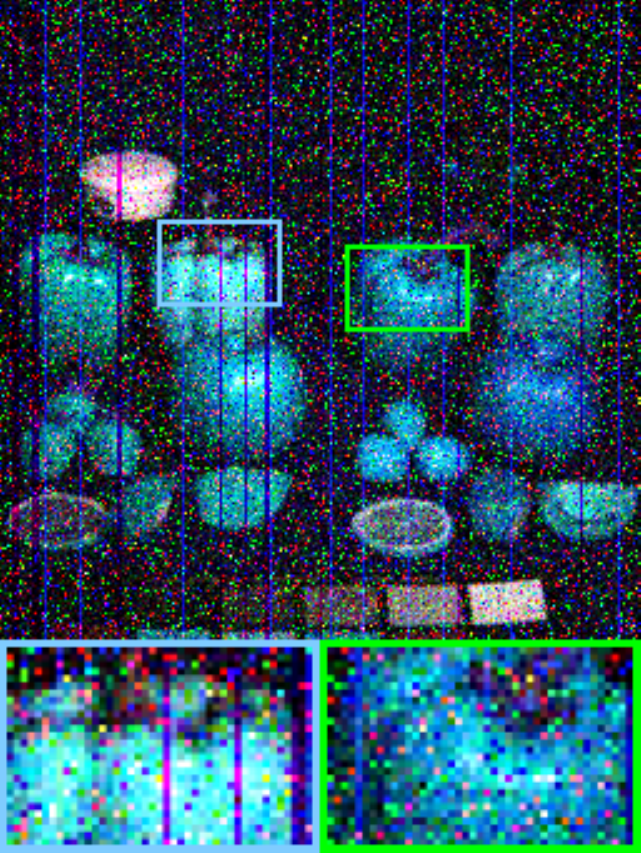}&
           \includegraphics[width=0.12\textwidth]{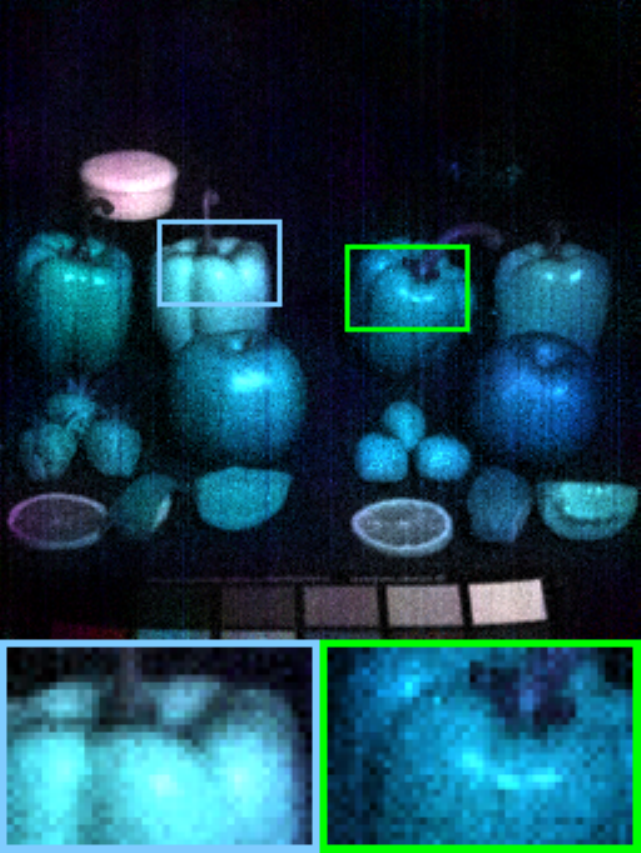}&
          \includegraphics[width=0.12\textwidth]{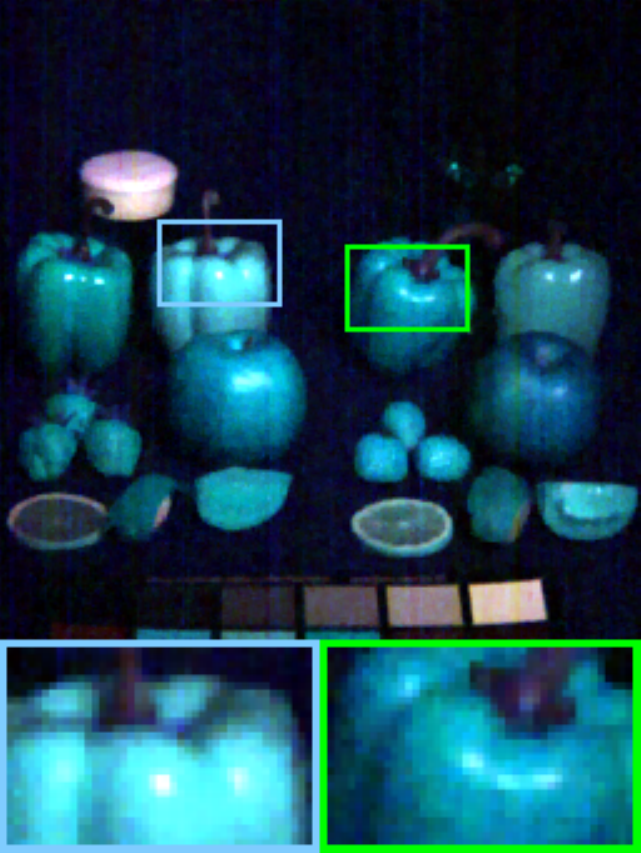}&
           \includegraphics[width=0.12\textwidth]{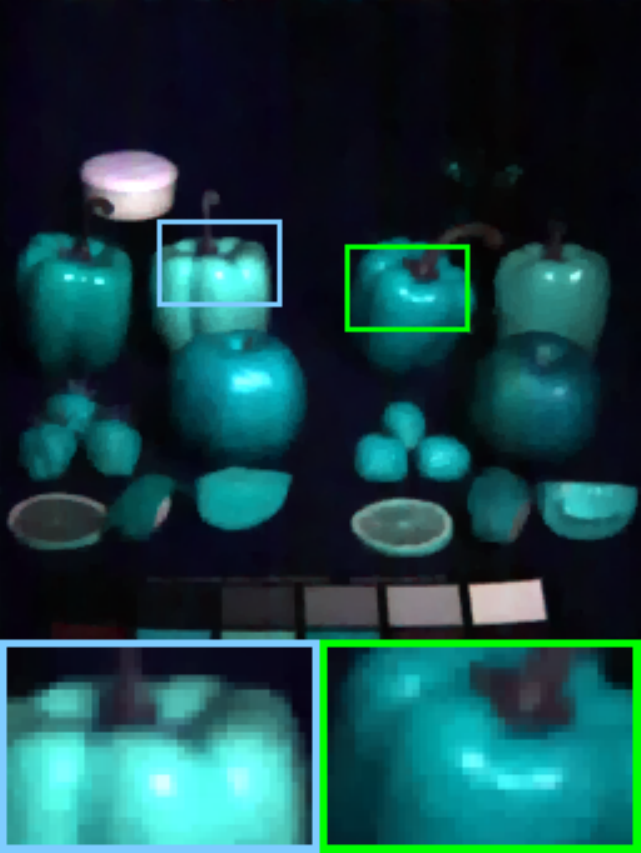}&
            \includegraphics[width=0.12\textwidth]{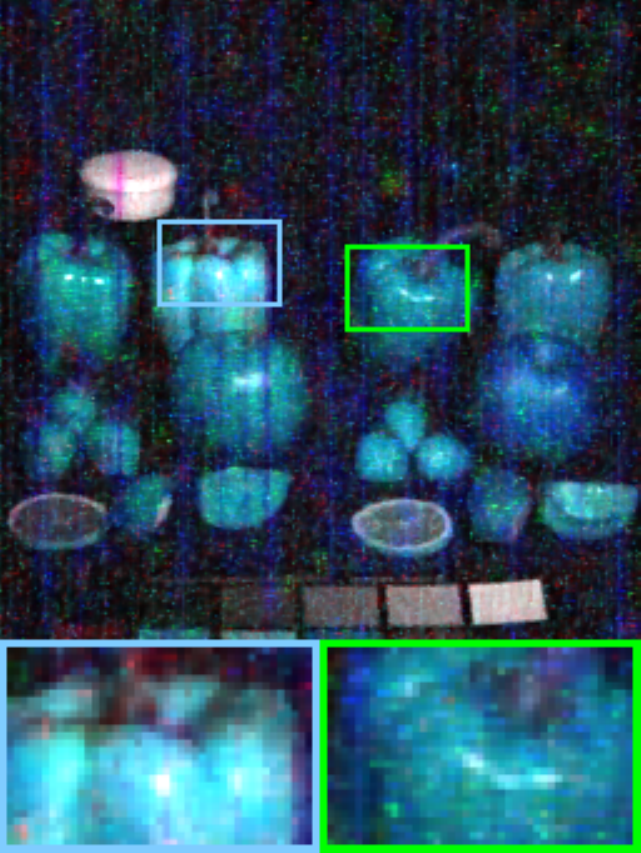}&
             \includegraphics[width=0.12\textwidth]{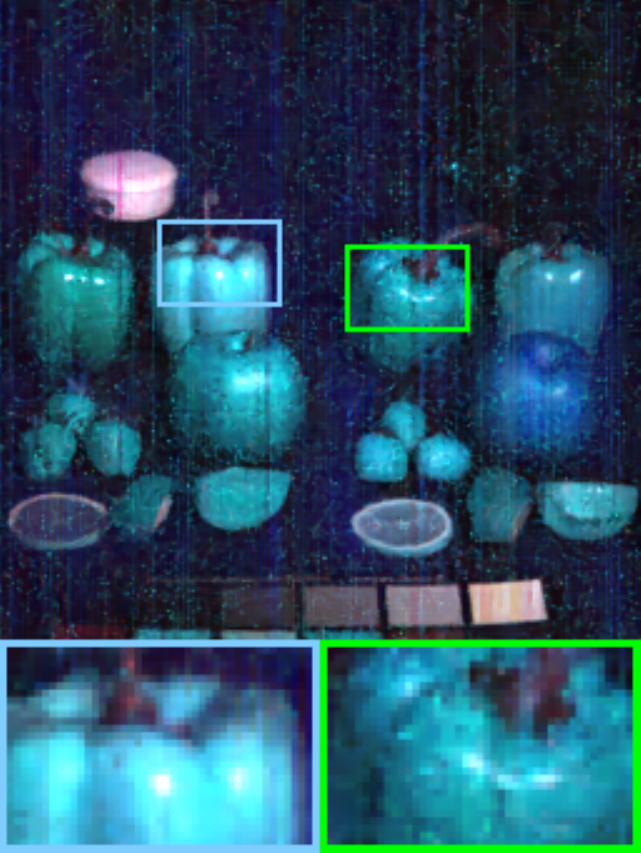}&
              \includegraphics[width=0.12\textwidth]{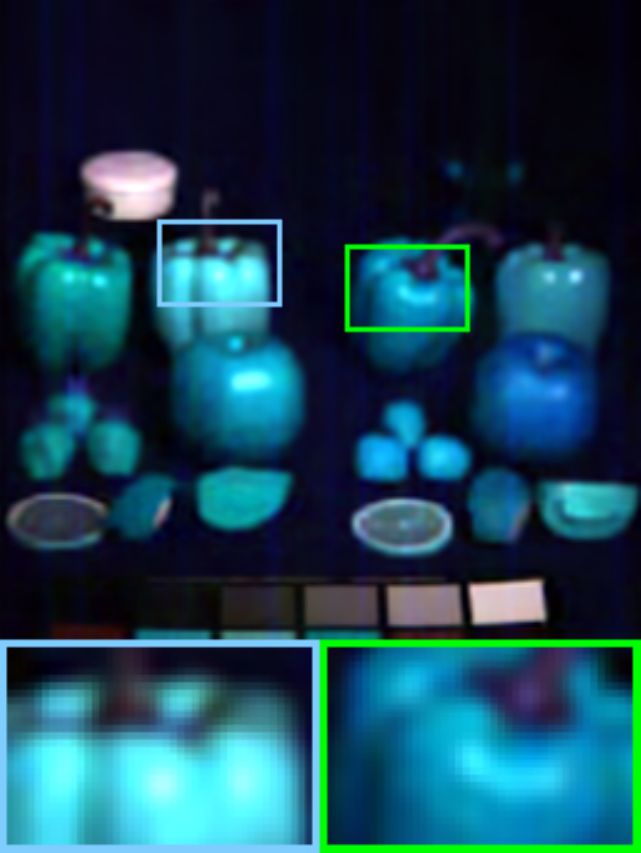}&
              \includegraphics[width=0.12\textwidth]{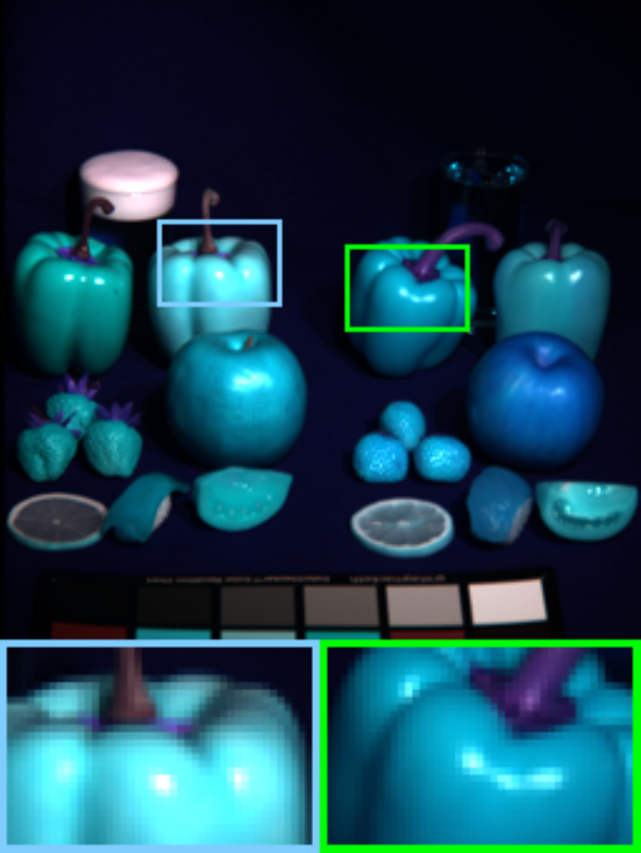}\\
PSNR 13.05  &
PSNR 30.52  &
PSNR 33.86  &
PSNR 34.00  &
PSNR 22.82  &
PSNR 23.28  &
PSNR 34.15  &
PSNR Inf\\
          \includegraphics[width=0.12\textwidth]{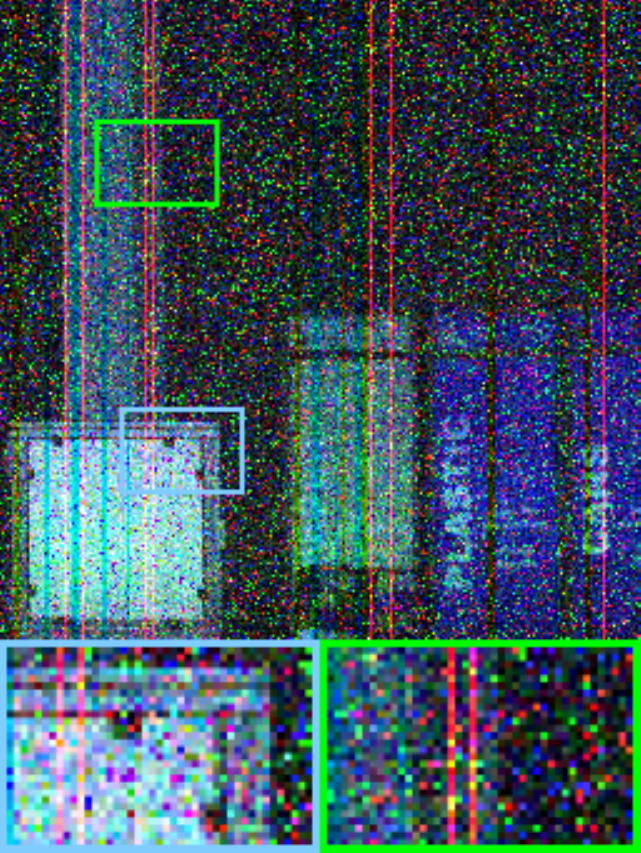}&
           \includegraphics[width=0.12\textwidth]{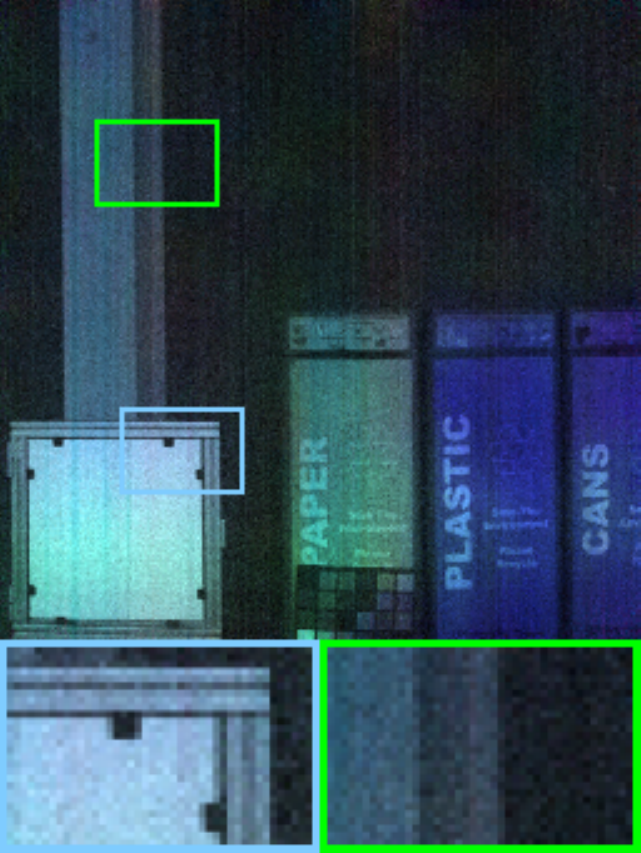}&
          \includegraphics[width=0.12\textwidth]{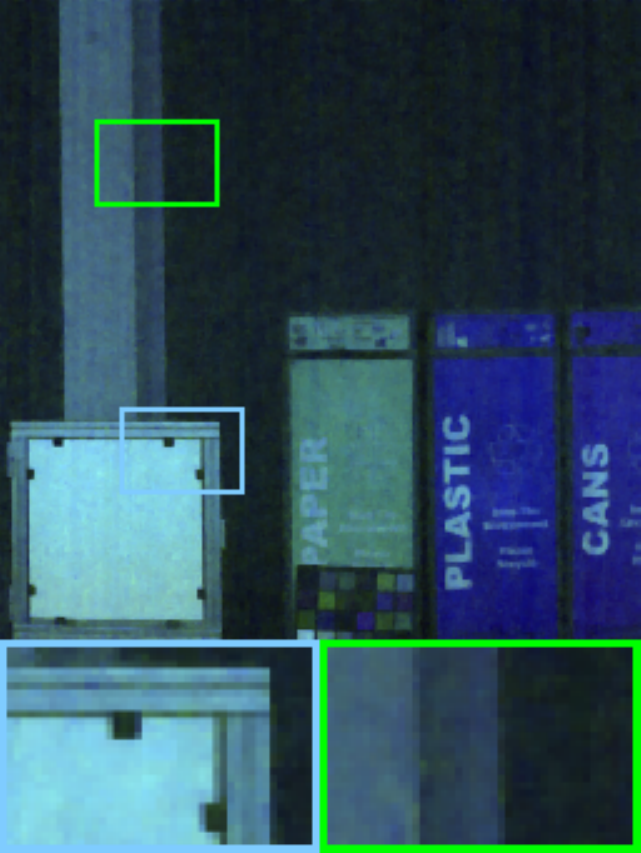}&
           \includegraphics[width=0.12\textwidth]{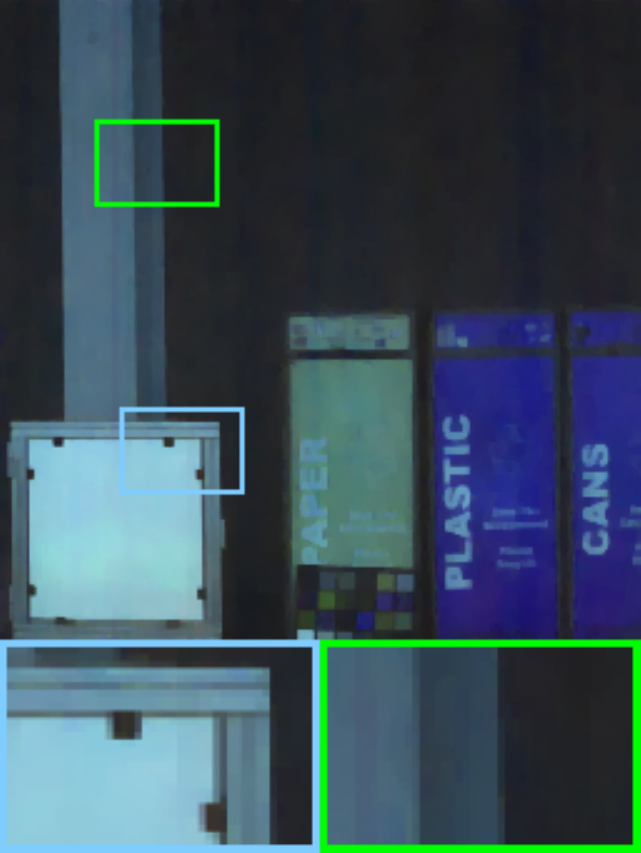}&
            \includegraphics[width=0.12\textwidth]{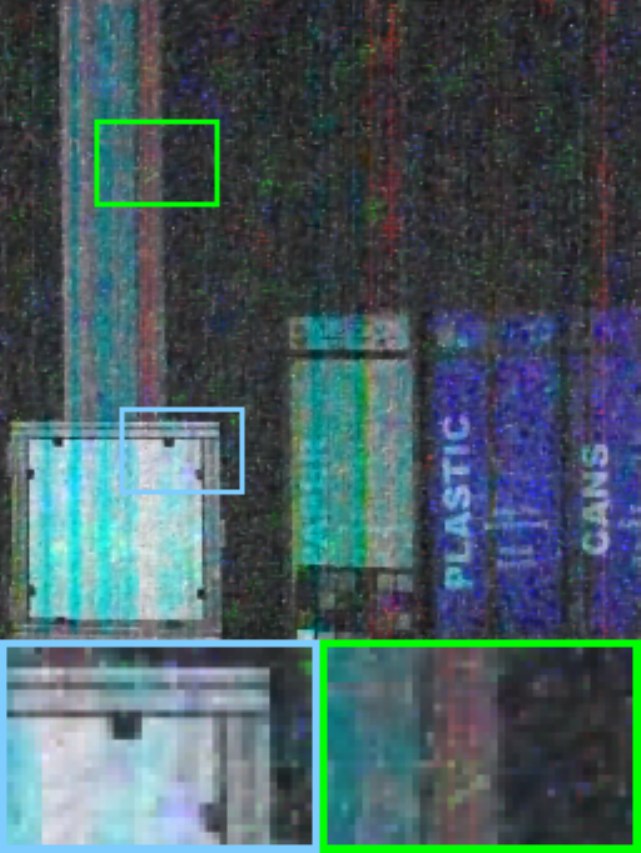}&
             \includegraphics[width=0.12\textwidth]{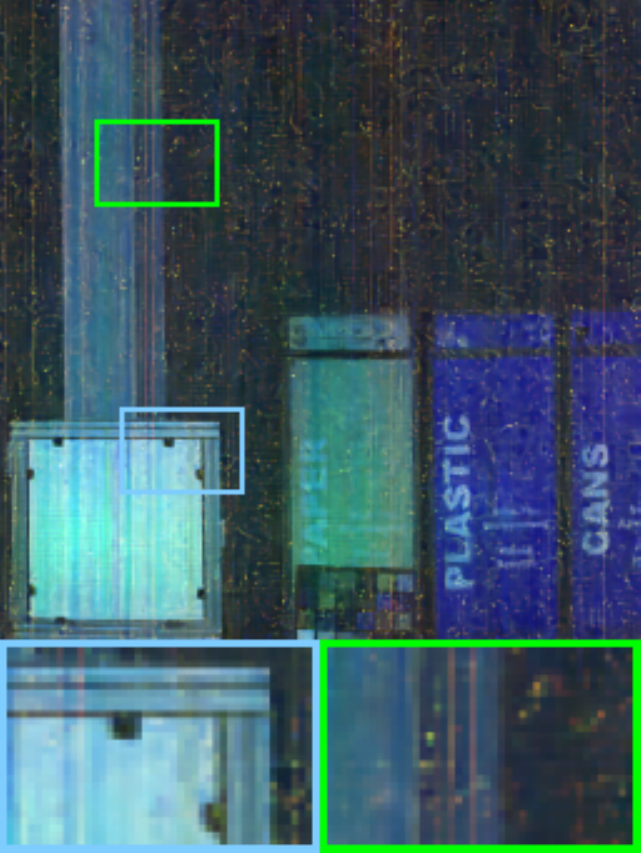}&
              \includegraphics[width=0.12\textwidth]{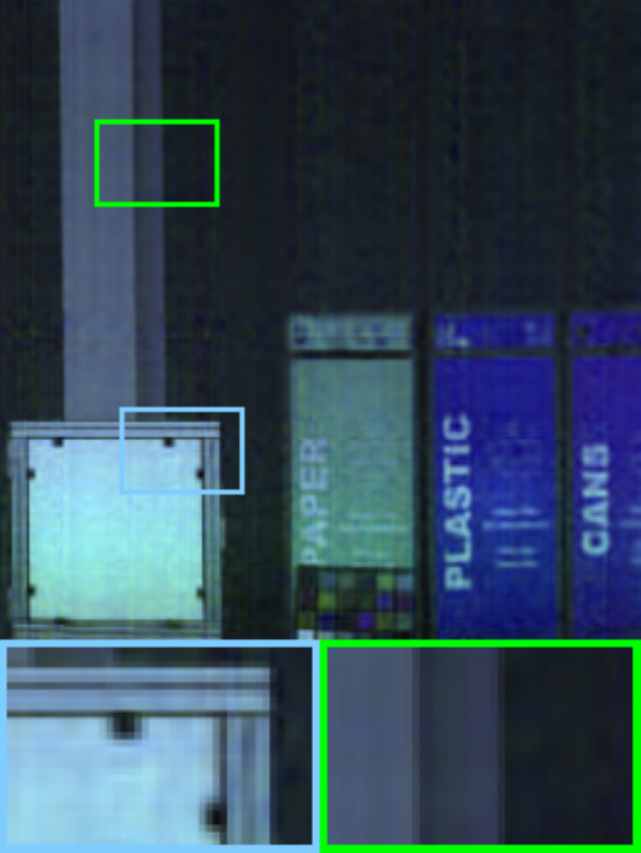}&
              \includegraphics[width=0.12\textwidth]{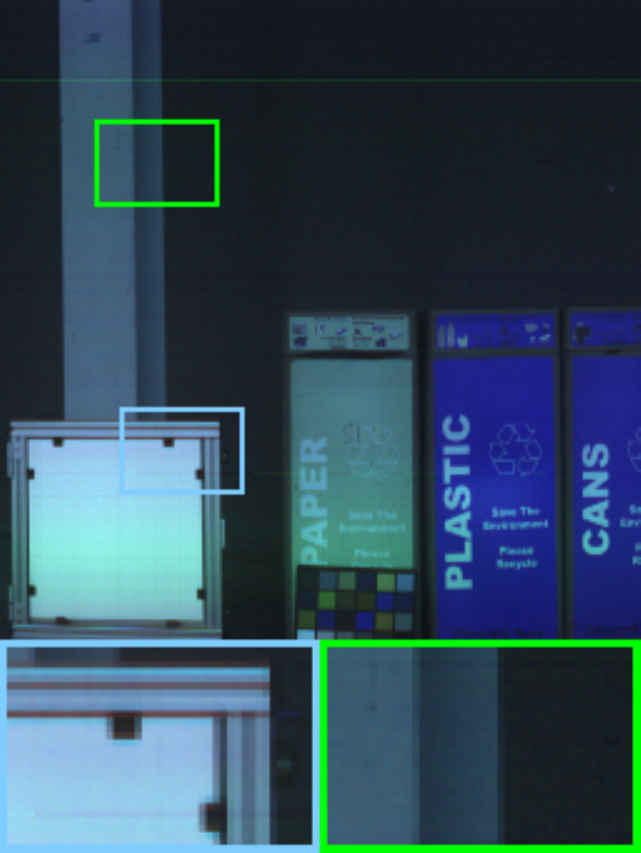}\\
PSNR 13.66  &
PSNR 30.05  &
PSNR 31.30  &
PSNR 31.00  &
PSNR 23.79  &
PSNR 25.12  &
PSNR 34.55  &
PSNR Inf\\
          \includegraphics[width=0.12\textwidth]{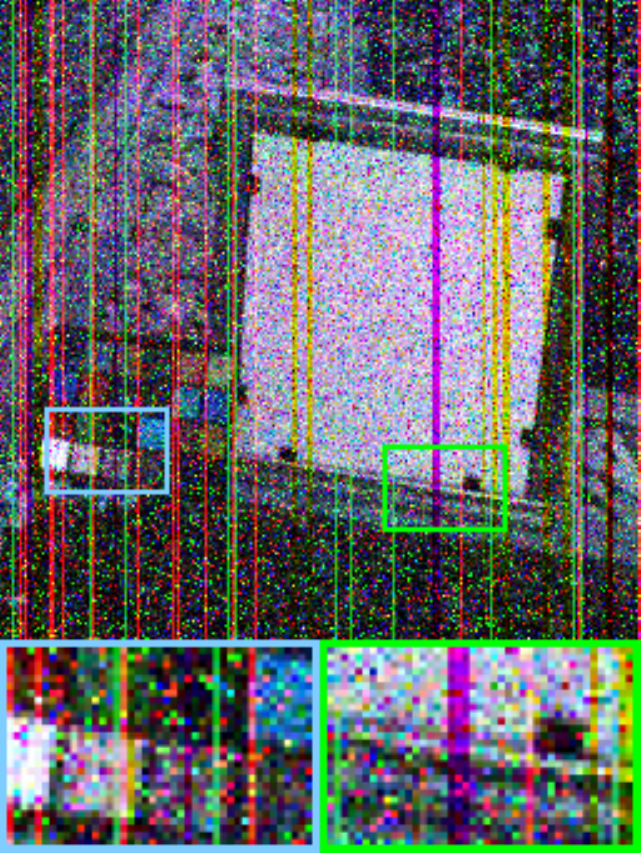}&
           \includegraphics[width=0.12\textwidth]{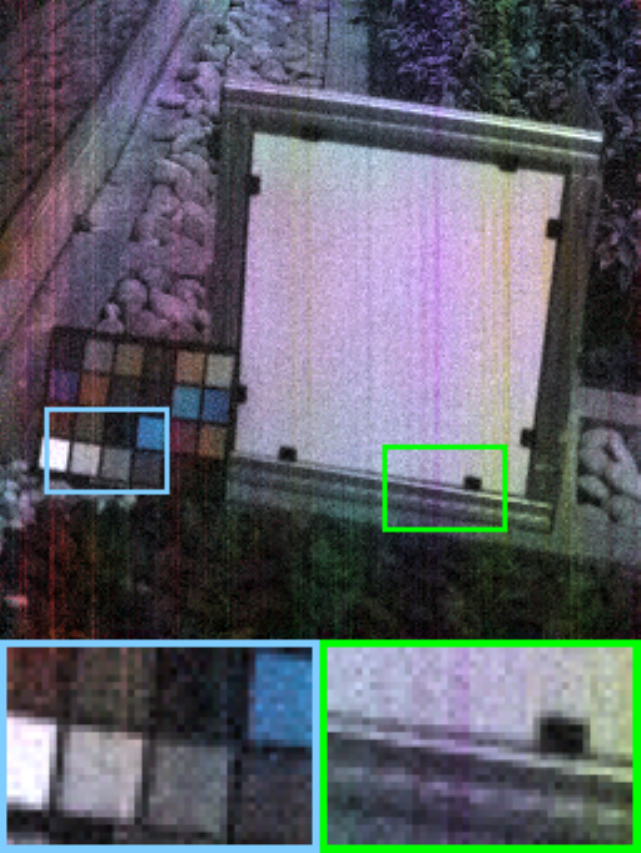}&
          \includegraphics[width=0.12\textwidth]{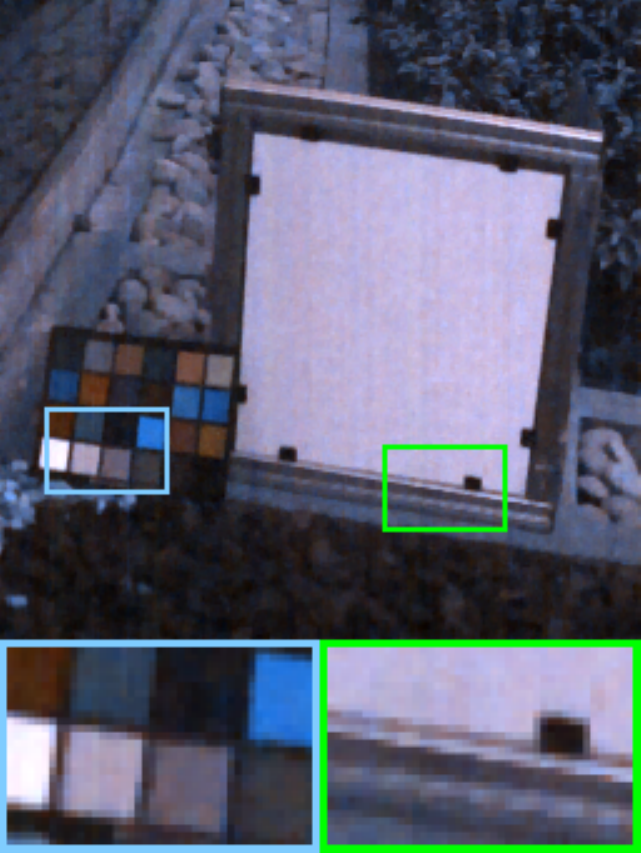}&
           \includegraphics[width=0.12\textwidth]{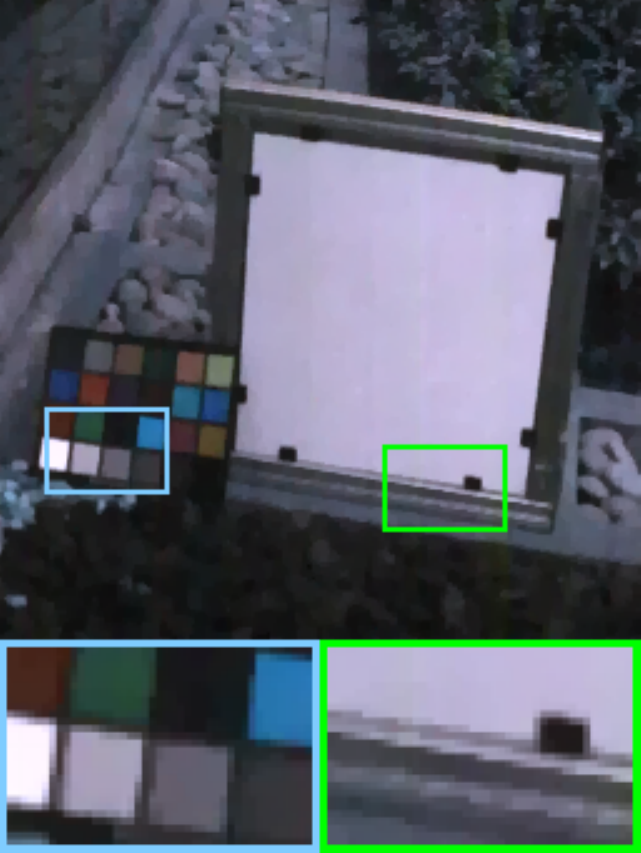}&
            \includegraphics[width=0.12\textwidth]{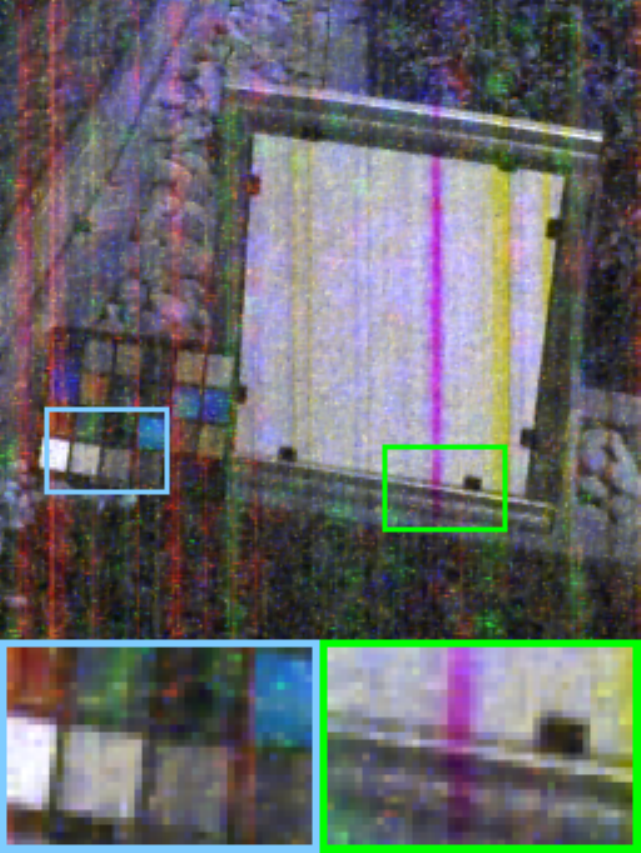}&
             \includegraphics[width=0.12\textwidth]{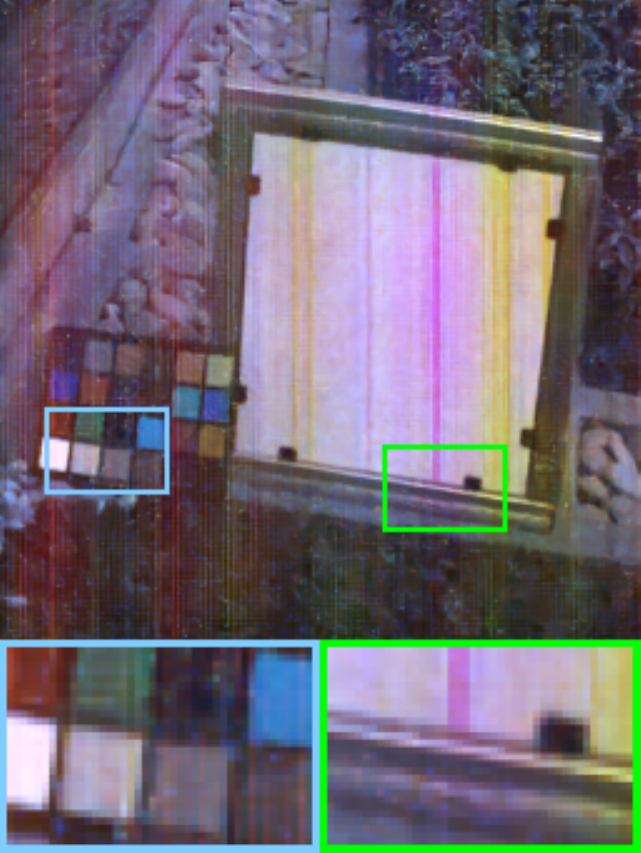}&
              \includegraphics[width=0.12\textwidth]{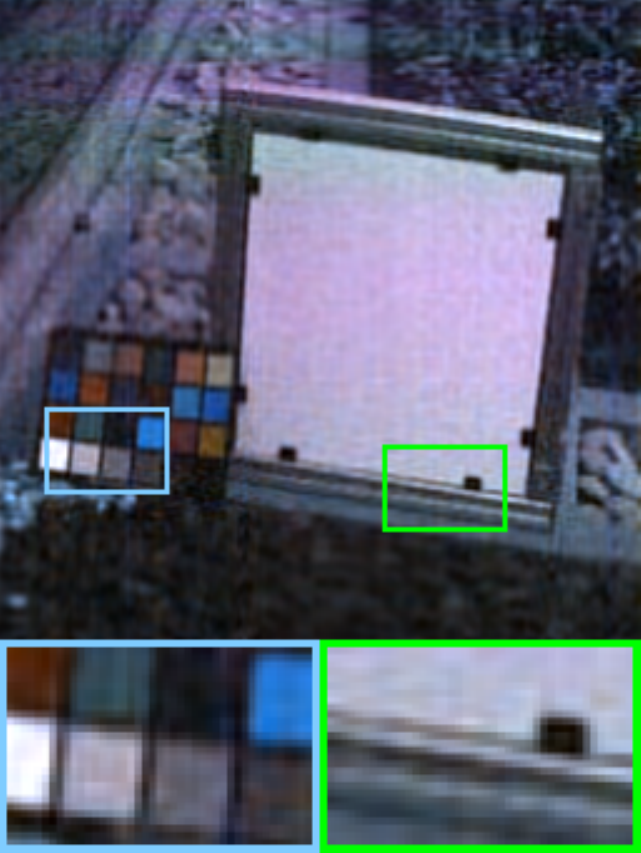}&
              \includegraphics[width=0.12\textwidth]{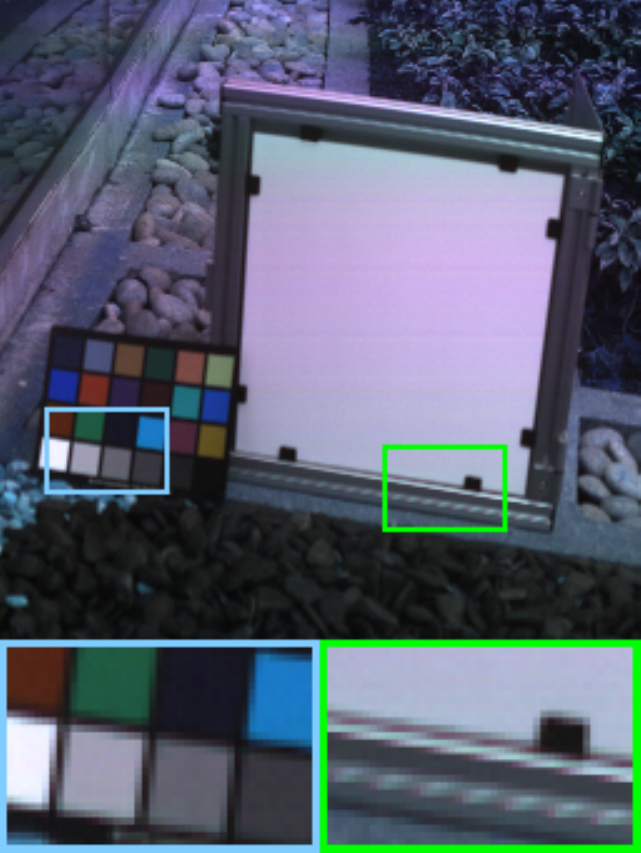}\\
PSNR 13.73  &
PSNR 29.77  &
PSNR 30.82  &
PSNR 30.70  &
PSNR 23.54  &
PSNR 24.92  &
PSNR 32.63  &
PSNR Inf\\
      Observed &LRMR\cite{LRMR}&LRTDTV\cite{LRTDTV}&E3DTV\cite{E3DTV}&HSID-CNN\cite{HSIDCNN}&SDeCNN\cite{SDeCNN}&LRTFR & Original\\
      \end{tabular}
      \end{center}
      \vspace{-0.3cm}
      \caption{The results of multispectral image denoising by different methods on MSIs {\it Cups}, {\it Fruits}, {\it Bin}, and {\it Board} (Case 5).\label{fig_denoising_2}}
      \vspace{-0.2cm}
      \end{figure*}
\begin{table*}[h]
    \caption{The quantitative results by different HPO methods. We report the results on eight binary classification tasks using Gaussian kernel-based SVM with hyperparameters suggested by different HPO methods. The {\bf best} and \underline{second-best} values are highlighted. (ACA $\uparrow$ and ARA $\uparrow$)\label{tab_HPO}}\vspace{-0.4cm}
      \begin{center}
      \scriptsize
      \setlength{\tabcolsep}{4pt}
      \begin{spacing}{1.2}
      \begin{tabular}{clcccccccccccccccc}
      \toprule
      \multicolumn{2}{c}{Data variance}& \multicolumn{2}{c}{\;\;0.05}& \multicolumn{2}{c}{\;\;0.1}& \multicolumn{2}{c}{\;\;0.15}& \multicolumn{2}{c}{\;\;0.2}& \multicolumn{2}{c}{\;\;0.25}& \multicolumn{2}{c}{\;\;0.3}& \multicolumn{2}{c}{\;\;0.35}& \multicolumn{2}{c}{\;\;0.4}\\
      \midrule
    \multicolumn{2}{c}{Method}&\;\;ACA&ARA   &\;\;ACA&ARA&\;\;ACA&ARA&\;\;ACA&ARA&\;\;ACA&ARA&\;\;ACA&ARA&\;\;ACA&ARA&\;\;ACA&ARA\\
  \midrule
    \multirow{6}*{Meshgrid}&
RS (Observed)&93.68&94.53&\;\;88.69&95.22&\;\;84.98&93.88&\;\;83.07&\underline{94.61}&\;\;74.88&90.39&\;\;76.04&94.27&\;\;73.57&89.91&\;\;70.39&89.74\\
~&DCTNN &93.81&92.58&\;\;88.47&95.26&\;\;83.76&87.65&\;\;81.96&90.81&\;\;74.42&85.34&\;\;74.01&85.70&\;\;73.68&87.18&\;\;68.27&74.42\\
   ~&TRLRF &94.06&\bf96.90&\;\;87.64&93.80&\;\;84.96&93.43&\;\;80.70&86.15&\;\;74.49&85.22&\;\;74.89&90.37&\;\;73.73&88.30&\;\;70.69&89.32\\
   ~&FTNN& 93.74&94.56&\;\;88.48&93.91&\;\;83.82&87.88&\;\;81.86&88.93&\;\;75.26&89.97&\;\;75.96&93.84&\;\;72.19&86.15&\;\;70.56&89.18\\
   ~&FCTN & 91.46&90.52&\;\;83.61&80.87&\;\;82.84&87.15&\;\;75.23&66.05&\;\;72.08&77.47&\;\;72.24&78.08&\;\;71.96&80.81&\;\;67.01&72.13\\
   ~&HLRTF &93.25&90.60&\;\;88.48&94.56&\;\;83.98&84.52&\;\;81.82&89.90&\;\;74.93&89.25&\;\;75.74&92.56&\;\;73.63&89.02&\;\;70.40&86.66\\
     \midrule
   \multirow{3}*{\tabincell{c}{Beyond\\meshgrid}}&LRTFR& \underline{94.10}&96.14&\;\;\underline{88.84}&\underline{96.16}&\;\;\underline{85.41}&95.08&\;\;83.09&\bf94.72&\;\;75.50&89.71&\;\;75.66&91.35&\;\;73.79&87.93&\;\;70.68&91.80\\
  ~&LRTFR ($\times 2$)& \bf 94.14&\underline{96.35}&\;\;88.79&95.92&\;\;\bf85.46&\bf95.76&\;\;\bf83.18&94.48&\;\;\bf76.06&\underline{91.88}&\;\;\bf76.18&\bf94.89&\;\;\underline{74.05}&\underline{90.04}&\;\;\underline{70.77}&\underline{91.94}\\
  ~&LRTFR ($\times 4$) &\bf 94.14&96.33&\;\;\bf88.87&\bf96.23&\;\;85.39&\underline{95.44}&\;\;\underline{83.12}&94.30&\;\;\underline{76.04}&\bf92.10&\;\;\underline{76.07}&\underline{94.44}&\;\;\bf74.14&\bf90.95&\;\;\bf70.78&\bf92.04\\
  \midrule
\multicolumn{2}{c}{Grid Search (Ground-Truth)}&94.79&100.0&\;\;90.03&100.0&\;\;86.28&100.0&\;\;84.41&100.0&\;\;77.18&100.0&\;\;77.19&100.0&\;\;75.43&100.0&\;\;72.14&100.0\\
    \bottomrule
  \end{tabular}
  \end{spacing}
  \end{center}
  \vspace{-0.4cm}
  \end{table*} 
\subsection{Multi-Dimensional Image Inpainting Results}
The multi-dimensional image inpainting is a typical data recovery problem on meshgrid with the original resolution. To validate the effectiveness of our LRTFR on the original meshgrid, we compare our method with state-of-the-art low-rank tensor-based methods DCTNN \cite{CVPR_19}, TRLRF \cite{TRLRF}, FTNN \cite{FTNN}, FCTN \cite{FCTN}, and HLRTF \cite{HLRTF}. The testing data include color images\footnote{\url{http://sipi.usc.edu/database/database.php}}, MSIs in the CAVE dataset\footnote{\url{https://www.cs.columbia.edu/CAVE/databases/multispectral/}} \cite{CAVE}, and videos\footnote{\url{http://trace.eas.asu.edu/yuv/}}. We consider random missing with sampling rates (SRs) 0.1, 0.2, and 0.3.\par 
The quantitative and qualitative results of multi-dimensional image inpainting are shown in Table \ref{tab_completion} and Figs. \ref{fig_completion}-\ref{fig_completion_2}. It can be seen that our LRTFR obtains the best results both quantitatively and qualitatively, which reveals the superiority of our LRTFR over classical low-rank tensor representations. Specifically, the recovered images of LRTFR are cleaner and smoother than other compared discrete low-rank tensor methods, which is mainly because the proposed LRTFR implicitly encodes the Lipschitz smoothness into the continuous representation, making the recovered images have better visual qualities. Meanwhile, the recovered results of LRTFR capture more fine details of the images, which validates the high representation abilities of our method owing to the approximation abilities of MLPs. Since our LRTFR concurrently encodes the low-rankness and smoothness into the representation, it is reasonable that our method obtains such promising recovery results on meshgrid.
\subsection{Multispectral Image Denoising Results}
The MSI denoising is another challenging data recovery problem on the original meshgrid. We compare our LRTFR with low-rank matrix/tensor-based methods LRMR \cite{LRMR}, LRTDTV\cite{LRTDTV}, and E3DTV\cite{E3DTV}. Meanwhile, we include two supervised deep learning-based methods HSID-CNN \cite{HSIDCNN} and SDeCNN \cite{SDeCNN} into comparisons. We use the best pre-trained models of HSID-CNN and SDeCNN for testing. The testing data includes MSIs in the CAVE dataset \cite{CAVE} and other two hyperspectral images (HSIs)\footnote{\url{http://sipi.usc.edu/database/database.php}}. We consider several different noisy cases. Case 1 contains Gaussian noise with standard deviation 0.2. Case 2 contains Gaussian noise with standard deviation 0.1 and sparse noise with SR 0.1. Case 3 contains the same noise of Case 2 plus stripe noise\cite{TGRS_Liu_1} in 40\% of spectral bands. Case 4 contains the same noise of Case 2 plus deadline noise\cite{TGRS_Liu_1} in 50\% of spectral bands. Case 5 contains the same noise of Case 4 plus stripe noise\cite{TGRS_Liu_1} in 40\% of spectral bands.\par 
The results of MSI denoising are shown in Table \ref{tab_denoising} and Figs. \ref{fig_denoising}-\ref{fig_denoising_2}. From Table \ref{tab_denoising}, we can see that LRTFR is the most stable method among the tested denoising algorithms in terms of different noisy cases and different data. Especially, LRTFR outperforms delicately designed TV-based methods LRTDTV and E3DTV, which validates the superiority of our combined global-local smooth regularizations. In Figs. \ref{fig_denoising}-\ref{fig_denoising_2}, we can observe that our method can totally remove complex noise. As compared, other denoising methods sometimes do not totally remove the mixed noise. Moreover, from the zoom-in boxes of Fig. \ref{fig_denoising}, we can see that other model-based methods (LRTDTV and E3DTV) may produce over-smoothness. In contrast, our method preserves the image details better. The deep learning methods HSID-CNN and SDeCNN have relatively stable performances for Gaussian noise, but suffer from domain gap between training and testing samples when dealing with mixed noise. As compared, our method is a model-based method that implicitly encodes different prior information, which delivers more stable performances for different types of noise.
 \subsection{Hyperparameter Optimization Results}\label{sec_HPO}
The HPO can be elegently modeled as the low-rank tensor completion problem\cite{HPO_PAMI}. As aforementioned, it is even more interesting to exploit the benefits of conducting HPO in the continuous domain by using our LRTFR. Specifically, we can use LRTFR to conduct HPO beyond the original meshgrid resolution and investigate the corresponding effect.\par
Following the experimental settings of previous SOTA work \cite{HPO_PAMI} along this research line, we consider the classification problem using Gaussian kernel-based support vector machine (SVM), which has two hyperparameters---the regularization parameter $C$ and the kernel parameter $\sigma$. The search sets are $\{3^{i}|i=-15:1:15\}$ for $C$ and $\{2^{i}|i=-15:1:15\}$ for $\sigma$, and thus there are $31\times 31=961$ candidate configurations. We use Gaussian distribution to generate classification datasets. The variance of the distribution is traversed in $\{0.05,0.1,0.15,\cdots,0.4\}$ to construct eight tasks with different difficulty levels. In each task, we generate 16 classification datasets. Specifically, we generate four Gaussian distributions with the same variance and random means and divide them into two groups to form a binary classification dataset. Each dataset has 100 training points and 500 testing points. Therefore, by conducting grid search (GS) for all configurations and datasets we form a tensor of size $31\times 31\times 16$ for each task (variance). The first two dimensions ($31\times31$) indicate the number of candidate configurations and the third dimension ($16$) indicates the number of datasets. By following \cite{HPO_PAMI}, we use a low SR of 0.01 to sample the tensor slice corresponding to the new dataset and use the SR of 0.1 to sample the tensor slices corresponding to historical datasets (see detailed explanations in \cite{HPO_PAMI}), which forms the observed incompleted tensor. We repeat the sampling process 16 times, where each dataset is chosen as the new dataset once and the others are historical datasets. It results in 16 observed incompleted tensors and we report the average HPO results on the new dataset.\par
We report three results of our method, including the standard tensor completion using LRTFR, and the super-resolution results using the learned continuous representation and evenly spaced sampling, termed as LRTFR ($\times2$)/($\times4$). The super-resolution results offer more candidate configurations and gives the predicted accuracy even though our method only sees the observed data on the original meshgrid. We compare our method with random search (RS)\cite{RS}, which corresponds to the recommendation results using the observed tensor. Meanwhile, we use state-of-the-art tensor completion methods DCTNN \cite{CVPR_19}, TRLRF \cite{TRLRF}, FTNN \cite{FTNN}, FCTN \cite{FCTN}, and HLRTF \cite{HLRTF} as baselines.\par 
The results of HPO are shown in Table \ref{tab_HPO}. Compared to classical tensor completion methods, our LRTFR is more effective for suggesting a suitable configuration. Moreover, LRTFR ($\times2$)/($\times4$) incline to attain better performances than LRTFR, which reveals that searching the hyperparameter value beyond the original meshgrid resolution is helpful to obtain a better recommendation result. The underlying reason is that the optimal configuration probably does not lie in the given meshgrid, thus expanding the search regions can effectively improve the recommendation results. These results show the superiority of our continuous representation over discrete meshgrid-based tensor completion methods. 
\subsection{Point Cloud Upsampling Results}
Then, we consider the point cloud upsampling problem to show the effectiveness of our method beyond meshgrid. Standard low-rank tensor-based methods can not be applied to point cloud upsampling since they are not suitable for representing the unordered point cloud beyond meshgrid. As compared, our LRTFR is suitable to represent the point cloud since it learns a continuous representation of data. \par 
We consider five deep learning-based methods as baselines, including MSN\cite{MSN_AAAI}, SnowflakeNet\cite{SnowflakeNet}, SMOG\cite{SMOG}, NeuralPoints\cite{NeuralPoints}, and SAPCU\cite{SAPCU}. We use the best pre-trained models provided by the authors. We remark that SAPCU\cite{SAPCU} is an INR-based method and thus the comparisons could certainly reveal the superiority of our LRTFR over INR. We adopt different datasets, including those in the ShapeNet benchmark\cite{ShapeNet} ({\it Table}, {\it Boat}, {\it Lamp}, and {\it Sofa}), the Stanford {\it Bunny}\footnote{\url{https://graphics.stanford.edu/data/3Dscanrep/}}, and three hand-crafted shapes ({\it Doughnut}, {\it Sphere}, and {\it Heart}). We use random sampling to downsample the original point clouds such that the downsampled point cloud has a number of points less than 500.\par
The results for point cloud upsampling are shown in Table \ref{tab_PCC} and Figs. \ref{PCC_1}-\ref{PCC_2}. From the quantitative comparisons in Table \ref{tab_PCC}, we can observe that LRTFR outperforms other competitors in most cases, which reveals the effectiveness of our method for continuous representation. From Fig. \ref{PCC_1}, we can see that our method obtains satisfactory results on the ShapeNet dataset. Other methods like MSN and SAPCU also have stable performances, but their accuracy is less than ours. Note that MSN and SnowflakeNet were trained on the ShapeNet dataset, and thus their performances on the point clouds in Fig. \ref{PCC_1} are relatively satisfactory. However, in Fig. \ref{PCC_2}, the testing samples are out-of-distribution of the training domain of these deep learning methods. Therefore, they attain relatively worse generalization performances on these wild datasets. As compared, our method does not depend on training data and is more stable since it depends on the low-rank regularization, where the learned SDF lies in a low-rank manifold to ensure the robust performance for different datasets. The results validate the effectiveness of our method for representing the continuous SDF structure, which can not be achieved by other meshgrid-based low-rank representations.    
  \begin{figure*}[t]
      \scriptsize
      \setlength{\tabcolsep}{0.9pt}
      \begin{center}
      \begin{tabular}{cccccccc}
          \includegraphics[width=0.12\textwidth]{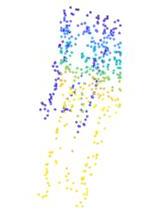}&
          \includegraphics[width=0.12\textwidth]{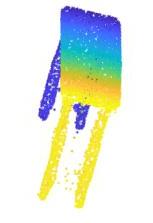}&
          \includegraphics[width=0.12\textwidth]{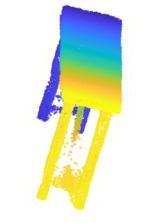}&
          \includegraphics[width=0.12\textwidth]{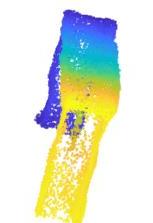}&
          \includegraphics[width=0.12\textwidth]{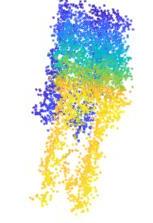}&
          \includegraphics[width=0.12\textwidth]{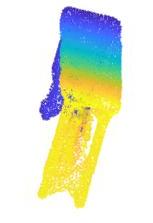}&
          \includegraphics[width=0.12\textwidth]{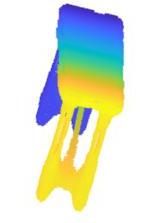}&
          \includegraphics[width=0.12\textwidth]{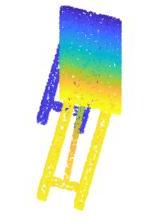}\\
          \includegraphics[width=0.12\textwidth]{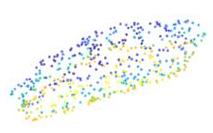}&
          \includegraphics[width=0.12\textwidth]{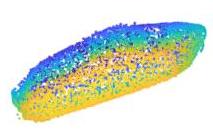}&
          \includegraphics[width=0.12\textwidth]{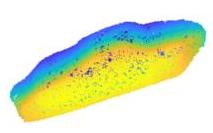}&
          \includegraphics[width=0.12\textwidth]{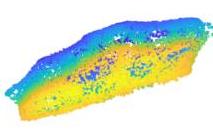}&
          \includegraphics[width=0.12\textwidth]{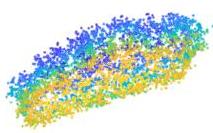}&
          \includegraphics[width=0.12\textwidth]{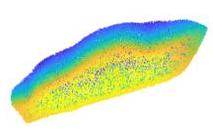}&
          \includegraphics[width=0.12\textwidth]{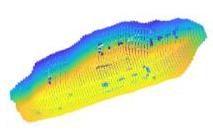}&
          \includegraphics[width=0.12\textwidth]{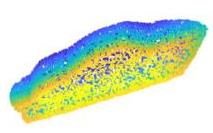}\\
          \includegraphics[width=0.12\textwidth]{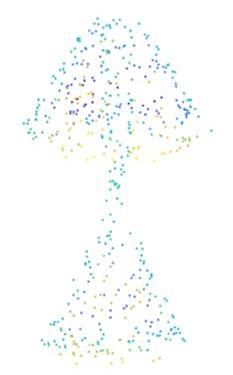}&
          \includegraphics[width=0.12\textwidth]{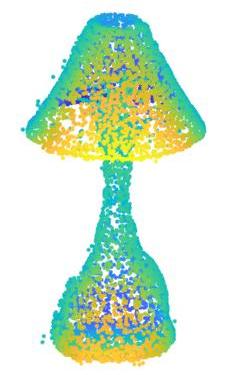}&
          \includegraphics[width=0.12\textwidth]{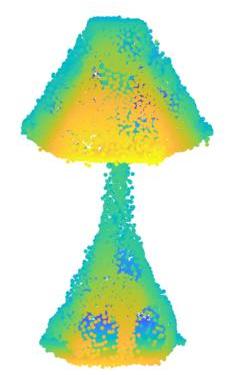}&
          \includegraphics[width=0.12\textwidth]{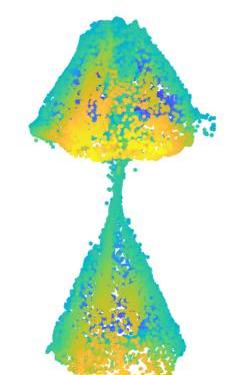}&
          \includegraphics[width=0.12\textwidth]{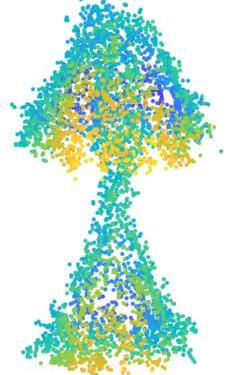}&
          \includegraphics[width=0.12\textwidth]{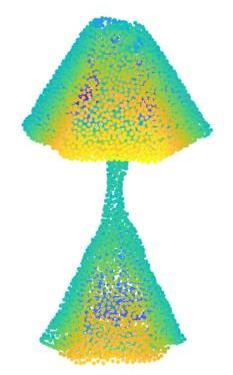}&
          \includegraphics[width=0.12\textwidth]{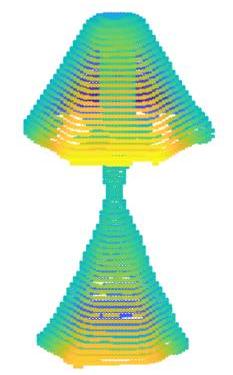}&
          \includegraphics[width=0.12\textwidth]{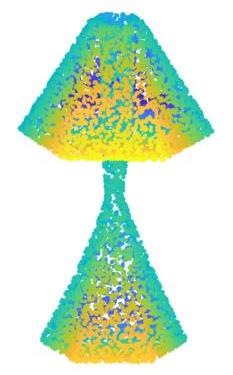}\\
          \includegraphics[width=0.12\textwidth]{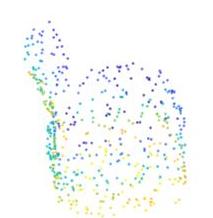}&
          \includegraphics[width=0.12\textwidth]{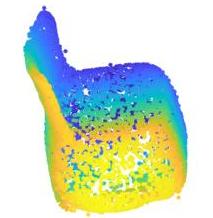}&
          \includegraphics[width=0.12\textwidth]{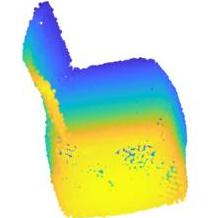}&
          \includegraphics[width=0.12\textwidth]{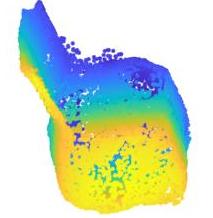}&
          \includegraphics[width=0.12\textwidth]{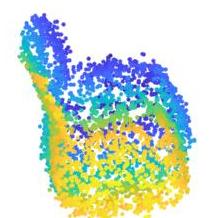}&
          \includegraphics[width=0.12\textwidth]{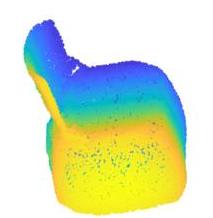}&
          \includegraphics[width=0.12\textwidth]{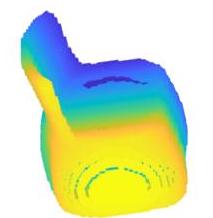}&
          \includegraphics[width=0.12\textwidth]{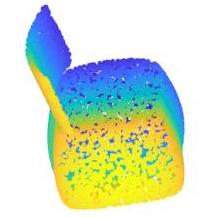}\\
 Observed&MSN\cite{MSN_AAAI}&SnowflakeNet\cite{SnowflakeNet}&SMOG\cite{SMOG}&NeuralPoints\cite{NeuralPoints}&SAPCU\cite{SAPCU}&LRTFR & Original\\
      \end{tabular}
      \end{center}
      \vspace{-0.3cm}
      \caption{The results of point cloud upsampling by different methods on {\it Table}, {\it Vessel}, {\it Lamp}, and {\it Sofa} in the ShapeNet dataset\cite{ShapeNet}.\label{PCC_1}}
      \vspace{-0.2cm}
      \end{figure*}
               \begin{table*}[t]
          \caption{The quantitative results by different methods for point cloud upsampling. The {\bf best} and \underline{second-best} values are highlighted. (CD $\downarrow$ and F-Socre $\uparrow$)\label{tab_PCC}}\vspace{-0.4cm}
            \begin{center}
            \scriptsize
            \setlength{\tabcolsep}{2.5pt}
            \begin{spacing}{1.2}
            \begin{tabular}{lcccccccccccccccc}
            \toprule
            Data&\multicolumn{2}{c}{\it Table}&\multicolumn{2}{c}{\it Vessel}&\multicolumn{2}{c}{\it Lamp}&\multicolumn{2}{c}{\it Sofa}&\multicolumn{2}{c}{\it Bunny}&\multicolumn{2}{c}{\it Doughnut}&\multicolumn{2}{c}{\it Sphere}&\multicolumn{2}{c}{\it Heart}\\
        \midrule
        Method&CD&F-Socre&\;\;CD&F-Socre&\;\;CD&F-Socre&\;\;CD&F-Socre&\;\;CD&F-Socre&\;\;CD&F-Socre&\;\;CD&F-Socre&\;\;CD&F-Socre\\
        \midrule
        Observed&0.0187&    0.3448&\;\;    0.0180&    0.3866&\;\;    0.0185&    0.3677&\;\;    0.0221&    0.2976&\;\;  0.1098&    0.2086&\;\;    0.1882&    0.1863&\;\;    0.0451&    0.7379&\;\;    0.0959&    0.2201\\
        MSN &0.0164&    0.6918&\;\;    0.0154&    0.7768&\;\;    0.0167&    0.7188&\;\;    0.0183&    0.6267&\;\;    0.1624&    0.4157&\;\;    0.3639&    0.1408&\;\;    0.1134&    0.5529&\;\;    0.2491&    0.3145\\
        SnowflakeNet& \bf0.0106&    \bf0.9068&\;\;    \underline{0.0096}&    \underline{0.9493}&\;\;    \underline{0.0126}&    \underline{0.8511}&\;\;    \bf0.0125&   \bf0.8668&\;\;    {0.2017}&    0.3320&\;\;    \underline{0.1248}&    \underline{0.5194}&\;\;    \underline{0.0228}&   {0.9891}&\;\;   {0.0729}&   {0.8268}\\
        SMOG&0.0288&    0.4819&\;\;    0.0151&    0.7769&\;\;    0.0206&    0.5783&\;\;    0.0215&    0.6163&\;\;  0.2843&    0.2380&\;\;    0.1333&    0.5036&\;\;    0.0392&    0.9642&\;\;    0.1245&    0.4662\\
        NeuralPoints& 0.0283&    0.3900&\;\;    0.0227&    0.5272&\;\;    0.0246&    0.4800&\;\;    0.0227&    0.5083&\;\;   0.0558&    0.9045&\;\;    0.2456&    0.3316&\;\;    0.0342&    0.9559&\;\;    0.0856&    0.7326\\
          SAPCU& 0.0194&    0.7234&\;\;    0.0152&    0.8848&\;\;    0.0161&    0.7719&\;\;   \underline{0.0148}&    0.8085&\;\;    \underline{0.0518}&   \underline{0.9187}&\;\;    0.1329&    0.4142&\;\;    0.0271&    \underline{0.9974}&\;\;    \underline{0.0526}& \underline{0.9130}\\
        LRTFR&\underline{0.0113}&    \underline{0.8453}&\;\;    \bf0.0090&    \bf0.9723&\;\;    \bf0.0111&    \bf0.9360&\;\;   \bf{0.0125}&    \underline{0.8387}&\;\;    \bf0.0445&    \bf0.9784&\;\;    \bf0.1113&    \bf0.5871&\;\;    \bf0.0200&     \bf1.0000&\;\;    \bf0.0463&    \bf0.9824\\
        \bottomrule
        \end{tabular}
        \end{spacing}
        \end{center}
        \vspace{-0.5cm}
        \end{table*}    
        \begin{figure*}[t]
            \scriptsize
            \setlength{\tabcolsep}{0.9pt}
            \begin{center}
            \begin{tabular}{cccccccc}
                \includegraphics[width=0.12\textwidth]{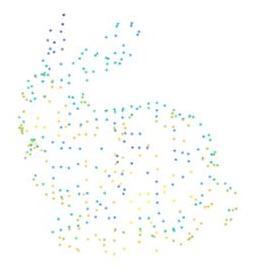}&
                \includegraphics[width=0.12\textwidth]{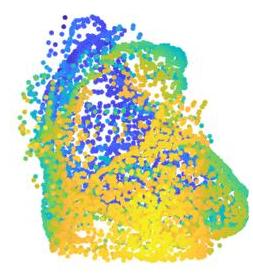}&
                \includegraphics[width=0.12\textwidth]{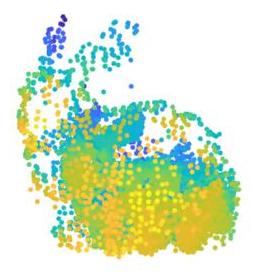}&
                \includegraphics[width=0.12\textwidth]{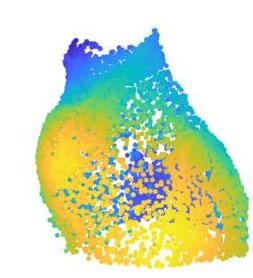}&
                \includegraphics[width=0.12\textwidth]{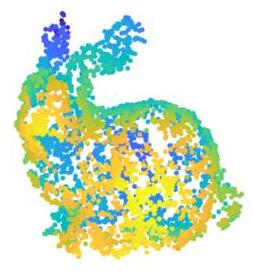}&
                \includegraphics[width=0.12\textwidth]{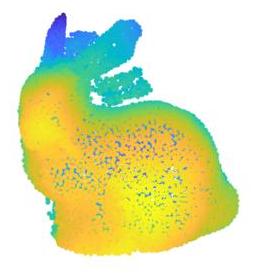}&
                \includegraphics[width=0.12\textwidth]{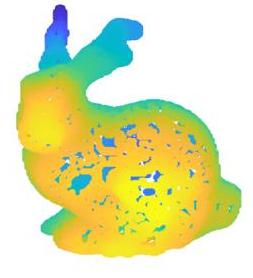}&
                \includegraphics[width=0.12\textwidth]{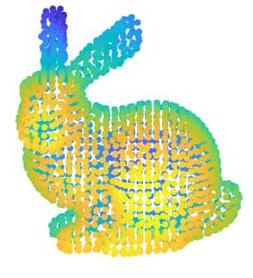}\\
                          \includegraphics[width=0.12\textwidth]{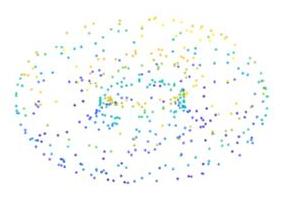}&
                          \includegraphics[width=0.12\textwidth]{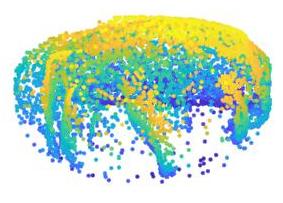}&
                          \includegraphics[width=0.12\textwidth]{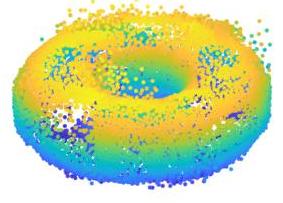}&  \includegraphics[width=0.12\textwidth]{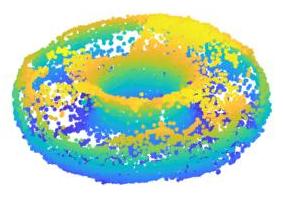}&
                                    \includegraphics[width=0.12\textwidth]{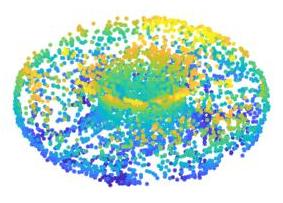}&
                \includegraphics[width=0.12\textwidth]{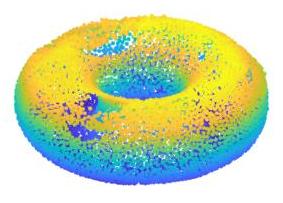}&
                          \includegraphics[width=0.12\textwidth]{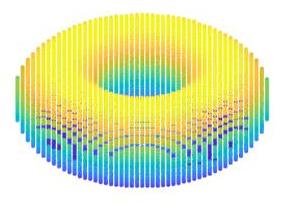}&
                          \includegraphics[width=0.12\textwidth]{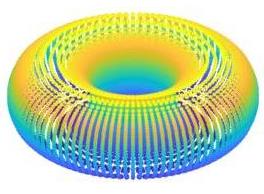}\\
                \includegraphics[width=0.12\textwidth]{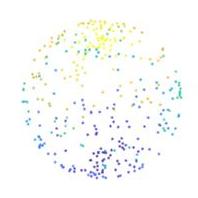}&
                \includegraphics[width=0.12\textwidth]{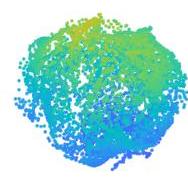}&
                \includegraphics[width=0.12\textwidth]{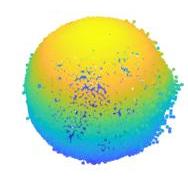}&
                \includegraphics[width=0.12\textwidth]{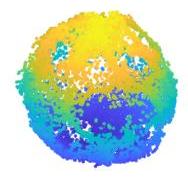}&
                \includegraphics[width=0.12\textwidth]{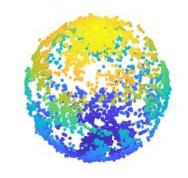}&
                \includegraphics[width=0.12\textwidth]{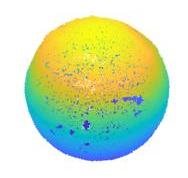}&
                \includegraphics[width=0.12\textwidth]{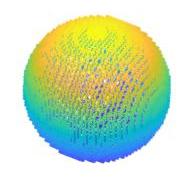}&
                \includegraphics[width=0.12\textwidth]{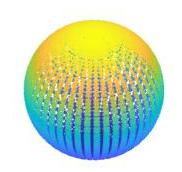}\\
                \includegraphics[width=0.12\textwidth]{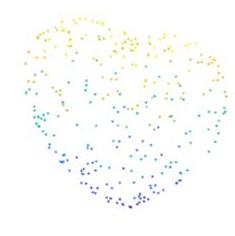}&
                \includegraphics[width=0.12\textwidth]{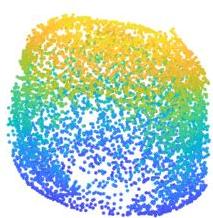}&
                \includegraphics[width=0.12\textwidth]{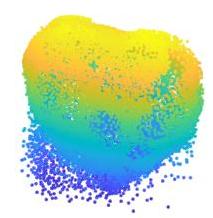}&
                \includegraphics[width=0.12\textwidth]{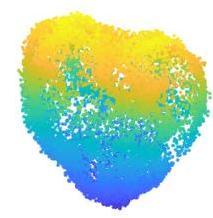}&
                \includegraphics[width=0.12\textwidth]{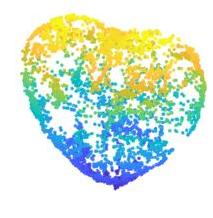}&
                \includegraphics[width=0.12\textwidth]{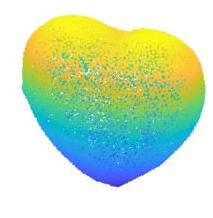}&
                \includegraphics[width=0.12\textwidth]{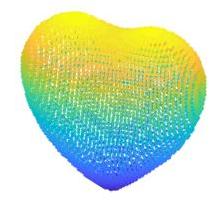}&
                \includegraphics[width=0.12\textwidth]{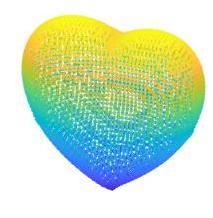}\\
       Observed&MSN\cite{MSN_AAAI}&SnowflakeNet\cite{SnowflakeNet}&SMOG\cite{SMOG}&NeuralPoints\cite{NeuralPoints}&SAPCU\cite{SAPCU}&LRTFR & Original\\
            \end{tabular}
            \end{center}
            \vspace{-0.3cm}
            \caption{The results of point cloud upsampling by different methods on {\it Bunny}, {\it Doughnut}, {\it Sphere}, and {\it Heart}.\label{PCC_2}}
            \vspace{-0.1cm}
         \end{figure*}
\section{Discussions}\label{Sec_dis}
\subsection{Influences of Tensor Factorization}
The tensor Tucker factorization is a core building block in our LRTFR. However, our method can be readily extended to different tensor function factorizations. Here, we compare the CP factorization\cite{SIAM_review} and the Tucker factorization in our method for multi-dimensional image inpainting; see Fig. \ref{com_CP} (d)-(e). It can be seen that Tucker function factorization shows certain advantageous performances. This is possibly attributed to the better tensor structure preserving capability by the former Tucker representation manner. We will more deeply investigate this issue in future research.
\subsection{Influences of Activation Functions}
Since we use MLPs to parameterize the factor functions of LRTFR, the selection of activation function in the MLPs warrants discussion. Inspired by recent study of INR\cite{sine}, which shows that the periodic sine activation function can capture natural signals' complex structures and fine details, we adopt the sine activation function in the MLP to learn the LRTFR to help obtain a more realistic continuous representation. To validate the effectiveness of the sine activation function, we compare it with ReLU, LeakyReLU, and Tanh activation functions; see Fig. \ref{com_CP} (a)-(c), (e). The results show that the sine activation function can help recover the signal much better than other activation functions, which are consistent with the results in existing literatures\cite{sine,piGAN}. 
\subsection{Influences of Hyperparameters}\label{Sec_hyper}
Selecting suitable hyperparameter values is a necessary step in our method. These hyperparameters include the weight decay of the Adam optimizer (denoted by $w$), the hyperparameter of the sine activation $\omega_0$, the {\rm F\text{-}rank} ($r_1,r_2,r_3$), and the depth of the MLP (denoted by $d$). Meanwhile, there are two regularization parameters $\gamma_1,\gamma_2$ in the denoising model (\ref{loss_denoising}) and the point cloud upsampling model (\ref{loss_point}). To comprehensively analyze the influences of different hyperparameters on the performances of our method, we change the value of each hyperparameter and fix the others and report the corresponding results. The results are shown in Fig. \ref{fig_hyper} and Fig. \ref{fig_hyper_2}. We remark that different tasks require different values of hyperparameters, and thus the testing ranges of a hyperparameter are inconsistent for different tasks. From the results we can see that our method is relatively robust w.r.t. these hyperparameters since it can obtain satisfactory performances for a wide range of values. This makes our method relatively easy to be applicable in real scenarios. Moreover, from Fig. \ref{fig_hyper_2} we can see that the adopted regularizations (e.g., the TV regularization in (\ref{loss_denoising})) are effective to boost the performance of our method with suitable hyperparameter values, which reveals the compatibility of our method with other proven techniques to enhance performance. Anyway, how to build easy and automatic hyperparameter tuning strategies to make our method more flexible and adaptable to diverse scenarios still requires more endeavor of our future research.
\begin{figure*}[t]
    \scriptsize
    \setlength{\tabcolsep}{0.9pt}
    \begin{center}
    \begin{tabular}{ccccccc}
     \includegraphics[width=0.12\textwidth]{planep2.pdf}&
     \includegraphics[width=0.12\textwidth]{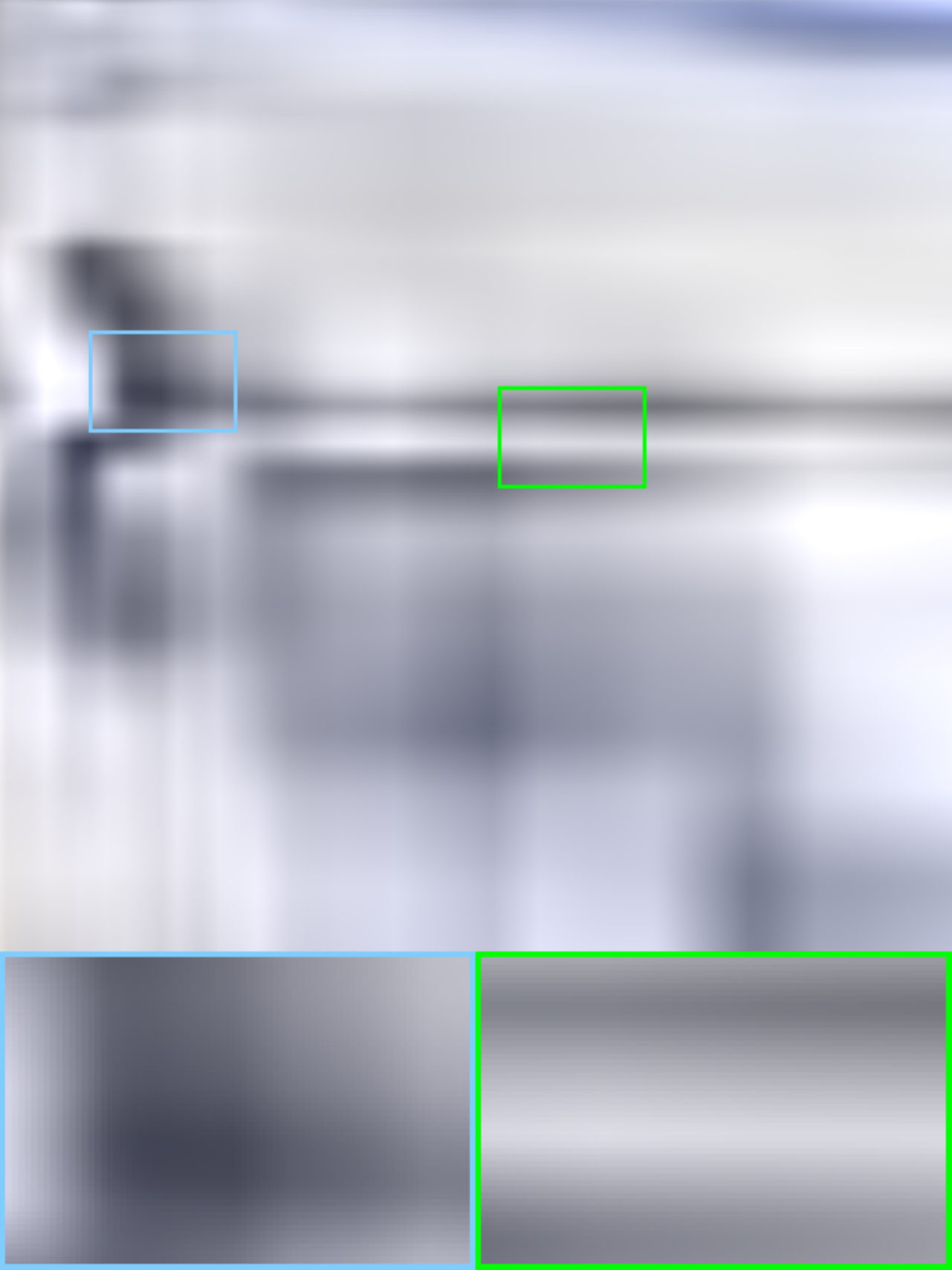}&
     \includegraphics[width=0.12\textwidth]{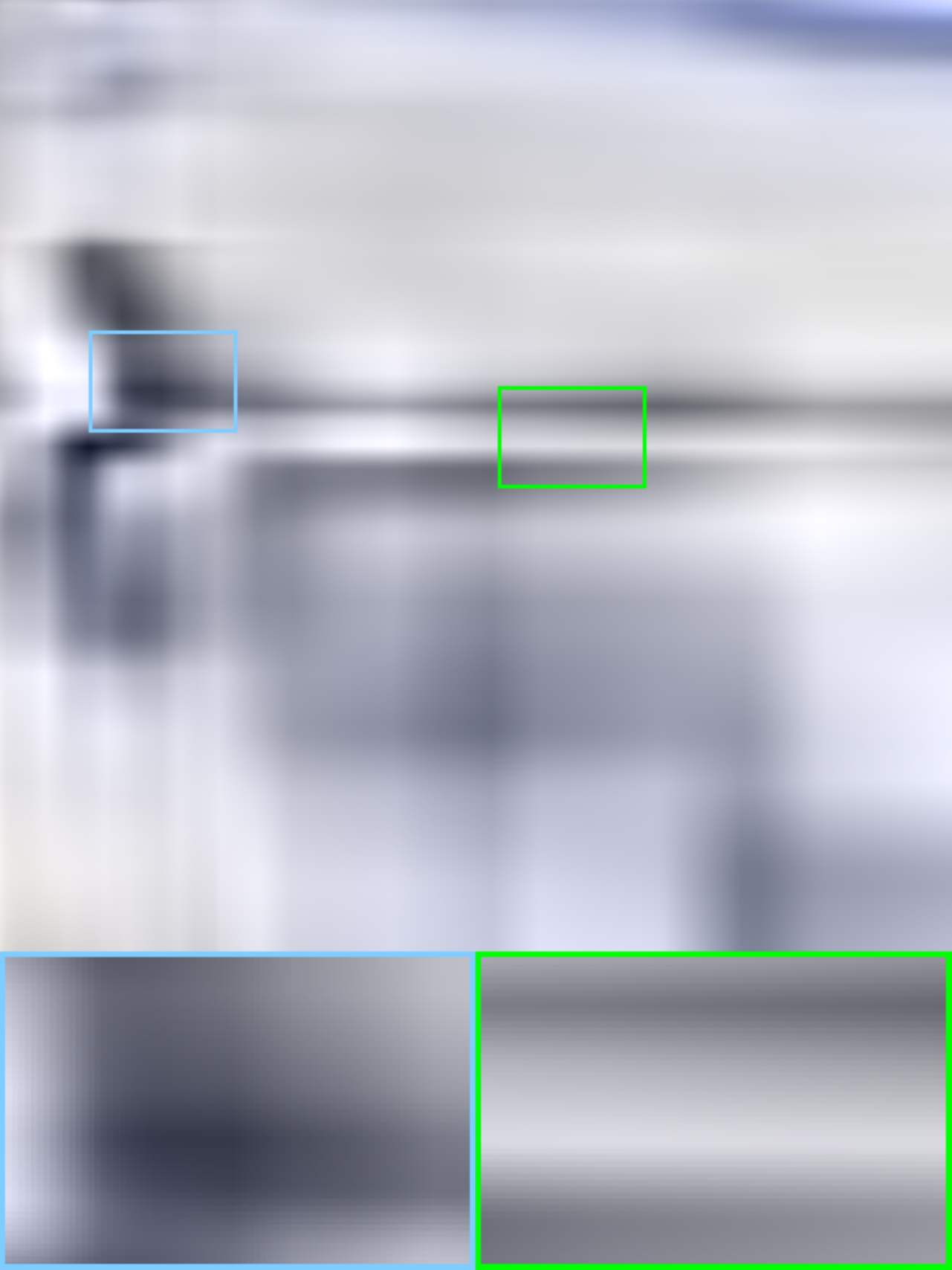}&
     \includegraphics[width=0.12\textwidth]{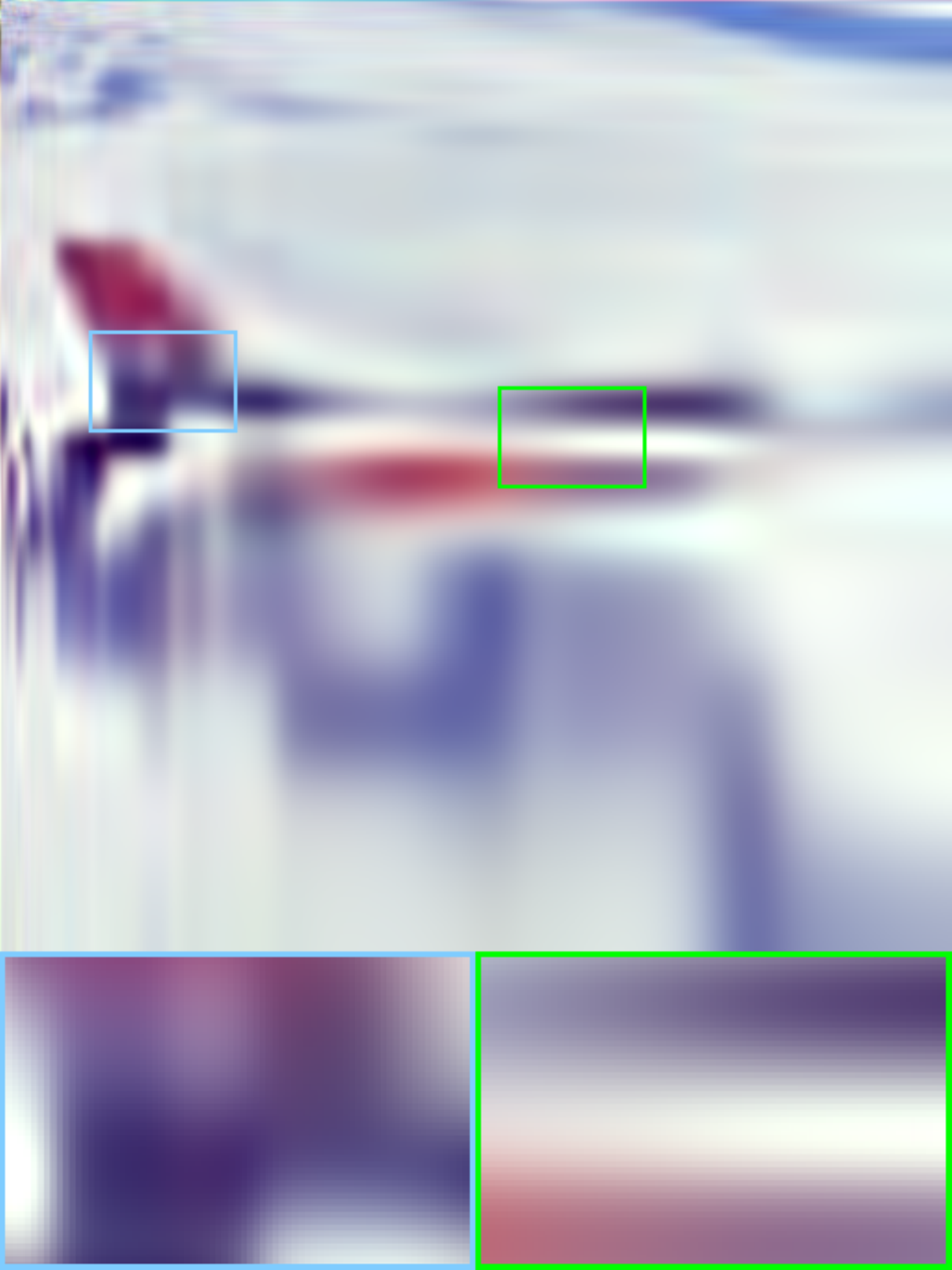}&
     \includegraphics[width=0.12\textwidth]{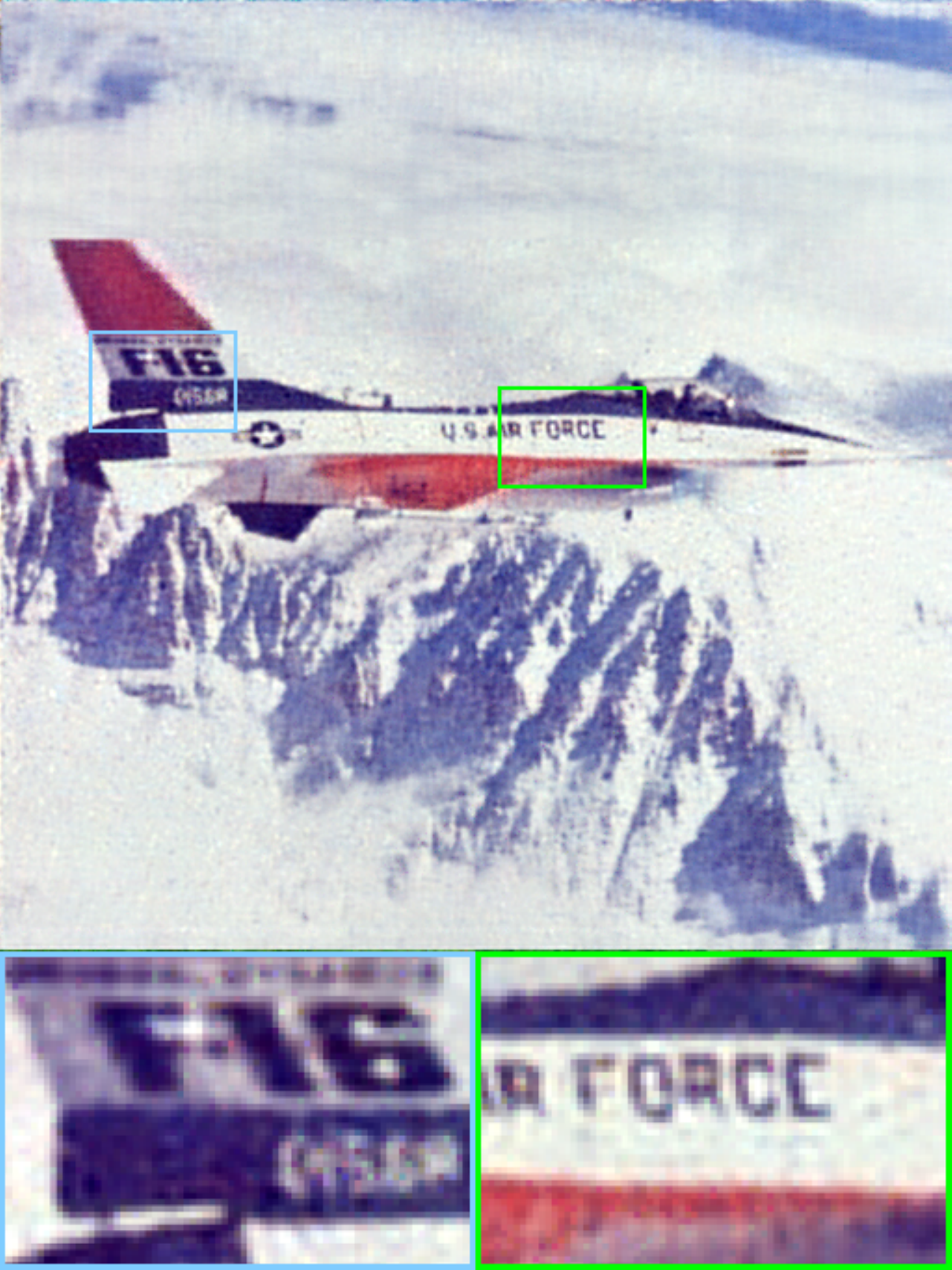}&
     \includegraphics[width=0.12\textwidth]{planep2NCLRF.pdf}&
     \includegraphics[width=0.12\textwidth]{planep2gt.pdf}\\
     Observed&(a)&(b)&(c)&(d)&(e) &Original\\
     \vspace{-0.1cm}
     PSNR 3.66 &PSNR 18.64 &PSNR 18.71 &PSNR 20.07 &PSNR 28.77 &PSNR 29.75 &PSNR Inf\\
    \end{tabular}
    \end{center}
    \vspace{-0.3cm}
    \caption{The results of LRTFR with different MLP activation functions and factorization strategies for inpainting ({\it Plane} SR = 0.2). (a) ReLU + Tucker function factorization. (b) LeakyReLU + Tucker function factorization. (c) Tanh + Tucker function factorization. (d) Sine + CP function factorization. (e) Sine + Tucker function factorization (Our proposed).\label{com_CP}}
    \vspace{-0.1cm}
    \end{figure*}
\begin{figure*}[t]
  \scriptsize
  \setlength{\tabcolsep}{0.9pt}
    \begin{center}
    \begin{tabular}{c}
  \includegraphics[width=1\textwidth]{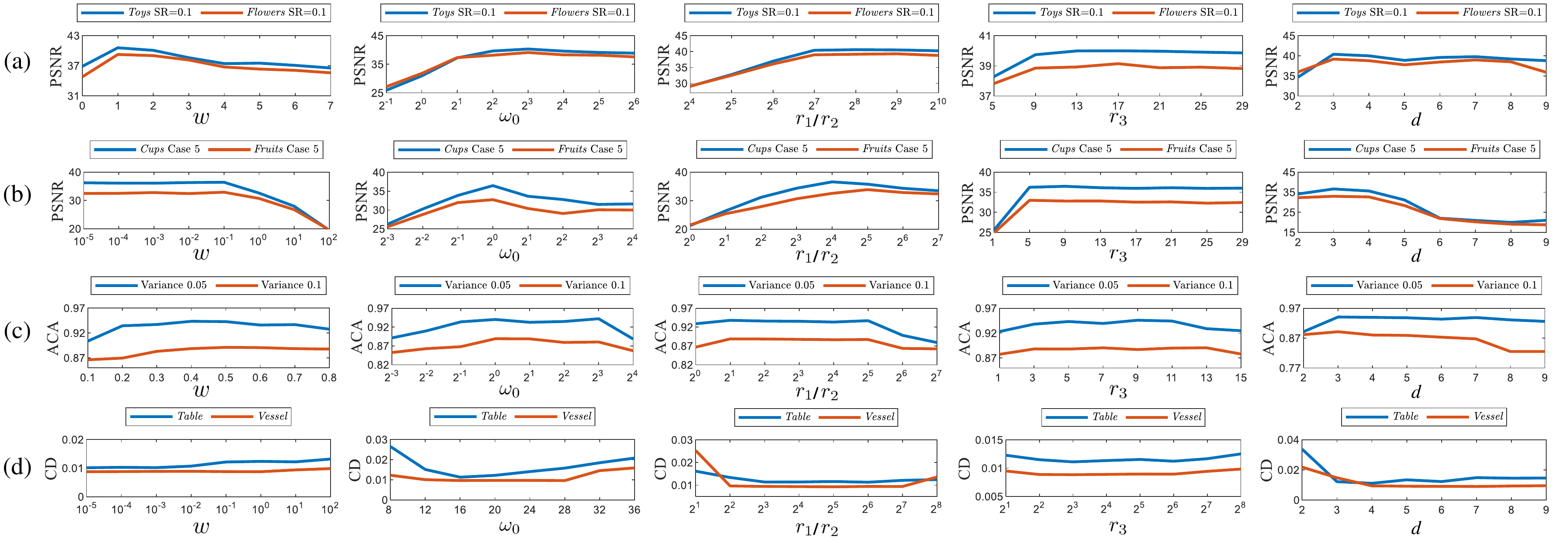}
    \end{tabular}
    \end{center}
  \vspace{-0.4cm}
  \caption{The quantitative performances of LRTFR w.r.t. different values of hyperparameters (The weight decay $w$, the hyperparameter of the sine activation function $\omega_0$, the {\rm F\text{-}rank} $(r_1,r_2,r_3)$, and the number of MLP layers $d$) for (a) multi-dimensional image inpainting, (b) MSI denoising, (c) hyperparameter optimization, and (d) point cloud upsampling.\label{fig_hyper}\vspace{-0.2cm}}
\end{figure*} 
       \begin{figure}[t]
  \scriptsize
  \setlength{\tabcolsep}{0.9pt}
    \begin{center}
    \begin{tabular}{c}
  \includegraphics[width=0.44\textwidth]{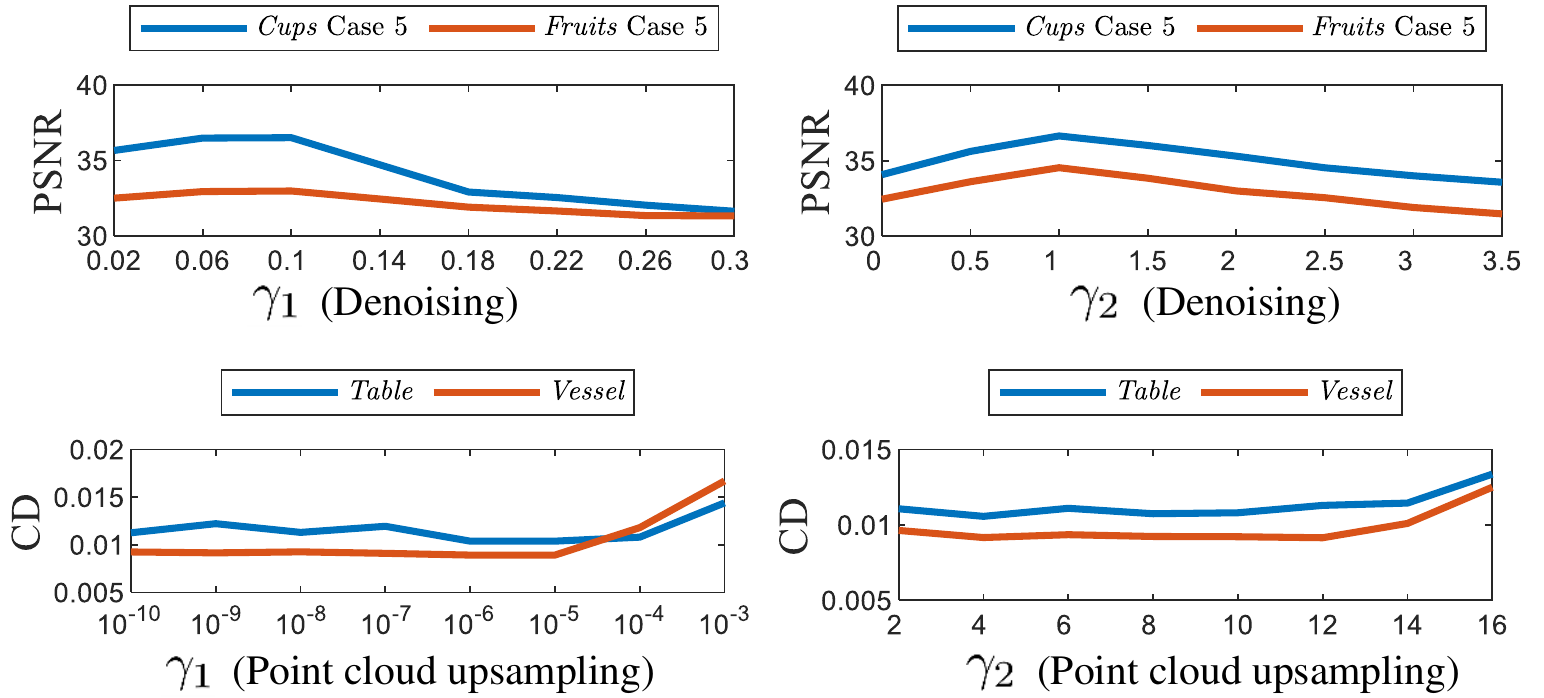}
    \end{tabular}
    \end{center}
  \vspace{-0.4cm}
  \caption{The quantitative performances of LRTFR w.r.t. different values of regularization parameters in the denoising model (\ref{loss_denoising}) and point cloud upsampling model (\ref{loss_point}).\label{fig_hyper_2}\vspace{-0.5cm}}
\end{figure}
\section{Conclusions}
In this work, we have proposed a multi-dimensional data continuous representation, termed as LRTFR. We have formulated the continuous representation as a low-rank tensor function, and developed fundamental concepts for tensor functions including F-rank and tensor function factorization. This then forms the first low-rank representation that can continuously represent data. We theoretically justified the low-rank and smooth regularizations hidden in the model, which makes LRTFR effective and efficient for continuous representation. Extensive experiments on multi-dimensional data recovery problems including multi-dimensional image inpainting, denoising, HPO, and point cloud upsampling have validated the broad applicability and superiority of our method as compared with state-of-the-art methods. The suggested continuous representation is a potential tool for multi-dimensional data processing and analysis that can be applied to more tasks in the future, e.g., hyperspectral fusion and blind image super-resolution.
\bibliographystyle{ieeetr}
\bibliography{aaai22}

\begin{thebibliography}{100}

\bibitem{PNAS_tensor}
M.~E. Kilmer, L.~Horesh, H.~Avron, and E.~Newman, ``Tensor-tensor algebra for
  optimal representation and compression of multiway data,'' {\em Proceedings
  of the National Academy of Sciences}, vol.~118, no.~28, 2021.

\bibitem{TIT_sparse}
X.~Zhang and M.~K. Ng, ``Sparse nonnegative tensor factorization and completion
  with noisy observations,'' {\em IEEE Transactions on Information Theory},
  vol.~68, no.~4, pp.~2551--2572, 2022.

\bibitem{SIAM_19}
N.~Vervliet, O.~Debals, and L.~De~Lathauwer, ``Exploiting efficient
  representations in large-scale tensor decompositions,'' {\em SIAM Journal on
  Scientific Computing}, vol.~41, no.~2, pp.~A789--A815, 2019.

\bibitem{SIAM_21}
J.~Pan, M.~K. Ng, Y.~Liu, X.~Zhang, and H.~Yan, ``Orthogonal nonnegative tucker
  decomposition,'' {\em SIAM Journal on Scientific Computing}, vol.~43, no.~1,
  pp.~B55--B81, 2021.

\bibitem{IJCV_WNN}
S.~Gu, Q.~Xie, D.~Meng, W.~Zuo, X.~Feng, and L.~Zhang, ``Weighted nuclear norm
  minimization and its applications to low level vision,'' {\em International
  Journal of Computer Vision}, vol.~121, p.~183–208, 2017.

\bibitem{Train_TIP}
J.~A. Bengua, H.~N. Phien, H.~D. Tuan, and M.~N. Do, ``Efficient tensor
  completion for color image and video recovery: Low-rank tensor train,'' {\em
  IEEE Transactions on Image Processing}, vol.~26, no.~5, pp.~2466--2479, 2017.

\bibitem{TIM_22}
D.~Zhu, H.~Chen, W.~Wang, H.~Xie, G.~Cheng, M.~Wei, J.~Wang, and F.~L. Wang,
  ``Nonlocal low-rank point cloud denoising for 3-{D} measurement surfaces,''
  {\em IEEE Transactions on Instrumentation and Measurement}, 2022.
\newblock doi=10.1109/TIM.2021.3139686.

\bibitem{PAMI_13}
J.~Liu, P.~Musialski, P.~Wonka, and J.~Ye, ``Tensor completion for estimating
  missing values in visual data,'' {\em IEEE Transactions on Pattern Analysis
  and Machine Intelligence}, vol.~35, no.~1, pp.~208--220, 2013.

\bibitem{TSP}
X.~Gong, W.~Chen, and J.~Chen, ``A low-rank tensor dictionary learning method
  for hyperspectral image denoising,'' {\em IEEE Transactions on Signal
  Processing}, vol.~68, pp.~1168--1180, 2020.

\bibitem{NGMeet}
W.~He, Q.~Yao, C.~Li, N.~Yokoya, Q.~Zhao, H.~Zhang, and L.~Zhang, ``Non-local
  meets global: An iterative paradigm for hyperspectral image restoration,''
  {\em IEEE Transactions on Pattern Analysis and Machine Intelligence},
  vol.~44, no.~4, pp.~2089--2107, 2022.

\bibitem{PAMI_13_lowrank}
G.~Liu, Z.~Lin, S.~Yan, J.~Sun, Y.~Yu, and Y.~Ma, ``Robust recovery of subspace
  structures by low-rank representation,'' {\em IEEE Transactions on Pattern
  Analysis and Machine Intelligence}, vol.~35, no.~1, pp.~171--184, 2013.

\bibitem{PAMI_15_Zhao}
Q.~Zhao, L.~Zhang, and A.~Cichocki, ``Bayesian {CP} factorization of incomplete
  tensors with automatic rank determination,'' {\em IEEE Transactions on
  Pattern Analysis and Machine Intelligence}, vol.~37, no.~9, pp.~1751--1763,
  2015.

\bibitem{PAMI_16_Lin}
M.~Yin, J.~Gao, and Z.~Lin, ``Laplacian regularized low-rank representation and
  its applications,'' {\em IEEE Transactions on Pattern Analysis and Machine
  Intelligence}, vol.~38, no.~3, pp.~504--517, 2016.

\bibitem{KBR}
Q.~Xie, Q.~Zhao, D.~Meng, and Z.~Xu, ``Kronecker-basis-representation based
  tensor sparsity and its applications to tensor recovery,'' {\em IEEE
  Transactions on Pattern Analysis and Machine Intelligence}, vol.~40, no.~8,
  pp.~1888--1902, 2018.

\bibitem{SIAM_review}
T.~G. Kolda and B.~W. Bader, ``Tensor decompositions and applications,'' {\em
  SIAM Review}, vol.~51, no.~3, pp.~455--500, 2009.

\bibitem{TSP_17}
N.~D. Sidiropoulos, L.~De~Lathauwer, X.~Fu, K.~Huang, E.~E. Papalexakis, and
  C.~Faloutsos, ``Tensor decomposition for signal processing and machine
  learning,'' {\em IEEE Transactions on Signal Processing}, vol.~65, no.~13,
  pp.~3551--3582, 2017.

\bibitem{LRTDTV}
Y.~Wang, J.~Peng, Q.~Zhao, Y.~Leung, X.-L. Zhao, and D.~Meng, ``Hyperspectral
  image restoration via total variation regularized low-rank tensor
  decomposition,'' {\em IEEE Journal of Selected Topics in Applied Earth
  Observations and Remote Sensing}, vol.~11, no.~4, pp.~1227--1243, 2018.

\bibitem{HPO_PAMI}
L.~Deng and M.~Xiao, ``A new automatic hyperparameter recommendation approach
  under low-rank tensor completion framework,'' {\em IEEE Transactions on
  Pattern Analysis and Machine Intelligence}, 2022.
\newblock doi=10.1109/TPAMI.2022.3195658.

\bibitem{NTM}
H.~Chen and J.~Li, ``Neural tensor model for learning multi-aspect factors in
  recommender systems,'' in {\em International Joint Conference on Artificial
  Intelligence (IJCAI)}, pp.~2449--2455, 2020.

\bibitem{AAAI_2021_transfer}
Z.~Chen, Z.~Xu, and D.~Wang, ``Deep transfer tensor decomposition with
  orthogonal constraint for recommender systems,'' in {\em AAAI Conference on
  Artificial Intelligence (AAAI)}, pp.~4010--4018, 2021.

\bibitem{TFNN}
J.-T. Chien and Y.-T. Bao, ``Tensor-factorized neural networks,'' {\em IEEE
  Transactions on Neural Networks and Learning Systems}, vol.~29, no.~5,
  pp.~1998--2011, 2018.

\bibitem{TN_ICML}
C.~Li and Z.~Sun, ``Evolutionary topology search for tensor network
  decomposition,'' in {\em International Conference on Machine Learning
  (ICML)}, vol.~119, pp.~5947--5957, 2020.

\bibitem{TT}
I.~V. Oseledets, ``Tensor-train decomposition,'' {\em SIAM Journal on
  Scientific Computing}, vol.~33, no.~5, pp.~2295--2317, 2011.

\bibitem{TSP_ring}
X.~P. Li and H.~C. So, ``Robust low-rank tensor completion based on tensor ring
  rank via $\ell _{p,\epsilon }$-norm,'' {\em IEEE Transactions on Signal
  Processing}, vol.~69, pp.~3685--3698, 2021.

\bibitem{SIAM_tree}
L.~Grasedyck, ``Hierarchical singular value decomposition of tensors,'' {\em
  SIAM Journal on Matrix Analysis and Applications}, vol.~31, no.~4,
  pp.~2029--2054, 2010.

\bibitem{Ring_ICCV}
W.~Wang, V.~Aggarwal, and S.~Aeron, ``Efficient low rank tensor ring
  completion,'' in {\em IEEE International Conference on Computer Vision
  (ICCV)}, pp.~5698--5706, 2017.

\bibitem{FCTN}
Y.-B. Zheng, T.-Z. Huang, X.-L. Zhao, Q.~Zhao, and T.-X. Jiang,
  ``Fully-connected tensor network decomposition and its application to
  higher-order tensor completion,'' in {\em AAAI Conference on Artificial
  Intelligence (AAAI)}, pp.~11071--11078, 2021.

\bibitem{SIAM_13}
M.~E. Kilmer, K.~Braman, N.~Hao, and R.~C. Hoover, ``Third-order tensors as
  operators on matrices: A theoretical and computational framework with
  applications in imaging,'' {\em SIAM Journal on Matrix Analysis and
  Applications}, vol.~34, no.~1, pp.~148--172, 2013.

\bibitem{LAA_11}
M.~E. Kilmer and C.~D. Martin, ``Factorization strategies for third-order
  tensors,'' {\em Linear Algebra and its Applications}, vol.~435, no.~3,
  pp.~641--658, 2011.

\bibitem{PAMI_TRPCA}
C.~Lu, J.~Feng, Y.~Chen, W.~Liu, Z.~Lin, and S.~Yan, ``Tensor robust principal
  component analysis with a new tensor nuclear norm,'' {\em IEEE Transactions
  on Pattern Analysis and Machine Intelligence}, vol.~42, no.~4, pp.~925--938,
  2020.

\bibitem{PAMI_factorization}
Y.~Zhou and Y.-M. Cheung, ``Bayesian low-tubal-rank robust tensor factorization
  with multi-rank determination,'' {\em IEEE Transactions on Pattern Analysis
  and Machine Intelligence}, vol.~43, no.~1, pp.~62--76, 2021.

\bibitem{PAMI_Wang_21}
F.~Zhang, J.~Wang, W.~Wang, and C.~Xu, ``Low-tubal-rank plus sparse tensor
  recovery with prior subspace information,'' {\em IEEE Transactions on Pattern
  Analysis and Machine Intelligence}, vol.~43, no.~10, pp.~3492--3507, 2021.

\bibitem{PAMI_Wang_22}
J.~Hou, F.~Zhang, H.~Qiu, J.~Wang, Y.~Wang, and D.~Meng, ``Robust
  low-tubal-rank tensor recovery from binary measurements,'' {\em IEEE
  Transactions on Pattern Analysis and Machine Intelligence}, vol.~44, no.~8,
  pp.~4355--4373, 2022.

\bibitem{cvpr_14}
Z.~Zhang, G.~Ely, S.~Aeron, N.~Hao, and M.~Kilmer, ``Novel methods for
  multilinear data completion and de-noising based on tensor-{SVD},'' in {\em
  IEEE/CVF Conference on Computer Vision and Pattern Recognition (CVPR)},
  pp.~3842--3849, 2014.

\bibitem{CVPR_19}
C.~Lu, X.~Peng, and Y.~Wei, ``Low-rank tensor completion with a new tensor
  nuclear norm induced by invertible linear transforms,'' in {\em IEEE/CVF
  Conference on Computer Vision and Pattern Recognition (CVPR)},
  pp.~5989--5997, 2019.

\bibitem{TLRR}
P.~Zhou, C.~Lu, J.~Feng, Z.~Lin, and S.~Yan, ``Tensor low-rank representation
  for data recovery and clustering,'' {\em IEEE Transactions on Pattern
  Analysis and Machine Intelligence}, vol.~43, no.~5, pp.~1718--1732, 2021.

\bibitem{FTNN}
T.-X. Jiang, M.~K. Ng, X.-L. Zhao, and T.-Z. Huang, ``Framelet representation
  of tensor nuclear norm for third-order tensor completion,'' {\em IEEE
  Transactions on Image Processing}, vol.~29, pp.~7233--7244, 2020.

\bibitem{AAAI_2017}
X.~Li, Y.~Ye, and X.~Xu, ``Low-rank tensor completion with total variation for
  visual data inpainting,'' in {\em AAAI Conference on Artificial Intelligence
  (AAAI)}, p.~2210–2216, 2017.

\bibitem{HLRTF}
Y.~Luo, X.-L. Zhao, D.~Meng, and T.-X. Jiang, ``{HLRTF}: Hierarchical low-rank
  tensor factorization for inverse problems in multi-dimensional imaging,'' in
  {\em IEEE/CVF Conference on Computer Vision and Pattern Recognition (CVPR)},
  pp.~19303--19312, 2022.

\bibitem{SMF}
Y.-B. Zheng, T.-Z. Huang, T.-Y. Ji, X.-L. Zhao, T.-X. Jiang, and T.-H. Ma,
  ``Low-rank tensor completion via smooth matrix factorization,'' {\em Applied
  Mathematical Modelling}, vol.~70, pp.~677--695, 2019.

\bibitem{DFR}
Y.-B. Zheng, T.-Z. Huang, X.-L. Zhao, Y.~Chen, and W.~He,
  ``Double-factor-regularized low-rank tensor factorization for mixed noise
  removal in hyperspectral image,'' {\em IEEE Transactions on Geoscience and
  Remote Sensing}, vol.~58, no.~12, pp.~8450--8464, 2020.

\bibitem{CTV}
J.~Peng, Y.~Wang, H.~Zhang, J.~Wang, and D.~Meng, ``Exact decomposition of
  joint low rankness and local smoothness plus sparse matrices,'' {\em IEEE
  Transactions on Pattern Analysis and Machine Intelligence}, 2022.
\newblock doi=10.1109/TPAMI.2022.3204203.

\bibitem{AAAI_tensor_1}
Q.~Gao, W.~Xia, Z.~Wan, D.~Xie, and P.~Zhang, ``Tensor-{SVD} based graph
  learning for multi-view subspace clustering,'' in {\em AAAI Conference on
  Artificial Intelligence (AAAI)}, pp.~3930--3937, 2020.

\bibitem{STD}
M.~Imaizumi and K.~Hayashi, ``Tensor decomposition with smoothness,'' in {\em
  International Conference on Machine Learning (ICML)}, vol.~70,
  pp.~1597--1606, 2017.

\bibitem{TIP_22_Luo}
Y.-S. Luo, X.-L. Zhao, T.-X. Jiang, Y.~Chang, M.~K. Ng, and C.~Li,
  ``Self-supervised nonlinear transform-based tensor nuclear norm for
  multi-dimensional image recovery,'' {\em IEEE Transactions on Image
  Processing}, vol.~31, pp.~3793--3808, 2022.

\bibitem{PAMI_Zhang}
X.~Zhang and M.~K. Ng, ``Low rank tensor completion with poisson
  observations,'' {\em IEEE Transactions on Pattern Analysis and Machine
  Intelligence}, vol.~44, no.~8, pp.~4239--4251, 2022.

\bibitem{AAAI_tensor_2}
Y.~He and G.~K. Atia, ``Multi-mode tensor space clustering based on
  low-tensor-rank representation,'' in {\em AAAI Conference on Artificial
  Intelligence (AAAI)}, 2022.

\bibitem{E3DTV}
J.~Peng, Q.~Xie, Q.~Zhao, Y.~Wang, L.~Yee, and D.~Meng, ``Enhanced 3{DTV}
  regularization and its applications on hsi denoising and compressed
  sensing,'' {\em IEEE Transactions on Image Processing}, vol.~29,
  pp.~7889--7903, 2020.

\bibitem{l0l1HTV}
M.~Wang, Q.~Wang, J.~Chanussot, and D.~Hong, ``$\ell_0$-$\ell_1$ hybrid total
  variation regularization and its applications on hyperspectral image mixed
  noise removal and compressed sensing,'' {\em IEEE Transactions on Geoscience
  and Remote Sensing}, vol.~59, no.~9, pp.~7695--7710, 2021.

\bibitem{sp}
T.~Yokota, R.~Zdunek, A.~Cichocki, and Y.~Yamashita, ``Smooth nonnegative
  matrix and tensor factorizations for robust multi-way data analysis,'' {\em
  Signal Processing}, vol.~113, pp.~234--249, 2015.

\bibitem{spl_smooth}
O.~Debals, M.~Van~Barel, and L.~De~Lathauwer, ``Nonnegative matrix
  factorization using nonnegative polynomial approximations,'' {\em IEEE Signal
  Processing Letters}, vol.~24, no.~7, pp.~948--952, 2017.

\bibitem{TSP_CP}
N.~Kargas and N.~D. Sidiropoulos, ``Supervised learning and canonical
  decomposition of multivariate functions,'' {\em IEEE Transactions on Signal
  Processing}, vol.~69, pp.~1097--1107, 2021.

\bibitem{LIIF}
Y.~Chen, S.~Liu, and X.~Wang, ``Learning continuous image representation with
  local implicit image function,'' in {\em IEEE/CVF Conference on Computer
  Vision and Pattern Recognition (CVPR)}, pp.~8624--8634, 2021.

\bibitem{DeepSDF}
J.~J. Park, P.~Florence, J.~Straub, R.~Newcombe, and S.~Lovegrove, ``Deep{SDF}:
  Learning continuous signed distance functions for shape representation,'' in
  {\em IEEE/CVF Conference on Computer Vision and Pattern Recognition (CVPR)},
  pp.~165--174, 2019.

\bibitem{SAL}
M.~Atzmon and Y.~Lipman, ``{SAL}: Sign agnostic learning of shapes from raw
  data,'' in {\em IEEE/CVF Conference on Computer Vision and Pattern
  Recognition (CVPR)}, pp.~2562--2571, 2020.

\bibitem{3dshape}
K.~Genova, F.~Cole, A.~Sud, A.~Sarna, and T.~Funkhouser, ``Local deep implicit
  functions for 3{D} shape,'' in {\em IEEE/CVF Conference on Computer Vision
  and Pattern Recognition (CVPR)}, pp.~4856--4865, 2020.

\bibitem{ICCV_19}
M.~Michalkiewicz, J.~K. Pontes, D.~Jack, M.~Baktashmotlagh, and A.~Eriksson,
  ``Implicit surface representations as layers in neural networks,'' in {\em
  IEEE/CVF International Conference on Computer Vision (ICCV)}, pp.~4742--4751,
  2019.

\bibitem{nips_INR}
V.~Sitzmann, M.~Zollh\"{o}fer, and G.~Wetzstein, ``Scene representation
  networks: Continuous 3{D}-structure-aware neural scene representations,'' in
  {\em Advances in Neural Information Processing Systems (NeurIPS)},
  pp.~1121--1132, 2019.

\bibitem{sine}
V.~Sitzmann, J.~Martel, A.~Bergman, D.~Lindell, and G.~Wetzstein, ``Implicit
  neural representations with periodic activation functions,'' in {\em Advances
  in Neural Information Processing Systems (NeurIPS)}, vol.~33, pp.~7462--7473,
  2020.

\bibitem{NeRF}
B.~Mildenhall, P.~P. Srinivasan, M.~Tancik, J.~T. Barron, R.~Ramamoorthi, and
  R.~Ng, ``{NeRF}: Representing scenes as neural radiance fields for view
  synthesis,'' in {\em European Conference on Computer Vision (ECCV)},
  pp.~405--421, 2020.

\bibitem{sota_rendering}
A.~Tewari, O.~Fried, J.~Thies, V.~Sitzmann, S.~Lombardi, K.~Sunkavalli,
  R.~Martin-Brualla, T.~Simon, J.~Saragih, M.~Nießner, R.~Pandey, S.~Fanello,
  G.~Wetzstein, J.-Y. Zhu, C.~Theobalt, M.~Agrawala, E.~Shechtman, D.~B.
  Goldman, and M.~Zollhöfer, ``State of the art on neural rendering,'' {\em
  Computer Graphics Forum}, vol.~39, no.~2, pp.~701--727, 2020.

\bibitem{real_time}
T.~Takikawa, J.~Litalien, K.~Yin, K.~Kreis, C.~Loop, D.~Nowrouzezahrai,
  A.~Jacobson, M.~McGuire, and S.~Fidler, ``Neural geometric level of detail:
  Real-time rendering with implicit 3{D} shapes,'' in {\em IEEE/CVF Conference
  on Computer Vision and Pattern Recognition (CVPR)}, pp.~11353--11362, 2021.

\bibitem{nips_rendering}
L.~Yariv, J.~Gu, Y.~Kasten, and Y.~Lipman, ``Volume rendering of neural
  implicit surfaces,'' in {\em Advances in Neural Information Processing
  Systems (NeurIPS)}, pp.~4805--4815, 2021.

\bibitem{cvpr_without}
M.~Niemeyer, L.~Mescheder, M.~Oechsle, and A.~Geiger, ``Differentiable
  volumetric rendering: Learning implicit 3{D} representations without 3{D}
  supervision,'' in {\em IEEE/CVF Conference on Computer Vision and Pattern
  Recognition (CVPR)}, pp.~3501--3512, 2020.

\bibitem{GAN_CVPR}
I.~Skorokhodov, S.~Ignatyev, and M.~Elhoseiny, ``Adversarial generation of
  continuous images,'' in {\em IEEE/CVF Conference on Computer Vision and
  Pattern Recognition (CVPR)}, pp.~10748--10759, 2021.

\bibitem{AISTATS}
E.~Dupont, Y.~W. Teh, and A.~Doucet, ``Generative models as distributions of
  functions,'' in {\em International Conference on Artificial Intelligence and
  Statistics (AISTATS)}, 2021.

\bibitem{shape_GAN}
Z.~Chen and H.~Zhang, ``Learning implicit fields for generative shape
  modeling,'' in {\em IEEE/CVF Conference on Computer Vision and Pattern
  Recognition (CVPR)}, pp.~5932--5941, 2019.

\bibitem{cvpr_3d}
J.~Chibane, T.~Alldieck, and G.~Pons-Moll, ``Implicit functions in feature
  space for 3{D} shape reconstruction and completion,'' in {\em IEEE/CVF
  Conference on Computer Vision and Pattern Recognition (CVPR)},
  pp.~6968--6979, 2020.

\bibitem{RGB-D}
D.~Azinovi\'c, R.~Martin-Brualla, D.~B. Goldman, M.~Nie{\ss}ner, and J.~Thies,
  ``Neural rgb-d surface reconstruction,'' in {\em IEEE/CVF Conference on
  Computer Vision and Pattern Recognition (CVPR)}, pp.~6290--6301, 2022.

\bibitem{oc_net}
L.~Mescheder, M.~Oechsle, M.~Niemeyer, S.~Nowozin, and A.~Geiger, ``Occupancy
  networks: Learning 3{D} reconstruction in function space,'' in {\em IEEE/CVF
  Conference on Computer Vision and Pattern Recognition (CVPR)},
  pp.~4455--4465, 2019.

\bibitem{implicit2}
Z.~Huang, S.~Bai, and J.~Z. Kolter, ``(implicit)$^2$: Implicit layers for
  implicit representations,'' in {\em Advances in Neural Information Processing
  Systems (NeurIPS)}, vol.~34, pp.~9639--9650, 2021.

\bibitem{SAP}
S.~Peng, C.~M. Jiang, Y.~Liao, M.~Niemeyer, M.~Pollefeys, and A.~Geiger,
  ``Shape as points: A differentiable poisson solver,'' in {\em Advances in
  Neural Information Processing Systems (NeurIPS)}, pp.~13032--13044, 2021.

\bibitem{NIPS_Four}
M.~Tancik, P.~P. Srinivasan, B.~Mildenhall, S.~Fridovich-Keil, N.~Raghavan,
  U.~Singhal, R.~Ramamoorthi, J.~T. Barron, and R.~Ng, ``Fourier features let
  networks learn high frequency functions in low dimensional domains,'' {\em
  Advances in Neural Information Processing Systems (NeurIPS)}, pp.~7537--7547,
  2020.

\bibitem{ICML_21}
B.~Ma, Z.~Han, Y.-S. Liu, and M.~Zwicker, ``Neural-pull: Learning signed
  distance function from point clouds by learning to pull space onto surface,''
  in {\em International Conference on Machine Learning (ICML)}, vol.~139,
  pp.~7246--7257, 2021.

\bibitem{INR_SR_NIPS}
J.~Yang, S.~Shen, H.~Yue, and K.~Li, ``Implicit transformer network for screen
  content image continuous super-resolution,'' in {\em Advances in Neural
  Information Processing Systems (NeurIPS)}, vol.~34, pp.~13304--13315, 2021.

\bibitem{INR_SR_TIP}
C.~Ma, P.~Yu, J.~Lu, and J.~Zhou, ``Recovering realistic details for
  magnification-arbitrary image super-resolution,'' {\em IEEE Transactions on
  Image Processing}, vol.~31, pp.~3669--3683, 2022.

\bibitem{metaSR}
X.~Hu, H.~Mu, X.~Zhang, Z.~Wang, T.~Tan, and J.~Sun, ``Meta-{SR}: A
  magnification-arbitrary network for super-resolution,'' in {\em IEEE/CVF
  Conference on Computer Vision and Pattern Recognition (CVPR)},
  pp.~1575--1584, 2019.

\bibitem{denoise_INR_1}
C.~Kim, J.~Lee, and J.~Shin, ``Zero-shot blind image denoising via implicit
  neural representations,'' {\em arXiv:2204.02405}, 2022.

\bibitem{Uni}
A.~Kratsios, ``The universal approximation property: Characterization,
  construction, representation, and existence,'' {\em Annals of Mathematics and
  Artificial Intelligence}, vol.~89, no.~5-6, pp.~435--469, 2021.

\bibitem{IJCV_Dong}
L.~Sun, W.~Dong, X.~Li, J.~Wu, L.~Li, and G.~Shi, ``Deep maximum a posterior
  estimator for video denoising,'' {\em International Journal of Computer
  Vision}, vol.~129, pp.~2827--2845, 2021.

\bibitem{Adam}
D.~Kingma and J.~Ba, ``Adam: A method for stochastic optimization,'' in {\em
  International Conference on Learning Representations (ICLR)}, 2015.

\bibitem{RS}
J.~Bergstra and Y.~Bengio, ``Random search for hyper-parameter optimization,''
  {\em Journal of Machine Learning Research}, vol.~13, pp.~281--305, 2012.

\bibitem{PCU_CVPR_18}
L.~Yu, X.~Li, C.-W. Fu, D.~Cohen-Or, and P.-A. Heng, ``{PU-N}et: Point cloud
  upsampling network,'' in {\em IEEE/CVF Conference on Computer Vision and
  Pattern Recognition (CVPR)}, pp.~2790--2799, 2018.

\bibitem{PU-GAN}
R.~Li, X.~Li, C.-W. Fu, D.~Cohen-Or, and P.-A. Heng, ``{PU-GAN}: A point cloud
  upsampling adversarial network,'' in {\em IEEE/CVF International Conference
  on Computer Vision (ICCV)}, pp.~7202--7211, 2019.

\bibitem{PCU_CVPR_21}
R.~Li, X.~Li, P.-A. Heng, and C.-W. Fu, ``Point cloud upsampling via
  disentangled refinement,'' in {\em IEEE/CVF Conference on Computer Vision and
  Pattern Recognition (CVPR)}, pp.~344--353, 2021.

\bibitem{CD}
H.~Fan, H.~Su, and L.~Guibas, ``A point set generation network for 3{D} object
  reconstruction from a single image,'' in {\em IEEE/CVF Conference on Computer
  Vision and Pattern Recognition (CVPR)}, pp.~2463--2471, 2017.

\bibitem{F_Score}
A.~Knapitsch, J.~Park, Q.-Y. Zhou, and V.~Koltun, ``Tanks and temples:
  Benchmarking large-scale scene reconstruction,'' {\em ACM Transactions on
  Graphics}, vol.~36, no.~4, 2017.

\bibitem{PR_HPO}
Q.~Song, G.~Wang, and C.~Wang, ``Automatic recommendation of classification
  algorithms based on data set characteristics,'' {\em Pattern Recognition},
  vol.~45, no.~7, pp.~2672--2689, 2012.

\bibitem{piGAN}
E.~R. Chan, M.~Monteiro, P.~Kellnhofer, J.~Wu, and G.~Wetzstein, ``pi-{GAN}:
  Periodic implicit generative adversarial networks for 3{D}-aware image
  synthesis,'' in {\em IEEE/CVF Conference on Computer Vision and Pattern
  Recognition (CVPR)}, pp.~5795--5805, 2021.

\bibitem{TRLRF}
L.~Yuan, C.~Li, D.~P. Mandic, J.~Cao, and Q.~Zhao, ``Tensor ring decomposition
  with rank minimization on latent space: An efficient approach for tensor
  completion,'' in {\em AAAI Conference on Artificial Intelligence (AAAI)},
  pp.~9151--9158, 2019.

\bibitem{LRMR}
H.~Zhang, W.~He, L.~Zhang, H.~Shen, and Q.~Yuan, ``Hyperspectral image
  restoration using low-rank matrix recovery,'' {\em IEEE Transactions on
  Geoscience and Remote Sensing}, vol.~52, no.~8, pp.~4729--4743, 2014.

\bibitem{HSIDCNN}
Q.~Yuan, Q.~Zhang, J.~Li, H.~Shen, and L.~Zhang, ``Hyperspectral image
  denoising employing a spatial–spectral deep residual convolutional neural
  network,'' {\em IEEE Transactions on Geoscience and Remote Sensing}, vol.~57,
  no.~2, pp.~1205--1218, 2019.

\bibitem{SDeCNN}
A.~Maffei, J.~M. Haut, M.~E. Paoletti, J.~Plaza, L.~Bruzzone, and A.~Plaza, ``A
  single model {CNN} for hyperspectral image denoising,'' {\em IEEE
  Transactions on Geoscience and Remote Sensing}, vol.~58, no.~4,
  pp.~2516--2529, 2020.

\bibitem{CAVE}
F.~Yasuma, T.~Mitsunaga, D.~Iso, and S.~K. Nayar, ``Generalized assorted pixel
  camera: Postcapture control of resolution, dynamic range, and spectrum,''
  {\em IEEE Transactions on Image Processing}, vol.~19, no.~9, pp.~2241--2253,
  2010.

\bibitem{TGRS_Liu_1}
N.~Liu, W.~Li, R.~Tao, and J.~E. Fowler, ``Wavelet-domain low-rank/group-sparse
  destriping for hyperspectral imagery,'' {\em IEEE Transactions on Geoscience
  and Remote Sensing}, vol.~57, no.~12, pp.~10310--10321, 2019.

\bibitem{MSN_AAAI}
M.~Liu, L.~Sheng, S.~Yang, J.~Shao, and S.-M. Hu, ``Morphing and sampling
  network for dense point cloud completion,'' in {\em AAAI Conference on
  Artificial Intelligence (AAAI)}, vol.~34, pp.~11596--11603, 2020.

\bibitem{SnowflakeNet}
P.~Xiang, X.~Wen, Y.-S. Liu, Y.-P. Cao, P.~Wan, W.~Zheng, and Z.~Han,
  ``Snowflake{N}et: Point cloud completion by snowflake point deconvolution
  with skip-transformer,'' in {\em IEEE/CVF International Conference on
  Computer Vision (ICCV)}, pp.~5479--5489, 2021.

\bibitem{SMOG}
A.~Dell'Eva, M.~Orsingher, and M.~Bertozzi, ``Arbitrary point cloud upsampling
  with spherical mixture of gaussians,'' in {\em International Conference on 3D
  Vision (3DV)}, 2022.

\bibitem{NeuralPoints}
W.~Feng, J.~Li, H.~Cai, X.~Luo, and J.~Zhang, ``Neural points: Point cloud
  representation with neural fields for arbitrary upsampling,'' in {\em
  IEEE/CVF Conference on Computer Vision and Pattern Recognition (CVPR)},
  pp.~18612--18621, 2022.

\bibitem{SAPCU}
W.~Zhao, X.~Liu, Z.~Zhong, J.~Jiang, W.~Gao, G.~Li, and X.~Ji,
  ``Self-supervised arbitrary-scale point clouds upsampling via implicit neural
  representation,'' in {\em IEEE/CVF Conference on Computer Vision and Pattern
  Recognition (CVPR)}, pp.~1989--1997, 2022.

\bibitem{ShapeNet}
A.~X. Chang, T.~Funkhouser, L.~Guibas, P.~Hanrahan, Q.~Huang, Z.~Li,
  S.~Savarese, M.~Savva, S.~Song, H.~Su, J.~Xiao, L.~Yi, and F.~Yu,
  ``Shape{N}et: An information-rich 3{D} model repository,'' {\em
  arXiv:1512.03012}, 2015.

\end{thebibliography}
\newpage
\section{Proof of Theorem 2}
\subsection{Proof of Theorem 2 (i)}
\begin{proof} 
We divide the proof into three parts. In {Part 1}, we establish the linear rerpesentation of each mode-1 function ($f(x,y,z)$ with fixed $y,z$ and variable $x$) using some mode-1 basis functions. {Part 2} gives the mode-1 low-rank function factorization, and proves that such factorization keeps the F-rank in all modes. In {Part 3}, we repeat the procedures in {Parts 1-2} for mode-2 and mode-3 to form the final low-rank tensor function factorization.
\subsubsection*{Part 1. Linear Representation of Mode-1 Function}
Suppose that ${\rm F\text{-}rank}[f]=(r_1,r_2,r_3)$ with $r_i<\infty$ ($i=1,2,3$). Let
\begin{equation}
R_1:=\{{\rm rank}({\bf T}^{(1)})|{\tt fold}_1({\bf T}^{(1)})\in S[f]\}.
\end{equation}
Then $r_1=\sup R_1$. We can see that $R_1$ is a closed set. Thus there exists ${\mathcal X}\in S[f]$ such that ${\rm rank}({\bf X}^{(1)})=r_1$. Similarly, there exist ${\mathcal Y},{\mathcal Z}\in S[f]$ such that ${\rm rank}({\bf Y}^{(2)})=r_2$ and ${\rm rank}({\bf Z}^{(3)})=r_3$. Suppose the coordinate vectors of sampling ${\mathcal X}$ from $f(\cdot)$ are ${\bf x}^{\mathcal X}$, ${\bf y}^{\mathcal X}$, and ${\bf z}^{\mathcal X}$. Define the same coordinate vectors for $\mathcal Y$ and $\mathcal Z$ (${\bf x}^{\mathcal Y}$, ${\bf y}^{\mathcal Y}$, ${\bf z}^{\mathcal Y}$, ${\bf x}^{\mathcal Z}$, ${\bf y}^{\mathcal Z}$, ${\bf z}^{\mathcal Z}$), and consider their concatenations:
\begin{equation}
\left\{
\begin{aligned}
&{\bf x}=[{\bf x}^{\mathcal X},{\bf x}^{\mathcal Y},{\bf x}^{\mathcal Z}]\in{X_f}^{n_1},\\
&{\bf y}=[{\bf y}^{\mathcal X},{\bf y}^{\mathcal Y},{\bf y}^{\mathcal Z}]\in{Y_f}^{n_2},\\
&{\bf z}=[{\bf z}^{\mathcal X},{\bf z}^{\mathcal Y},{\bf z}^{\mathcal Z}]\in{Z_f}^{n_3},\\
\end{aligned}
\right.
\end{equation}
where we introduce $n_i$s ($i=1,2,3$) to denote the sizes of these vectors. Define the tensor ${\mathcal T}\in{\mathbb R}^{n_1\times n_2\times n_3}$ as 
\begin{equation}\label{eq_T}
{\mathcal T}_{(i,j,k)}=f({\bf x}_{(i)},{\bf y}_{(j)},{\bf z}_{(k)}),\;\forall\; i,j,k.
\end{equation}
It is easy to see that ${\rm rank}_T({\mathcal T})=(r_1,r_2,r_3)$. Each column of ${\bf T}^{(1)}$ is a mode-1 fiber of $\mathcal T$, so we can denote the $r_1$ column basis vectors of ${\bf T}^{(1)}$ by ${\mathcal T}_{(:,j_1,k_1)},{\mathcal T}_{(:,j_2,k_2)},\cdots,{\mathcal T}_{(:,j_{r_1},k_{r_1})}$ with certain $j_l$s and $k_l$s ($l=1,2,\cdots,r_1$).\par 
Now let us consider any $y\in Y_f$ and $z\in Z_f$. We define a new tensor ${\mathcal U}\in{\mathbb R}^{n_1\times (n_2+1)\times (n_3+1)}$ as ${\mathcal U}_{(i,j,k)}=f({\bf x}_{(i)},{\bf y}_{(j)}',{\bf z}_{(k)}')$, where ${\bf y}'=[{\bf y},y]$ and ${\bf z}'=[{\bf z},z]$ are concatenations of vectors. We can see that ${\mathcal U}\in S[f]$, hence ${\rm rank}({\bf U}^{(1)})\leq r_1$. Meanwhile, from the definition of ${\mathcal U}$ we see that ${\bf T}^{(1)}$ is a sub-matrix of ${\bf U}^{(1)}$, which deduces $r_1={\rm rank}({\bf T}^{(1)})\leq {\rm rank}({\bf U}^{(1)})$. Thus we have ${\rm rank}({\bf U}^{(1)})={\rm rank}({\bf T}^{(1)})=r_1$, and ${\mathcal T}_{(:,j_1,k_1)},{\mathcal T}_{(:,j_2,k_2)},\cdots,{\mathcal T}_{(:,j_{r_1},k_{r_1})}$ are $r_1$ basis vectors of the column space of ${\bf U}^{(1)}$. Note that ${\mathcal U}_{(:,n_2+1,n_3+1)}$ is a column of ${\bf U}^{(1)}$. Hence, it can be linearly represented by these basis vectors with a unique coefficient vector ${\bf c}\in{\mathbb R}^{r_1}$, i.e., 
\begin{equation}\label{represent_1}
{\mathcal U}_{(:,n_2+1,n_3+1)}=\sum_{l=1}^{r_1}{\bf c}_{(l)}{\mathcal T}_{(:,j_l,k_l)}.
\end{equation}
Meanwhile, ${\mathcal U}_{(i,n_2+1,n_3+1)}=f({\bf x}_{(i)},y,z)$ for any $i\in\{1,2,\cdots,n_1\}$. Thus, the following equality holds: 
\begin{equation}
\begin{split}
f({\bf x}_{(i)},y,z) = \sum_{l=1}^{r_1}{\bf c}_{(l)}{\mathcal T}_{(i,j_l,k_l)}
\overset{\text{(\ref{eq_T})}}{=}\sum_{l=1}^{r_1}{\bf c}_{(l)}f({\bf x}_{(i)},{\bf y}_{(j_l)},{\bf z}_{(k_l)}).
\end{split}
\end{equation}
Next, we will generalize this conclusion from ${\bf x}_{(i)}\in\{{\bf x}_{(1)},{\bf x}_{(2)},\cdots,{\bf x}_{(n_1)}\}$ to any $x\in X_f$, i.e., we will show that
\begin{equation}\label{eq_f}
\begin{split}
f(x,y,z) =\sum_{l=1}^{r_1}{\bf c}_{(l)}f(x,{\bf y}_{(j_l)},{\bf z}_{(k_l)})
\end{split}
\end{equation}
holds for any $x\in X_f$.\par
Given $x\in X_f$, consider the following tensor: ${\mathcal M}\in{\mathbb R}^{(n_1+1)\times (n_2+1)\times (n_3+1)}$, where ${\mathcal M}_{(i,j,k)} = f({\bf x}_{(i)}',{\bf y}_{(j)}',{\bf z}_{(k)}')$ and ${\bf x}'=[{\bf x},x]$. We can see that ${\mathcal M}\in S[f]$ and ${\bf T}^{(1)}$ is a sub-matrix of ${\bf M}^{(1)}$. Hence ${\rm rank}({\bf M}^{(1)})=r_1$. It is easy to verify that ${\mathcal M}_{(:,j_1,k_1)},{\mathcal M}_{(:,j_2,k_2)},\cdots,{\mathcal M}_{(:,j_{r_1},k_{r_1})}$ are $r_1$ basis vectors of ${\bf M}^{(1)}$ and ${\mathcal M}_{(:,n_2+1,n_3+1)}$ is a column of ${\bf M}^{(1)}$. So it admits
\begin{equation}\label{eq_M}
{\mathcal M}_{(:,n_2+1,n_3+1)}=\sum_{l=1}^{r_1}{\bf b}_{(l)}{\mathcal M}_{(:,j_l,k_l)}, 
\end{equation}
where $\bf b$ is the coefficient vector. It derives 
\begin{equation}
{\mathcal M}_{(1:n_1,n_2+1,n_3+1)}=\sum_{l=1}^{r_1}{\bf b}_{(l)}{\mathcal M}_{(1:n_1,j_l,k_l)}.
\end{equation}
From the definition of $\mathcal M$ we see that ${\mathcal M}_{(1:n_1,n_2+1,n_3+1)} = {\mathcal U}_{(:,n_2+1,n_3+1)}$ and ${\mathcal M}_{(1:n_1,j_l,k_l)}={\mathcal T}_{(:,j_l,k_l)}$ for any $l\in\{1,2,\cdots,r_1\}$. Thus we have
\begin{equation}\label{represent_2}
{\mathcal U}_{(:,n_2+1,n_3+1)}=\sum_{l=1}^{r_1}{\bf b}_{(l)}{\mathcal T}_{(:,j_l,k_l)}.
\end{equation}
From Eqs. (\ref{represent_1}), (\ref{represent_2}) and the uniqueness of the coefficient vector, we get that ${\bf b}={\bf c}$. Hence, we have
\begin{equation}
{\mathcal M}_{(:,n_2+1,n_3+1)}=\sum_{l=1}^{r_1}{\bf c}_{(l)}{\mathcal M}_{(:,j_l,k_l)}
\end{equation}
by following Eq. (\ref{eq_M}). Note that ${\mathcal M}_{(n_1+1,n_2+1,n_3+1)}=f(x,y,z)$ and ${\mathcal M}_{(n_1+1,j_l,k_l)}=f(x,{\bf y}_{(j_l)},{\bf z}_{(k_l)})$, which concludes that Eq. (\ref{eq_f}) is true for any $x\in X_f$. This gives the linear representation form of the mode-1 function $f(x,y,z)$ (with fixed $y,z$ and variable $x$) using some basis functions $f(x,{\bf y}_{(j_l)},{\bf z}_{(k_l)})$ ($l=1,2,\cdots,r_1$).\\
\subsubsection*{Part 2. Mode-1 Function Factorization and F-rank Invariance}
We define the factor function $f_x(\cdot):X_f\rightarrow{\mathbb R}^{r_1}$ as
\begin{equation}
\begin{split}
f_x(x):=[f(x,{\bf y}_{(j_1)},{\bf z}_{(k_1)}),f(x,&{\bf y}_{(j_2)},{\bf z}_{(k_2)}),\cdots,\\&f(x,{\bf y}_{(j_{r_1})},{\bf z}_{(k_{r_1})})].
\end{split}
\end{equation}
Also, define the tensor function $h(\cdot):\{1,2,\cdots,r_1\}\times Y_f\times Z_f\rightarrow{\mathbb R}$ as
\begin{equation}
h(i,y,z):=({c}(y,z))_{(i)},
\end{equation}
where the mapping ${c}(y,z):Y_f\times Z_f\rightarrow {\mathbb R}^{r_1}$ returns the coefficient vector $\bf c$ in Eq. (\ref{eq_f}) with the given coordinates $y\in Y_f$ and $z\in Z_f$ (The coefficient vector $\bf c$ is depend on the coordinates $y$ and $z$). $({c}(y,z))_{(i)}$ denotes the $i$-th element of ${c}(y,z)$. From {Part 1} we can see that for any $(x,y,z)\in D_f=X_f\times Y_f\times Z_f$, it holds that 
\begin{equation}
f(x,y,z) = \sum_{l=1}^{r_1}h(l,y,z)\;(f_x(x))_{(l)},
\end{equation}
where $(f_x(x))_{(l)}$ denotes the $l$-th element of $f_x(x)$. It can be re-written in a tensor-matrix product form:
\begin{equation}\label{fac_1}
f(x,y,z) = h(:,y,z)\times_1f_x(x),
\end{equation}
where 
\begin{equation}
h(:,y,z):=[h(1,y,z),h(2,y,z),\cdots,h(r_1,y,z)]^T\in{\mathbb R}^{r_1\times 1\times 1}.
\end{equation}
We call this factorization of $f(\cdot)$ as the mode-1 low-rank function factorization. Next, we will show that this factorization has F-rank invariant property, i.e., ${\rm F\text{-}rank}[f]={\rm F\text{-}rank}[h]$. First, it is easy to see that $({\rm F\text{-}rank}[h])_{(1)}\leq r_1$ since the definition domain of $h(\cdot)$ in the first dimension is $\{1,2,\cdots,r_1\}$. Then, consider the tensor ${\mathcal H}\in{\mathbb R}^{r_1\times n_2\times n_3}\cap S[h]$ defined by ${\mathcal H}_{(i,j,k)}=h(i,{\bf y}_{(j)},{\bf z}_{(k)})$. Also consider the matrix
\begin{equation}
{\bf U}=
\begin{pmatrix}
f_x({\bf x}_{(1)})\\
f_x({\bf x}_{(2)})\\
\cdots\\
f_x({\bf x}_{(n_1)})\\
\end{pmatrix}
\in{\mathbb R}^{{n_1}\times r_1}.
\end{equation}
Then we have ${\mathcal T}={\mathcal H}\times_1 {\bf U}$, hence ${\bf T}^{(1)}={\bf U}{\bf H}^{(1)}$, inducing that $r_1={\rm rank}({\bf T}^{(1)})\leq{\rm rank}({\bf H}^{(1)})$. So we have $({\rm F\text{-}rank}[h])_{(1)}= r_1$.\par 
Next we show that $({\rm F\text{-}rank}[h])_{(2)}= r_2$. Define the matrix
\begin{equation}
\hat{\bf U}=
\begin{pmatrix}
{\bf U}^T&&&\\
&{\bf U}^T&&\\
&&\ddots\\
&&&{\bf U}^T\\
\end{pmatrix}
\in{\mathbb R}^{r_1n_3\times n_1n_3}.
\end{equation}
We can see that ${\bf T}^{(2)}={\bf H}^{(2)}\hat{\bf U}$. Hence, $r_2={\rm rank}({\bf T}^{(2)})\leq{\rm rank}({\bf H}^{(2)})$, and thus $({\rm F\text{-}rank}[h])_{(2)}\geq r_2$.
\par
Consider any ${\mathcal A}\in S[h]$. Suppose its corresponding coordinate vectors are ${\bf x}^{\mathcal A}\in{X_h}^{m_1}$, ${\bf y}^{\mathcal A}\in{Y_h}^{m_2}$, and ${\bf z}^{\mathcal A}\in{Z_h}^{m_3}$. Define the matrix
\begin{equation}
\begin{scriptsize}
{\bf F}=
{\begin{pmatrix}
\big{(}f_x({\bf x}_{(1)})\big{)}_{{\bf x}^{\mathcal A}_{(1)}}
&
\big{(}f_x({\bf x}_{(1)})\big{)}_{{\bf x}^{\mathcal A}_{{(2)}}}&\cdots&
\big{(}f_x({\bf x}_{(1)})\big{)}_{{\bf x}^{\mathcal A}_{{(m_1)}}}\\

\big{(}f_x({\bf x}_{(2)})\big{)}_{{\bf x}^{\mathcal A}_{(1)}}
&
\big{(}f_x({\bf x}_{(2)})\big{)}_{{\bf x}^{\mathcal A}_{(2)}}&\cdots&
\big{(}f_x({\bf x}_{(2)})\big{)}_{{\bf x}^{\mathcal A}_{(m_1)}}\\

\vdots&\vdots&\ddots&\vdots\\
\big{(}f_x({\bf x}_{(n_1)})\big{)}_{{\bf x}^{\mathcal A}_{(1)}}
&
\big{(}f_x({\bf x}_{(n_1)})\big{)}_{{\bf x}^{\mathcal A}_{(2)}}&\cdots&
\big{(}f_x({\bf x}_{(n_1)})\big{)}_{{\bf x}^{\mathcal A}_{(m_1)}}\\
\end{pmatrix}}
\end{scriptsize}
\end{equation}
with size ${n_1\times m_1}$. Let ${\mathcal B}={\mathcal A}\times_1 {\bf F}$, we can see that ${\mathcal B}\in S[f]$. Also, we have ${\bf B}^{(2)}={\bf A}^{(2)}\hat{\bf F}$, where
\begin{equation}
\hat{\bf F}=
\begin{pmatrix}
{\bf F}^T&&&\\
&{\bf F}^T&&\\
&&\ddots\\
&&&{\bf F}^T\\
\end{pmatrix}
\in{\mathbb R}^{m_1m_3\times n_1m_3}.
\end{equation}
From the definition of $f_x(\cdot)$, we can observe that $\bf F$ is full column rank, and thus $\hat{\bf F}$ is full row rank. Multiplying a full row rank matrix in the RHS does not change the rank. Therefore, ${\rm rank}({\bf B}^{(2)})={\rm rank}({\bf A}^{(2)})$ holds. Since ${\mathcal B}\in S[f]$, we have ${\rm rank}({\bf B}^{(2)})\leq r_2$ and thus ${\rm rank}({\bf A}^{(2)})\leq r_2$. Due to the arbitrariness of $\mathcal A$, we have $({\rm F\text{-}rank}[h])_{(2)} \leq r_2$, which deduces $({\rm F\text{-}rank}[h])_{(2)} = r_2$.\par
In a similar way, we can show that $({\rm F\text{-}rank}[h])_3 = r_3$, which concludes the F-rank invariance of mode-1 low-rank function factorization.
\subsubsection*{Part 3. Low-Rank Tensor Function Factorization}
Now we have shown that ${\rm F\text{-}rank}[h]=(r_1,r_2,r_3)$. Thus, similar to the mode-1 low-rank function factorization, $h(\cdot)$ can be factorized along mode-2, i.e., there exist a factor function $f_y(\cdot):Y_f\rightarrow{\mathbb R}^{r_2}$ and a tensor function $h'(\cdot):\{1,2,\cdots,r_1\}\times \{1,2,\cdots,r_2\}\times Z_f\rightarrow{\mathbb R}$ such that 
\begin{equation}\label{fac_2}
h(x,y,z) = h'(x,:,z)\times_2f_y(y)
\end{equation}
holds for any $(x,y,z)$, where 
\begin{equation}
h'(x,:,z):=[h'(x,1,z),h'(x,2,z),\cdots,h'(x,r_2,z)]\in{\mathbb R}^{1\times r_2\times 1}.
\end{equation}
Similar to {Part 2}, we can show that $h'(\cdot)$ is F-rank invariant, i.e., ${\rm F\text{-}rank}[h']=(r_1,r_2,r_3)$. So we can apply the same low-rank function factorization in mode-3 and get that there exist a factor function $f_z(\cdot):Z_f\rightarrow{\mathbb R}^{r_3}$ and a tensor function $h''(\cdot):\{1,2,\cdots,r_1\}\times \{1,2,\cdots,r_2\}\times \{1,2,\cdots,r_3\}\rightarrow{\mathbb R}$ such that 
\begin{equation}\label{fac_3}
h'(x,y,z) = h''(x,y,:)\times_3f_z(z)
\end{equation}
holds for any $(x,y,z)$, where 
\begin{equation}
h''(x,y,:):=[h''(x,y,1),h''(x,y,2),\cdots,h''(x,y,r_3)]\in{\mathbb R}^{1\times 1\times r_3}.
\end{equation}
Define the core tensor ${\mathcal C}\in{\mathbb R}^{r_1\times r_2\times r_3}$ by ${\mathcal C}_{(i,j,k)}=h''(i,j,k)$ for any $(i,j,k)$. Then for any $(v_1,v_2,v_3)\in D_f$, from Eqs. (\ref{fac_1}), (\ref{fac_2}), and (\ref{fac_3}) we have
\begin{equation}
f(v_1,v_2,v_3)={\mathcal C}\times_3 f_z(v_3)\times_2 f_y(v_2)\times_1 f_x(v_1).
\end{equation}
One can verify that tensor-matrix product admits commutative law, which completes the proof of Theorem 2 {\bf (i)}.
\end{proof}
\subsection{Proof of Theorem 2 (ii)}
\begin{proof}
For any ${\mathcal G}\in S[g]$, suppose its sampling coordinate vectors are ${\bf x}^{\mathcal G}\in{X_g}^{n_1}$, ${\bf y}^{\mathcal G}\in{Y_g}^{n_2}$, and ${\bf z}^{\mathcal G}\in{Z_g}^{n_3}$, i.e., 
\begin{equation}
{\mathcal G}_{(i,j,k)}=g({\bf x}^{\mathcal G}_{(i)},{\bf y}^{\mathcal G}_{(j)},{\bf z}^{\mathcal G}_{(k)}),\;\forall\; i,j,k.
\end{equation}
Define the factor matrices
\begin{equation}
\left\{
\begin{aligned}
&{\bf U}=(g_x({\bf x}^{\mathcal G}_{(1)}),g_x({\bf x}^{\mathcal G}_{(2)})\cdots,g_x({\bf x}^{\mathcal G}_{(n_1)}))^T\in{\mathbb R}^{n_1\times r_1},\\
&{\bf V}=(g_y({\bf y}^{\mathcal G}_{(1)}),g_y({\bf y}^{\mathcal G}_{(2)})\cdots,g_y({\bf y}^{\mathcal G}_{(n_2)}))^T\in{\mathbb R}^{n_2\times r_2},\\
&{\bf W}=(g_z({\bf z}^{\mathcal G}_{(1)}),g_z({\bf z}^{\mathcal G}_{(2)})\cdots,g_z({\bf z}^{\mathcal G}_{(n_3)}))^T\in{\mathbb R}^{n_3\times r_3}.\\
\end{aligned}
\right.
\end{equation}
Then we have ${\mathcal G}={\mathcal C}\times_1 {\bf U}\times_2{\bf V}\times_3{\bf W}$. From Theorem 1 {\bf (ii)} we get that $({\rm rank}_T({\mathcal G}))_{(i)}\leq r_i$ ($i=1,2,3$). Thus we have $({\rm F\text{-}rank}[g])_{(i)}\leq r_1$ ($i=1,2,3$). 
\end{proof}
\section{Proof of Theorem 3}
\begin{proof}
For any $(x_1,y_1,z_1),(x_2,y_2,z_2)\in D_f$, direct calculation yields
\begin{equation}
\begin{split}
&|f(x_1,y_1,z_1)-f(x_2,y_1,z_1)|\\
=&|{\mathcal C}\times_1 f_{\theta_x}(x_1)\times_2f_{\theta_y}(y_1)\times_3f_{\theta_z}(z_1)-{\mathcal C}\times_1 f_{\theta_x}(x_x)\times_2\\
&f_{\theta_y}(y_1)\times_3f_{\theta_z}(z_1)|\\
=&|({\mathcal C}\times_2f_{\theta_y}(y_1)\times_3f_{\theta_z}(z_1))\times_1(f_{\theta_x}(x_1)-f_{\theta_x}(x_2))|\\
\leq&\lVert{\mathcal C}\times_2f_{\theta_y}(y_1)\times_3f_{\theta_z}(z_1)\rVert_{\ell_1}\lVert f_{\theta_x}(x_1)-f_{\theta_x}(x_2)\rVert_{\ell_1}\\
\leq&\eta\lVert f_{\theta_y}(y_1)\rVert_{\ell_1}\lVert f_{\theta_z}(z_1)\rVert_{\ell_1}\lVert f_{\theta_x}(x_1)-f_{\theta_x}(x_2)\rVert_{\ell_1}.\\
\end{split}
\end{equation}
Note that $\sigma(\cdot)$ is Lipschitz continuous, i.e., $|\sigma(x)-\sigma(y)|\leq \kappa|x-y|$ holds for any $x,y$, and letting $y=0$ derives $|\sigma(x)|\leq \kappa|x|$ since $\sigma(0)=\sin(0)=0$. Thus we have
\begin{equation}
\begin{split}
\lVert f_{\theta_y}(y_1)\rVert_{\ell_1}=&\lVert{\bf H}^y_d(\sigma({\bf H}^y_{d-1}\cdots{\bf H}^y_2(\sigma( {\bf H}^y_1y_1))))\rVert_{\ell_1}\\
\leq&\eta\lVert\sigma({\bf H}^y_{d-1}\cdots{\bf H}^y_2(\sigma ({\bf H}_1y_1)))\rVert_{\ell_1}\\
\leq&\eta\kappa\lVert{\bf H}^y_{d-1}\cdots{\bf H}^y_2(\sigma ({\bf H}_1y_1))\rVert_{\ell_1}\\
&\cdots\\
\leq&\eta^d\kappa^{d-1}|y_1|,\\
\end{split}
\end{equation}
where $\{{\bf H}^y_i\}_{i=1}^d$ denote the weight matrices of $f_{\theta_y}(\cdot)$. Similarly, we have $\lVert f_{\theta_z}(z_1)\rVert_{\ell_1}\leq\eta^d\kappa^{d-1}|z_1|$. Meanwhile, it holds that
\begin{equation}
\begin{split}
&\lVert f_{\theta_x}(x_1)-f_{\theta_x}(x_2)\rVert_{\ell_1}\\
=&\lVert{\bf H}^x_d(\sigma({\bf H}^x_{d-1}\cdots{\bf H}^x_2(\sigma( {\bf H}^x_1x_1))))-{\bf H}^x_d(\sigma({\bf H}^x_{d-1}\cdots\\&\;\;{\bf H}^x_2(\sigma( {\bf H}^x_1x_2))))\rVert_{\ell_1}\\
=&\lVert{\bf H}^x_d(\sigma({\bf H}^x_{d-1}\cdots{\bf H}^x_2(\sigma( {\bf H}^x_1x_1)))-\sigma({\bf H}^x_{d-1}\cdots\\&\;\;{\bf H}^x_2(\sigma( {\bf H}^x_1x_2))))\rVert_{\ell_1}\\
\leq&\eta\lVert\sigma({\bf H}^x_{d-1}\cdots{\bf H}^x_2(\sigma( {\bf H}^x_1x_1)))-\sigma({\bf H}^x_{d-1}\cdots\\&\;\;{\bf H}^x_2(\sigma( {\bf H}^x_1x_2)))\rVert_{\ell_1}\\
\leq&\eta\kappa\lVert{\bf H}^x_{d-1}\cdots{\bf H}^x_2(\sigma( {\bf H}^x_1x_1))-{\bf H}^x_{d-1}\cdots\\&\;\;{\bf H}^x_2(\sigma( {\bf H}^x_1x_2))\rVert_{\ell_1}\\
&\cdots\\
\leq&\eta^d\kappa^{d-1}|x_1-x_2|,
\end{split}
\end{equation}
where $\{{\bf H}^x_i\}_{i=1}^d$ denote the weight matrices of $f_{\theta_x}(\cdot)$. Combining the above inequalities, we have
\begin{equation}
\begin{split}
|f(x_1,y_1,z_1)-f(x_2,y_1,z_1)|\leq &\eta^{3d+1}\kappa^{3d-3}|y_1||z_1||x_1-x_2|\\
\leq&\eta^{3d+1}\kappa^{3d-3}\zeta^2|x_1-x_2|.
\end{split}
\end{equation}
In a similar way we can show that 
\begin{equation}
\left\{
\begin{aligned}
&|f(x_1,y_1,z_1)-f(x_1,y_2,z_1)|\leq&\eta^{3d+1}\kappa^{3d-3}\zeta^2|y_1-y_2|\\
&|f(x_1,y_1,z_1)-f(x_1,y_1,z_2)|\leq&\eta^{3d+1}\kappa^{3d-3}\zeta^2|z_1-z_2|.
\end{aligned}
\right.
\end{equation}
The proof is completed.
\end{proof}
\end{document}